%% file: main.tex
\documentclass[10pt]{article}
\usepackage[preprint]{tmlr}

\input{preamble}

\title{Hidden Activations are not Enough~I:\\ Knowledge Matrices as Higher Representations}
\author{\name Marco Armenta \email marco.armenta@usherbrooke.ca \\
      \addr Institut quantique, Universit\'e de Sherbrooke}
\date{\today}

\begin{document}
\maketitle

\input{sections/abstract}

\input{sections/introduction}        %
\input{sections/fig1_germ_identity}  %
\input{sections/previous_work}       %
\input{sections/section_germ}        %
\input{sections/section_geometry}    %
\input{sections/empirical_setup}     %
\input{sections/study1_invariance}   %
\input{sections/study2_coherence}    %
\input{sections/study3_cross_arch}    %
\input{sections/study4_negatives}     %
\input{sections/limitations}         %
\input{sections/conclusion}          %
\input{sections/statements}          %

\appendix
\input{sections/appendix_proofs}      %
\input{sections/appendix_engineering} %

\input{sections/appendix_teleport_exact}

\input{tables/table_coherence_max}
\input{tables/s3_table}

\input{sections/appendix_A_geometric} %

\input{sections/appendix_remarks}          %
\input{sections/appendix_limitations_more} %
\input{sections/appendix_setup_details}    %
\input{sections/appendix_mechanism_pilot}  %
\input{sections/appendix_kendall}          %
\input{sections/appendix_negatives_detail} %

\providecommand{\vocabdir}{sections/vocab}
\input{sections/vocab/appendix_vocab_I}

\bibliographystyle{tmlr}
\bibliography{references}

\end{document}

%% file: preamble.tex
\usepackage[utf8]{inputenc}
\usepackage[T1]{fontenc}
\usepackage{lmodern}
\usepackage{amsmath,amssymb,amsthm}
\usepackage{booktabs}
\usepackage{longtable}
\usepackage{array}
\usepackage{graphicx}
\usepackage{enumitem}
\usepackage{xcolor}
\usepackage[colorlinks=true,linkcolor=blue!70!black,citecolor=green!50!black,urlcolor=blue!70!black]{hyperref}
\usepackage{tikz}
\usetikzlibrary{arrows.meta,decorations.pathreplacing,calc,positioning}
\usepackage[most]{tcolorbox}

\newtheorem{theorem}{Theorem}
\newtheorem{proposition}[theorem]{Proposition}
\newtheorem{definition}[theorem]{Definition}
\newtheorem{remark}[theorem]{Remark}
\newtheorem{lemma}[theorem]{Lemma}
\newtheorem{corollary}[theorem]{Corollary}
\newtheorem{conjecture}[theorem]{Conjecture}
\numberwithin{theorem}{section}

\newcommand{\R}{\mathbb{R}}
\newcommand{\M}{\mathrm{M}}
\newcommand{\Mfx}{\M(W\!,f)(x)}
\newcommand{\Mfxp}{\M(W\!,f)(x')}
\newcommand{\Weff}{W_{\text{eff}}}
\newcommand{\beff}{b_{\text{eff}}}
\newcommand{\KM}{\textsc{KM}}

\newcommand{\Net}{\Psi(W\!,f)}          %
\newcommand{\Netx}{\Psi(W\!,f)(x)}      %
\newcommand{\Nety}{\Psi(W\!,f)(y)}
\newcommand{\Netxp}{\Psi(W\!,f)(x')}

\newcommand{\KMwf}{\M(W\!,f)}
\newcommand{\KMwfx}{\M(W\!,f)(x)}
\newcommand{\KMarg}[2]{\M(#1\!,f)(#2)}
\newcommand{\Jwf}{J(W\!,f)}
\newcommand{\cwf}{c(W\!,f)}

\newcommand{\kmapx}{\phi(W\!,f)(x)}

%% file: sections/abstract.tex
\begin{abstract}
We study the knowledge matrix of a trained feedforward network as a
\emph{higher} representation of its inputs. A network is a pair $(W\!,f)$, a
thin representation $W$ of its quiver and an activation $f$; \citet{armenta2021representation} showed that its
function factorizes through the space of quiver representations, each input
$x$ inducing a representation $\kmapx$, and the knowledge matrix
$\Mfx\in\R^{C\times(d+1)}$ is the contraction of that representation to one
matrix whose rows sum exactly to the logits \citep{leblanc2024hidden}. At one
trained network we ask what determines it, what it is invariant to, what it
determines, and what its geometry measures. Under (LCS), a locally constant
slope diagonal, as for ReLU, the matrix at a regular input is a function of the
realized germ; its stabilizer among encodings regular there is exactly the germ
stabilizer at inputs with no vanishing coordinate, neuron permutation a special
case; and it recovers the germ,
whereas hidden activations, gauge-covariant and germ-incomplete, are not
enough. Under (LCS) it equals per-class gradient$\times$input plus an exact
aggregate bias attribution, grounding it in attribution theory and computing it
by $C$ vector--Jacobian products instead of probing. The fixed shape gives an
alignment-free per-sample distance between ResNet-152, DenseNet-121 and
GoogLeNet; the row-sum identity gives an exact visible/invisible displacement
decomposition whose unit-free coherence $A=(d_\Psi/d_M)^2$ puts adversarial
germ motion at median $A\le0.23$, with an attack-family ordering, an
ordering of how far past the decision boundary each attack pushes, concordant
across six architectures (Kendall $W=0.921$; $0.97$ on the three networks at
full scale). Two honest negatives: on AlexNet/CIFAR-10 penultimate features win
$5$ of $6$ detectors and all $16$ attacks, and a matrix-direction
counterfactual fails $0/54$.
\end{abstract}

%% file: sections/introduction.tex
\section{Introduction}
\label{sec:introduction}

A trained neural network is often used not for its final logits but for an
intermediate representation of its inputs: the activations of a hidden layer,
typically the penultimate one. That representation drives transfer learning, anomaly
detection \citep{reiss2022anomaly,roth2022towards}, model comparison through
representation-similarity analyzes \citep{kornblith2019similarity},
federated-learning aggregation, and out-of-distribution detection
\citep{lee2018simple,papernot2018deep}. All of these uses assume that the
hidden representation is a \emph{canonical} object attached to the trained
network: something that captures what the network has learned about each
sample, intrinsic to the network's function.

The assumption is wrong. A network's function is preserved under a
non-trivial group of weight-space transformations: at minimum the permutation
symmetries of each layer's neurons, and more generally the quiver-isomorphism
group of \citet{armenta2021representation,armenta2022double}. The hidden
activations of every layer change under these transformations while the
network computes exactly the same function. They are invariants of the network
\emph{plus a choice of parameterization}, not of the network. Two networks that
compute the same function can have hidden activations that differ by an
orthogonal change of basis, or worse, so any application built on a distance
between hidden activations is sensitive to weight-space conventions with no
functional consequence. The optimization literature has used this weakness to
disprove flat-minima generalization arguments \citep{dinh2017sharp}; the
analysis pipelines that take hidden features as a primitive do not, to our
knowledge, address it. Everything we prove about hidden activations holds for
the activations of any hidden layer; in the experiments we compare against the
penultimate layer, the customary choice, and nothing in the theory depends on
that choice.

Following \citet{armenta2021representation}, a neural network is a pair
$(W\!,f)$: $W$ is a thin representation of the network quiver $Q$, the directed
graph with one vertex per neuron and one arrow per weight, and $f$ is the
activation function attached to the hidden vertices. The pair realizes the
network function $\Net:\R^d\to\R^C$ on $d$ input coordinates and $C$ classes,
and Theorem~6.4 of \citet{armenta2021representation} shows that this function
factorizes through the space of thin representations of $Q$: each input $x$
induces a representation $\kmapx$, whose arrows leaving a hidden vertex carry
the weights of $W$ multiplied by that vertex's activation-to-pre-activation
quotient in the forward pass of $x$, those leaving an input vertex the weight
multiplied by the input coordinate, and those leaving a bias vertex the
weight, and the network function is recovered from $\kmapx$ alone, by
feeding it the all-ones vector. The induced representation is a linear object, so it can
be contracted to a single matrix by multiplying out its layers
\citep{armenta2022double}. That contraction is the \emph{knowledge matrix}
$\Mfx\in\R^{C\times(d{+}1)}$, drawn in Figure~\ref{fig:germ}, with one column
per input coordinate and one for the biases, and the identity the contraction
preserves is the \emph{row-sum identity} of \citet{leblanc2024hidden}: the rows
of $\Mfx$ sum to the logits, $\Mfx\,\mathbf 1_{d+1}=\Netx$, row
$i\in\{1,\ldots,C\}$ recording how the input contributes to class $i$'s logit
through every path of the network. So a network and an input sample together
produce a quiver representation, and that representation contracts to the
knowledge matrix. This paper studies the matrix at one trained network, where
the input moves and the weights are fixed, and answers four questions about
it: what determines it, what it is invariant to, what it determines, and what
its geometry measures. It then asks what those answers buy that hidden
activations cannot provide.

The row-sum identity is completeness in the attribution sense of the parts
summing to the output, and it is not a new pointwise invariant. When the slope
diagonal is locally constant ((LCS), Definition~\ref{def:lcs}; the ReLU family,
and our setting throughout, by Proposition~\ref{prop:lcs}), the knowledge
matrix at almost every input equals per-class gradient$\times$input plus an
exact aggregate bias attribution (Theorem~\ref{thm:weff}). Its invariance,
completeness, fixed shape and exact row sums are therefore shared with that
gradient-based data. What is canonical, in the precise sense hidden
activations fail, is the \emph{arrangement} --- an arrangement that per-class
gradient$\times$input with FullGrad's bias column shares, the contrast class
being hidden activations --- and what it enables. First, the
knowledge matrix is invariant under the entire stabilizer of the realized germ
at $x$, among encodings at which $x$ remains regular
(Theorem~\ref{thm:maxinv}, Section~\ref{sec:germ}; shared with per-class
gradient$\times$input under (LCS), Theorem~\ref{thm:weff}). It is the same for
any two parameterizations, indeed any two architectures, that compute the same
function on a neighborhood of an input at which both are regular; neuron
permutation is a special case (Corollary~\ref{prop:perm}), automatic for
anything built, as the knowledge matrix is, from the induced representation
of the quiver-representation approach to neural networks
\citep{armenta2021representation} by a construction that respects relabelings
of the hidden vertices and changes of basis at them. Second,
the arrangement carries an exact partitioned row sum into a fixed
$C\times(d{+}1)$ shape, comparable across architectures without an alignment
step; hidden activations lack that property, and Study~3 demonstrates it as
a standalone capability. Third, building on the row sums,
Theorem~\ref{thm:pyth} (Section~\ref{sec:geometry}) splits any
knowledge-matrix displacement into a logit-visible part and a logit-invisible
part; the latter encodes real changes of the local linearization that the
endpoint logits cannot see. The unit-free coherence $A=(d_\Psi/d_M)^2$ of
Definition~\ref{def:coherence}, with $d_M$ and $d_\Psi$ the knowledge-matrix
and logit displacements of a pair of inputs, is a geometric descriptor read
against the single-pixel reference line $A{=}1$.

Hidden activations are thus gauge-covariant and germ-incomplete as a
representation of neural-network behavior (Theorem~\ref{thm:complete}), and
knowledge matrices are a canonical alternative. We give three independent
demonstrations on pretrained ImageNet networks, ResNet-152, DenseNet-121 and
GoogLeNet, chosen to span the residual, dense and inception families. Study~1
(Section~\ref{sec:study1}) asks what the standard representational-similarity
machinery does with a transformation that provably changes nothing about the
function. The invariance of the knowledge matrix under the two symmetries at
issue, neuron permutation and neural teleportation
\citep{armenta2023teleportation}, needs no experiment: teleportation is a
change of basis at the hidden vertices, for which Theorem~4.13 of
\citet{armenta2021representation} gives $\Psi(W,f)=\Psi(V,g)$, batch
normalization in evaluation mode included, a permutation is a relabeling of
the hidden vertices, which preserves the function trivially, and under both
the induced representations contract to the same matrix
(Lemma~\ref{lem:quiver-inv}). The effect on the penultimate features has a closed form in $h$ and the
transform, and we report the algebra rather than a measurement. What is open is
whether the standard similarity measures --- designed to be invariant to
orthogonal changes of basis and isotropic scaling, and deliberately not to
invertible linear maps \citep[\S2.3]{kornblith2019similarity} --- absorb this
transformation, and by how much; Section~\ref{sec:study1c} settles that on
pairs whose ground truth is known exactly, with the random-network and
shuffled-pair controls: none of the eight measures that returned a value
recovers the exact invariance, and the two that would, raw CCA and PWCCA, are
the ones \citet{kornblith2019similarity} set aside for being invariant to
every invertible linear map. Study~2 (Section~\ref{sec:study2})
applies the visible/invisible decomposition of Theorem~\ref{thm:pyth} to
adversarial pairs on all three architectures and six adversarial-pair
generators (FGSM, PGD, CW, DeepFool, APGD, Square). The coherence $A$ of
adversarial germ motion sits well below the single-pixel line, its
attack-family ordering is concordant across the six rater architectures of the
$200$-pair set (Kendall $W=0.921$) --- an ordering set by how far past the
decision boundary each attack pushes, and on that pair set no more concordant
than the logit displacement alone --- and the population-matched full-scale
panel on the three networks reproduces the same family-level grouping
($W=0.97$). Study~3 (Section~\ref{sec:crossarch}) compares architectures
directly: the knowledge matrix has the same shape for every feedforward
network on a given input, so its Frobenius distance compares ResNet-152,
DenseNet-121 and GoogLeNet with no alignment step. We then report two
\emph{honest negatives} (Section~\ref{sec:negatives}). In a detector bake-off
on AlexNet/CIFAR-10, penultimate features win $5$ of the $6$ detectors, all
$16$ of the attacks and every SVD rank from $16$ to $512$, so the
adversarial-detection claim of \citet{leblanc2024hidden} is not made here; and
a linear-program counterfactual fails structurally on the three pretrained
ImageNet networks. Section~\ref{sec:limitations} catalogs the limitations of
the paper, each with a forward pointer, and Section~\ref{sec:conclusion}
concludes.

What this paper inherits is set out in Section~\ref{sec:previous}: the object,
its row-sum identity and its invariance under quiver isomorphisms
\citep{armenta2021representation,leblanc2024hidden}, and the algebraic split of
a locally affine network into gradient$\times$input plus a bias remainder
\citep{srinivas2019full,balestriero2018spline,ancona2018towards}, which
Theorem~\ref{thm:weff} states as an equivalence on which we claim no priority.
What we add is the reading of that object as a function invariant, and the
geometry that follows from its row sums. In summary, our work makes the
following key contributions:
\begin{enumerate}
\item The germ identity (Theorem~\ref{thm:germ}) reads the matrix as
\emph{parameterization-independent}: it depends on the weights only through the
germ of the realized function at $x$.
\item The converse, \emph{germ recovery}, is Theorem~\ref{thm:complete}(i):
the matrix determines the local function, not merely its value. Every prior
notion of completeness we are aware of says that the parts sum to the output;
this one says the object pins down the germ, and it is what makes the title
claim a theorem rather than a phrase.
\item The invariance is \emph{maximal} (Theorem~\ref{thm:maxinv},
Corollary~\ref{cor:stab-eq}): for a fixed architecture, among encodings
regular at $x$ with every $x_i\neq0$, the stabilizer of the matrix in any
group acting on the parameters is exactly the germ stabilizer at $x$ --- the
stabilizer of the germ itself, hence the largest group under which a
germ-determining observable can be invariant --- and across architectures the
matrix and the germ are functions of each other. We have not found this
stated. Among encodings regular at $x$, invariance under the germ stabilizer
implies Implementation Invariance \citep{sundararajan2017axiomatic} at $x$ and
is stronger, since it needs only local agreement; off the regular set two
globally equivalent encodings can differ (the identity-versus-$g$ example of
Section~\ref{sec:invariance}), so the matrix satisfies Implementation
Invariance in the everywhere-quantified sense of
\citet{sundararajan2017axiomatic} only almost everywhere, whereas integrated
gradients satisfies it everywhere. Contributions 2 and 3 follow from
contribution 1 and the injectivity of $(J,c)\mapsto[\,J\,\mathrm{diag}(x)\mid c\,]$
off $\{x_i=0\}$; their content is the statement, not the proof.
\item The distance geometry of Section~\ref{sec:geometry} is new in its
entirety: the exact visible/invisible split and its unit-free coherence $A$
(Theorem~\ref{thm:pyth}), the within-region and wall-crossing anatomy
(Theorems~\ref{thm:within} and~\ref{thm:anatomy}), and the impossibility of a
$C^0$ bound in the other direction (Theorem~\ref{thm:nogo}).
\item The $C$-row fixed-width arrangement, one row per class, $d$ input
columns and a single scalar offset column, is what makes the width
architecture-independent, and it is what Study~3 spends: no alignment step,
and a per-sample distance whose logit-visible component is exactly the logit
displacement divided by $\sqrt{d{+}1}$ (Theorem~\ref{thm:pyth}).
\end{enumerate}

Two words are used with care throughout. ``Completeness'' has two
incompatible senses in this literature: where we mean the attribution sense,
the entries of a row sum to that class's logit, we say \emph{row-sum} or
\emph{summation-to-output}, and that property is inherited; where we mean that
the matrix \emph{determines the germ} we say \emph{germ recovery},
Theorem~\ref{thm:complete}(i), and only that is a contribution of this paper.
And ``canonical'' never means superior: we do not claim that knowledge matrices
beat penultimate features at any task, the bake-off of
Section~\ref{sec:negatives} finding the opposite, nor that they are the only
computable function invariant, since per-class gradient$\times$input and
FullGrad share the invariance (Theorem~\ref{thm:weff}). Nothing in this paper
concerns populations of training runs: every comparison is between fixed
pretrained networks.

The paper is organized as follows. Section~\ref{sec:previous} places the
object in the literature. Section~\ref{sec:germ} proves that under (LCS) the
knowledge matrix is the germ of the network function, with its invariance,
completeness and attribution consequences, and Section~\ref{sec:geometry}
works out the distance geometry that the row-sum identity forces.
Section~\ref{sec:setup} fixes the empirical apparatus,
Sections~\ref{sec:study1}--\ref{sec:crossarch} report the three studies,
Section~\ref{sec:negatives} the two negatives, and
Sections~\ref{sec:limitations}--\ref{sec:conclusion} the limitations and the
conclusion. Proofs of all numbered statements are collected in the first
appendix, Appendix~\ref{app:proofs}; the later appendices hold the computation
of the matrix in software, the implementation checks, and the supplementary
remarks and tables that the main text refers to.

%% file: sections/fig1_germ_identity.tex
\begin{figure}[!ht]
\centering
\begin{tikzpicture}[x=1cm,y=1cm,font=\small,>={Stealth[length=2mm]}]
  \node[draw,rounded corners,minimum width=1.6cm,minimum height=1.2cm,fill=gray!8] (x) at (0,0) {$x$};
  \node[below=1mm of x,font=\scriptsize,text width=2.2cm,align=center] {input\\($d$ coordinates)};
  \node[draw,rounded corners,minimum width=2.2cm,minimum height=1.2cm,fill=blue!6] (net) at (3.1,0) {$\Net$};
  \node[below=1mm of net,font=\scriptsize,text width=2.6cm,align=center] {ReLU network\\(affine on each region)};
  \draw[->] (x) -- (net);
  \begin{scope}[shift={(6.4,-1.05)}]
    \draw (0,0) rectangle (4.2,2.1);
    \draw (3.5,0) -- (3.5,2.1);
    \foreach \yy in {0.42,0.84,1.26,1.68} \draw[gray!50] (0,\yy) -- (4.2,\yy);
    \node[font=\scriptsize,align=center,text width=3.3cm] at (1.75,2.5) {$d$ columns: $\partial \Psi_c/\partial x_i \cdot x_i$\\[-1pt](gradient$\times$input, per class)};
    \node[font=\scriptsize,align=center,text width=1.6cm] at (3.85,2.55) {bias\\column};
    \node at (1.75,1.05) {$J(x)\,\mathrm{diag}(x)$};
    \node[font=\scriptsize] at (3.85,1.05) {$c(x)$};
    \draw[decorate,decoration={brace,mirror,amplitude=4pt}] (0,-0.12) -- (4.2,-0.12)
      node[midway,below=4pt,font=\scriptsize] {$\mathrm{M}(x)\in\mathbb{R}^{C\times(d+1)}$};
  \end{scope}
  \draw[->] (net) -- node[above,font=\scriptsize]{one matrix} (6.3,0);
  \begin{scope}[shift={(12.3,-1.05)}]
    \draw (0,0) rectangle (0.7,2.1);
    \foreach \yy in {0.42,0.84,1.26,1.68} \draw[gray!50] (0,\yy) -- (0.7,\yy);
    \node[font=\scriptsize,align=center,text width=2.4cm] at (0.35,2.5) {logits\\$\Netx$};
  \end{scope}
  \draw[->] (10.75,0) -- node[above,font=\scriptsize]{row sums}
            node[below,font=\scriptsize]{(exact)} (12.25,0);
\end{tikzpicture}
\caption{\textbf{The knowledge matrix encodes the germ of the network function.} At almost every
input, the network computes an affine map $x' \mapsto J x' + c$ on the surrounding linear
region --- its germ (Definition~\ref{def:germ}). The knowledge matrix arranges this germ as
one fixed-shape matrix
$\mathrm{M}(x)=[\,J(x)\,\mathrm{diag}(x) \mid c(x)\,]$: its first $d$ columns are per-class
gradient$\times$input contributions, its last column collects every bias term, and each row
sums \emph{exactly} to the corresponding logit. Because column $i$ is scaled by $x_i$, the
germ is \emph{recoverable} from the matrix at coordinates where $x_i\neq0$
(Remark~\ref{rem:zero-pixels}); at $x_i=0$ that column vanishes. Everything in this paper
follows from this picture: invariance (the germ does not change under reparameterizations that
preserve the function), completeness (the germ is recoverable from the matrix where the input
is nonzero), and the distance geometry
(the row-sum constraint splits any matrix change into a function-visible part and a
function-invisible remainder).}
\label{fig:germ}
\end{figure}

%% file: sections/previous_work.tex
\section{Previous Work}
\label{sec:previous}

\citet{armenta2021representation} define a neural network as a pair $(W\!,f)$:
a thin representation $W$ of the network quiver, whose vertices are the neurons
and whose arrows carry the weights, together with an activation $f$ at the
hidden vertices. They show that each input $x$ induces a second thin
representation $\kmapx$ of the same quiver, in which every arrow leaving an
input vertex carries its weight multiplied by that input coordinate, every
arrow leaving a bias vertex its weight, and every arrow leaving a hidden
vertex its weight multiplied by that vertex's activation-to-pre-activation
quotient $f(z)/z$ (their Definition~6.2). Their Theorem~4.13 states that an
isomorphism of neural networks --- a change of basis at the hidden vertices,
the input, bias and output vertices fixed, with the activation of a vertex
rescaled by $\tau$ carried as $z\mapsto\tau f(z/\tau)$ (their
Definitions~4.9 and~4.12) --- preserves the realized function; a relabeling
of the neurons is an automorphism of the quiver and is not among these
isomorphisms. Their Theorem~6.4 states that, under their standing assumption
that no pre-activation vanishes (their Remark~6.3), the network function
factorizes through the induced representation, $\Netx$ being the output of
$\kmapx$ on the all-ones vector, for an arbitrary activation; its proof
treats the hidden vertices that apply an activation to a weighted sum, and
their Section~5 records that average pooling sits inside the framework
while max-pooling, whose vertices take a maximum instead, breaks the
algebraic structure. Their Remark~5.4 places batch normalization at test
time inside the framework, its running statistics being ordinary weights.
\citet{armenta2022double} study the moduli spaces of double-framed quiver
representations, in which the isomorphism classes of networks live, prove that
the output of a network depends only on the corresponding point of that moduli
space, and contract a thin representation such as $\kmapx$ to a single matrix
by multiplying out its layers. Neural teleportation
\citep{armenta2023teleportation} realizes the change-of-basis group of the
representation $W$ as an operation on a trained network, a per-neuron
rescaling that leaves the function exactly unchanged and moves every hidden
activation, and studies its effect on the loss landscape and on optimization;
it is the transform Study~1 applies. \citet{leblanc2024hidden} call the map
$x\mapsto\kmapx$ the knowledge map and its contraction the knowledge matrix,
prove that the matrix reproduces the logits by its row sums and is invariant
under isomorphisms of neural networks, and propose it as the input to an
adversarial-example detector. From this strand we take the object itself and
its two inherited facts, the row-sum identity and invariance under changes of
basis, both verbatim for networks without max-pooling; the extension of the
contraction to max-pooling, through a locally constant selection matrix, and
to quivers with skip connections and parallel branches, through path sums,
is stated in Section~\ref{sec:germ} and is ours. The detection claim of
\citet{leblanc2024hidden} is not made here (Section~\ref{sec:negatives}).

The same quotient $f(z)/z$ is known to the attribution literature.
Gradient$\times$input \citep{shrikumar2016not} attributes a class score to the
input coordinates by the product of the input gradient with the input;
\citet{ancona2018towards} showed that $\varepsilon$-LRP \citep{bach2015lrp} is,
in the $\varepsilon\to0$ limit and with the bias included in the denominators,
a modified backward pass in which each unit's derivative is replaced by the
quotient $f(z)/z$, the diagonal from which the knowledge matrix is built, and
placed it, with DeepLIFT and deep Taylor decomposition
\citep{montavon2017deeptaylor}, in one family. FullGrad \citep{srinivas2019full}
proves that a network with biases decomposes exactly into an input-gradient
term and a bias-gradient term (their Proposition~3), that no saliency map
valued in $\R^{d}$ can in general be both complete --- the output recoverable
from the map and the input --- and weakly dependent on the input --- constant
on each activation region (their Proposition~1) --- and that an implicit bias
term extends the decomposition to
arbitrary nonlinearities (their Section~4). The affine-spline view
\citep{balestriero2018spline,balestriero2018madmax} reads a piecewise-linear
network as a per-region affine operator, which is the object the knowledge
matrix arranges, and Jacobian analyzes of bias-free denoisers
\citep{mohan2020robust} work with the same operator.
\citet{sundararajan2017axiomatic} state the axioms of Sensitivity,
Implementation Invariance and Completeness for integrated gradients;
Implementation Invariance is quantified over networks that agree globally and
is asked at every input. Among encodings regular at $x$, invariance under the
germ stabilizer of Theorem~\ref{thm:maxinv} implies it at $x$ and is stronger,
since it needs only local agreement; off the regular set two globally
equivalent encodings can give different matrices
(Section~\ref{sec:invariance}), so the knowledge matrix satisfies
Implementation Invariance in that everywhere-quantified sense only almost
everywhere, whereas integrated gradients satisfies it everywhere. \citet{crabbe2023evaluating} prove the
permutation-equivariance law that Proposition~\ref{prop:equivar}(iii) restates
for gradient-based explanations of invariant models. The sanity checks of
\citet{adebayo2018sanity} and the remove-and-retrain protocol of
\citet{hooker2019roar} are the faithfulness tests an attribution method must
face; we run neither, and this paper makes no attribution-quality claim.
\citet{ghorbani2019fragile} and \citet{dombrowski2019explanations} showed that
gradient-based maps, gradient$\times$input and integrated gradients among
them, move substantially under input perturbations that barely move the
output, and traced the motion to the walls between the affine regions of a
ReLU network; Study~2 measures that motion exactly, through the
visible/invisible split of Theorem~\ref{thm:pyth}. From
this strand we take the identification itself: Theorem~\ref{thm:weff}
identifies the knowledge matrix, for (LCS) networks, with per-class
gradient$\times$input plus an exact aggregate bias attribution, and we claim no
priority on that split.

The representation-similarity literature supplies the measures of Study~1.
Linear CKA \citep{kornblith2019similarity} compares two representations of the
same inputs through the Hilbert--Schmidt independence criterion of their Gram
matrices and is invariant to orthogonal transformations and isotropic scaling;
the unbiased estimator of \citet{song2012feature}, as used in the minibatch CKA
of \citet{nguyen2021minibatch}, removes its $O(1/n)$ upward bias. Orthogonal
Procrustes shape distance \citep{williams2021generalized} and Bures similarity
\citep{harvey2023bures} are metrics on the orthogonal-quotient shape space,
soft-matching distance \citep{khosla2024soft} quotients permutations only,
representational similarity analysis \citep{kriegeskorte2008representational}
compares distance matrices by rank, Gromov--Wasserstein distance
\citep{memoli2011gromov} compares metric-measure spaces up to isometry,
distance correlation \citep{szekely2007measuring} measures dependence, and
SVCCA \citep{raghu2017svcca} truncates before a canonical correlation analysis;
the ReSi benchmark \citep{klabunde2025resi} catalogs these measures by
invariance class. Two controls and two critiques govern how such numbers are
read: the random-network control of \citet{cui2022deconfounded} and the
shuffled-pair control of \citet{murphy2024debiased} separate learned structure
from input confounding and estimator bias, \citet{ding2021grounding} ground the
measures in statistical tests and document sensitivities that differ from
measure to measure, \citet{davari2023reliability} show that CKA can be
manipulated, and \citet{bansal2021revisiting} set the measures against model
stitching. From this strand we take the nine-measure panel and the two controls
of Study~1, whose question is whether any of these measures quotients out a
transformation that provably changes nothing about the function, and the
observation that the knowledge-matrix distance, being invariant under the whole
quiver-isomorphism group, needs no alignment step at all.

The symmetry that Study~1 exercises has a long history of its own. That a
positive per-neuron rescaling leaves a ReLU network's function unchanged while
moving every hidden activation is the positive-homogeneity symmetry on which
\citet{neyshabur2015path} build Path-SGD, which \citet{dinh2017sharp} use to
show that the sharpness of a minimum is reparameterization-dependent, and
which \citet{lyu2020gradient} assume in their analysis of gradient descent on
homogeneous networks. \citet{hashimoto2024unification} read such parametric
redundancy as a gauge symmetry, the usage we adopt, and
\citet{gorokhovik2016positively} survey the positively homogeneous functions
that Proposition~\ref{prop:lcs} classifies on the line. Permutation symmetry is
the other half of the quiver-isomorphism group: \citet{entezari2022permutation}
conjecture that it accounts for the barriers of linear mode connectivity, and
\citet{ainsworth2023gitrebasin} align networks modulo it. The identifiability
literature asks how much of a network its function determines:
\citet{phuong2020functional} characterize functional equivalence for ReLU
networks of non-increasing widths as permutation and positive rescaling,
\citet{rolnick2020reverse} recover a ReLU network's architecture and weights
from queries up to those symmetries, \citet{grigsby2022functional} measure the
functional dimension of ReLU networks, and \citet{flinth2026fibers} study the
fibers of the ReLU neuromanifold. \citet{chu2018exact} treat the whole
piecewise-linear family exactly by carrying a per-neuron slope and intercept,
\citet{lakshminarayanan2020npf} isolate the gating pattern as a feature in its
own right, and \citet{novak2018sensitivity} relate input--output Jacobian norms
to generalization. What this paper adds to the strand is stated as a theorem:
for a fixed architecture, among encodings regular at $x$ with no vanishing
coordinate, the stabilizer of the knowledge matrix in any group acting on the
parameters equals the stabilizer of the germ (Theorem~\ref{thm:maxinv},
Corollary~\ref{cor:stab-eq}), a group that contains the quiver-isomorphism
group and the global function stabilizer; across architectures the matrix is
a function of the germ, so any two encodings that realize the same germ at
$x$ and are regular there give the same matrix. We have not found a prior
result identifying a representation's stabilizer as maximal in this sense.

Finally, the knowledge matrix sits beside the objects of interpretability
without replacing any of them. Like sparse autoencoders
\citep{bricken2023monosemanticity,templeton2024scaling,cunningham2023sparse} and
the lens methods \citep{nostalgebraist2020logitlens,belrose2023tunedlens}, the
knowledge matrix expresses the output as an additive decomposition; unlike a
sparse autoencoder, whose overcomplete dictionary is learned to defeat
superposition, its basis is given by the network quiver, rows indexing classes
and columns indexing input coordinates and a bias slot, and it makes no
monosemanticity claim, so it occupies the opposite end of that design space. A
partial row sum is structurally the lens operation of projecting an intermediate
object through the unembedding \citep{elhage2021mathematical}, and the knowledge
matrix is defined for feedforward convolutional networks rather than transformer
residual streams; we run no such row-lens experiment here. Saliency maps
\citep{simonyan2013deep,selvaraju2017gradcam,smilkov2017smoothgrad} are
per-input like the knowledge matrix, whereas feature visualization
\citep{olah2017feature,olah2020zoom}, network dissection \citep{bau2017network}
and concept activation vectors \citep{kim2018tcav} interpret units and
directions across a population of inputs. We position the knowledge matrix as a
canonical complement to these methods and to CKA, not as a replacement for any
of them.

%% file: sections/section_germ.tex
\section{The knowledge matrix is the germ of the network function}
\label{sec:germ}

In this section, we show that the knowledge matrix of an input is the germ of
the network function at that input, and we draw the consequences: invariance
under every reparameterization that preserves the germ, completeness, and an
exact relation to gradient$\times$input attribution.
Table~\ref{tab:notation} collects the notation of this section and the next.

\input{sections/notation}

Throughout, a feedforward network is a pair $(W\!,f)$ in the sense of
\citet{armenta2021representation}: $W$ is a thin representation of the network
quiver $Q$, the graph whose vertices are the neurons --- $d$ input vertices,
the hidden vertices, bias vertices that feed the constant $1$, and $C$ output
vertices --- and whose arrows carry one weight each, and $f$ is the activation
function at the hidden vertices. The pair realizes the network function
$\Net:\R^d\to\R^C$ on $d$ input coordinates and $C$ output classes. Each input
$x$ induces a second thin representation $\kmapx$ of $Q$, the knowledge map of
\citet{leblanc2024hidden} and the representation $W^f_x$ of
\citet{armenta2021representation}: an arrow leaving input vertex $i$ carries
its weight multiplied by $x_i$, an arrow leaving a bias vertex carries its
weight, and an arrow leaving a hidden vertex $q$ carries its weight multiplied
by the activation-to-pre-activation quotient of $q$ at $x$. Theorem~6.4 of
\citet{armenta2021representation} states that, under their standing
assumption that no pre-activation vanishes (our $x\in X_{\mathrm{nz}}$ below),
the network function factorizes through it: $\Netx$ is the output of
$\kmapx$, read as a network with identity activations, on the all-ones
vector. The induced representation is linear, so its paths sum to a single
linear map from the input and bias vertices to the output vertices, and that
contraction \citep{armenta2022double} is the knowledge matrix $\KMwfx$
defined below, with one column per input vertex and the bias sources summed
into one column --- which is why it has exactly $d+1$ columns. The
construction is theirs for networks whose hidden vertices apply an activation
to a weighted sum; max-pooling vertices, which take a maximum instead and
which every network in our experiments contains, are outside the proof of
their Theorem~6.4 and, as they note, break the algebraic structure, and the
paragraph after \eqref{eq:km-def} says how we extend the contraction to
them. Write $z^{(\ell)}_q(x)$
for the pre-activation of hidden unit $q$ in layer $\ell$ at input $x$,
$h^{(\ell)}_q(x)=f\bigl(z^{(\ell)}_q(x)\bigr)$ for its activation, and
\begin{equation}
D^{(\ell)}(x)\;=\;\mathrm{diag}\!\left(\frac{f\bigl(z^{(\ell)}_q(x)\bigr)}{z^{(\ell)}_q(x)}\right)_{\!q},
\qquad\text{with the guard } 0/0\mapsto0,
\label{eq:D-def}
\end{equation}
for the diagonal matrix of that layer's \emph{activation-to-pre-activation
quotients}. This is the definition for an arbitrary activation $f$, not a
device for a special case: $D^{(\ell)}(x)$ is exactly the diagonal that the
induced representation $\kmapx$ carries on the arrows leaving layer $\ell$,
and Theorem~6.4 of \citet{armenta2021representation} gives the identity below
for any activation function. It is a $0/1$ \emph{mask}, recording the active
units, when $f$ is ReLU, since $\mathrm{ReLU}(z)/z=\mathbb 1[z>0]$ off $z=0$. The same
quotient is already familiar from the attribution literature:
\citet[Prop.~1]{ancona2018towards} show that $\varepsilon$-LRP is, in the
$\varepsilon\to0$ limit and with the bias included in the denominators, a
modified backward pass in which each unit's derivative is replaced by
$f(z)/z$, which is exactly the diagonal of \eqref{eq:D-def}, and they record the same
unboundedness caveat near $z=0$ that Remark~\ref{rem:xnz} records. The knowledge matrix
differs from $\varepsilon$-LRP in one structural respect: $\varepsilon$-LRP
carries no bias column, and the amount by which its per-class attributions fall
short of the logit is exactly the column $\cwf(x)$ below.
Write $X_{\mathrm{nz}}$ for the set of inputs at which no hidden
pre-activation vanishes. Define $\Jwf(x)\in\R^{C\times d}$ and
$\cwf(x)\in\R^{C}$ by path sums in $\kmapx$: the $(k,i)$ entry of $\Jwf(x)$
is the sum, over the directed paths from input vertex $i$ to output vertex
$k$, of the product of the weights along the path times the quotients
$f(z_q)/z_q$ of the hidden vertices the path passes through, and the $k$-th
entry of $\cwf(x)$, the accumulated bias, is the same sum over the paths from
the bias vertices to output vertex $k$. For a layered chain the paths
multiply out to $\Jwf(x)=W^{(L)}D^{(L-1)}(x)\cdots D^{(1)}(x)W^{(1)}$, the
product of the layers of $\kmapx$, and
$\cwf(x)=b^{(L)}+\sum_{\ell<L}W^{(L)}D^{(L-1)}(x)\cdots D^{(\ell)}(x)\,b^{(\ell)}$;
skip connections, concatenations and parallel branches, which make the
network quiver a directed acyclic graph rather than a chain, are covered by
the path sums, and every statement of this paper is about the path-sum
objects, the displayed products being their chain case. The knowledge
matrix, the contraction of $\kmapx$, is
\begin{equation}
\begin{gathered}
\KMwfx\;=\;\bigl[\,\Jwf(x)\,\mathrm{diag}(x)\;\big|\;\cwf(x)\,\bigr]
\;\in\;\R^{C\times(d+1)},\\[3pt]
\KMwfx\,\mathbf 1_{d+1}\;=\;\Netx
\quad\text{(exact on $X_{\mathrm{nz}}$, for \emph{every} activation $f$)} .
\end{gathered}
\label{eq:km-def}
\end{equation}

We impose no condition on $f$, and in particular none on $f(0)$; the one
hypothesis is $x\in X_{\mathrm{nz}}$, which no guard can remove, and its
exceptional set is empty exactly when $f(0)=0$ (Remark~\ref{rem:xnz},
Appendix~\ref{app:remarks-germ}).

Max-pooling enters through one more locally constant object. For a
max-pooling layer $\ell$ write $S^{(\ell)}(x)$ for the $0/1$ \emph{selection
matrix} that routes each pooling window to its stored argmax (the lowest flat
index on a tie, Remark~\ref{rem:pooling-ties},
Appendix~\ref{app:remarks-germ}). A pooling vertex carries the identity
activation, so its quotient is $1$, and in $\kmapx$ the arrows into it carry
the entries of $S^{(\ell)}(x)$: the layer enters the path sums, and the chain
product, as the linear map $S^{(\ell)}(x)$, and the row-sum identity holds
for it because the selected entry passes through unchanged. Off the finitely
many tie hyperplanes, on which two entries of one window have equal
pre-activations, $S^{(\ell)}$ is locally constant. For a network with
max-pooling, (LCS) at $x$ (Definition~\ref{def:lcs}) asks every $D^{(\ell)}$
\emph{and} every $S^{(\ell)}$ to be constant on a neighborhood of $x$, and
the regular set $X_{\mathrm{reg}}$ defined below is the set where this holds;
the tie hyperplanes are the extra walls that Lemma~\ref{lem:null} adds, so
``no pooling ties'' in Theorem~\ref{thm:weff} is what $X_{\mathrm{reg}}$ asks
of the pooling layers. This is the one point at which we extend the
construction of \citet{armenta2021representation}: for the networks of this
paper ``the contraction of $\kmapx$'' means the path sum with $S^{(\ell)}(x)$
inserted at the pooling layers, which coincides with their contraction on
the pooling-free part. The same layer can be re-encoded inside the (LCS)
class, since $\max(a,b)=b+\mathrm{ReLU}(a-b)$ writes a window maximum as
ReLU units with fixed $\pm1$ weights on an enlarged quiver, the tie wall
being the zero set of the pre-activation $a-b$; nothing in
Sections~\ref{sec:germ}--\ref{sec:geometry} depends on which encoding is
used, and we keep the selection matrix because it is what the implementation
stores.

We separate two things at the outset, since the rest of this section turns on
the distinction. Equation~\eqref{eq:km-def} is a \emph{definition} plus an
algebraic identity: $h_q=D_{qq}z_q$ holds by
construction, so the row sums reproduce the logits on $X_{\mathrm{nz}}$ for
every activation, with no genericity hypothesis beyond that and no appeal to
differentiability. What is special about the locally-constant-slope case
(Definition~\ref{def:lcs}) is not the definition but a \emph{theorem}: there,
and not in general otherwise (Remark~\ref{rem:lcs-strict}), the product
$\Jwf(x)$ coincides with the Jacobian $D\Net(x)$,
so the matrix records the germ (Theorem~\ref{thm:germ}). ``Piecewise linear''
is not the same hypothesis and does not suffice
(Remark~\ref{rem:lcs-strict}). For a smooth activation the quotient $f(z)/z$
is not $f'(z)$ --- at $z=0.105$ a $\tanh$ unit has quotient $0.9963$ against
derivative $0.9891$ --- and the germ reading is unavailable, though the row-sum
identity is untouched. Everything in Section~\ref{sec:geometry} that speaks of
regions, walls and crossings therefore belongs to the (LCS) case; the object
itself does not.

The letter $f$ does two different jobs in the literature, and only one of
them here. Throughout this paper $f$ denotes the \emph{activation} function and
$W$ the collection of weights, while $\Net$ is the \emph{network} function that
the pair realizes. This is why we write the knowledge
matrix as $\KMwf$, with local data $\Jwf$ and $\cwf$: all three are
built from the same $W$ and the same $f$, in the quiver-representation notation
of \citet{armenta2021representation}. The parameter collection is $W$
throughout; a transformed one is written $\tilde W$, the network it realizes
$\tilde\Psi=\Psi(\tilde W\!,f)$ and its matrix $\tilde{\M}=\M(\tilde W\!,f)$;
where a proof must tell a neuron permutation $\pi$ from a rescaling $\tau$ we
write $W^\pi$ and $W^\tau$ for the two. We stress that $f$ is
\emph{not} the network: writing the row-sum identity as $M(W{,}f)(x)\mathbf
1=f(x)$, as is sometimes done, uses $f$ in two incompatible senses.

\begin{definition}[Locally constant slope diagonal]\label{def:lcs}
A network has a \emph{locally constant slope diagonal} at $x$ if every slope
diagonal $D^{(\ell)}$ of \eqref{eq:D-def} and, for a network with max-pooling,
every selection matrix $S^{(\ell)}$ is constant on a neighborhood of $x$. We
abbreviate the condition (LCS).
\end{definition}

\begin{proposition}[What (LCS) is, and what it is not]\label{prop:lcs}
Let $\Net$ be the feedforward network fixed above, with activation $f$ applied
neuron-wise.
\emph{(i)} If (LCS) holds at $x$ and $x\in X_{\mathrm{nz}}$, then $\Net$ is
affine on a neighborhood of $x$, with $\Netxp=\Jwf(x)\,x'+\cwf(x)$ there.
\emph{(ii)} Let $f$ be continuous and let $E\subset\R$ be discrete, a set
with no accumulation point in $\R$. If $z\mapsto f(z)/z$ is locally constant
on $\R\setminus(E\cup\{0\})$, then
\[
f(z)\;=\;a_{+}\max(z,0)+a_{-}\min(z,0)
\qquad\text{for constants }a_{\pm};
\]
in particular $f(0)=0$, and one may take $E\subseteq\{0\}$. Conversely every
$f$ of that form has $f(z)/z$ locally constant off $\{0\}$, so a network built
from it satisfies (LCS) off the finitely many hyperplanes of
Lemma~\ref{lem:null}.
\emph{(iii)} For $f:\R\to\R$ the quotient $f(z)/z$ is locally constant on
$\R\setminus\{0\}$ if and only if $f$ is positively homogeneous of degree one
on each open half-line, $f(\lambda z)=\lambda f(z)$ for $\lambda>0$ and
$z\neq0$; for continuous $f$ this is again the family of (ii).
\end{proposition}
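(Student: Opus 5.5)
Part (i) is an induction on the layers (topological order of the quiver). On a neighborhood $U$ of $x$ on which every $D^{(\ell)}$ and every $S^{(\ell)}$ is constant, I intersect $U$ with a neighborhood on which no hidden pre-activation vanishes. That is possible because the pre-activations are continuous and nonzero at $x\in X_{\mathrm{nz}}$.

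On $U$ the identity $h_q=D_{qq}z_q$ then holds with $D_{qq}$ a constant. Each layer is therefore the affine map $z\mapsto D^{(\ell)}(x)(W^{(\ell)}z+b^{(\ell)})$, or the linear map $S^{(\ell)}(x)$ at a pooling layer. Composing these maps shows by induction that every pre-activation is affine on $U$. For a general DAG quiver the same path-sum expansion applies: each vertex value is a sum over incoming arrows, so the path sums are obtained by expanding products of the constant diagonals. The output is therefore $\Jwf(x)\,x'+\cwf(x)$, with exactly the path-sum $J$ and $c$ of \eqref{eq:km-def}, evaluated at the frozen diagonals.

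For (iii) I argue on each open half-line separately. Suppose $g(z)=f(z)/z$ is locally constant on $(0,\infty)$. Since this half-line is connected, $g$ is constant there, say equal to $a_+$, so $f(\lambda z)=\lambda a_+z=\lambda f(z)$ for $\lambda>0$. The same argument on $(-\infty,0)$ gives the constant $a_-$. Conversely, positive homogeneity gives $f(z)=z f(1)$ for $z>0$ and $f(z)=-z f(-1)$ for $z<0$, so the quotient is constant on each half-line. Hence it is locally constant on $\mathbb{R}\setminus\{0\}$, and $f$ has the displayed form there.

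The main obstacle is (ii). There the discrete exceptional set $E$ prevents the connectedness argument from applying directly: a priori $g=f/z$ could take a different constant on each interval between consecutive points of $E\cup\{0\}$. The plan is to use continuity. Take adjacent intervals $(e_1,e_2)$ and $(e_2,e_3)$ on which $g$ equals $\alpha$ and $\beta$, with $e_2\neq0$. Continuity of $f$ at $e_2$ gives $\alpha e_2=f(e_2)=\beta e_2$, hence $\alpha=\beta$. Discreteness of $E$ means each half-line is a countable chain of such intervals with no accumulation point. Induction along the chain in both directions then shows $g$ is constant on all of $(0,\infty)$, and likewise on $(-\infty,0)$. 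That yields $f(z)=a_+z$ for $z>0$ and $f(z)=a_-z$ for $z<0$.

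Continuity at $0$ then forces $f(0)=\lim_{z\to0}f(z)=0$. So $f$ has the stated form, and $E$ can be shrunk to a subset of $\{0\}$. For the converse, I would invoke (iii) to get local constancy of the quotient off $\{0\}$. Then (LCS) holds at every input where no pre-activation vanishes and no pooling tie occurs, which is the complement of the hyperplane arrangement of Lemma~\ref{lem:null}.
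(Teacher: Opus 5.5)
Your proof follows the paper's in all three parts. For (i) you write $h^{(\ell)}=D^{(\ell)}(x)\,z^{(\ell)}$ on a neighborhood and unroll. For (iii) you use connectedness of the two half-lines. For (ii) you use continuity at each point of $E\setminus\{0\}$ to force equal slopes on the two adjacent intervals. The paper phrases that last step as ``every such point is removable, so $f(z)/z$ is locally constant off $\{0\}$ and (iii) applies'', which is equivalent to your chain induction. Two steps need repair, both small.

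\emph{Continuity in (i).} You get the neighborhood on which no pre-activation vanishes by asserting that the pre-activations are continuous. Part (i), unlike (ii), assumes nothing about $f$, so for a discontinuous activation this is not available up front. Interleave it with the induction, as the paper does. The first-layer pre-activation $z^{(1)}$ is affine, hence nonzero on a smaller neighborhood, where $h^{(1)}=D^{(1)}(x)\,z^{(1)}$ is affine. Then $z^{(2)}$ is affine there, hence nonzero on a still smaller neighborhood, and so on.

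\emph{The converse of (ii).} The set where no pre-activation vanishes and no pooling tie occurs is not the complement of the hyperplanes of Lemma~\ref{lem:null}. Take a unit whose pre-activation vanishes identically on an open set: a zero weight row with zero bias, which empties $X_{\mathrm{nz}}$ altogether, or a ReLU unit in a bias-free layer all of whose inputs are inactive there. Such a unit removes a full-dimensional set from $X_{\mathrm{nz}}$, yet (LCS) still holds there because the guard gives $D_{qq}\equiv0$. Your sufficient condition for (LCS) is correct, but it does not reach those points. The hyperplane clause has to come from Lemma~\ref{lem:null} itself, which bounds the set where (LCS) fails directly and treats the identically vanishing case separately.
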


The second hypothesis of part~(i) is automatic when $f(0)=0$. Part~(ii) is a
statement about $f$ alone, and the family it exhibits is the two-parameter
leaky-ReLU family, which contains $\mathrm{ReLU}$ $(a_-{=}0)$, $|\cdot|$
$(a_-{=}{-}1)$ and the identity $(a_+{=}a_-{=}1)$; $f(0)=0$ follows rather
than being assumed. Parts~(ii) and~(iii) restate the classical classification
of positively homogeneous functions on the line, on which we claim no novelty
(the attribution note is in Appendix~\ref{app:remarks-germ}).
Part~(ii) needs $E$ discrete rather than merely Lebesgue-null --- a
Cantor-function counterexample shows the weakening is false --- and the bridge
from a network satisfying (LCS) off finitely many hyperplanes to an activation
in this family, together with the attribution of (ii)--(iii) to the classical
classification of positively homogeneous functions, is in
Appendix~\ref{app:remarks-germ}. (LCS) is strictly stronger than piecewise
linearity: hard-tanh is piecewise linear, satisfies (LCS) nowhere at $|z|>1$,
and the germ identity fails for it by $7.8$ on the network of
Appendix~\ref{app:remarks-germ} (Remark~\ref{rem:lcs-strict}).

(LCS) is the hypothesis that the germ results actually use, and stating
it on $D$ rather than on $f$ is what makes them sharp. Piecewise linearity of
the activation is neither the condition nor a synonym for it: it delivers a
germ, since the network is still piecewise affine, but it does not deliver
$\Jwf(x)=D\Net(x)$, which is what makes the knowledge matrix \emph{be} that
germ (Remark~\ref{rem:lcs-strict}).

Unless a statement says otherwise, we work throughout this section and
Section~\ref{sec:geometry} with networks that satisfy (LCS), and for a
\emph{network} the phrase means three things: the activation $f$ is
continuous, (LCS) holds at every input outside a finite union of affine
hyperplanes, and some first-layer unit has a nonzero weight row. By the bridge
of Appendix~\ref{app:remarks-germ}, $f$ then lies in the leaky-ReLU family of
Proposition~\ref{prop:lcs}(ii), so $f(0)=0$, the exceptional set of
Remark~\ref{rem:xnz} is empty, $X_{\mathrm{nz}}=\R^d$, and
Lemma~\ref{lem:null} applies. Every
architecture in our experiments does: ReLU throughout, with affine layers,
batch normalization in evaluation mode and average pooling, and max-pooling,
which enters through the selection matrices $S^{(\ell)}(x)$ above and whose
tie hyperplanes are the extra walls of Lemma~\ref{lem:null}. We keep ``PL'' as
descriptive shorthand for that setting; the load-bearing hypothesis is (LCS).

We first fix the object that the knowledge matrix records. Write
$X_{\mathrm{reg}}\subseteq\R^d$ for the \emph{regular set}: the inputs at which
(LCS) holds, i.e.\ at which every slope diagonal $D^{(\ell)}$ of
\eqref{eq:D-def} and every selection matrix $S^{(\ell)}$ is constant on a
neighborhood. This is the only definition of
$X_{\mathrm{reg}}$ used in the paper. It is a condition on the slope diagonal,
not on the $0/1$ activation pattern, and the two differ for
activations that are piecewise linear without satisfying (LCS)
(Remark~\ref{rem:lcs-strict}); under the hypothesis of
Proposition~\ref{prop:lcs}(ii) they agree. Lemma~\ref{lem:null} then puts the
complement of $X_{\mathrm{reg}}$ inside a finite union of affine hyperplanes, so
$X_{\mathrm{reg}}$ is open, dense and of full Lebesgue measure; on it the
network coincides with a single affine map on a neighborhood of each point
(Proposition~\ref{prop:lcs}(i)).

\begin{definition}[Germ of a network satisfying (LCS)]\label{def:germ}
Let $\Net:\R^d\to\R^C$ satisfy (LCS) and let $x\in X_{\mathrm{reg}}$. Two
functions agreeing on \emph{some} neighborhood of $x$
define the same \emph{germ} at $x$; the germ of $\Net$ at $x$ is the equivalence
class of $\Net$ under this relation. By Proposition~\ref{prop:lcs}(i) the
network is affine on a neighborhood of $x$, so this class is represented by the
local affine pair $\bigl(\Jwf(x),\,\cwf(x)\bigr)$ with
$\Netxp=\Jwf(x)\,x'+\cwf(x)$ for all $x'$ near $x$.
\end{definition}

Throughout, ``germ'' means this local affine pair. It depends on $W$ only
through the function $\Net$ realizes locally, not through how $W$ encodes it.
That this pair is also the first-order Taylor data
$\bigl(D\Net(x),\,\Netx\bigr)$ of the realized function is
Theorem~\ref{thm:germ} and not part of the definition. The distinction is the
whole point of the hypothesis: piecewise linearity of $f$ delivers the germ, so
Definition~\ref{def:germ} would survive it, but it does not deliver
$\Jwf(x)=D\Net(x)$ (Remark~\ref{rem:lcs-strict}).

\begin{theorem}[Germ identity, under (LCS)]\label{thm:germ}
Let the network satisfy (LCS) (Definition~\ref{def:lcs}). For every
$x\in X_{\mathrm{reg}}$, $\Net$ is affine on a neighborhood of $x$ and
\[
\Jwf(x)=D\Net(x),\qquad \cwf(x)=\Netx-D\Net(x)\,x,
\]
\[
\KMwfx=\bigl[\,D\Net(x)\,\mathrm{diag}(x)\;\big|\;\Netx-D\Net(x)\,x\,\bigr].
\]
In particular $\KMwfx$ depends on $W$ only through the germ of the
realized function at $x$. For $x\notin X_{\mathrm{reg}}$ the identity
$\KMwfx\mathbf 1=\Netx$ still holds, since a network satisfying (LCS) has
$f(0)=0$ and hence $X_{\mathrm{nz}}=\R^d$.
\end{theorem}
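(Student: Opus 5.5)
The plan is to reduce everything to Proposition~\ref{prop:lcs}(i) and the uniqueness of the derivative of an affine map. Fix $x\in X_{\mathrm{reg}}$. Since the network satisfies (LCS), the bridge of Appendix~\ref{app:remarks-germ} puts $f$ in the leaky-ReLU family of Proposition~\ref{prop:lcs}(ii), so $f(0)=0$ and $X_{\mathrm{nz}}=\R^d$; both hypotheses of Proposition~\ref{prop:lcs}(i) therefore hold at $x$. That proposition gives a neighborhood $U$ of $x$ on which every $D^{(\ell)}$ and every $S^{(\ell)}$ is constant and
\[
\Netxp=\Jwf(x)\,x'+\cwf(x)\qquad\text{for all }x'\in U .
\]
I would make explicit why $\Jwf(x)$ and $\cwf(x)$ are the \emph{same} matrices for every $x'\in U$. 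They are path sums whose only $x$-dependence is through the quotient diagonals and the selection matrices, and these are frozen on $U$. Combining the row-sum identity \eqref{eq:km-def} at $x'$ with $\M(W\!,f)(x')\mathbf 1=\Jwf(x')x'+\cwf(x')$ then gives the display.

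The identification follows. An affine map on an open set is differentiable there, and its derivative is its linear part. Differentiating at $x$ therefore gives $D\Net(x)=\Jwf(x)$. Evaluating at $x'=x$ gives $\cwf(x)=\Netx-\Jwf(x)x=\Netx-D\Net(x)\,x$. Substituting both into the definition \eqref{eq:km-def} of $\KMwfx$ yields the stated block form.

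For the invariance clause, the right-hand side involves only $D\Net(x)$ and $\Netx$. Both are determined by the restriction of $\Net$ to any neighborhood of $x$, that is, by the germ in the sense of Definition~\ref{def:germ}. So two encodings $W$ and $\tilde W$ that agree near $x$, and are regular there, produce the same matrix.

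The final sentence needs no regularity at all. Since $f(0)=0$ we have $X_{\mathrm{nz}}=\R^d$, and the row-sum identity in \eqref{eq:km-def} holds on $X_{\mathrm{nz}}$ by the algebraic identity $h_q=D_{qq}z_q$ together with $S^{(\ell)}$ passing the selected entry through. That identity holds with the guard $0/0\mapsto0$, which is consistent because $f(0)=0$. It is unrolled along the path sums layer by layer, or vertex by vertex in topological order for a DAG.

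The main obstacle is the local-constancy step, specifically justifying that the path sums $\Jwf(x')$ and $\cwf(x')$ are literally constant on $U$ in the max-pooling and DAG cases. For max-pooling, ties are excluded on $X_{\mathrm{reg}}$, and I would check that the stored argmax selection matrix coincides with the genuine local selection there. I would handle this by an induction over a topological order of the quiver. The inductive claim is that each vertex's output equals a fixed affine function of $x'$ on $U$ whose linear part and constant part are the partial path sums ending at that vertex. The inductive step uses $h_q=D_{qq}z_q$ with $D_{qq}$ constant on $U$, and $S^{(\ell)}$ constant on $U$. This induction is presumably the content of Proposition~\ref{prop:lcs}(i), which I may cite, but I would spell it out if the DAG case is not visibly covered there.
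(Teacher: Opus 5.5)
Your proposal is correct and follows the paper's proof. You freeze every $D^{(\ell)}$ and $S^{(\ell)}$ on a neighborhood $U$ of $x$, unroll $h^{(\ell)}=D^{(\ell)}(x)\,z^{(\ell)}$ to get $\Netxp=\Jwf(x)\,x'+\cwf(x)$ on $U$, differentiate at $x$, and solve for $\cwf(x)$; for the boundary clause, the paper phrases the same pointwise identity as agreement of $Jx'+c$ with $\Net$ on the closed region $\overline{S(x)}$, evaluated at $x\in S(x)$, which amounts to your direct use of $h_q=D_{qq}z_q$ at vanishing pre-activations, valid because $f(0)=0$.
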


At a point outside $X_{\mathrm{reg}}$ the guard $0/0\mapsto0$ of
\eqref{eq:D-def} assigns slope $0$ to a unit whose pre-activation vanishes
there. The per-region affine operator the theorem exhibits is the object of the
spline view of deep networks \citep{balestriero2018spline}; the two readings of
the boundary case (the row-sum half holds for every activation, the slope-$0$
germ-selection half is ReLU-specific), and why \citet[Prop.~1]{srinivas2019full},
which concerns saliency maps valued in $\R^{d}$, does not bear on the
$C\times(d{+}1)$ matrix, are in Appendix~\ref{app:remarks-germ}. Column $i$ of $\M(x)$ is $J_{:,i}\,x_i$ and
vanishes identically on $\{x_i=0\}$ --- a Lebesgue-null set of positive
probability under raw image data --- so germ \emph{recovery}
(Theorem~\ref{thm:complete}(i)) needs $x_i\neq0$ while invariance
(Theorem~\ref{thm:maxinv}) does not (Remark~\ref{rem:zero-pixels},
Appendix~\ref{app:remarks-germ}).

\subsection{Invariance: the largest possible group}
\label{sec:invariance}

The knowledge matrix is built from the quiver-representation approach to
neural networks \citep{armenta2021representation,armenta2022double} in two
steps, each of which respects relabelings of the hidden vertices and changes
of basis at them: the network $(W\!,f)$ induces the representation $\kmapx$ on
the input $x$, and the matrix is its contraction. This fixes the invariance
picture before any analysis. We keep two sources of invariance apart, because
they need entirely different machinery.

\emph{Quiver isomorphisms} act on the network $(W\!,f)$ through two kinds of
generator: relabelings of the hidden vertices of a layer, which are
automorphisms of $Q$ acting on $W$ by pull-back, and changes of basis at the
hidden vertices by nonzero per-neuron factors, which are isomorphisms of thin
representations of the fixed quiver $Q$ fixing the framed input, bias and
output vertices --- the isomorphisms of neural networks of
\citet{armenta2021representation}. Under either, the representations induced
on $x$ by the two networks are related by the same relabeling or change of
basis, and thin representations related by an isomorphism fixing the framed
vertices contract to the same matrix, so the knowledge matrix is invariant by
construction. This is one telescoping lemma, stated next. Theorem~4.13 of
\citet{armenta2021representation} covers the changes of basis, of which
positive rescalings are the ReLU case of Theorems~4.1--4.2 of
\citet{leblanc2024hidden}; invariance under relabelings is relabeling
invariance, covered by the same telescoping
(Appendix~\ref{app:proof-quiver-inv}). A relabeling is not an isomorphism of
representations of $Q$ --- the pulled-back representation is in general not
isomorphic to $W$ as a representation of $Q$ --- and we do \emph{not}
attribute permutation invariance to those theorems: they are about changes of
basis, and permutation invariance is the automorphism case of the same
construction, not a corollary of them.

\emph{Function-preserving changes that are not quiver isomorphisms} --- dead-unit
insertion, neuron splitting, and above all moving to a \emph{different
architecture} that realizes the same function near $x$ --- lie in the full
stabilizer of the germ but outside the quiver-isomorphism group. Invariance
under these is exactly what is \emph{not} automatic, and it is the genuinely
novel content of the germ identity (Theorem~\ref{thm:maxinv}): the matrix
depends on $W$ only through the germ of Definition~\ref{def:germ}, so any
two encodings of the same local function --- however different their graphs ---
produce the same matrix.

The invariance is with respect to weight-space conventions at fixed input
coordinates. An input shift $x\mapsto x+m$ absorbed into
$b^{(1)}\mapsto b^{(1)}-W^{(1)}m$ --- the same function of the raw pixels ---
gives $[\,J\,\mathrm{diag}(x+m)\mid c-Jm\,]\neq[\,J\,\mathrm{diag}(x)\mid c\,]$:
the mean-shift non-invariance that \citet{kindermans2019unreliability} proved
for gradient$\times$input, integrated gradients and DeepLIFT, which is why the
$x$ in every statement is the preprocessed input the network sees
(Remark~\ref{rem:zero-pixels}, Appendix~\ref{app:remarks-germ}).

\begin{lemma}[Quiver-isomorphism invariance]\label{lem:quiver-inv}
Let $(\tilde W\!,\tilde f)$ be obtained from $(W\!,f)$ by a quiver
isomorphism: a relabeling of the hidden vertices of a layer, a change of
basis at the hidden vertices by nonzero factors, or any composite thereof,
with the per-neuron activations carried with their units --- a unit rescaled
by $\tau\neq0$ carrying $\tilde f_q=f_\tau$, $f_\tau(z)=\tau\,f(z/\tau)$.
Then $\M(\tilde W\!,\tilde f)(x)=\KMwfx$ for every $x\in\R^d$.
\end{lemma}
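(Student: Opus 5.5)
The plan is to prove the lemma directly on the path sums that define $\Jwf(x)$ and $\cwf(x)$, by a telescoping argument on each generator, and then to close under composition. Since the knowledge matrix is the pair $(\Jwf(x),\cwf(x))$ arranged as $[\,\Jwf(x)\,\mathrm{diag}(x)\mid\cwf(x)\,]$, and the factor $\mathrm{diag}(x)$ does not involve the weights, it suffices to show that $J(\tilde W\!,\tilde f)(x)=\Jwf(x)$ and $c(\tilde W\!,\tilde f)(x)=\cwf(x)$ for every $x\in\R^d$. The claim is stated for every $x$, including inputs where some pre-activation vanishes, so the $0/0\mapsto0$ guard of \eqref{eq:D-def} must be handled explicitly. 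Composites follow at once: if each generator preserves the matrix at every $x$, then so does any finite composite, because the hypotheses are re-established after each step (the transformed network is again a pair with per-neuron activations).

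\textbf{Relabelings.} A relabeling $\pi$ of the hidden vertices of one layer is an automorphism of $Q$ that fixes the input, bias and output vertices. It therefore induces a bijection between the directed paths from an input or bias vertex $i$ to an output vertex $k$ in $Q$ and the corresponding paths in the relabeled quiver. Under pull-back, each arrow of the new path carries the weight of its image arrow. The pre-activations of the transformed network at $x$ are the permuted pre-activations of the original; this follows by induction along a topological order of the DAG. Consequently every vertex quotient $f(z_q)/z_q$, guard included, is carried to the same value. Each path product is then reproduced term by term, and so are the path sums. For a max-pooling layer, the selection matrix is conjugated by the same permutation. The tie-break rule ``lowest flat index'' is where a subtlety could arise, so I would restrict relabelings to the hidden units of weighted layers, or note that a permuted argmax under an adapted index convention still selects the same unit.

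\textbf{Changes of basis.} Rescale hidden vertex $q$ by $\tau\neq0$: incoming weights (including the bias weight) are multiplied by $\tau$, outgoing weights by $\tau^{-1}$, and the activation becomes $f_\tau(z)=\tau f(z/\tau)$. By induction along the topological order, the new pre-activation is $\tilde z_q=\tau z_q$ and the new activation is $\tilde h_q=\tau h_q$. All downstream pre-activations are therefore unchanged. The new quotient is $f_\tau(\tau z_q)/(\tau z_q)=\tau f(z_q)/(\tau z_q)=f(z_q)/z_q$ when $z_q\neq0$. When $z_q=0$ we have $\tilde z_q=0$ as well, so both quotients are guarded to $0$. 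In both cases the quotient is identical. Now take any path through $q$. It uses one arrow into $q$, scaled by $\tau$, and one arrow out of $q$, scaled by $\tau^{-1}$, while the quotient at $q$ is unchanged. The factors telescope and the path product is preserved. Paths that avoid $q$ are untouched, and paths that start at a bias vertex feeding $q$ telescope in the same way. Hence $J$ and $c$ are unchanged.

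I expect the main obstacle to be this bookkeeping at the boundary. One part is checking that the guard behaves consistently for vanishing pre-activations, which the computation above settles. The other is placing max-pooling vertices correctly: they carry the identity activation, and a change of basis at such a vertex must commute with the argmax. A negative $\tau$ would reverse the ordering, so I would either restrict changes of basis to the weighted hidden vertices, or argue that rescaling a whole window's inputs by the same positive factor leaves $S^{(\ell)}$ unchanged.
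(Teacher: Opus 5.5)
Your proposal is correct and follows the paper's proof. You take relabelings and nonzero rescalings as the two generators, show each preserves every hidden quotient (permutation-equivariance of the entrywise guard, and the chord identity $f_\tau(\tau z)/(\tau z)=f(z)/z$ with the guard kept because $\tau z=0$ iff $z=0$), telescope the $\tau,\tau^{-1}$ factors, and compose. The only differences are presentational: you telescope along individual paths of the DAG rather than through the layered products $T_\ell W^{(\ell)}T_{\ell-1}^{-1}$ and $P_\ell W^{(\ell)}P_{\ell-1}^{\top}$, and your caution at max-pooling windows (channel permutations, positive window-uniform factors) matches the scope the paper states right after the lemma.
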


For ReLU and $\tau>0$ one has $f_\tau=f$, so $\tilde f=f$ and the rescaled
network is again a ReLU network; for $\tau<0$ the unit carries
$f_\tau(z)=\min(z,0)$ and the network has left the ReLU class, which is why
Theorem~\ref{thm:complete}(iii) needs $\tau>0$ while the lemma does not. The
scope of the lemma is per-vertex scalar activations: a channel permutation
and a positive channel rescaling commute with spatial max-pooling and its
lowest-index tie-break, so the lemma covers the pooling networks of our
experiments for those transforms, whereas a negative $\tau$ upstream of a
max-pooling window carries the maximum to a minimum
\citep[Remark~4.14]{armenta2021representation} and is not covered there.
The conclusion holds at every input and not only on $X_{\mathrm{nz}}$: the
guard $0/0\mapsto0$ of \eqref{eq:D-def} is applied entrywise on both sides,
and on $X_{\mathrm{nz}}$, which is the same set for both networks since
$\tau z=0$ if and only if $z=0$, no guard is invoked.
The lemma rests on one substitution: in place of the derivative $f'(z_q)$, each
hidden unit's slope is its \emph{secant} (chord) $a_q=f(z_q)/z_q$, with
$a_q=0$ when $z_q=0$, so that $h_q=a_qz_q$ holds exactly and the diagonal of
chords is precisely $D^{(\ell)}(x)$ of \eqref{eq:D-def}. The chord discussion
--- its identity with the modified gradient of $\varepsilon$-LRP, the two
consequences used repeatedly (exactness at every input when $f(0)=0$, and the
smooth-activation case in which the chord absorbs FullGrad's implicit-bias
term), and why the rescaling half holds for every $\tau\neq0$ while
Theorem~\ref{thm:complete}(iii) needs $\tau>0$ --- is in
Appendix~\ref{app:remarks-germ}.

\begin{theorem}[Maximal invariance: across architectures]\label{thm:maxinv}
Let $N_1,N_2$ be networks satisfying (LCS) that realize the same function on a
neighborhood of $x\in X_{\mathrm{reg}}(N_1)\cap X_{\mathrm{reg}}(N_2)$. Then
$\M_1(x)=\M_2(x)$: the matrix is a function of the germ
(Definition~\ref{def:germ}), so any two encodings --- architecture changes
included --- that realize the same germ at $x$ and are regular there give the
same matrix. Consequently, for a fixed architecture, every transformation
$W\mapsto\tilde W$ of its parameters that preserves the germ at $x$ and keeps
$x\in X_{\mathrm{reg}}$ preserves $\M(x)$: $\M(\cdot)(x)$ is invariant under
the stabilizer of the germ at $x$ --- the \emph{germ stabilizer} --- in any
group acting on the parameters, among encodings at which $x$ remains regular.
\end{theorem}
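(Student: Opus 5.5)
The plan is to reduce everything to the germ identity, Theorem~\ref{thm:germ}, applied separately to $N_1$ and $N_2$. Since $x\in X_{\mathrm{reg}}(N_1)$, that theorem gives
\[
\M_1(x)=\bigl[\,D\Psi_1(x)\,\mathrm{diag}(x)\;\big|\;\Psi_1(x)-D\Psi_1(x)\,x\,\bigr].
\]
The same formula holds for $N_2$ with $\Psi_2$ in place of $\Psi_1$. Both networks have input dimension $d$ and output dimension $C$, because they realize the same map $\R^d\to\R^C$ near $x$. The two matrices therefore have the same shape $C\times(d+1)$, whatever their internal quivers are, so comparing them entrywise makes sense.

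Next I will show that the right-hand side depends only on the germ. By hypothesis there is an open neighborhood $U\ni x$ on which $\Psi_1=\Psi_2$. The value at $x$ agrees trivially. The derivatives agree as well: $D\Psi_1(x)$ and $D\Psi_2(x)$ both exist, since each network is affine near $x$ by Proposition~\ref{prop:lcs}(i). A derivative at an interior point of $U$ is determined by the values of the function on $U$, so $D\Psi_1(x)=D\Psi_2(x)$. More concretely, on $U_1\cap U_2\cap U$ both functions are affine, say $J_kx'+c_k$, and two affine maps that agree on a nonempty open set have equal $J$ and equal $c$. Substituting into the displayed formula gives $\M_1(x)=\M_2(x)$. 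This is exactly the statement that $\M(x)$ factors through the germ of Definition~\ref{def:germ}.

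The fixed-architecture consequence is then a specialization. Take $N_1=(W\!,f)$ and $N_2=(\tilde W\!,f)$, with $\tilde W=g\cdot W$ for some $g$ in a group acting on the parameters. Suppose $g$ preserves the germ at $x$, meaning $\Psi(\tilde W\!,f)=\Net$ on some neighborhood of $x$, and suppose $x\in X_{\mathrm{reg}}(\tilde W)$. Then the hypotheses of the first part hold, so $\M(\tilde W\!,f)(x)=\Mfx$. The set of such $g$ is the germ stabilizer, intersected with the regularity condition, and we have shown it is contained in the stabilizer of $\M(\cdot)(x)$.

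I expect the only delicate point to be bookkeeping rather than analysis. The regularity hypothesis must be imposed on \emph{both} encodings: the germ identity is invoked for each network on its own, and off $X_{\mathrm{reg}}$ the guard $0/0\mapsto0$ can make the contraction differ from the Jacobian, as in the identity-versus-$g$ example the paper mentions. I will also check that ``network satisfying (LCS)'' gives $X_{\mathrm{nz}}=\R^d$, so the row-sum and affine-germ statements apply at $x$ with no further condition. Everything else follows directly from Theorem~\ref{thm:germ}.
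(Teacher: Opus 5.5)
Your proposal is correct and is the same argument as the paper's proof. You apply Theorem~\ref{thm:germ} to each network to write $\M(x)=[\,D\Psi(x)\,\mathrm{diag}(x)\mid\Psi(x)-D\Psi(x)\,x\,]$, observe that this depends only on the shared germ, and obtain the stabilizer statement as the special case $N_1=(W\!,f)$, $N_2=(\tilde W\!,f)$. The identity-versus-$g$ example you flag for the two-sided regularity hypothesis is exactly the sharpness witness the paper's proof records.
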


We place no restriction on $N_1$ and $N_2$ beyond (LCS): they may
differ in architecture, width, depth and parameters, and that
cross-architecture case is the proper payload of the theorem --- it is
\emph{not} a quiver isomorphism (Lemma~\ref{lem:quiver-inv}) and could not
follow from one. What the theorem proves is, across architectures, that
$\M(x)$ is a function of the germ, and, for a fixed architecture and any group
acting on its parameters, one inclusion,
$\mathrm{Stab}(\text{germ at }x)\subseteq\mathrm{Stab}(\M(x))$ among encodings
regular at $x$: every germ-preserving change that keeps $x$ regular preserves
the matrix. The reverse inclusion --- which is
what would make ``maximal'' literal rather than rhetorical --- does not follow
from this theorem and is not automatic; it is Corollary~\ref{cor:stab-eq}, and
it needs the extra hypothesis that every $x_i\neq0$.

\begin{corollary}[Permutation invariance, at every input]\label{prop:perm}
Let the activations be elementwise, so that a hidden-layer permutation matrix
$P$ commutes with the nonlinearity applied neuron-wise. Then for hidden-layer
permutations (weights conjugated, neuron-wise activations carried with the
units), $\KMarg{W^\pi}{x}=\KMwfx$ for \emph{every} $x\in\R^d$, region
boundaries included.
\end{corollary}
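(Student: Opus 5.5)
The plan is to deduce the corollary from Lemma~\ref{lem:quiver-inv}, specialized to the relabeling generator, and to check directly that nothing in the argument uses regularity of $x$. A hidden-layer permutation $\pi$ of layer $\ell$ acts on the parameters by $W^{(\ell)}\mapsto PW^{(\ell)}$, $b^{(\ell)}\mapsto Pb^{(\ell)}$, $W^{(\ell+1)}\mapsto W^{(\ell+1)}P^{\top}$, with the activations carried with their units. Because the activations are elementwise, $f(Pz)=Pf(z)$. The first step is therefore to show, by induction on depth, that the permuted network has pre-activations $\tilde z^{(\ell)}(x)=Pz^{(\ell)}(x)$ at every $x$, and that all other layers are unchanged. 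This step needs no regularity or differentiability hypothesis; it is pure bookkeeping.

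The second step is the slope diagonal. The quotient $f(z_q)/z_q$, with the guard $0/0\mapsto0$, is computed entrywise and is carried with unit $q$. Hence
\[
\tilde D^{(\ell)}(x)=P\,D^{(\ell)}(x)\,P^{\top}
\]
at every $x$, including at a vanishing pre-activation, since the guard is applied to the same entry on both sides. If a max-pooling layer follows, a channel permutation commutes with spatial windowing, and the lowest-flat-index tie-break is taken within a channel. So the selection matrix conjugates in the same way, $\tilde S=P S P^{\top}$ on the channel index, as noted after Lemma~\ref{lem:quiver-inv}.

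The third step is the telescoping. In the chain product for $\tilde J(x)$, the factor $W^{(\ell+1)}P^{\top}\,P D^{(\ell)}P^{\top}\,PW^{(\ell)}$ collapses to $W^{(\ell+1)}D^{(\ell)}W^{(\ell)}$ because $P^{\top}P=I$. The bias terms $\tilde b^{(\ell)}=Pb^{(\ell)}$ entering $c$ cancel in the same way. This gives $\tilde J(x)=J(x)$ and $\tilde c(x)=c(x)$. In the path-sum formulation for DAGs, the same conclusion follows more directly: $\pi$ is a bijection of paths that preserves every product of weights and quotients.

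The input columns are then $J\,\mathrm{diag}(x)$, which is unchanged because input vertices are not permuted. This gives $\M(W^\pi\!,f)(x)=\KMwfx$ for all $x$, and composites of permutations across layers follow by iteration.

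The point to watch is the boundary. Here one must not route the argument through Theorem~\ref{thm:maxinv}, which needs $x\in X_{\mathrm{reg}}$ for both encodings, nor through the germ identity. Off the regular set the guarded quotient is not a derivative, so the argument must rest solely on the entrywise conjugation of $D$ and $S$. The only subtlety I expect is the pooling tie-break: it has to be index-order based within a channel, so that it is equivariant under channel permutations. This is why the corollary is stated for channel (hidden-unit) permutations and not for arbitrary spatial relabelings.
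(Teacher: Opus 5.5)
Your proposal is correct and follows the paper's own proof essentially step for step. It uses induction to get $z^{(\ell)}(W^\pi,x)=P_\ell z^{(\ell)}(W,x)$ at every $x$, then applies the guarded chord entrywise so that $D^{(\ell)}_\pi(x)=P_\ell D^{(\ell)}(x)P_\ell^{\top}$ even at vanishing pre-activations, then telescopes the slope product and the bias accumulation, with no appeal to regularity or the germ identity; your max-pooling remark likewise matches the paper's scope note that channel permutations commute with spatial max-pooling and its lowest-index tie-break.
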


This is the permutation case of Lemma~\ref{lem:quiver-inv}, so what the
corollary contributes is its \emph{scope}. The conclusion holds at region
boundaries because the slope diagonal \eqref{eq:D-def} is the chord $f(z)/z$
with the guard $0/0\mapsto0$ rather than $f'(z)$ --- the secant construction
described after that lemma and in Appendix~\ref{app:remarks-germ} --- which is
what makes it hold there as well as on the regular set. The telescoping needs
elementwise (neuron-wise equivariant) activations;
a non-equivariant \emph{vector} activation --- a softmax or a normalization
taken across a layer --- is excluded, because $P$ then fails to commute with
it. Within that scope the conclusion holds at every input, with no genericity
hypothesis at all. When $f(0)\neq0$ the \emph{row-sum} identity
\eqref{eq:km-def} needs $x\in X_{\mathrm{nz}}$, but that requirement falls on
both networks equally and is no limit on the scope just described: at an
exactly-zero pre-activation $\KMarg{W^\pi}{x}$ and $\KMwfx$ fall short of the
logits by the \emph{same} amount, so the shortfall is a property of the
definition rather than of the permutation, and $P$ does commute with the
constant offset $f(0)\mathbf 1$. The proof records the measurement.

So permutations and positive rescalings telescope identically; neither is the
``gap'' --- both are quiver isomorphisms. \emph{Function-preserving surgery}
(dead-unit insertion, neuron splitting, architecture change) is the part that
genuinely needs the germ identity, and is covered at generic $x$ by
Theorem~\ref{thm:maxinv}.

Genericity is not removable: the
identity map and $g(x)=\mathrm{ReLU}(x-1)-\mathrm{ReLU}(1-x)+1\equiv x$ have
$\M_{\mathrm{id}}(1)=[\,1\mid 0\,]\neq[\,0\mid 1\,]=\M_g(1)$ at the
breakpoint $x=1$ (with equal row sums). The point $x=1$ is the common
breakpoint of the two ReLU pieces out of which $g$ is built, since each of
$\mathrm{ReLU}(x-1)$ and $\mathrm{ReLU}(1-x)$ is non-differentiable exactly
there. So $x=1$ lies on a wall of $g$'s region decomposition,
$1\notin X_{\mathrm{reg}}(g)$, even though the function $g$ realizes --- the
identity --- is smooth at $1$, and $1$ is an interior point of the
identity network's single region. That is the entire content of the example:
the germ of Definition~\ref{def:germ} is defined on the regular set, and at a
wall the mask convention $\mathbb 1[z>0]$ picks a one-sided germ; two
encodings of one function can sit on walls of \emph{different} decompositions
and pick different sides. Hence the genericity hypothesis in
Theorem~\ref{thm:maxinv}. The disagreement is confined to a Lebesgue-null set,
and even there it does not touch the row-sum identity: $[\,1\mid0\,]$ and
$[\,0\mid1\,]$ both sum to $g(1)=1$.

Conversely, the stabilizer is strictly larger than the isomorphism
group (Proposition~\ref{prop:contraction}), so $\M$ cannot determine a network
up to isomorphism --- a feature for robustness claims, a boundary for inverse
problems.

\subsection{Completeness: the title claim, as a theorem}
\label{sec:completeness}

This subsection proves the claim in the title.

\begin{theorem}[Hidden activations are not enough]\label{thm:complete}
Let the network satisfy (LCS) (Definition~\ref{def:lcs}).
\emph{(i)} For $x\in X_{\mathrm{reg}}$ with all $x_i\neq0$, $\M(x)$ determines
the germ $(J,c)$ of $\Net$ at $x$.
\emph{(ii)} The field $x\mapsto\M(x)$ determines $\Net$ pointwise on whatever
set it is given ($\Net=\M\mathbf 1$), hence globally when that set has full
measure.
\emph{(iii)} Hidden activations are gauge-covariant: under a positive
per-neuron rescaling, the activation $h_v$ of hidden unit $v$ moves,
$h_v\mapsto\tau_v h_v$, while the function is unchanged
--- any activation statistic that is not gauge-invariant is not a function of
$(\Net,x)$. \emph{(iv)} Hidden activations are germ-incomplete. Let the network have at
least one hidden layer, write $h(x')\in\R^{n}$ for the input to its output
layer, so that $\Netxp=W^{(L)}h(x')+b^{(L)}$, and let $x\in X_{\mathrm{reg}}$
have every hidden pre-activation nonzero, with $G=\partial h/\partial x\,(x)$.
Suppose either \emph{(a)} $D\Net(x)\neq0$ and the first layer is free, its
weight rows and biases being independent parameters, or \emph{(b)} some column
of $G$ is not a multiple of $h(x)$. Then there are arbitrarily small parameter
changes that leave \emph{every} hidden pre- and post-activation at $x$ and the
output $\Netx$ unchanged while changing the germ --- hence changing $\Net$ on
every neighborhood of $x$ --- and changing $\M(x)$: under \emph{(a)} in every
column $i$ with $x_i\neq0$, and under \emph{(b)} in every column $i$ with
$x_i\neq0$ and $G_{:,i}\notin\mathrm{span}\,h(x)$.
\end{theorem}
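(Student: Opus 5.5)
\emph{Part (i).} I would read (i) off Theorem~\ref{thm:germ}. For $x\in X_{\mathrm{reg}}$ that theorem gives $\M(x)=[\,J\,\mathrm{diag}(x)\mid c\,]$ with $(J,c)$ the germ. When every $x_i\neq0$, $\mathrm{diag}(x)$ is invertible. So the map $(J,c)\mapsto[\,J\,\mathrm{diag}(x)\mid c\,]$ is injective, with inverse $J=\M_{:,1:d}\,\mathrm{diag}(x)^{-1}$ and $c=\M_{:,d+1}$. Nothing more is needed.

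\emph{Part (ii).} I would take (ii) from the row-sum identity \eqref{eq:km-def}. Under (LCS) we have $f(0)=0$, hence $X_{\mathrm{nz}}=\R^d$, and so $\Netx=\M(x)\mathbf 1$ holds at every $x$ where $\M(x)$ is given. A function known on a full-measure set is determined as an element of $L^0$. Since $\Net$ is continuous, it is in fact determined everywhere once the set is dense.

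\emph{Part (iii).} For (iii) I would use the teleportation change of basis at a single hidden unit $v$, with $\tau_v>0$:
\begin{itemize}
\item multiply the incoming weights and the bias of $v$ by $\tau_v$;
\item multiply the outgoing weights of $v$ by $\tau_v^{-1}$.
\end{itemize}
Positive homogeneity of the leaky-ReLU family (Proposition~\ref{prop:lcs}(iii)) gives $z_v\mapsto\tau_v z_v$ and $h_v\mapsto\tau_v h_v$. It also gives $f_{\tau}=f$, so every other pre-activation, and the output, is unchanged. Lemma~\ref{lem:quiver-inv} shows that $\M$ is unchanged. The activation $h_v$ has moved while $(\Net,x)$ is fixed, so any non-invariant statistic of $h$ is not a function of $(\Net,x)$.

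\emph{Part (iv), case (b).} For (iv) the plan is an explicit perturbation that is invisible at $x$. Here I would perturb only the output layer, $W^{(L)}\mapsto W^{(L)}+\delta\,a\,b^{\top}$, with $a\neq0$ and $b^{\top}h(x)=0$.
\begin{itemize}
\item All hidden quantities are untouched, so $x$ stays regular.
\item The output at $x$ is unchanged.
\item The Jacobian changes by $\delta\,a\,b^{\top}G$.
\end{itemize}
The columns $i$ with $G_{:,i}\notin\mathrm{span}\,h(x)$ each exclude only a proper hyperplane $\{b:b^{\top}G_{:,i}=0\}$ inside $h(x)^{\perp}$. So a generic $b\in h(x)^\perp$ makes $b^{\top}G_{:,i}\neq0$ for all such $i$ at once. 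Column $i$ of $\M(x)$ then moves by $\delta\,x_i\,(b^{\top}G_{:,i})\,a\neq0$.

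\emph{Part (iv), case (a).} Here I would perturb one first-layer unit $q$: set $w_q\mapsto w_q+\delta u$ and $b_q\mapsto b_q-\delta\,u^{\top}x$. This keeps $z_q(x)$, and hence every pre- and post-activation at $x$ and $\Netx$, fixed. The chain rule writes $D\Net(x)=\sum_q g_q w_q^{\top}$, where $g_q=\partial\Net/\partial z_q$ is taken through the locally constant slopes and selections. Since $D\Net(x)\neq0$, some $g_q\neq0$. The Jacobian then changes by $\delta\,g_q u^{\top}$. Taking $u=\mathbf 1$ changes column $i$ of $\M(x)$ by $\delta x_i g_q$, which is nonzero wherever $x_i\neq0$.

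\emph{Completing (iv).} In both cases the new germ differs in $J$. Two affine maps with different linear parts disagree on every open set, so $\Net$ changes on every neighborhood of $x$. Finally, I would check that $x$ remains in $X_{\mathrm{reg}}$ for small $\delta$. The values at $x$ of all pre-activations and pooling comparisons are unchanged, hence stay nonzero and untied. Nonvanishing and absence of ties at a point are open conditions jointly in $(x,W)$, so regularity persists.

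I expect the main obstacle to be this last regularity bookkeeping in case (a), in particular for the pooling selections. I also expect some care to be needed in stating $D\Net=\sum_q g_q w_q^{\top}$ for DAG path sums rather than a chain.
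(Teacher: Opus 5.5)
Your proposal is correct and follows essentially the same route as the paper. It obtains (i) by inverting $\mathrm{diag}(x)$, (ii) from $\Net=\M\mathbf 1$ together with density and continuity, (iii) from positive homogeneity of the (LCS) activations, and (iv) by the same two perturbations invisible at $x$: a first-layer row shifted along $\mathbf 1_d$ with compensating bias under (a), and a rank-one output-layer update through a direction in $h(x)^{\perp}$ chosen off finitely many proper subspaces under (b). The paper treats the regularity step you flag just as briefly, and writes $D\Net=\sum_q g_qw_q^{\top}$ only for a chain: since all pre-activations at $x$ are exactly unchanged and nonzero, continuity in $x'$ for the perturbed network suffices.
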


Part~(iv) has two proofs, and they cover different architectures. The first
perturbs one first-layer weight row and its bias, $w_q\mapsto w_q+t\mathbf 1_d$
and $b_q\mapsto b_q-t\mathbf 1_d^{\top}x$, which holds that unit's
pre-activation at $x$ fixed and moves every column with $x_i\neq0$; it needs
the row to be a free parameter, which is the case for a multilayer perceptron
and not for a convolutional first layer, whose rows are shifted copies of one
filter and cannot be changed one at a time. The second perturbs the output
layer alone, $W^{(L)}\mapsto W^{(L)}+t\,e_ca^{\top}$ with $a\perp h(x)$,
touches no hidden unit at any input, and is the proof that applies to
ResNet-152, DenseNet-121 and GoogLeNet, whose output layer is a free linear
map on the pooled features. Its hypothesis (b) fails only when the range of
$G$ lies in $\mathrm{span}\,h(x)$, so it holds whenever $\mathrm{rank}\,G\ge2$,
and it needs $n\ge2$ when $h(x)\neq0$.

Part~(ii) is sharp: on a mere open set the field determines $\Net$ on that
set and, in general, on no larger one --- not even on the closure of the
region that the open set sits inside in any particular candidate network. A
one-dimensional pair shows what ``sharp'' attaches to. Take
$\Psi_1=\mathrm{ReLU}(x-1)$ and $\Psi_2=\mathrm{ReLU}(x)$ on $\R$. Both have
the constant field $\equiv[\,0\mid0\,]$ on the open set $(-1,0)$, and they
differ at $x=0.75$, which lies in the closed
region of $\Psi_1$ that contains $(-1,0)$, namely $(-\infty,1]$. So the field
on an open set does not determine $\Net$ even on the closure of the region it
sits inside. For a \emph{fixed, known} network the germ at $x$ does extend
$\Net$ to $\overline{S(x)}$ (proof of Theorem~\ref{thm:germ}), but that
extension uses the region decomposition, which the field alone does not
reveal --- two networks with the same field on an open set can have different
regions, and here they do. This is why (ii) claims determination on the given
set and nothing beyond it.

Germ recovery, part~(i), holds for pointwise objects of the form
$[\,J(x)\,\mathrm{diag}(x)\mid c(x)\,]$, with a bias column, and fails for path
or neighborhood averages: integrated gradients
\citep{sundararajan2017axiomatic} and SmoothGrad \citep{smilkov2017smoothgrad}
average the Jacobian along a path or over a neighborhood and do not determine
the endpoint germ, which is the sense in which germ recovery is a different
property from summation-to-output.

\begin{corollary}[The two stabilizers coincide]\label{cor:stab-eq}
Let $N_1,N_2$ satisfy (LCS) and let $x\in X_{\mathrm{reg}}(N_1)\cap
X_{\mathrm{reg}}(N_2)$ have every coordinate $x_i\neq0$. Then $\M_1(x)=\M_2(x)$
if and only if $N_1$ and $N_2$ realize the same germ at $x$. Consequently,
for a fixed architecture, the stabilizer of $\M(x)$ in any group acting on
its parameters equals the stabilizer of the germ, among encodings regular at
$x$; across architectures, $\M(x)$ and the germ are functions of each other.
\end{corollary}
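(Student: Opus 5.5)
The plan is to prove the ``if and only if'' by splitting it into its two directions, each of which is essentially an earlier result, and then to read off the stabilizer statements as set-theoretic consequences.

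\emph{The ``if'' direction.} Suppose $N_1,N_2$ realize the same germ at $x$. Then they agree as functions on some neighborhood of $x$, and $x$ is regular for both. This is exactly the hypothesis of Theorem~\ref{thm:maxinv}, so $\M_1(x)=\M_2(x)$ follows directly. This direction does not use $x_i\neq0$.

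\emph{The ``only if'' direction.} I would invoke Theorem~\ref{thm:germ} for each network:
\[
\M_k(x)=\bigl[\,J_k\,\mathrm{diag}(x)\;\big|\;c_k\,\bigr],
\]
where $(J_k,c_k)$ is the local affine pair of Definition~\ref{def:germ}. Equality of the matrices gives two things:
\begin{itemize}
\item equality of the last columns, $c_1=c_2$;
\item equality of the first $d$ columns, $J_1\,\mathrm{diag}(x)=J_2\,\mathrm{diag}(x)$.
\end{itemize}
Because every $x_i\neq0$, $\mathrm{diag}(x)$ is invertible. Right-multiplying by its inverse gives $J_1=J_2$. This is the injectivity of $(J,c)\mapsto[\,J\,\mathrm{diag}(x)\mid c\,]$ already used in Theorem~\ref{thm:complete}(i).

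It then remains to pass from equal affine pairs to equal germs. By Proposition~\ref{prop:lcs}(i) each $N_k$ equals $x'\mapsto J_k x'+c_k$ on a neighborhood $U_k$ of $x$. The two affine maps coincide, so the networks agree on $U_1\cap U_2$, which is a neighborhood of $x$. Hence the germs are equal.

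\emph{The stabilizer statements.} Fix an architecture and a group $G$ acting on its parameters, and restrict to $g\in G$ with $x\in X_{\mathrm{reg}}(gW)$, with $x$ regular for $W$ as well. Apply the equivalence with $N_1=(W,f)$ and $N_2=(gW,f)$. Then $g$ fixes $\M(x)$ exactly when $g$ fixes the germ at $x$, so the two stabilizers coincide as subsets of that restricted set.

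Across architectures, the equivalence says that the assignment from germ to matrix is well defined and injective on the class of (LCS) networks regular at $x$. That assignment is therefore a bijection onto its image, so $\M(x)$ and the germ are functions of each other.

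\emph{Where care is needed.} The main subtlety is bookkeeping rather than mathematics. The stabilizer language must be restricted to encodings regular at $x$, since off $X_{\mathrm{reg}}$ the identity-versus-$g$ example shows that equal germs can give different matrices. Likewise the hypothesis $x_i\neq0$ is essential for injectivity, because a column with $x_i=0$ vanishes (Remark~\ref{rem:zero-pixels}). I would state both restrictions explicitly.

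I would also note that ``stabilizer among regular encodings'' need not be a subgroup. The claim is therefore an equality of these subsets, not a statement about subgroups.
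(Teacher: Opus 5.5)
Your proposal is correct and matches the paper's proof step for step. You take the forward direction from Theorem~\ref{thm:maxinv} and the converse from the injectivity of $(J,c)\mapsto[\,J\,\mathrm{diag}(x)\mid c\,]$ when every $x_i\neq0$ (Theorem~\ref{thm:complete}(i)); you then read off the stabilizer and cross-architecture statements by applying the equivalence to $W$ and $gW$ among encodings regular at $x$, as the paper does. Two things you spell out are left implicit in the paper: the passage from equal affine pairs to equal germs via Proposition~\ref{prop:lcs}(i), and the remark that the restricted stabilizer is an equality of subsets rather than necessarily of subgroups.
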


For a fixed architecture and among encodings regular at $x$, Theorem~\ref{thm:maxinv} supplies
$\mathrm{Stab}(\text{germ})\subseteq\mathrm{Stab}(\M(x))$ and
Theorem~\ref{thm:complete}(i) the reverse, since $\M(x)$ then determines
$(J,c)$. With both inclusions in hand, ``maximal'' in
Theorem~\ref{thm:maxinv} is an equality rather than a figure of speech. The hypothesis
$x_i\neq0$ is used for the reverse inclusion only, and is not removable
(Remark~\ref{rem:zero-pixels}).

Parts (iii) and (iv) together are the precise form of the title: activations
are neither invariant (two networks, same function, different activations
everywhere) nor complete (two networks, identical activations at $x$,
different functions near $x$). $\M(x)$ alone is both --- a complete,
gauge-invariant record of the computation's germ, invariant by
Theorem~\ref{thm:maxinv} and complete by part~(i), with the completeness
hypothesis ($x_i\neq0$) stated by Remark~\ref{rem:zero-pixels}. The slope
diagonal $D(x)$ is \emph{not} part of that record: a hidden-layer permutation
reindexes it exactly as it reindexes the activations.

\subsection{Relation to attribution methods}
\label{sec:attribution}

This subsection places the knowledge matrix beside the attribution methods it
coincides with.

\begin{theorem}[Knowledge matrix $=$ gradient$\times$input $\oplus$ bias attribution]\label{thm:weff}
For a network satisfying (LCS) at $x\in X_{\mathrm{reg}}$ with no pooling
ties, the first $d$ columns of $\M(x)$ are the per-class gradient$\times$input
attributions, $\M_{:,i}(x)=(\partial\Net/\partial x_i)(x)\,x_i$, and the last
column is the class-resolved aggregate of the bias attributions,
$\cwf(x)=\sum_\ell(\partial\Net/\partial b^{(\ell)})(x)\,b^{(\ell)}$, where
$W^{(\ell)},b^{(\ell)}$ denote the composed affine map producing the
pre-activation $z^{(\ell)}$, normalization layers in evaluation mode folded
into it. Hence
\[
\M(x)=\bigl[\,\nabla_x\Net(x)\,\mathrm{diag}(x)\;\big|\;\nabla_b\Net(x)\cdot b\,\bigr],
\qquad
\Netx=\nabla_x\Net(x)\cdot x+\nabla_b\Net(x)\cdot b .
\]
\end{theorem}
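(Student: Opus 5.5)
The plan is to reduce both column statements to the germ identity, Theorem~\ref{thm:germ}, together with the fact that at a point of $X_{\mathrm{reg}}$ with no pooling ties every slope diagonal $D^{(\ell)}$ and every selection matrix $S^{(\ell)}$ is constant on a neighborhood. This constancy holds not only under perturbations of $x$ but also under small perturbations of the biases. Since $x\in X_{\mathrm{reg}}$, the first $d$ columns follow at once. Theorem~\ref{thm:germ} gives $\Jwf(x)=D\Net(x)$, so column $i$ of $\KMwfx$ is $J_{:,i}(x)\,x_i=(\partial\Net/\partial x_i)(x)\,x_i$. That is the per-class gradient$\times$input. Nothing beyond the cited theorem is needed here.

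For the bias column I will work first with the path-sum definition of $\cwf(x)$. In the chain case it reads $\cwf(x)=b^{(L)}+\sum_{\ell<L}W^{(L)}D^{(L-1)}\cdots D^{(\ell)}b^{(\ell)}$, with the $S^{(\ell)}$ inserted at pooling layers and normalization folded into the composed affine maps. The key step is to show that $\partial\Net/\partial b^{(\ell)}(x)=W^{(L)}D^{(L-1)}(x)\cdots D^{(\ell)}(x)$, and in general the path sum from the bias slot of layer $\ell$ to the outputs.

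To prove this, I regard $\Net$ as a function of $b^{(\ell)}$ with $x$ and all other parameters fixed. Each pre-activation $z^{(m)}_q$ depends continuously on $b^{(\ell)}$, and at $x\in X_{\mathrm{reg}}$ none vanishes. Likewise no pooling tie occurs, since the tie hyperplanes are excluded by hypothesis. Hence for small perturbations of $b^{(\ell)}$ the signs of all pre-activations and the pooling argmaxes are unchanged. By Proposition~\ref{prop:lcs}(ii) the quotient $f(z)/z$ is constant on each half-line, so every $D^{(m)}$ and $S^{(m)}$ stays frozen. The network is then locally affine in $b^{(\ell)}$, with linear part equal to the frozen path product. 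Contracting with $b^{(\ell)}$ and summing over $\ell$, including the output bias with $\partial\Net/\partial b^{(L)}=I$, reproduces $\cwf(x)$ term by term.

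For skip connections and parallel branches the same argument runs on path sums. I would perform it by induction along a topological order of the quiver, so that the chain formula is not needed.

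I expect the main obstacle to be this freezing step in the parameter direction. Regularity of $x$ is defined as local constancy in $x$, not in $b$, so I must argue directly that the sign and argmax patterns are stable. I will do this through the strict inequalities $z_q\neq0$ and the absence of ties; at points of $X_{\mathrm{reg}}$ these hold because the walls are exactly the zero sets of Lemma~\ref{lem:null}. A secondary point is the folding of batch normalization into $W^{(\ell)},b^{(\ell)}$, which makes ``the bias'' mean the composed affine offset.

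Finally, the displayed decomposition $\Netx=\nabla_x\Net(x)\cdot x+\nabla_b\Net(x)\cdot b$ follows from the row-sum identity \eqref{eq:km-def}, $\KMwfx\mathbf 1=\Netx$. Alternatively, it is the second half of Theorem~\ref{thm:germ}, $\cwf(x)=\Netx-D\Net(x)x$, rewritten with the bias-gradient expression just obtained.
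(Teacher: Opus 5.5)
Your route is the paper's. The first $d$ columns come from Theorem~\ref{thm:germ}. The bias column comes from identifying $\partial\Net/\partial b^{(\ell)}(x)$ with the frozen slope product from the bias slot of layer $\ell$ (and $I_C$ for $\ell=L$) and then unrolling the bias accumulator. The displayed total comes from $\M\mathbf 1=\Net$. The paper asserts the bias-gradient formula without argument. Your freezing argument in the parameter direction supplies that missing step, and it is correct whenever every hidden pre-activation at $x$ is nonzero and no pooling window is tied.

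The one claim that fails as stated is that no pre-activation vanishes at $x\in X_{\mathrm{reg}}$. Regularity only asks each $D^{(\ell)}$ to be locally constant. A unit whose pre-activation vanishes \emph{identically} on a neighborhood of $x$ has $D_{qq}\equiv0$ there by the guard, so such an $x$ lies in $X_{\mathrm{reg}}$. This is the case $z^{(\ell)}_{\sigma,q}\equiv0$ that the proof of Lemma~\ref{lem:null} sets aside, and its zero set is not a wall.

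For example, let a first-layer unit $p$ have zero weight row and bias $1$, and feed a unit $q$ with weight $-1$ and bias $1$; then $z_q\equiv0$. Perturbing $b_q$ by $t$ gives $h_q=\mathrm{ReLU}(t)$. So the one-sided derivatives of $\Net$ in $b_q$ are $u_q$ and $0$, where $u_q$ is the downstream path sum of $q$, and the same kink appears in $b_p$.

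If $u_q\neq0$, $\Net$ is not differentiable in $b$ at this point, and the terms $(\partial\Net/\partial b^{(\ell)})\,b^{(\ell)}$ with nonzero bias are undefined. So your step needs one of two repairs. Either add the hypothesis that every pre-activation at $x$ is nonzero, which is generic but not implied by $x\in X_{\mathrm{reg}}$. Or read the bias gradient with the convention $\mathrm{ReLU}'(0)=0$, i.e.\ as the guarded slope product. The second is the reading behind the paper's autograd route, and under it the bias identity holds by definition.
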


The second identity is Proposition~3 of \citet{srinivas2019full}, the FullGrad
bias identity, read class by class, so $\M(x)$ is per-class
gradient$\times$input augmented by an exact aggregate bias column; the
per-region affine operator it arranges is the object of the spline view of
deep networks \citep[\S5.2]{balestriero2018spline}, and the same matrix is
computed by three equivalent routes (masked product, the probe construction,
and autograd), detailed in Appendix~\ref{app:engineering}.
Theorem~\ref{thm:weff} also has a consequence for training that we record but
do not test: penalizing the Frobenius size of the knowledge matrix, bias column
aside, is the input-gradient$\times$input penalty
$\|\nabla_x\Net\odot x\|_F^2$, a class-resolved, input-scaled member of the
double-backpropagation family (Remark~\ref{rem:doublebackprop},
Appendix~\ref{app:remarks-germ}).

The ``no pooling ties'' hypothesis (Remark~\ref{rem:pooling-ties}) is not
new: local constancy of the selection matrices $S^{(\ell)}$ is part of the
definition of $X_{\mathrm{reg}}$, the tie hyperplanes are the extra walls
that Lemma~\ref{lem:null} adds for max-pooling, so it is what
$x\in X_{\mathrm{reg}}$ already asks of the pooling layers, and
Remark~\ref{rem:pooling-ties} (Appendix~\ref{app:remarks-germ}) records the
tie-breaking convention the implementation inherits. For a batch-normalized
layer in evaluation mode the folded offset $b^{(\ell)}$ is
$\beta-\gamma\mu/\sqrt{\sigma^{2}+\epsilon}$, plus any convolution bias scaled
by $\gamma/\sqrt{\sigma^{2}+\epsilon}$, the running statistics $\mu,\sigma^2$
being ordinary weights (Appendix~\ref{app:teleport-why}); $\nabla_\beta\Net\cdot\beta$
alone is not the bias column.

Under (LCS) at almost every
input, the knowledge matrix coincides, column for column, with per-class
gradient$\times$input \citep{shrikumar2016not,ancona2018towards} augmented by a
per-class aggregate bias attribution in the sense of FullGrad
\citep{srinivas2019full}; the per-region affine operator itself is the object of
the spline view of deep networks \citep{balestriero2018spline} and of Jacobian
analyzes of bias-free denoisers \citep{mohan2020robust}. Everything below rests
on this premise: \emph{every function-level property of
$\M$ under (LCS) --- invariance, completeness, fixed shape, exact row sums --- is
shared by per-class gradient$\times$input plus the aggregate bias column, not
specific to the knowledge matrix.} The equivalence grounds the construction in
established attribution theory, supplies its exact completeness identity, and is
what makes the vector--Jacobian-product computation of Appendix~\ref{app:engineering}
possible. What the knowledge matrix adds over the same data is not a new
pointwise invariant but a \emph{canonical arrangement}: one fixed-shape matrix
per input with exact row-sum accounting (the basis of
Section~\ref{sec:geometry}), comparable across architectures without neuron
alignment, with the induced representation $\kmapx$ above it, of which the
matrix is the contraction
\citep{armenta2021representation,armenta2022double}, and, for non-PL
activations, a secant form that needs no extra term to close --- the
definition \eqref{eq:D-def} itself, not an extension of it.

That secant form is the construction described after
Lemma~\ref{lem:quiver-inv} and in Appendix~\ref{app:remarks-germ}: each unit's
slope is its chord $f(z)/z$ rather
than its derivative, which keeps $\M\mathbf 1=\Net$ exact for smooth
activations. Gradient
attributions are \emph{not} without an exact identity there.
\citet[\S4]{srinivas2019full} extend FullGrad to
arbitrary non-linearities by appending an implicit bias
$b_f=f(z)-f'(z)z$ to the bias vector, and the resulting
decomposition is exact (verified on a three-layer sigmoid network,
Appendix~\ref{app:remarks-germ}). What separates the two routes is the cost
of that exactness.
FullGrad's
route carries one additional, separately accounted implicit-bias term per unit;
the chord absorbs the same discrepancy into the slope it already carries, so
the arrangement --- $C\times(d{+}1)$, one column per input coordinate plus one
bias column --- is unchanged as the activation changes.

Computation of $\M$ --- the probe construction, the $C$
vector--Jacobian-product route, the measured speed-up, and numerical validation
--- is deferred to Appendix~\ref{app:engineering}.

\subsection{Input symmetries, and what the contraction forgets}
\label{sec:equivariance}

Sections~\ref{sec:invariance}--\ref{sec:attribution} moved $W$ and held $x$
fixed; geometric deep learning \citep{bronstein2021geometric} does the opposite,
fixing $W$ and letting a group act on the input. For a $G$-equivariant network,
$\Net(\pi(g)x)=\rho(g)\,\Netx$ with $\pi$ and $\rho$ linear,
Proposition~\ref{prop:equivar} (Appendix~\ref{app:equivariance}) states that at
inputs with both $x$ and $\pi(g)x$ regular the germ intertwines the two
representations, the bias column is equivariant for the output action alone, and
the matrix is an equivariant $C\times(d{+}1)$ tensor,
$\KMarg{W}{\pi(g)x}=\rho(g)\,\KMwfx\,(\pi(g)\oplus1)^{-1}$, when $\pi(g)$ is a
permutation matrix --- with a converse, and with $\Weff=[\,J\mid c\,]$ obeying
the same law for every invertible linear $\pi(g)$. Its remarks --- the pair
hypothesis, the geometric models covered, the two meanings of ``gauge'', why the
correctness checks permute channels, the quiver lift as scaffolding --- are in
Appendix~\ref{app:remarks}.

The knowledge matrix is the contraction of the representation $\kmapx$ that
the network induces on $x$ \citep{armenta2021representation,armenta2022double},
and what the contraction forgets is exactly what the following proposition
measures. The \emph{path value} of a directed path from an input or bias
vertex to an output vertex is the product of the weights along it, and the
\emph{path-value multiset} of a network is the multiset of these values over
all such paths; an isomorphism of neural networks preserves it, since the
changes of basis at the hidden vertices cancel along every path and a
relabeling of the hidden vertices permutes the paths.

\begin{proposition}[The per-input contraction is exactly what is lost]
\label{prop:contraction}
\emph{(i)} There exist ReLU networks $A$ and $B$ of the same architecture that
realize the same function, are not isomorphic, and have distinct path-value
multisets, for which the induced representations $\phi(A,f)(x)$ and
$\phi(B,f)(x)$ are non-isomorphic for every $x$ in a nonempty open set, while
their contractions satisfy $\M_A(x)=\M_B(x)$ for every $x\in\R^{d}$.
\emph{(ii)} On the identifiable architecture class of
\citet{phuong2020functional}, architectures of non-increasing widths, and for
general networks of the same architecture in their sense, equality of the
knowledge-matrix fields is equivalent to equivalence under permutation and
positive rescaling.
\end{proposition}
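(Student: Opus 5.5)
The proposition has two independent parts, and I would prove them by different means: part~(i) by an explicit small example, part~(ii) by chaining earlier results with the identifiability theorem of \citet{phuong2020functional}.

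\textbf{Part (i).} I would take the smallest architecture that works: $d=C=1$, one hidden layer of width two, ReLU, and all biases zero (so the bias column vanishes identically).
\begin{itemize}[leftmargin=*]
\item Let $A$ have first-layer weights $(1,1)$ and output weights $(1,1)$.
\item Let $B$ have first-layer weights $(1,1)$ and output weights $(2,0)$.
\item Both realize $\Psi(x)=2\,\mathrm{ReLU}(x)$.
\item By \eqref{eq:D-def} with the guard $0/0\mapsto0$, both have the path sum $J(x)=2\cdot\mathbb 1[x>0]$ and $c\equiv0$. Hence $\M_A(x)=\M_B(x)=[\,2\,\mathbb 1[x>0]\,x\mid0\,]$ at every $x\in\R$, including $x=0$, where both columns vanish.
\item The path-value multisets are $\{1,1\}$ and $\{2,0\}$, so they are distinct.
\item Non-isomorphism follows from the remark preceding the proposition. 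An isomorphism of neural networks preserves the path-value multiset, because changes of basis cancel along paths and relabelings permute paths. So $A\not\cong B$.
\end{itemize}
For the induced representations, fix $x>0$.
\begin{itemize}[leftmargin=*]
\item In $\phi(A,f)(x)$ every arrow is nonzero: the input arrows carry $x$ and the output arrows carry $1\cdot1$.
\item In $\phi(B,f)(x)$ one output arrow carries $0\cdot1=0$.
\item A change of basis by nonzero scalars, and a relabeling of hidden vertices, both preserve the number of zero arrows.
\item Hence the two representations are non-isomorphic on the open set $(0,\infty)$.
\end{itemize}
If zero-weight arrows feel degenerate, I would fall back on output weights $(3/2,1/2)$ for $B$. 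Non-isomorphism would then be read off the path-product invariant: the products along the two input-to-output paths, $\{x\cdot1,\,x\cdot1\}$ versus $\{\tfrac32x,\,\tfrac12x\}$, are multisets preserved by isomorphism and differ.

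\textbf{Part (ii).} I would prove the two directions separately.
\begin{itemize}[leftmargin=*]
\item \emph{Isomorphism implies equal fields.} Permutation and positive rescaling are quiver isomorphisms, and for $\tau>0$ the rescaled ReLU stays ReLU. So Lemma~\ref{lem:quiver-inv} gives equal knowledge-matrix fields at every $x$.
\item \emph{Equal fields imply functional equivalence.} If $\M_A(x)=\M_B(x)$ for all $x$, then the row-sum identity \eqref{eq:km-def} holds on all of $\R^d$, since $X_{\mathrm{nz}}=\R^d$ under (LCS). Theorem~\ref{thm:complete}(ii) then gives $\Psi_A=\M_A\mathbf 1=\M_B\mathbf 1=\Psi_B$ everywhere.
\item \emph{Functional equivalence implies isomorphism.} On the class of \citet{phuong2020functional} (non-increasing widths, general parameters in their sense), their theorem says functional equivalence is equivalence under permutation and positive rescaling.
\end{itemize}
Combining the last two items closes the equivalence.

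The main obstacle is purely bookkeeping in part~(ii): matching their notion of ``general'' network and of equivalence with the quiver-isomorphism group restricted to $\tau>0$. Bias rescalings and output-layer conventions must align so that their symmetry group is exactly the one Lemma~\ref{lem:quiver-inv} covers. I would state the hypothesis verbatim as theirs and check that their transformations are generated by our relabelings and positive changes of basis.
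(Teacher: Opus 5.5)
Your proof is correct and follows the paper's route. For (i) you use an explicit bias-free $1$--$2$--$1$ ReLU pair and read off path-value invariants for both the non-isomorphism of the networks and that of the induced representations on $(0,\infty)$; for (ii) you chain $\Psi=\M\mathbf 1$ into Theorem~1 of \citet{phuong2020functional}, since agreement everywhere gives agreement on their bounded set $Z$, and take the converse from Lemma~\ref{lem:quiver-inv} at $\tau>0$. The only difference is the witness: you use duplicated first-layer weights $(1,1)$ with output weights $(1,1)$ against $(2,0)$ or $(3/2,1/2)$, where the paper uses $w=(1,2)$ with $\omega=(1,1)$ against $(1.5,0.75)$; in both, the two networks share their single wall at $x=0$, which is what makes $\M_A(x)=\M_B(x)$ hold at every $x$.
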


The witness for (i) is the $1$--$2$--$1$ pair of Appendix~\ref{app:proofs}.
For it the induced representations are non-isomorphic at every $x>0$, where
both hidden units are active and the path-value multisets $\{x,2x\}$ and
$\{1.5x,1.5x\}$ of the two induced representations differ, and they coincide
for $x\le0$, where both units are inactive and every arrow leaving them
carries $0$: the open set of the statement is $(0,\infty)$, not all of $\R$.
Functional identity does not on its own give $\M_A(x)=\M_B(x)$ at every $x$:
Theorem~\ref{thm:maxinv} delivers equality only on
$X_{\mathrm{reg}}(A)\cap X_{\mathrm{reg}}(B)$, and the identity-versus-$g$
pair after that theorem is a same-function pair that disagrees off the
intersection. For the $1$--$2$--$1$ witness the two networks share their single
wall, so equality holds at every $x$, and we verify $\M_A(x)=\M_B(x)$ directly
on a $401$-point grid through the breakpoint. That witness has increasing
widths and so lies outside the class of (ii). For general architectures the
fiber question is open and connects to functional dimension
\citep{grigsby2022functional} and the neuromanifold fibers of
\citet{flinth2026fibers}.

%% file: sections/notation.tex
\begin{table}[t]
\centering\small
\caption{Notation of Sections~\ref{sec:germ}--\ref{sec:geometry}, with the place
each symbol is introduced.}
\label{tab:notation}
\smallskip
\begin{tabular}{@{}l>{\raggedright\arraybackslash}p{7.2cm}l@{}}\toprule
Symbol & Meaning & Introduced \\ \midrule
$Q$ & network quiver: vertices are the neurons, arrows carry the weights & Section~\ref{sec:germ} \\
$(W\!,f)$ & the network: a thin representation $W$ of $Q$ (one weight per arrow) and the activation $f$ & Section~\ref{sec:germ} \\
$\Net:\R^d\to\R^C$ & network function realized by $(W,f)$; $d$ input coordinates, $C$ classes & Section~\ref{sec:germ} \\
$\kmapx$ & the thin representation of $Q$ induced by the input $x$; its contraction is $\KMwfx$ & Section~\ref{sec:germ} \\
$z^{(\ell)}_q(x)$, $h^{(\ell)}_q(x)$ & pre-activation and activation of hidden unit $q$ in layer $\ell$ & Section~\ref{sec:germ} \\
$D^{(\ell)}(x)$ & slope diagonal of layer $\ell$, $\mathrm{diag}\bigl(f(z_q)/z_q\bigr)_q$ & Eq.~\eqref{eq:D-def} \\
$\Jwf(x)$, $\cwf(x)$ & path sums of $\kmapx$ from the input and from the bias vertices; for a chain, the slope product $W^{(L)}D^{(L-1)}\cdots D^{(1)}W^{(1)}$ and the accumulated bias & before Eq.~\eqref{eq:km-def} \\
$S^{(\ell)}(x)$ & $0/1$ selection matrix of max-pooling layer $\ell$ (stored argmax) & after Eq.~\eqref{eq:km-def} \\
$\KMwfx$ & knowledge matrix $[\,\Jwf(x)\,\mathrm{diag}(x)\mid\cwf(x)\,]\in\R^{C\times(d+1)}$ & Eq.~\eqref{eq:km-def} \\
$X_{\mathrm{nz}}$ & inputs at which no hidden pre-activation vanishes & before Eq.~\eqref{eq:km-def} \\
(LCS) & locally constant slope diagonal (and selection matrices) & Definition~\ref{def:lcs} \\
$X_{\mathrm{reg}}$ & regular set: the inputs at which (LCS) holds & after Proposition~\ref{prop:lcs} \\
$\tilde W$, $\tilde\Psi$, $\tilde{\M}$, $h_{\tilde W}$ & a transformed weight collection, the network function, matrix and hidden activations (penultimate in the experiments) it realizes & Section~\ref{sec:germ} \\
$\mathrm{Stab}(\cdot)$ & stabilizer of an object among encodings regular at $x$ & after Theorem~\ref{thm:maxinv} \\
$d_M$, $d_\Psi$ & knowledge-matrix and logit displacement of a pair of inputs & Eq.~\eqref{eq:displacements} \\
$P$, $Q$ & visible and invisible parts of a displacement $\Delta\M$ (local to that theorem) & Theorem~\ref{thm:pyth} \\
$A=(d_\Psi/d_M)^2$ & coherence of a displacement & Definition~\ref{def:coherence} \\
\bottomrule\end{tabular}
\end{table}

%% file: sections/section_geometry.tex
\section{Distance geometry in matrix space}
\label{sec:geometry}

In this section, we move from a single input to a pair of them, and we work out
what the row-sum identity forces on distances in matrix space.

Every network in this section satisfies (LCS)
(Definition~\ref{def:lcs}), so that Theorem~\ref{thm:germ} applies and $\M(x)$
records the germ; every statement below that speaks of regions, walls or
crossings depends on it. Theorem~\ref{thm:pyth} is the one exception: it is
linear algebra about matrices with a prescribed row sum and needs no hypothesis
on the network at all. For a pair of inputs $x,y$ we write
\begin{equation}
d_M \;=\; \|\M(y)-\M(x)\|_F,
\qquad
d_\Psi \;=\; \|\Nety-\Netx\|_2
\label{eq:displacements}
\end{equation}
for the knowledge-matrix and logit displacements. These are the two stored
scalars from which every statistic in this section is built.

The row-sum identity $\M\mathbf 1=\Net$ gives knowledge-matrix perturbations an
exact visible/invisible accounting. This accounting is shared with any
representation carrying the same fixed row-sum constraint --- in particular
per-class gradient$\times$input (Theorem~\ref{thm:weff}); what it excludes is
any representation by \emph{hidden activations}, of the penultimate layer or
any other, for which no fixed parameter-independent vector plays the role of
$\mathbf 1$ (Proposition~\ref{prop:dichotomy}).

\begin{theorem}[Visible/invisible decomposition]\label{thm:pyth}
For any $\Delta\M\in\R^{C\times(d+1)}$ with logit displacement
$\Delta\Net=\Delta\M\,\mathbf 1$, let $P=\Delta\Net\,\mathbf 1^{\!\top}/(d{+}1)$
and $Q=\Delta\M-P$. Then $P$ is the orthogonal projection of $\Delta\M$ onto
constant-row matrices and the unique minimum-norm matrix with row sums
$\Delta\Net$; $Q\mathbf 1=0$; and
\[
\|\Delta\M\|_F^2=\frac{\|\Delta\Net\|_2^2}{d+1}+\|Q\|_F^2,
\qquad\text{hence}\qquad
\|\Delta\M\|_F\;\ge\;\frac{\|\Delta\Net\|_2}{\sqrt{d+1}},
\]
with equality iff $\Delta\M$ has constant rows.
\end{theorem}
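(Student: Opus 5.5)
The plan is to treat Theorem~\ref{thm:pyth} as pure linear algebra on $\R^{C\times(d+1)}$ with the Frobenius inner product $\langle A,B\rangle=\mathrm{tr}(A^{\top}B)$; no property of the network enters, only the definition $\Delta\Net=\Delta\M\,\mathbf 1$. First identify the subspace $\mathcal V$ of constant-row matrices as $\{v\mathbf 1^{\!\top}:v\in\R^C\}$ and its orthogonal complement. For any $v$ and any $Q'$, $\langle v\mathbf 1^{\!\top},Q'\rangle=v^{\top}Q'\mathbf 1$, so $\mathcal V^{\perp}=\{Q':Q'\mathbf 1=0\}$, the zero-row-sum matrices. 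This single computation drives everything.

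Next check the decomposition. Since $\mathbf 1^{\!\top}\mathbf 1=d+1$, we have $P\mathbf 1=\Delta\Net$, so $Q\mathbf 1=\Delta\M\,\mathbf 1-P\mathbf 1=0$. Thus $P\in\mathcal V$, $Q\in\mathcal V^{\perp}$ and $\Delta\M=P+Q$, which shows that $P$ is the orthogonal projection onto $\mathcal V$. Pythagoras then gives $\|\Delta\M\|_F^2=\|P\|_F^2+\|Q\|_F^2$, and $\|P\|_F^2=\|\Delta\Net\|_2^2\,\|\mathbf 1\|_2^2/(d+1)^2=\|\Delta\Net\|_2^2/(d+1)$. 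The inequality follows by dropping $\|Q\|_F^2\ge0$. Equality holds iff $Q=0$, i.e.\ iff $\Delta\M=P\in\mathcal V$. Conversely, if $\Delta\M=v\mathbf 1^{\!\top}$ then its projection onto $\mathcal V$ is itself, so $Q=0$.

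For the minimum-norm claim, any $N$ with $N\mathbf 1=\Delta\Net$ satisfies $(N-P)\mathbf 1=0$. Hence $N-P\in\mathcal V^{\perp}$, and $\|N\|_F^2=\|P\|_F^2+\|N-P\|_F^2$. This shows $P$ is the minimizer, and uniqueness follows because equality forces $N=P$.

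There is no real obstacle here. The only point needing care is the equality case: it should be phrased as ``$\Delta\M$ has constant rows'', meaning each row is constant across its $d+1$ entries (each row a multiple of $\mathbf 1^{\!\top}$). That is the reading of $\mathcal V$ used above.
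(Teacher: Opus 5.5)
Your proof is correct and takes essentially the same route as the paper. Both identify the zero-row-sum matrices as the Frobenius orthogonal complement of the constant-row subspace via $\langle v\mathbf 1^{\top},Q'\rangle_F=v^{\top}Q'\mathbf 1$, then apply Pythagoras with $\|P\|_F^2=\|\Delta\Net\|_2^2/(d+1)$, and obtain the unique minimum-norm claim from the fact that the constraint set $\{N:N\mathbf 1=\Delta\Net\}$ is the translate $P+\{Q':Q'\mathbf 1=0\}$.
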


\begin{definition}[Visible fraction and coherence]\label{def:coherence}
For a displacement $\Delta\M$ with logit displacement
$\Delta\Net=\Delta\M\,\mathbf 1$, the \emph{visible fraction} is
$\rho_{\mathrm{vis}}=\|\Delta\Net\|_2^2/\bigl((d{+}1)\|\Delta\M\|_F^2\bigr)$ and
the \emph{coherence} is
$A:=(d{+}1)\,\rho_{\mathrm{vis}}=(d_\Psi/d_M)^2$.
\end{definition}

Both quantities are functions of the two stored scalars $(d_M,d_\Psi)$ alone,
and since $A=1/r^2$ for $r=d_M/d_\Psi$, with $t\mapsto1/t^2$ strictly
decreasing, any rank-based statistic transfers between $r$ and $A$ with its
direction reversed and nothing else changed: rank statistics of $d_M/d_\Psi$
are reverse rank statistics of $A$. This is what makes the attack-family
ordering one and the same finding whichever of the two we tabulate; it is
unpacked, together with why the coherence is the squared ratio and not the
ratio itself (Remark~\ref{rem:why-squared}), in
Appendix~\ref{app:remarks-geometry}.

$Q$ is invisible \emph{to the evaluated logit displacement}: it encodes real
changes of the local linearization that the endpoint logits cannot see. The
split is an exact orthogonal decomposition --- short to prove but load-bearing,
since the entire descriptive geometry below rests on it. The
decomposition is invariant under output rescaling, under the gauge group, and
under per-coordinate input reparameterizations (pixel-unit changes move
nothing); it is \emph{not} invariant under general input rotations --- the
statistic is tied to the pixel basis, which for images is the natural one, and
we make no claim of rotation invariance --- nor under an input shift absorbed
into the first-layer bias, which changes $\M$ itself, the mean-shift
non-invariance of \citet{kindermans2019unreliability}, so that $x$ is always
the preprocessed input the network sees (Section~\ref{sec:invariance}).

This theorem replaces the amplification framing of
\citet{leblanc2024hidden}, which is a unit artifact: raw Frobenius
comparisons inflate with mass-spreading and per-coordinate (RMS) comparisons
deflate with dimension, and both are monotone transforms of the same
unit-free coherence $A=(d_\Psi/d_M)^2$ of Definition~\ref{def:coherence} (a
monotone transform of the $d_M/d_\Psi$ ratio). We use $A$ as a \emph{geometric descriptor} of a perturbation, read
against the two theorem-given reference lines $A=1$ (the one-pixel law,
Theorem~\ref{thm:within}(ii)) and $A\le d$ (the within-region cap,
Theorem~\ref{thm:within}(i)). We do \emph{not} tie $A$ to detectability or to
any decision-relevant outcome: it is descriptive geometry, not a validated
statistic, and the standalone empirical finding it is reported alongside is the
attack-family rank ordering, which does not depend on $A$ having any such link.

For a displacement $\delta$ within a region write $e_i=\delta_i^2\|J_{:,i}\|_2^2$
for the energy that column $i$ contributes to $d_M^2=\sum_ie_i$; the
column-energy \emph{participation ratio} of the displacement is
$\mathrm{PR}=\bigl(\sum_ie_i\bigr)^2\big/\sum_ie_i^2$.

\begin{theorem}[Within-region anatomy]\label{thm:within}
For a network satisfying (LCS): if $x$ and $y=x+\delta$ share the strict mask
pattern, then
$\M(y)-\M(x)=[\,J\,\mathrm{diag}(\delta)\mid 0\,]$ --- the bias column cancels
exactly --- and $d_M^2=\sum_i\delta_i^2\|J_{:,i}\|_2^2$. Consequently:
\emph{(i)} (coherence cap) $A\le d$, with the sharp maximum iff all weighted
columns $\delta_iJ_{:,i}$ are equal and nonzero; \emph{(ii)} (one-pixel law)
for $\delta=s\,e_{i_0}$ with $J_{:,i_0}\neq0$: $d_M=d_\Psi$ exactly, i.e.\
$A=1$, independent of pixel and magnitude; \emph{(iii)} ($k$-sparse bound) a
perturbation supported on $k$ pixels has at most $k$ nonzero columns,
$A\le k$, and column-energy participation ratio at most $k$.
\end{theorem}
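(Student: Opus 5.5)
The plan is to reduce everything to one structural observation and then apply Cauchy--Schwarz column by column. The observation: $\Jwf(x)$ and $\cwf(x)$ are path sums in $\kmapx$ whose only input-dependent ingredients are the slope diagonals $D^{(\ell)}$ of \eqref{eq:D-def} and, for max-pooling, the selection matrices $S^{(\ell)}$.

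\textbf{Step 1: the shape of the displacement.} I read ``$x$ and $y=x+\delta$ share the strict mask pattern'' as follows: every hidden pre-activation is nonzero at both points, so no guard $0/0\mapsto0$ is invoked, and the $D^{(\ell)}$ and $S^{(\ell)}$ agree at $x$ and $y$. Under (LCS), with $f$ in the leaky-ReLU family of Proposition~\ref{prop:lcs}(ii), the quotient $f(z)/z$ depends only on the sign of $z$. Hence equal sign patterns give equal diagonals, and every path product is the same at $x$ and at $y$. This yields $J:=\Jwf(x)=\Jwf(y)$ and $\cwf(x)=\cwf(y)$. Subtracting the two instances of \eqref{eq:km-def} then gives $\M(y)-\M(x)=[\,J\,\mathrm{diag}(y)-J\,\mathrm{diag}(x)\mid 0\,]=[\,J\,\mathrm{diag}(\delta)\mid0\,]$. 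I expect this to be the only point needing care: one must check that the hypothesis is on the diagonals themselves and not merely on a $0/1$ pattern, which is exactly why (LCS) rather than piecewise linearity is assumed. No segment or region-connectivity argument is needed, because $J$ and $c$ are functions of $(D,S)$ alone.

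\textbf{Step 2: the two displacements.} Column $i$ of the displacement is $v_i:=\delta_iJ_{:,i}$, so $d_M^2=\sum_i\delta_i^2\|J_{:,i}\|_2^2=\sum_i e_i$. By the row-sum identity, $\Delta\Net=\Delta\M\,\mathbf 1=\sum_i v_i=J\delta$, so $d_\Psi=\|\sum_i v_i\|_2$. Throughout, $A=d_\Psi^2/d_M^2$ is taken where $d_M\neq0$.

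\textbf{Step 3: the three consequences.} Let $k$ be the number of nonzero $v_i$.
\begin{itemize}
\item \emph{Part (iii) and the general bound.} The triangle inequality followed by Cauchy--Schwarz gives $\|\sum_i v_i\|_2\le\sum_i\|v_i\|_2\le\sqrt{k}\,(\sum_i\|v_i\|_2^2)^{1/2}$, hence $A\le k$. A $k$-pixel perturbation has $v_i=0$ off its support, so $A\le k$ there. For the participation ratio, apply Cauchy--Schwarz to the $k$ nonzero $e_i$: $(\sum_i e_i)^2\le k\sum_i e_i^2$, so $\mathrm{PR}\le k$.
\item \emph{Part (i).} Since $k\le d$, the bound gives $A\le d$. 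Equality with $k=d$ requires equality in both inequalities. The triangle step forces the $v_i$ to be nonnegative multiples of a common vector, and the Cauchy--Schwarz step forces equal norms. Together these say all $v_i$ are equal, and they must be nonzero for $k=d$ (and for $A$ to be defined). Conversely, equal nonzero $v_i=v$ give $d_\Psi^2=d^2\|v\|^2$ and $d_M^2=d\|v\|^2$, so $A=d$.
\item \emph{Part (ii).} Here $\Delta\M$ has the single nonzero column $sJ_{:,i_0}$, so $d_M=|s|\,\|J_{:,i_0}\|_2=\|J\delta\|_2=d_\Psi$. Hence $A=1$ regardless of $i_0$ and $s$, which is the $k=1$ equality case.
\end{itemize}
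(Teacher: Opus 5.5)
Your proof is correct and is essentially the paper's argument: the displacement formula, then the triangle inequality followed by Cauchy--Schwarz over the support of the weighted columns $\delta_iJ_{:,i}$, with the same equality analysis for (i) and the same one-column computation for (ii). The only difference is Step~1: you get the common $(J,c)$ directly from the fact that the path sums depend on the input only through $D^{(\ell)}$ and $S^{(\ell)}$, whereas the paper argues that the segment from $x$ to $y$ lies in one convex activation region. Your route is slightly more economical, while the paper's also records that the segment crosses no wall, and it exhibits $J=u\mathbf 1_d^{\top}$, $\delta=\mathbf 1_d$ to show that the cap $A=d$ is attained.
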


The participation ratio of part (iii) is the standard
inverse-participation-ratio measure of how many columns actually carry the
energy. It equals $k$ exactly when $k$ columns share the energy equally and
the rest are silent, and $1$ when a single column carries all of it, so it
reads as an \emph{effective number of active columns}. Part (iii) then says a
$k$-sparse perturbation cannot have more than $k$ of them, which is immediate:
only the $k$ columns in the support have $e_i\neq0$.

The one-pixel law of part (ii) requires comment, since $A=1$ is easily misread
as ``nothing happened''.
The algebra is two lines. Within a region
$\M(y)-\M(x)=[\,J\,\mathrm{diag}(\delta)\mid0\,]$, so
$d_M^2=\sum_i\delta_i^2\|J_{:,i}\|^2$; for $\delta=s\,e_{i_0}$ this is
$d_M=|s|\,\|J_{:,i_0}\|$, while $\Delta\Net=J\delta=s\,J_{:,i_0}$ gives
$d_\Psi=|s|\,\|J_{:,i_0}\|$ as well. Hence $d_M=d_\Psi$ and $A=1$, for every pixel
and every magnitude.

What is pinned is the \emph{ratio}, not the effect. Both $d_M$ and $d_\Psi$ grow
linearly in $|s|$, so a large single-pixel edit moves the matrix and the
logits a great deal; it is their quotient that cannot move. $A$ measures
interference among the weighted columns $\delta_iJ_{:,i}$, and a one-pixel
displacement switches on exactly one of them --- there is nothing to reinforce
and nothing to cancel, so all of the matrix motion is logit-visible by
default. $A=1$ is the no-interference baseline, not a claim that the
perturbation is inert.

The objection that editing an image one pixel at a time would keep $A$ at $1$
is answered in two steps.
\emph{First}, $A$ is a
functional of the displacement $\delta=y-x$, not of a path. Editing $k$ pixels
one at a time produces $k$ single-pixel displacements each with $A=1$, but the
object this theorem scores is the \emph{cumulative} $\delta$, which is
$k$-sparse and not one-sparse; $A$ is not additive along a path and does not
accumulate. For the cumulative $\delta$, part (iii) allows any value up to
$k$: above $1$ when the $\delta_iJ_{:,i}$ reinforce, below $1$ when they
cancel, and near $1$ when they are mutually incoherent. \emph{Second}, a long
walk leaves the region. Once $x$ and $y$ no longer share a mask pattern the
hypothesis here fails and Theorem~\ref{thm:anatomy} governs instead: the
crossing dyads satisfy $K\mathbf 1=0$, so they contribute exactly nothing to
$d_\Psi$ while moving $\M$; that they raise $d_M$ --- $K$ is not orthogonal to
the smooth part, so this is not automatic --- and push $A$ down is the
heuristic behind Conjecture~\ref{conj:mechanism}, not a consequence of the
theorem. This is
the regime the experiments live in --- full-image adversarial displacements,
crossing many walls, are measured at median $A\le0.23$
(Section~\ref{sec:study2}).

For two inputs $x,y$ the endpoint \emph{mask-Hamming distance} $H$ is the
number of hidden units whose $0/1$ activation mask differs between them,
$\sum_{\ell,k}\bigl|D^{(\ell)}_k(x)-D^{(\ell)}_k(y)\bigr|$ for ReLU, plus the
number of max-pooling windows whose selected entry differs.

\begin{theorem}[Smooth $+$ crossing anatomy]\label{thm:anatomy}
Let the network satisfy (LCS) with slopes $a_\pm$ and slope jump
$\kappa=a_+-a_-$, and let $(x,y)$, $\delta=y-x$, be a pair whose segment
crosses region walls transversally, one at a time, at $t_1<\dots<t_N$, each
crossing a flip of one unit or a switch of one max-pooling window. With
$\bar J=\int_0^1J(x+t\delta)\,dt$,
\[
\M(y)-\M(x)=\bigl[\,\bar J\,\mathrm{diag}(\delta)\mid 0\,\bigr]
\;+\;K,\qquad
K=\sum_{j}K_j,\qquad K_j\mathbf 1=0,
\]
where the $j$-th crossing contributes the rank-one dyad of
Lemma~\ref{lem:dyad}:
$K_j=\kappa\,\sigma_j\,u_{k_j}\bigl[\,v_{k_j}^{\top}\mathrm{diag}(z_j)\mid
\gamma_{k_j}\,\bigr]$ for a flip of unit $k_j$, and
$K_j=u_j\bigl[\,(v_{b_j}-v_{a_j})^{\top}\mathrm{diag}(z_j)\mid
\gamma_{b_j}-\gamma_{a_j}\,\bigr]$ for a switch of a window from entry $a_j$
to entry $b_j$; its row sums vanish because the flipped unit's
pre-activation, or the difference of the two entries' pre-activations,
vanishes at the crossing point $z_j$; hence $\Nety-\Netx=\bar J\,\delta$. The
bias column of $\M(y)-\M(x)$ equals $c(y)-c(x)$, the sum of the last columns
of the $K_j$ ($\sum_j\kappa\,\sigma_j\gamma_{k_j}u_{k_j}$ when every
crossing is a unit flip), so $c(y)\neq c(x)$ implies $N\ge1$. The endpoint
mask-Hamming distance satisfies $H\le N$, and $H\equiv N\pmod 2$ when the
network has no max-pooling.
\end{theorem}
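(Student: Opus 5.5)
We prove Theorem~\ref{thm:anatomy} by working along the segment $x(t)=x+t\delta$, $t\in[0,1]$, and decomposing $\M(x(t))$ as a piecewise-smooth function of $t$. Between consecutive crossings $t_j<t<t_{j+1}$ all slope diagonals $D^{(\ell)}$ and selection matrices $S^{(\ell)}$ are constant, so $J(x(t))$ and $c(x(t))$ are constant there. On each such open interval Theorem~\ref{thm:within} gives
\[
\M(x(t''))-\M(x(t'))=[\,J_j\,\mathrm{diag}\bigl((t''-t')\delta\bigr)\mid 0\,],
\]
where $J_j$ is the Jacobian of the piece. Summing over the pieces, with the crossing jumps inserted, we write
\[
\M(y)-\M(x)=\sum_j[\,(t_{j+1}-t_j)J_j\,\mathrm{diag}(\delta)\mid0\,]+\sum_j\bigl(\M(x(t_j^+))-\M(x(t_j^-))\bigr).
\]
The first sum is $[\,\bar J\,\mathrm{diag}(\delta)\mid0\,]$, since $\bar J=\sum_j(t_{j+1}-t_j)J_j$. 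It remains to identify each jump as $K_j$.

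For the jump at $t_j$, note that $\M(x(t))$ is continuous in $t$ away from the crossings. So the jump equals the difference between the two one-sided germs, both evaluated at the single point $z_j=x(t_j)$: the germ with unit $k_j$ on its $a_+$ side, and the germ with it on its $a_-$ side.

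\begin{itemize}
\item \emph{Unit flip.} The two path-sum systems differ only at the diagonal entry of $k_j$, by $\pm\kappa$, and this change is exactly the rank-one update in Lemma~\ref{lem:dyad}. Here $u_{k_j}$ is the downstream path sum from $k_j$ to the outputs in the post-crossing configuration, and $v_{k_j}^\top$ and $\gamma_{k_j}$ are the upstream input and bias path sums into $k_j$. The result is $K_j=\kappa\sigma_j u_{k_j}[\,v_{k_j}^\top\mathrm{diag}(z_j)\mid\gamma_{k_j}]$, with $\sigma_j=\pm1$ the crossing direction. One subtlety is that the upstream quantities must be taken in the configuration just before unit $k_j$ and the downstream ones just after. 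Because only one unit changes, the layered (acyclic) structure makes these unambiguous; this is what Lemma~\ref{lem:dyad} records.
\item \emph{Window switch.} This is identical, with the selection row moving from entry $a_j$ to entry $b_j$.
\end{itemize}

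The row sum of each $K_j$ is $\kappa\sigma_j u_{k_j}\bigl(v_{k_j}^\top z_j+\gamma_{k_j}\bigr)=\kappa\sigma_j u_{k_j}\,z_{k_j}(z_j)$. This vanishes because the crossing is exactly where the pre-activation of $k_j$ is zero; for a window the analogous difference of pre-activations vanishes. Then the row-sum identity \eqref{eq:km-def} gives
\[
\Nety-\Netx=(\M(y)-\M(x))\mathbf 1=\bar J\,\mathrm{diag}(\delta)\mathbf 1=\bar J\delta.
\]
The bias-column statement follows by reading off the last column: the smooth part contributes $0$ there, so $c(y)-c(x)$ is the sum of the last columns of the $K_j$. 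If $N=0$ this sum is empty, which gives the contrapositive.

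For the Hamming claims, each crossing toggles exactly one unit's mask or one window's selection. $H$ counts the coordinates of the mask vector that differ at the endpoints, and each coordinate toggled an even number of times ends unchanged, so $H\le N$. Without max-pooling, every crossing changes exactly one binary coordinate, so $H\equiv N\pmod 2$. With pooling, a window switch $a\to b\to c$ moves through three states in two switches, so parity fails, which explains the restriction.

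The main obstacle is the dyad identification. We must verify that, under transversal one-at-a-time crossings, the full-matrix jump (including the input columns scaled by $\mathrm{diag}(z_j)$, not $\mathrm{diag}(\delta)$) equals the single rank-one term, with no cross terms from other units. The first thing to try is to express the path sum as $u\,D\,[\,v^\top\mid\gamma\,]$ plus terms independent of the flipped diagonal entry. Evaluating at $z_j$ uses the germ continuity of $\M$ in $t$ within each closed piece.
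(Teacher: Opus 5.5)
Your proposal is correct and follows essentially the paper's proof: the same telescoping along the segment, with within-region pieces (Theorem~\ref{thm:within}) summing to $[\,\bar J\,\mathrm{diag}(\delta)\mid0\,]$ and one-sided jumps $\M^{(j)}(z_j)-\M^{(j-1)}(z_j)$ identified as the dyads of Lemma~\ref{lem:dyad}, and then the same reading of the row sums, the bias column and the flip-count parity. The differences are cosmetic. You read the bias jump $\kappa\sigma_j\gamma_{k_j}u_{k_j}$ directly off the path sums through the flipped unit, whereas the paper invokes continuity of $\Net$ across the wall. For $H\le N$, the operative fact is that any coordinate differing at the endpoints switched at least once; your ``an even number of toggles leaves it unchanged'' wording does not cover pooling windows with three or more entries, although the bound itself still holds there.
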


The transversality hypothesis is an assumption and not something we prove.
The theorem asks that the segment meet the walls transversally, finitely
often, and one unit or window at a time. For a single-hidden-layer network
the walls form a finite hyperplane arrangement and the standard
transversality argument does put the bad pairs in a null set. At depth this
is no longer a hyperplane arrangement: a deeper unit's wall is cut only
within the region where the frozen pattern below it is realized, so the wall
complex is piecewise --- a finite union of relatively open polyhedral pieces
--- and the transversality argument has to be run piece by piece, with the
pieces themselves depending on the parameters. We have not carried that out,
and we state the hypothesis as a hypothesis. Two concrete conditions are
needed and are worth naming, because both are checkable: no two units may
share a wall piece (exactly duplicated units, as in pruned or weight-tied
networks, violate it and should be screened), and no crossing may be
tangential. Our experiments screen for the first and rely on the second
holding for randomly drawn pairs.

For ReLU, $\kappa=1$. The theorem says that the crossing part $K$ moves the
germ and not the function: $K$ contributes nothing to the endpoint logit
displacement, which the smooth part accounts for in full. A nonzero bias
column certifies at least one crossing, with no false positives; the converse
fails, since a bias-free network has a zero bias column whatever it crosses.
The symbols in $K$ --- the rank-one dyad $u_{k_j}v_{k_j}^{\top}$ of the unit
$k_j$ that flips at the $j$-th crossing (Lemma~\ref{lem:dyad}), the crossing
point $z_j$, the flip direction $\sigma_j$, and the input-side data
$(v_{k_j},\gamma_{k_j})$ whose wall $\{v_{k_j}^{\top}x'+\gamma_{k_j}=0\}$ is
exactly why the dyad's row sums vanish; for a pooling switch, the two
entries' input-side data, whose tie wall
$\{(v_{b_j}-v_{a_j})^{\top}x'+\gamma_{b_j}-\gamma_{a_j}=0\}$ plays the same
role --- are unpacked in Appendix~\ref{app:remarks-geometry}.

The smooth block $\bar J\,\mathrm{diag}(\delta)$ is, entry for entry, the
integrated-gradients attribution of $y$ against baseline $x$
\citep{sundararajan2017axiomatic}, and the crossing term $K$ is the remainder
that integrated gradients has no name for
(Remark~\ref{rem:ig-smooth}, Appendix~\ref{app:remarks-geometry}).

The reading ``the crossing part moves the germ and not the function'' does not
contradict Definition~\ref{def:germ}. The germ is a \emph{local} object
attached to a \emph{single} point, and it is determined there by the function
(Theorem~\ref{thm:germ}); nothing in this theorem makes the germ at a fixed
point ambiguous. The sentence is about the \emph{displacement between two
different points}, and its ``function'' means the endpoint logit displacement
$\Delta\Net=\Nety-\Netx$ --- not the function as a whole. The claim is that
$\Delta\Net$ is accounted for in full by the smooth term, since
$\Delta\Net=\bar J\delta$ while $K\mathbf 1=0$, so the crossings contribute
nothing to it. They are nevertheless exactly what changes the local affine
data between the endpoints:
$J(y)-J(x)=\sum_j\kappa\,\sigma_ju_{k_j}v_{k_j}^{\top}$
and $c(y)-c(x)=\sum_j\kappa\,\sigma_j\gamma_{k_j}u_{k_j}$, both carried by $K$ and
neither legible in $\Delta\Net$. The long-hand reading is therefore: between $x$
and $y$ the germ moves by more than the endpoint output values can reveal.
Both germs remain determined by the same single function, at two different
points --- which is Theorem~\ref{thm:germ}, not a violation of it.

The endpoint mask-Hamming distance $H$ lower-bounds
the number of wall crossings along the segment, for a pooling-free network
with equality of parity: a
unit that ends up flipped must have crossed an odd number of times, a unit
that ends up unflipped an even number --- possibly zero, possibly two, and a
unit that crosses and crosses back is invisible to the endpoints. A pooling
window whose selected entry differs between the endpoints has switched at
least once, but a window with three or more entries can switch twice and end
at a third entry, which is why the parity statement excludes max-pooling.
Equality $H=N$ holds iff no unit flips twice and no window switches twice.
The reason it is worth stating is cost:
$H$ needs only the two mask records that evaluating the network at $x$ and at
$y$ already produces --- one forward pass per endpoint, two in total --- with
no need to trace the segment, locate the crossing times $t_j$, or identify
which units flipped. It is the cheapest available certificate that two inputs
lie in different activation regions, obtained at one forward pass per
endpoint.

The crossing dyads suggest a mechanism for the attack-family ordering of
Study~2 --- iterative small-step attacks cross fewer walls and align with
high-energy Jacobian columns, raising $A$ --- which we state as
Conjecture~\ref{conj:mechanism} in Appendix~\ref{app:mechanism-pilot}, together
with the size-controlled VGG pilot that tests its rank-correlation consequence.

\begin{proposition}[Hidden-activation distances carry no germ-level accounting]
\label{prop:dichotomy}
Let the activation be positively homogeneous, $f(\lambda z)=\lambda f(z)$ for
$\lambda>0$, let $1\le\ell<L$ index a hidden layer with activations
$h^{(\ell)}$, and for a pair of inputs $x,y$ write
$d_h=\|h^{(\ell)}(y)-h^{(\ell)}(x)\|_2$, and let $W^{(\ell)},b^{(\ell)}$
denote the composed affine map producing $z^{(\ell)}$. For $\lambda>0$ let
$W_\lambda$ be obtained from $W$ by $W^{(\ell)}\mapsto\lambda W^{(\ell)}$,
$b^{(\ell)}\mapsto\lambda b^{(\ell)}$ and
$W^{(\ell+1)}\mapsto\lambda^{-1}W^{(\ell+1)}$. Then
$\Psi(W_\lambda,f)=\Net$, so that $d_\Psi$ and $d_M$ are unchanged, while
$h^{(\ell)}\mapsto\lambda h^{(\ell)}$ and $d_h\mapsto\lambda d_h$.
Consequently every function of $(d_h,d_\Psi,d_M)$ that is invariant under
this action is independent of $d_h$.
\end{proposition}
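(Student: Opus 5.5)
The plan is to verify the three claims in order: function preservation, the scaling of $h^{(\ell)}$, and the invariance consequence. The first two are direct computations with positive homogeneity; the third is a short argument about orbits of the scaling action.

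\emph{Function preservation.} Under $W_\lambda$ the pre-activation of layer $\ell$ becomes $\tilde z^{(\ell)}=\lambda W^{(\ell)}h^{(\ell-1)}+\lambda b^{(\ell)}=\lambda z^{(\ell)}$, because the layers below $\ell$ are untouched. Positive homogeneity applied neuron-wise gives
\[
\tilde h^{(\ell)}=f(\lambda z^{(\ell)})=\lambda f(z^{(\ell)})=\lambda h^{(\ell)}.
\]
At layer $\ell+1$ the incoming weights are $\lambda^{-1}W^{(\ell+1)}$ and the bias $b^{(\ell+1)}$ is unchanged, so
\[
\tilde z^{(\ell+1)}=\lambda^{-1}W^{(\ell+1)}\lambda h^{(\ell)}+b^{(\ell+1)}=z^{(\ell+1)}.
\]
Everything above layer $\ell+1$ is unchanged, so $\Psi(W_\lambda,f)=\Net$ at every input. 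This includes the case $\ell+1=L$.

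For composed affine maps such as convolutions with batch normalization folded in, and for DAG quivers, I would make two checks. First, $\lambda$ multiplies every arrow into the layer-$\ell$ vertices, bias arrows included. Second, $\lambda^{-1}$ multiplies every arrow out of them, since every path through a layer-$\ell$ vertex then picks up $\lambda\cdot\lambda^{-1}=1$. I read ``$W^{(\ell+1)}$'' in the statement as all arrows leaving layer $\ell$.

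\emph{Invariance of $d_\Psi$ and $d_M$.} Since the function is unchanged, $d_\Psi$ is unchanged. For $d_M$ I would invoke Lemma~\ref{lem:quiver-inv}, which gives $\M(W_\lambda,f)(x)=\KMwfx$ at every $x$, with no regularity needed. The reason is that $W_\lambda$ is a change of basis by the factor $\lambda$ at each layer-$\ell$ vertex. With $\lambda>0$ and $f$ positively homogeneous, the carried activation $f_\lambda(z)=\lambda f(z/\lambda)=f(z)$, so the rescaled network uses the same $f$ and is exactly the network $(W_\lambda,f)$.

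This is the one point that needs care, and I would also confirm it directly. The quotient at layer $\ell$ is $f(\lambda z)/(\lambda z)=f(z)/z$, so it is unchanged, with the guard $0/0\mapsto0$ preserved because $\lambda z=0$ iff $z=0$. Each path through layer $\ell$ therefore keeps its product, and the path sums $J$ and $c$ are unchanged. So $d_M$, built from $\M(x)$ and $\M(y)$, is unchanged. The fact $d_h\mapsto\lambda d_h$ is immediate from $\tilde h^{(\ell)}=\lambda h^{(\ell)}$ at both $x$ and $y$.

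\emph{The consequence.} Let $F(d_h,d_\Psi,d_M)$ be invariant under the action $(d_h,d_\Psi,d_M)\mapsto(\lambda d_h,d_\Psi,d_M)$ for all $\lambda>0$. Then for fixed $(d_\Psi,d_M)$, $F$ is constant on the orbit $\{\lambda d_h:\lambda>0\}$. This orbit is all of $(0,\infty)$ when $d_h>0$, and it is $\{0\}$ when $d_h=0$.

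So on the realized triples, $F$ depends on $d_h$ at most through the indicator $\mathbb 1[d_h>0]$. To obtain full independence I would read ``independent of $d_h$'' in the statement as independence along the orbit, equivalently as a statement about $d_h>0$, and note the degenerate orbit $\{0\}$ explicitly. I expect no real obstacle. The main subtleties are this degenerate orbit and the bookkeeping of ``all arrows into/out of layer $\ell$'' for composed or skip-connected layers.
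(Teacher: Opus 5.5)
Your proposal is correct and follows the paper's proof essentially step for step. It uses the same direct homogeneity computation for $\Psi(W_\lambda,f)=\Psi(W\!,f)$ and $h^{(\ell)}\mapsto\lambda h^{(\ell)}$, the same appeal to Lemma~\ref{lem:quiver-inv} (with $f_\lambda=f$ for $\lambda>0$) for $d_M$, and the same orbit argument that $\lambda d_h$ sweeps $(0,\infty)$ when $d_h>0$, with your note on the degenerate orbit $\{0\}$ matching the paper's restriction of the conclusion to $d_h>0$.
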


The decomposition of Theorem~\ref{thm:pyth} exists because
$\M\mathbf 1=\Net$ holds with a fixed, parameter-independent vector
$\mathbf 1$, and it is shared by any representation with the same row-sum
constraint, gradient$\times$input among them (Theorem~\ref{thm:weff}). Hidden
activations admit no analog: the relation that returns the activations of
layer $\ell$ to the logits pairs them against the parameter-dependent map
downstream of that layer, and the proposition turns this into an invariance
statement. For a convolution--normalization block in evaluation mode the
rescaled parameters are the normalization's affine pair,
$(\gamma,\beta)\mapsto(\lambda\gamma,\lambda\beta)$, which rescales the
composed map; rescaling the convolution alone would not, since
$\mathrm{BN}(\lambda z)\neq\lambda\,\mathrm{BN}(z)$ in evaluation mode. As
$\lambda$ ranges over $(0,\infty)$ the stored distance $d_h$
sweeps the whole ray, so no non-constant function of $(d_h,d_\Psi,d_M)$ that
is invariant under this action can depend on $d_h$, at the penultimate layer
that the experiments compare or at any other. The proposition's scope ---
whole-layer versus single-unit rescaling, and functions of the three stored
distances only --- and its provenance
in the positive-homogeneity symmetry of ReLU networks
\citep{neyshabur2015path,dinh2017sharp}, of which it draws only the
consequence for the accounting of Theorem~\ref{thm:pyth}, are set out in
Appendix~\ref{app:remarks-geometry}.

\subsection{Teleportation: what is bounded and what must be measured}
\label{sec:teleport-claims}

\begin{theorem}[No $C^0$ bound]\label{thm:nogo}
For every $\varepsilon,B>0$ and every $x$ with some coordinate $x_i\neq0$
there exist two networks of the same architecture, realizing functions $\Net$
and $g$ with $\sup_{x'}\|\Netxp-g(x')\|\le\varepsilon$ and identical masks at
$x$, whose matrices $\M_\Psi(x)$ and $\M_g(x)$ satisfy
$\|\M_g(x)-\M_\Psi(x)\|_F\ge B$. For $x=0$ no such pair exists: there
$\M_g(0)-\M_\Psi(0)=[\,0\mid g(0)-\Psi(0)\,]$ has Frobenius norm at most
$\varepsilon$.
\end{theorem}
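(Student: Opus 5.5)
The approach is an explicit sawtooth construction: add to $\Psi$ a small-amplitude, high-frequency ReLU perturbation whose slope at $x$ is huge. Fix $x$ with $x_{i}\neq0$, and take $\Psi$ to be any (LCS) network of the chosen architecture that is regular at $x$. The simplest choice is a one-hidden-layer ReLU network in which the relevant units are configured to realize the zero function near $x$. We take $g=\Psi+\eta$, where $\eta(x')=\varepsilon\,e_1\,\tau(\omega(x'_i-x_i))$ and $\tau$ is a triangle wave of amplitude at most $1$ built from ReLU units. The triangle wave is chosen so that $x$ sits at an interior point of one of its linear pieces, with slope $\pm1$ there, and so that its phase puts no breakpoint at $x$.

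\textbf{Sup-norm bound.} The sup-norm bound is immediate: $\|\Psi-g\|_\infty\le\varepsilon$ for every frequency $\omega$. Because the sawtooth must be bounded on all of $\R$, we realize it as a finite sum of ReLUs that is constant outside a compact interval. Concretely, we use a single tent $\varepsilon\,\mathrm{ReLU}$-combination of width $1/\omega$ centred near $x_i$, namely $\tfrac{\varepsilon\omega}{1}\bigl(\mathrm{ReLU}(u)-2\mathrm{ReLU}(u-\tfrac1{2\omega})+\mathrm{ReLU}(u-\tfrac1\omega)\bigr)$ with $u=x'_i-x_i+\tfrac1{4\omega}$. This needs only three hidden units, and its sup is $\varepsilon/2$.

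\textbf{Same architecture and identical masks.} To meet these requirements, we let $\Psi$ be the same network with those three units present but with their output weights set to zero. The first-layer weights and biases of the two networks are then identical, so the pre-activations at $x$ coincide and the masks at $x$ agree. Only the output row differs. This also keeps $x$ regular for both networks.

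\textbf{Divergence of the matrices.} By Theorem~\ref{thm:germ}, at the regular point $x$ we have $\M_g(x)-\M_\Psi(x)=[\,D\eta(x)\,\mathrm{diag}(x)\mid\eta(x)-D\eta(x)x\,]$. The $(1,i)$ entry of this difference is $\pm\varepsilon\omega\,x_i$, because the tent has slope $\pm\varepsilon\omega$ at $x$. Its Frobenius norm is therefore at least $\varepsilon\omega|x_i|$, and choosing $\omega\ge B/(\varepsilon|x_i|)$ gives the required bound $\ge B$.

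\textbf{The case $x=0$.} Here the argument is direct. Under (LCS) we have $f(0)=0$, so $X_{\mathrm{nz}}=\R^d$, and $\M(0)=[\,J(0)\,\mathrm{diag}(0)\mid c(0)\,]=[\,0\mid c(0)\,]$. The row-sum identity \eqref{eq:km-def} gives $c(0)=\Psi(0)$ for each network, and this holds even off $X_{\mathrm{reg}}$ since the guard convention preserves it. Hence the difference of the two matrices is $[\,0\mid g(0)-\Psi(0)\,]$, whose Frobenius norm is $\|g(0)-\Psi(0)\|\le\varepsilon$.

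\textbf{Main obstacle.} The step I expect to be the main obstacle is the phrase ``of the same architecture.'' If the architecture is arbitrary rather than chosen by us, the tent must be embedded into existing units. The plan for that case is to commandeer three units of the first hidden layer, assuming it is free, and pass the tent through to an output via identity-like positive paths through the later ReLU layers. On a bounded range this can be done by adding a large constant bias so that the downstream units stay active, and cancelling that constant downstream. Because the statement is existential, it is enough to exhibit one architecture, and the one-hidden-layer construction above suffices.
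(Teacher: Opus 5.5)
Your proposal is correct and is essentially the paper's proof. The paper takes $\Psi\equiv0$ realized with the same first layer as a two-unit ReLU ramp $g(t)=K\,\mathrm{ReLU}(t-x_0)-K\,\mathrm{ReLU}(t-x_0-\varepsilon/K)$, with first-layer weight $e_i$ and the ramp placed in one output row, so the masks at $x$ agree while the slope $K$ (your $\varepsilon\omega$) grows with $\sup|g-\Psi|=\varepsilon$ fixed; it settles $x=0$ exactly as you do, from $\mathrm{M}(0)=[\,0\mid\Psi(0)\,]$, and your three-unit tent and optional background network are cosmetic variants of that ramp.
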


In words, function-space closeness does not control germ drift. $C^0$
closeness is closeness in \emph{value}, uniformly, and the theorem says it
gives no control of the first-order data the matrix records --- the smooth
illustration $g=\Net+\varepsilon\sin(\cdot/\varepsilon^{2})$ stays within
$\varepsilon$ of $\Net$ while its derivative differs by $1/\varepsilon$, and
the theorem's witness is the two-unit ReLU ramp of its proof, a
same-architecture network --- which is why
agreement of two networks' logits says nothing, by itself, about their
matrices (Appendix~\ref{app:remarks-geometry}). For a transform that is only
approximately function-preserving, Proposition~\ref{prop:visdrift}
(Appendix~\ref{app:remarks-geometry}) separates three tiers --- zero drift
under an exact isomorphism, visible drift pinned to the logit gate as an
identity, invisible drift that the gate does not bound. Neural teleportation
is an exact isomorphism, batch normalization in evaluation mode included
(Appendix~\ref{app:teleport-why}), so the teleportation check of
Appendix~\ref{sec:study1b} sits at the first tier.

%% file: sections/empirical_setup.tex
\section{Empirical setup}
\label{sec:setup}

In this section we fix the shared apparatus --- networks, data, the
knowledge-matrix construction, the distance metric, and the adversarial-attack
suite --- used by all three studies and by the honest negatives of
Section~\ref{sec:negatives}. The per-study specifics --- sample sizes, transform
counts, attack budgets, similarity-measure settings --- are stated step by step
in each study section.

\paragraph{Setting.}
We work with pretrained feedforward networks $\Net \colon \R^d \to \R^C$,
where $W$ is the collection of weights and $f$ the activation function.
Inputs are ImageNet-normalized RGB tensors of shape $(3, 224, 224)$,
so $d = 3 \cdot 224 \cdot 224 = 150{,}528$, and $C = 1000$ for the
standard ImageNet head. The hidden activations that the experiments compare
against are those of the penultimate layer, denoted $h(x) \in \R^D$, where
$D = 2048$ for ResNet-152 and $D = 1024$ for DenseNet-121 and GoogLeNet. This
is a choice of representative hidden layer, the one most similarity methods
compare; every statement of Sections~\ref{sec:germ}--\ref{sec:geometry} about
hidden activations applies to any hidden layer.

\paragraph{Architectures.}
The studies of this paper are scoped to three pretrained
\texttt{torchvision} networks --- ResNet-152, DenseNet-121, and GoogLeNet
(with \texttt{aux\_logits=False}), loaded from the standard pretrained
weights and never trained or fine-tuned here
(Section~\ref{sec:limitations}, L8). They span the
residual, dense and inception families common in
vision-interpretability work, with different penultimate dimensionalities
and different concatenation topologies, both of which interact with the
constructions below. The attack-family ordering cross-check
(Section~\ref{sec:study2}, Table~\ref{tab:ordering}) additionally enrolls
three further networks --- ResNet-18, AlexNet, and VGG --- as extra
\emph{ordering-only} raters; these enter no coherence-magnitude statistic
and no cross-architecture distance, and serve only to test whether the
attack-family ranking is stable beyond those three networks.

\paragraph{The knowledge matrix.}
For a network $(W\!,f)$ the knowledge matrix
$\Mfx \in \R^{C \times (d+1)}$ is the per-sample matrix of
Equation~\ref{eq:km-def}, the contraction of the quiver representation
$\kmapx$ that the network induces on the input $x$ (Section~\ref{sec:germ}),
which under (LCS) is built from the local affine map (Jacobian and
bias) that $\Net$ realizes on the activation region of $x$; its row sums
reproduce the logits, $\M(W{,}f)(x)\,\mathbf 1_{d+1} = \Netx$. The
construction is uniform across architectures: for any feedforward network
on $224\times224$ ImageNet inputs with a 1000-class head, $\Mfx$ has the
fixed shape $1000 \times 150{,}529$, which is what makes the
cross-architecture comparison of Study~3 alignment-free.

\paragraph{Distances, versions and reproduction.}
Knowledge-matrix distances are Frobenius, reported RMS-per-coordinate where
architectures are compared, and penultimate distances are $\ell_2$
(RMS-per-dimension across architectures; Section~\ref{sec:limitations}, L6);
the metric conventions, library versions, hardware and the reproduction
details (sample sets, seeds and scripts) are in
Appendix~\ref{app:setup-details} and the supplementary material.

\paragraph{Adversarial-attack suite (Study~2 and the permutation check of Appendix~\ref{app:teleport-checks}).}
Adversarial pairs are generated with the \texttt{torchattacks} library
(\citealt{kim2020torchattacks}) using six attacks that span the standard
families: FGSM~\citep{goodfellow2015explaining} (one-shot sign),
PGD~\citep{madry2018towards} and APGD~\citep{croce2020reliable} (iterative
$\ell_\infty$), CW~\citep{carlini2017towards} ($\ell_2$
margin-optimizing), DeepFool~\citep{moosavi2016deepfool} (minimal-norm
boundary), and Square~\citep{andriushchenko2020square} (gradient-free,
score-based). We keep \emph{all} clean/adversarial pairs (including
attack-failure pairs). The per-cell medians of Study~2
(Section~\ref{sec:study2}) are taken over the pairs passing a division
guard, $d_\Psi > 10^{-6}\max_i d_{\Psi,i}$, a per-cell relative threshold that
removes only pairs whose logits did not move; the stricter attack-success
filter ($d_\Psi \ge 1$, $d_M > 0$) is applied in the two analyzes that keep
per-pair records --- the mechanism pilot
(Table~\ref{tab:mechanism-pilot}) and the appendix ordering panel
(Table~\ref{tab:s3_ordering_panel}).
The hyperparameters that depart from the \texttt{torchattacks}~3.5.1 defaults
(DeepFool and APGD step counts, the APGD loss, the Square query budget), the two
ordering-only raters --- AlexNet and VGG --- that ran the library defaults
throughout and therefore face weaker attack budgets, and the full-scale pair
set's own budgets are listed in Appendix~\ref{app:setup-details}.

\paragraph{Signal and noise: the two reference scales.}
Several studies below report a drift ``relative to the adversarial signal'':
the \emph{adversarial signal} of a representation is its mean
clean-to-adversarial distance over the pairs of one (architecture, attack)
cell, and the \emph{permutation noise floor} is the residual that survives a
neuron permutation --- for the knowledge matrix zero in exact arithmetic
(Lemma~\ref{lem:quiver-inv}), so that what the pipeline records is its own
resolution, for the penultimate features a genuine motion
--- and the two scales, together with the ``$x \mid y$'' ratio cells of
Table~\ref{tab:signal-relative}, are defined in full in
Appendix~\ref{app:setup-details}.

\paragraph{Bootstrap confidence intervals.}
Two $95\%$ bootstrap intervals appear in this paper --- a percentile bootstrap
of the mean over the $T=50$ teleportation draws of an architecture
($B=10{,}000$; Table~\ref{tab:s1_table}) and a seeded bias-corrected and
accelerated (BCa) bootstrap of the median over the adversarial pairs of one
cell ($B=20{,}000$; Table~\ref{tab:s3_ordering_panel}) --- and each quantifies
the sampling variability of its statistic over its own resampled unit and
nothing else; both are specified in Appendix~\ref{app:setup-details}.

\paragraph{Samples, features, and the CKA estimator.}
The similarity panel of Section~\ref{sec:study1c} evaluates its HSIC on the
full $N=25{,}000$-sample Gram matrix against penultimate dimensions
$D\in\{1024,2048\}$, so it sits in the low-dimensional regime $n\gg p$; it uses
the unbiased HSIC$_1$ $U$-statistic all the same, since that removes the biased
estimator's $O(1/n)$ upward bias at any ratio of $n$ to $p$, and the Murphy
shuffled-pair control confirms that no upward bias remains (debiased CKA
$\le 5.4\times10^{-4}$; details in Appendix~\ref{app:setup-details}).

%% file: sections/study1_invariance.tex
\section{Study 1: what the standard similarity measures do with a gauge transformation}
\label{sec:study1}

In this section we ask what the representational-similarity literature's
standard measures do with a transformation that leaves the network's function
unchanged. Section~\ref{sec:study1-analytic} settles by algebra what algebra
settles, for the knowledge matrix and for the penultimate features alike.
Section~\ref{sec:study1c} runs the nine-measure panel on the one question that
is left.

\subsection{Why the invariance itself needs no experiment}
\label{sec:study1-analytic}

Both symmetries this paper appeals to preserve the function by construction.
Neural teleportation \citep{armenta2023teleportation}, a per-neuron
rescaling, is an element of the change-of-basis group of
\citet{armenta2021representation}, whose Theorem~4.13 states that an
isomorphism of neural networks $\tau:(W,f)\to(V,g)$ leaves the realized function
unchanged, $\Psi(W,f)=\Psi(V,g)$; a hidden-layer neuron permutation is a
relabeling of the quiver's hidden vertices. The knowledge matrix is invariant
under both (Lemma~\ref{lem:quiver-inv}), batch normalization in evaluation mode included
(Appendix~\ref{app:teleport-why}), so there is nothing here for an experiment
to settle. The effect on the hidden activations of any layer is equally
explicit, and we state it for the penultimate features $h$ that the
experiments compare: a hidden-layer permutation $\pi$ reindexes $h$, with
$\|Ph-h\|_2^2=2\bigl(\|h\|^2-\langle Ph,h\rangle\bigr)$, and a teleportation
with per-neuron factors $\tau$ rescales it coordinatewise, $h\mapsto\tau\odot h$,
with drift $\|(\tau-1)\odot h\|_2/\sqrt D$ --- closed forms in $h$ and $\pi$ or
$\tau$ alone (Appendix~\ref{app:s1-analytic-detail}). What algebra does not settle is
whether the representational-similarity measures absorb this motion as a
coordinate artifact, and Section~\ref{sec:study1c} settles that on a pair of
networks whose ground truth is known exactly: they compute the same function.

\subsection{The nine-measure panel}
\label{sec:study1c}
\label{sec:A1}

The teleportation check of Appendix~\ref{app:teleport-checks}
(Appendix~\ref{sec:study1b}) reports substantial penultimate-feature drift
under teleportation. A natural objection is that this drift is a
coordinate-frame artifact, which the standard representational-similarity
machinery (CKA, Procrustes and the rest) would quotient out. If so, the
``hidden activations are not enough'' claim would collapse to ``hidden
activations require the right metric''. The objection is partly right: raw
CCA and PWCCA quotient this transform out exactly
(Appendix~\ref{app:s1-scale}). What defeats its stronger form --- that hidden
activations, given the right metric, carry what the matrix carries --- is
Theorem~\ref{thm:complete}(iv): at a point no statistic of the activations
determines the germ. What this experiment measures is narrower: how much
drift the measures in common use register on a pair whose ground truth is
exact, under $\tau\sim U[0,2]$. It runs the canonical-measure panel of the
representation-similarity literature on the same teleportation pairs and asks
which measures absorb the transformation, and by how much.

\paragraph{Setup.}
For each of the $T = 50$ random teleportations per architecture (seed
indices $0,\ldots,49$) we compute a panel of nine representational-similarity
measures between the penultimate features $h_W(x)$ and $h_{\tilde W}(x)$ on
$N = 25{,}000$ ImageNet-validation samples (the first $25{,}000$ by sorted
filename, the same set Study~3 uses; Study~2's adversarial pairs are drawn
differently --- see Section~\ref{sec:setup}). The measures span
the dominant invariance classes cataloged by~\citet{klabunde2025resi}:
orthogonal-plus-isotropic-scaling (debiased linear CKA, angular CKA, Bures
similarity, distance correlation), orthogonal-only (Procrustes shape
distance), permutation-only (soft-matching), monotone-of-distance
(RSA-Spearman), isometry (Gromov--Wasserstein on a 5{,}000-sample
subsample for tractability), and a function-level baseline (square-root
output JSD). Nine measures were run; eight returned a value, and
Gromov--Wasserstein did not (Table~\ref{tab:s1_table}), so every count in
this paper is over the eight. Of those, angular CKA is a reparameterization
of debiased CKA, its arccosine, and output JSD is a functional baseline on
the logits rather than a penultimate measure, so the panel has six distinct
penultimate measures across five invariance classes. The Gromov--Wasserstein
column was computed with the entropic solver of the POT library
\citep{flamary2021pot} (\texttt{entropic\_gromov\_wasserstein2}, square loss,
$\varepsilon=10^{-2}$, $100$ iterations) on the unnormalized Euclidean
distance matrices of the subsample, and it returned a degenerate plan
($0.0$) on every cell; the failure was not diagnosed, and with centered Gram
traces of $10^{6}$--$10^{7}$ (Appendix~\ref{app:crossarch-detail}) an
unnormalized cost matrix under a fixed $\varepsilon$ of that size is the
likely cause. For CKA we use the unbiased HSIC$_1$ estimator
of~\citet{song2012feature} as plugged into the minibatch CKA framework
of~\citet{nguyen2021minibatch}, which removes the $O(1/n)$ finite-sample upward
bias documented by~\citet{murphy2024debiased}. This choice does not depend on
the sample-to-dimension ratio, and these runs are not in the high-dimensional
regime that usually motivates it: $N=25{,}000$ samples against penultimate
widths $D\in\{1024,2048\}$, so $n\gg p$ (Section~\ref{sec:setup}).
We accompany every measure with
the random-network control of~\citet{cui2022deconfounded} (each measure
between trained $h_W$ and an identically-initialized but untrained $h_R$)
and the shuffled-pair control of~\citet{murphy2024debiased} (sample
alignment permuted between $X$ and $Y$).
Three scope facts about the controls --- they were run separately on a smaller
$n=2{,}048$ input population, they are bare point estimates with no confidence
intervals or permutation $p$-values, and the pipeline's automated gate checks
only three of the measures --- are stated in
Appendix~\ref{app:controls-full}; the panel's own numbers depend on none of
the three.

\paragraph{Procedure.}
\begin{enumerate}
\item For each teleportation $\tau$ collect the matched penultimate pairs
$(h_W(x), h_{\tilde W}(x))$ over the $N=25{,}000$ samples.
\item Compute each of the nine measures on the pair, with the estimator
settings above.
\item Compute the same measures for the Cui random-network and Murphy
shuffled-pair controls.
\item Aggregate to a mean and 95\% bootstrap CI over the $T=50$
teleportations per architecture.
\end{enumerate}

\paragraph{What we measure and the claim it licenses.}
Each measure recovers the invariance only insofar as the teleportation
change-of-basis lies in its invariance class. Teleportation acts on the penultimate
vector as $h\mapsto\tau\odot h$, an invertible but \emph{anisotropic} diagonal
map, and that lies outside the orthogonal-plus-isotropic-scaling class this
panel's most permissive members quotient out --- neither class contains the
other. (It lies outside by design of the measures, not because of any
batch-normalization inconsistency: the transform is exact on these networks,
running statistics included --- Appendix~\ref{app:teleport-why}.) The
prediction, which follows from the measures' invariance classes together with
Theorem~\ref{thm:complete}(iii), is sharp \emph{for the measures of this
panel}: none of them quotients out the full teleportation drift.
This is a statement about the eight measures that returned a value: the
linear-invariant CCA family (raw CCA, PWCCA), which is not in the panel,
quotients $h\mapsto\tau\odot h$ out exactly, whereas SVCCA, whose energy
truncation runs before the CCA, does not (Appendix~\ref{app:s1-scale}).

The knowledge matrix is \emph{not} entered as a similarity column: it is not
a member of this panel's family, and its invariance under this transform is
Lemma~\ref{lem:quiver-inv}, realized in software in
Appendix~\ref{app:teleport-checks}, rather than a quantity any of these
measures adjudicates. What the panel adjudicates is the penultimate-feature
drift.

\paragraph{Result.}
\input{tables/s1_table}
\input{sections/fig_panel}
The empirical pattern (Table~\ref{tab:s1_table}) matches the
invariance-class prediction. Figure~\ref{fig:panel-teleportation} shows the
same panel against the identity line and the controls. The orthogonal-class and permutation-class
measures register the teleportation drift directly: Procrustes shape distance
reads $632$--$1471$ across the three architectures and soft-matching distance
$22$--$61$.
Both are unnormalized distances whose scale is the Procrustes bound
$\sqrt{\|\tilde X\|_F^2+\|\tilde Y\|_F^2}$, the largest value the statistic
can take: measured against it, the penultimate features move by $0.23$--$0.30$
of the bound in Procrustes distance (ResNet-152 $776.6/2599$, DenseNet-121
$1471.4/5071$, GoogLeNet $632.1/2727$) and by $0.8$--$1.2\%$ of it in
soft-matching distance, and both figures scale as $\sqrt n$
(Appendix~\ref{app:s1-scale}).
The bounded
isotropic-scaling-invariant similarities quotient much of the drift and
report high agreement (debiased CKA $0.90$--$0.94$, Bures $0.92$--$0.96$,
distance correlation $0.91$--$0.97$, RSA-Spearman $0.84$--$0.96$), while
the square-root output JSD is $\approx 0$ on all three architectures,
confirming the teleportation is function-preserving. Crucially, no measure
\emph{of this panel} reports the exact invariance ($1$ for similarities, $0$ for
distances) that the knowledge matrix has --- every one of the eight registers
some residual drift, since the anisotropic rescale is outside every invariance
class in this panel, and the penultimate fractions above are, for Procrustes,
a quarter to a third of the largest value the statistic can take. The gap is
a property of the invariance classes: an anisotropic diagonal map lies
outside every class in this panel and inside the quiver-isomorphism group
under which the knowledge matrix is invariant by construction. Every
magnitude in Table~\ref{tab:s1_table} is conditional on the change-of-basis
distribution, $\tau\sim U[0,2]$ (\texttt{cob\_range}$=1$); the penultimate
drift is $\approx\|h\|_2/\sqrt{3D}$ in closed form
(Appendix~\ref{app:s1-analytic-detail}) and would shrink toward zero for
$\tau$ concentrated near $1$ and grow for a wider range. The analytic
prediction is ``not exactly $1$'', nothing more.

Both controls behave as intended on the HSIC/CKA family: the Cui random-network
control gives debiased linear CKA $0.010$ (ResNet-152), $0.058$ (DenseNet-121)
and $0.052$ (GoogLeNet), far below both the pipeline's gate threshold of $0.5$
for this control and the $0.90$--$0.94$ trained-vs-teleported agreement. For
debiased CKA the three readings available in this paper are therefore
$0.90$--$0.94$ (a network and its teleported copy, Table~\ref{tab:s1_table}),
$0.36$--$0.54$ (two different architectures, Table~\ref{tab:s2_table}) and
$0.01$--$0.06$ (a network and its random initialization,
Table~\ref{tab:controls-full}): the measure places the exact-function pair
far above every other pair, which is the discriminability criterion of
Appendix~\ref{app:vocab}, while registering the drift; what the control does
not test is whether the panel value would differ for an untrained network and
its own teleported image. The Murphy shuffled-pair
control returns debiased CKA $2.4\times10^{-5}$, $2.6\times10^{-4}$ and
$5.4\times10^{-4}$, distance correlation $\le0.024$ and RSA-Spearman within
$0.007$ of $0$, so the unbiased HSIC$_1$ estimator is correctly implemented and
carries no upward bias. Bures similarity, however, does not return to zero
under either control --- $0.264$--$0.294$ on misaligned pairs, and $0.383$ for
a randomly-initialized GoogLeNet against its trained counterpart --- which caps
what the cross-architecture Bures values of Study~3 ($0.478$--$0.595$, within
$1.6$--$2.3\times$ of the shuffled-pair floor) can be read to mean
(Appendix~\ref{app:limitations-more}, L11); all eight measures of both controls,
computed on $n=2{,}048$ inputs as bare point estimates, are tabulated in
Appendix~\ref{app:controls-full} (Table~\ref{tab:controls-full}).

\paragraph{Reading.}
The panel shows that all eight of the similarity measures that returned a
value register some teleportation drift --- none of them quotients out the
full change of basis --- whereas the knowledge matrix does not move. The scope
of ``none'' is the panel: raw CCA and PWCCA \emph{would} quotient this
particular transform out, being invariant to any invertible linear map, and
they are not members of this panel.
This is a statement about \emph{which invariance class each representation
belongs to}, not a performance ranking. What it does not show is the knowledge
matrix's invariance itself: the panel adjudicates the penultimate drift and
nothing else, and the matrix's invariance under teleportation is
Lemma~\ref{lem:quiver-inv}, with the software check that realizes it in
Appendix~\ref{app:teleport-checks}.

%% file: tables/s1_table.tex
\begin{table}[htbp]
\centering
\caption{No measure of the panel returns the exact invariance that the transformation
guarantees: the distances and the bounded similarities alike register the drift, as their
invariance classes predict.
Representational-similarity panel between penultimate features $h_W(x)$ and their
neural-teleportation image $h_{\tilde W}(x)$, mean $[95\%$ CI$]$ over $T=50$ teleportations
per architecture ($N=25{,}000$ samples). Distances (Procrustes, soft-matching) grow with the
teleportation drift; bounded similarities (CKA, Bures, dCor, RSA) report high agreement;
square-root output JSD $\approx 0$ confirms the transformation is function-preserving. Every
magnitude is conditional on the change-of-basis distribution $\tau\sim U[0,2]$
(\texttt{cob\_range}$=1$); the analytic prediction is ``not exactly $1$'', nothing more. GW
omitted: the entropic solver returned a degenerate plan ($0.0$) on every cell
(Section~\ref{sec:study1c}). \emph{Angular CKA is neither} (marked $\downarrow$: smaller
is more similar): it is
$\arccos(\text{CKA})\in[0,\pi/2]$ in radians, so \emph{small} means similar --- the
$0.36$--$0.45$ entries are the arccosines of the debiased-CKA row above
($\arccos 0.908 = 0.432$), not a report of low agreement.
The knowledge matrix is not a column here: teleportation is an \emph{exact}
function-preserving isomorphism (Appendix~\ref{app:teleport-exact}), so the knowledge
matrix does not move; the implementation check that realizes this is
Appendix~\ref{sec:study1b}.
\emph{What the intervals do and do not cover}: the resampled unit is the $T=50$ teleportation
draws, with the $N=25{,}000$ images held fixed (percentile bootstrap of the mean, $B=10{,}000$,
seeded; Section~\ref{sec:setup}). They are therefore intervals on the variability of the
change-of-basis draw alone, which is why several are narrower than the printing precision; they
say nothing about how a cell would move on a different image set or a different architecture.
Cui random-network and Murphy shuffled-pair controls reported in the text.}
\label{tab:s1_table}
\small\setlength{\tabcolsep}{3.5pt}
\begin{tabular}{llccc}
\toprule
Measure & Invariance class & ResNet-152 & DenseNet-121 & GoogLeNet \\
\midrule
debiased CKA  & orth + iso-scale     & $0.908\,[0.906,0.910]$ & $0.902\,[0.900,0.904]$ & $0.935\,[0.935,0.936]$ \\
angular CKA$^{\downarrow}$ & orth + iso-scale     & $0.432\,[0.428,0.436]$ & $0.445\,[0.441,0.450]$ & $0.361\,[0.359,0.363]$ \\
Procrustes    & orth                 & $776.6\,[773.8,779.4]$ & $1471.4\,[1459.1,1484.2]$ & $632.1\,[629.2,634.9]$ \\
Bures         & orth + iso-scale     & $0.920\,[0.919,0.921]$ & $0.925\,[0.924,0.927]$ & $0.956\,[0.956,0.957]$ \\
soft-matching & permutation          & $21.93\,[21.85,22.01]$ & $60.73\,[60.23,61.24]$ & $32.22\,[32.09,32.34]$ \\
RSA           & rotation + monotone  & $0.841\,[0.837,0.845]$ & $0.908\,[0.906,0.910]$ & $0.958\,[0.957,0.958]$ \\
output JSD    & (none, functional)   & $0.001\,[0.001,0.001]$ & $0.000\,[0.000,0.000]$ & $0.001\,[0.001,0.001]$ \\
dCor          & translation + orth + iso-scale & $0.908\,[0.907,0.909]$ & $0.954\,[0.953,0.955]$ & $0.970\,[0.970,0.970]$ \\
\bottomrule
\end{tabular}
\end{table}

%% file: sections/fig_panel.tex
\begin{figure}[t]
\centering
\includegraphics[width=\linewidth]{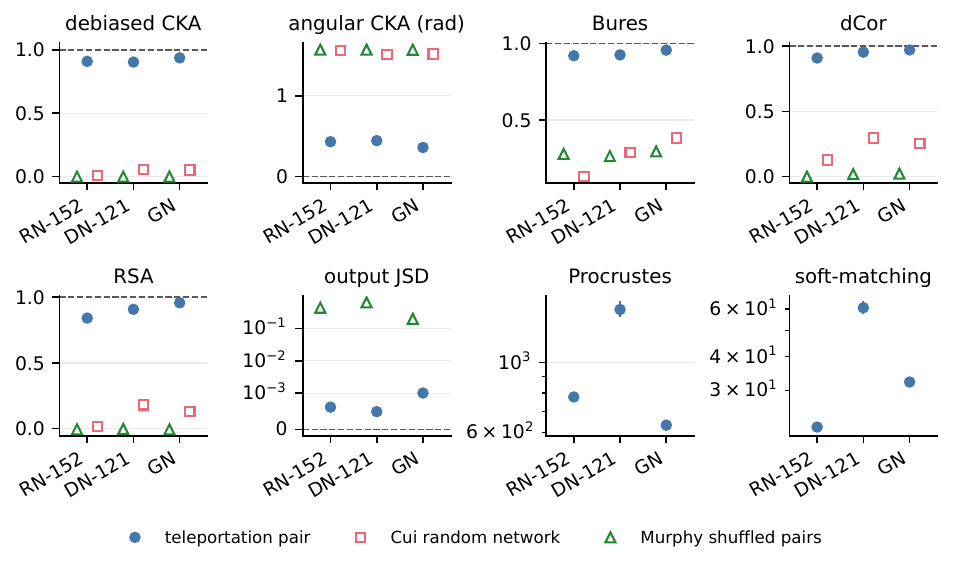}
\caption{On every bounded measure the teleportation pairs sit between the identity line and
the controls, and on none of them at the identity: no measure of the panel absorbs a
transformation that changes nothing about the function. The similarity panel of
Table~\ref{tab:s1_table} as a picture: each panel is one
measure, each column one architecture. Filled dots are the mean over $T=50$ neural
teleportations of the value between the penultimate features $h_W(x)$ and their teleported
image $h_{\tilde W}(x)$ on $N=25{,}000$ images, with the 5th--95th percentile range of the
$50$ draws as a bar (narrower than the marker on most panels). The dashed line is the value
each measure returns for identical representations ($1$ for the bounded similarities, $0$
for angular CKA, output JSD and the two distances). Hollow squares are the Cui random-network
control and hollow triangles the Murphy shuffled-pair control of
Table~\ref{tab:controls-full}; they are omitted for Procrustes and soft-matching, whose
control values were computed on $n=2{,}048$ inputs and are not comparable in magnitude to
the panel.}
\label{fig:panel-teleportation}
\end{figure}

%% file: sections/study2_coherence.tex
\section{Study 2: the coherence of adversarial germ motion}
\label{sec:study2}
\label{sec:A2}

Theorem~\ref{thm:pyth} splits any knowledge-matrix displacement into a
logit-\emph{visible} part (what the function does differently) and a
logit-\emph{invisible} part (how the germ restructures without moving the
output). This study reads adversarial perturbations through that split. It
asks how much of an attack's matrix motion is visible to the logits, and
whether the answer organizes by attack family. The exact row-sum geometry of
Theorem~\ref{thm:pyth} is what makes the question precise.

\paragraph{Setup.}
Architectures: ResNet-152, DenseNet-121 and GoogLeNet for the
headline coherence, plus the three extra ordering-only raters (ResNet-18,
AlexNet, VGG) for the cross-architecture concordance, six in total. Attacks:
the six-attack suite of Section~\ref{sec:setup}. Inputs: for each
(architecture, attack) cell we generate clean/adversarial \emph{pairs} and
record the matrix and logit displacements $(d_M, d_\Psi)$ per pair; we report
the median over the pairs passing a division guard,
$d_\Psi > 10^{-6}\max_i d_{\Psi,i}$ --- a \emph{per-cell relative} threshold that
discards only pairs on which the attack left the logits numerically
unmoved, not an attack-success criterion. Each cell holds $200$ nominal
pairs, of which $110$--$200$ pass it: FGSM and PGD keep all $200$ on every
network, while CW, DeepFool, APGD and Square keep $149$--$161$ on
ResNet-152, $150$--$152$ on DenseNet-121, $120$--$123$ on GoogLeNet,
$140$--$141$ on ResNet-18, $110$ on AlexNet and $198$--$200$ on VGG, so those
cells should be read as low-$n$. The appendix panel
(Table~\ref{tab:s3_ordering_panel}) recomputes the ordering of those three
networks on the full-scale pair set (a different image set), with the cells matched
on a common image population because one cell was truncated. The coherence statistic is
$A = (d_\Psi/d_M)^2$ (Definition~\ref{def:coherence}), read against two
theorem-given reference lines:
$A = 1$, the exact coherence of any single-pixel within-region
perturbation (Theorem~\ref{thm:within}(ii)), and $A \le d$, the
within-region maximum.

\paragraph{Procedure.}
\begin{enumerate}
\item For each (architecture, attack) generate clean/adversarial pairs.
\item For each pair compute $\M(x)$, $\M(x')$, the matrix displacement
$d_M = \|\M(x')-\M(x)\|_F$, and the logit displacement
$d_\Psi = \|\Netxp-\Netx\|$.
\item Apply the division guard $d_\Psi > 10^{-6}\max_i d_{\Psi,i}$; compute the
per-pair coherence $A = (d_\Psi/d_M)^2$. (The stricter attack-success filter
$d_\Psi \ge 1$, $d_M > 0$ is a far more aggressive cut, used only in the two
analyzes that hold per-pair records; Appendix~\ref{app:population-filtering}.)
\item Report the per-cell median $A$ (Table~\ref{tab:coherence}) and the
per-architecture attack-family ordering by median $d_M/d_\Psi$
(Table~\ref{tab:ordering}; the three-network cross-check is
Table~\ref{tab:s3_ordering_panel}).
\end{enumerate}

\input{tables/table_coherence}
\input{tables/table_ordering}
\input{sections/fig_coherence}
\paragraph{What we measure and the claim it licenses.}
Coherence $A$ is a \emph{geometric descriptor} of how an attack moves the
germ, read against the single-pixel line $A = 1$ and the within-region
cap $A \le d$. We do not tie it to any decision-relevant quantity: it is
not a detectability statistic and is not invariant to input rotation (it
is tied to the pixel basis). The attack-family \emph{ordering} of $A$,
by contrast, is a genuine standalone finding (a concordance across raters)
and does not borrow its credibility from the magnitude of $A$.

\paragraph{Result.}
Three findings. \emph{First}, on the three deep ImageNet networks every attack
family moves the germ with less than a quarter of single-pixel coherence
(median $A\le0.23$): read descriptively against the reference lines,
adversarial perturbations are overwhelmingly function-invisible germ
restructuring, in the precise sense of Theorem~\ref{thm:pyth}.
``Median $A\le0.23$'' is the largest of the $18$ cell medians of those three
networks ($3$ architectures $\times$ $6$ attacks), attained at GoogLeNet/APGD ($0.228$,
Table~\ref{tab:coherence}), and never a median of pooled data
(Appendix~\ref{app:population-filtering}). The phenomenon is not new:
\citet{ghorbani2019fragile} and \citet{dombrowski2019explanations} showed that
gradient$\times$input and integrated-gradients maps move substantially under
perturbations that barely move the output, and located the cause at the walls
between the affine regions of a ReLU network --- the crossing dyads of
Theorem~\ref{thm:anatomy}. The invisible part $Q$ and ``median $A\le0.23$''
are a quantitative, per-class, whole-map, unit-free re-measurement of that
fact; what is new is the exact decomposition and the reference lines, not the
phenomenon.
Figure~\ref{fig:coherence-family} shows the per-pair distribution of $A$ on the
full-scale set with the $200$-pair medians overlaid.

These medians are point estimates without bootstrap CIs
(Appendix~\ref{app:limitations-more}, L12) and are sensitive to the filtering
convention --- on the three VGG cells with per-pair records the stricter
attack-success filter moves the median $d_M/d_\Psi$ by $-12\%$ to $-39\%$ and
swaps VGG's top two, while the family-level grouping is unchanged and the
appendix panel's ranking is identical under all three filtering conventions
--- so we report the tables under one convention and state the sensitivity
(Appendix~\ref{app:population-filtering}; Section~\ref{sec:limitations}, L7).

\emph{Second}, the attack-family ordering is stable across architectures.
Across all six raters it is concordant at Kendall $W=0.921$ (permutation $p<10^{-6}$;
leave-one-out Spearman $0.77$--$0.99$, Table~\ref{tab:ordering}), with the
family-level structure
$\{$DeepFool, CW, Square$\}>$ FGSM $\gg\{$PGD, APGD$\}$ in $d_M/d_\Psi$. What
separates the families is \emph{how far past the decision boundary an attack
pushes}, not any per-unit-displacement property of the germ motion and not the
density of $\delta$. The minimal-norm and early-stopping attacks --- DeepFool,
CW and Square --- halt as soon as the class flips and leave the logits barely
moved; the budget-exhausting iterative attacks --- PGD and APGD --- run to the
$\varepsilon$-ball limit and drive the logits far past it; single-step FGSM sits
between them. On this pair set ordering by $d_\Psi$ alone is more concordant
($W=0.975$); the case for the ratio rests on the full-scale inversion, the
cross-image-set reproducibility and the partial concordance
(Appendix~\ref{app:kendall}). The knowledge-matrix displacement $d_M$ is markedly
\emph{more attack-invariant} than the logit displacement it is divided by: across
the six cell medians of one architecture, $\mathrm{sd}(\log d_M)$ runs
$0.24$--$0.79$ over the six raters while $\mathrm{sd}(\log d_\Psi)$ runs
$0.57$--$1.43$, a $1.8$--$2.9\times$ wider spread in the denominator.
The two gaps differ in size: the top family leads
FGSM by $1.13$--$1.51\times$, while FGSM leads $\max\{$PGD, APGD$\}$ by
$1.57$--$2.09\times$. The appendix panel
(Table~\ref{tab:s3_ordering_panel}), which recomputes the ordering of the three
networks on the full-scale pair set, reproduces this family-level grouping on
all three (Kendall $W=0.97$ over the three, exact permutation
$p=3.1\times10^{-5}$; the same three rows of Table~\ref{tab:ordering} give $0.91$).
We do not read the panel as agreeing \emph{better} than those rows, and its
agreement depends on matching the cells to a common image population --- the
intersection of the six cells' stored pair indices, since the ResNet-152
DeepFool run stopped at $512$ of $1000$ pairs --- within which a seeded
bootstrap over pairs reads DeepFool $\approx$ Square $\approx$ CW $\approx$
FGSM $>$ PGD $>$ APGD on ResNet-152 while resolving every adjacent gap at
$95\%$ on DenseNet-121 and GoogLeNet
(Appendix~\ref{app:population-filtering}).

Two qualifications attach to that sentence --- the unresolved ResNet-152 set
$\{$DeepFool, Square, CW, FGSM$\}$ straddles the family boundary, so on that
architecture only the point estimate orders the families, and the panel's $15$
adjacent-gap tests are uncorrected, so ``every adjacent gap is resolved'' on
DenseNet-121 and GoogLeNet is a nominal-$95\%$ statement --- and both are set
out in Appendix~\ref{app:kendall} (Appendix~\ref{app:limitations-more}, L12).

We read the panel as a cross-check of the family ordering rather than as a
separate per-architecture ranking.

The ordering, which \citet{leblanc2024hidden} read as an amplification effect, is real, and it
is an ordering of \emph{invisible fractions}.

\emph{Third}, the mechanism: the
smooth$+$crossing anatomy (Theorem~\ref{thm:anatomy}) predicts that
crossing more region walls reduces coherence
(Conjecture~\ref{conj:mechanism}). A per-pair pilot on VGG
(Table~\ref{tab:mechanism-pilot}) finds the predicted sign once two
disciplines are applied: attack-failure pairs must be excluded ($29$--$33\%$
of the $200$ stored pairs --- the source of the unbounded cells of
Table~\ref{tab:coherence-max}, in which some pair has $d_M=0$ exactly: same-region
mathematics, not numerical underflow), and
perturbation size must be controlled. Writing $H$ for the endpoint mask-Hamming
distance (ReLU flips plus max-pool argmax mismatches), the partial rank
correlation of $H$ with coherence given $d_\Psi$ is negative on all three attacks
tested: $\rho_S(H,A\mid d_\Psi)=-0.343$ (APGD), $-0.304$ (DeepFool), $-0.341$
(Square), with seeded percentile bootstrap $95\%$ intervals $[-0.497,-0.177]$,
$[-0.450,-0.120]$ and $[-0.474,-0.187]$ ($B=10{,}000$, seed $0$;
\texttt{scripts/vgg\_mechanism\_pilot.py}) --- all three excluding zero, though the
three attacks share the same $200$ base images, so the intervals are not
independent of one another.
The mechanism therefore remains a conjecture (Conjecture~\ref{conj:mechanism}),
supported but not established: this is a VGG pilot, on a network outside the
three of Section~\ref{sec:setup}, with $d_\Psi$
as a proxy for $\|\delta\|$, and the two refinements of its reading, the direct
statistic $\rho_S(H,d_M\mid d_\Psi)$ and its caveats are recorded in
Appendix~\ref{app:mechanism-pilot}.

\paragraph{Kendall's \texorpdfstring{$W$}{W}.}
The concordance $W$ is worked through on this panel --- the rank matrix, the
column sums, the tie correction, the Monte-Carlo and exact permutation nulls,
and why a high $W$ says only that the ordering is architecture-independent ---
in Appendix~\ref{app:kendall}.

\paragraph{The ratio and its denominator.}
The obvious objection --- that the ordering of $d_M/d_\Psi$ is a shadow of its
denominator's --- largely lands on the six-rater $200$-pair panel, where
ordering the same $36$ cells by median $d_\Psi$ alone gives $W=0.9746$ against
the ratio's $0.9206$, but it inverts at full scale: on the three networks at full scale the
ratio is the \emph{most} concordant of the three ($W(d_M/d_\Psi)=0.9746$
against $W(d_\Psi)=0.8222$ and $W(d_M)=0.7968$), and a statistic that was
merely a function of its denominator could not be more concordant across
raters than that denominator itself is. The ratio ordering is also the one
that reproduces across the two image sets, and its concordance survives
partialling the denominator out (partial $W=0.8857$ over the four full-budget
raters, with the residual ordering led by Square); the full analysis is in
Appendix~\ref{app:kendall}.

\paragraph{Reading.}
The coherence medians describe \emph{where} adversarial germ motion sits
relative to the single-pixel line; they do not establish that $A$ predicts
any outcome, and they lack CIs. The attack-family ordering is the robust
finding here ($W=0.921$ over six raters), with the population-matched
full-scale appendix panel reproducing the same family-level grouping on all
three of its architectures \emph{in point estimate}. What the appendix panel's
confidence intervals do \emph{not} support on ResNet-152 is any ranking among
$\{$DeepFool, Square, CW, FGSM$\}$ --- a set that crosses the family boundary,
since FGSM is not in the leading family. On that architecture, therefore, the
family grouping is unresolved at $95\%$ and only the point estimate orders it;
the grouping is resolved on DenseNet-121 and GoogLeNet, at an uncorrected
nominal $95\%$ across $15$ simultaneous gap tests. What the ordering is
\emph{not} is a claim that the ratio beats its own denominator as a descriptor:
on the $200$-pair set $d_\Psi$ alone is more concordant, and the case for the
ratio rests on the full-scale inversion, the cross-image-set reproducibility and
the partial concordance reported in Appendix~\ref{app:kendall}. The crossing
mechanism that would \emph{explain} the ordering is supported only by an
underpowered, single-architecture VGG pilot whose residual
$\|\delta\|$ confound points the same way as the predicted effect, and stays a
conjecture.

\paragraph{What the within-region theorem does not explain.}
A density reading of the ordering --- that Theorem~\ref{thm:within}(iii) ranks
attacks by the sparsity of $\delta$ --- is tempting and wrong: the theorem
places sparse perturbations at the \emph{top} of the scale ($A=1$ exactly for
a $1$-sparse within-region perturbation) and its $k$-sparse bound is vacuous at
$k=d$, while FGSM, PGD and APGD run at the identical $\varepsilon=8/255$ and
still spread $3$--$4\times$ in $A$ on ResNet-152. Appendix~\ref{app:kendall}
records why; Theorem~\ref{thm:within} is used in this study only as the pair of
reference lines $A=1$ and $A\le d$ against which the magnitudes are read.

%% file: tables/table_coherence.tex
\begin{table}[t]
\centering\small
\begin{tabular}{lcccccc}
\toprule
attack & ResNet-152 & DenseNet-121 & GoogLeNet & ResNet-18 & AlexNet & VGG \\
\midrule
FGSM & 0.0303 & 0.0606 & 0.05 & 0.0932 & 0.453 & 0.191 \\
PGD & 0.0998 & 0.216 & 0.203 & 0.338 & 1.72 & 1.25 \\
CW & 0.0239 & 0.0272 & 0.037 & 0.0547 & 0.168 & 0.0619 \\
DeepFool & 0.0164 & 0.01 & 0.0108 & 0.015 & 0.099 & 0.0255 \\
APGD & 0.113 & 0.192 & 0.228 & 0.229 & 1.28 & 0.837 \\
Square & 0.0146 & 0.0273 & 0.0183 & 0.0373 & 0.228 & 0.0834 \\
\bottomrule
\end{tabular}
\caption{On the three deep ImageNet networks every attack family moves the germ at median coherence
$A\le0.23$ --- less coherent than a single within-region pixel --- with the iterative-PGD family
consistently the most coherent. Median coherence $A=(d_\Psi/d_M)^2$ of adversarial
knowledge-matrix motion ($d{+}1=150{,}529$; $\rho_{\mathrm{vis}}=A/(d{+}1)$); $A=1$ is the
single-pixel coherence line and $A\le d$ the within-region cap. On the two default-budget
raters the PGD family crosses the one-pixel line (AlexNet: PGD $1.72$, APGD $1.28$;
VGG: PGD $1.25$).
\emph{Caveat (median interpolation):} the per-pair median is exact only for odd valid-pair counts; only
$9/36$ cells are odd-count/exact, so even-count cells (the DenseNet-121 column in particular) are
interpolation-approximate. \emph{Caveat (cell sizes):} each cell holds $200$ nominal pairs, of which $110$--$200$
are valid; FGSM and PGD keep all $200$, while the CW, DeepFool, APGD and Square cells keep $149$--$161$
(ResNet-152), $150$--$152$ (DenseNet-121), $120$--$123$ (GoogLeNet), $140$--$141$ (ResNet-18), $110$ (AlexNet)
and $198$--$200$ (VGG), so read those cells as low-$n$.
Each entry is a median over the valid pairs of \emph{that one cell}; nothing here is
pooled across attacks or across architectures. Accordingly the ``$A\le0.23$'' headline in
Section~\ref{sec:study2} is the largest of the $18$ cell medians of the three networks (GoogLeNet/APGD, $0.228$),
not the median of a pooled population.}
\label{tab:coherence}
\end{table}

%% file: tables/table_ordering.tex
\begin{table}[t]
\centering\small
\begin{tabular}{llc}
\toprule
architecture & ordering by median $d_M/d_\Psi$ (desc.) & LOO Spearman \\
\midrule
ResNet-152 & Square $>$ DeepFool $>$ CW $>$ FGSM $>$ PGD $>$ APGD & 0.771 \\
DenseNet-121 & DeepFool $>$ CW $>$ Square $>$ FGSM $>$ APGD $>$ PGD & 0.943 \\
GoogLeNet & DeepFool $>$ Square $>$ CW $>$ FGSM $>$ PGD $>$ APGD & 0.928 \\
ResNet-18 & DeepFool $>$ Square $>$ CW $>$ FGSM $>$ APGD $>$ PGD & 0.986 \\
AlexNet & DeepFool $>$ CW $>$ Square $>$ FGSM $>$ APGD $>$ PGD & 0.943 \\
VGG & DeepFool $>$ CW $>$ Square $>$ FGSM $>$ APGD $>$ PGD & 0.943 \\
\bottomrule
\end{tabular}
\caption{Six architectures rank the attack families in the same order: Kendall $W=0.921$
(6 raters $\times$ 6 attacks; permutation $p<10^{-6}$, $10^6$ Monte-Carlo draws). The last
column is the leave-one-architecture-out Spearman of each architecture's ranking against
the mean consensus rank of the other five (midrank ties). The six raters are the paper's three networks
(ResNet-152, DenseNet-121, GoogLeNet) plus three \emph{ordering-only} cross-check raters (ResNet-18,
AlexNet, VGG) that enter no coherence-magnitude statistic and no cross-architecture distance. The stable
structure is family-level: $\{$DeepFool, CW, Square$\}>$ FGSM $\gg\{$PGD, APGD$\}$ in $d_M/d_\Psi$. The two
gaps are of different sizes: the top family leads FGSM by $1.13$--$1.51\times$ across the six raters, while
FGSM leads $\max\{$PGD, APGD$\}$ by $1.57$--$2.09\times$. \emph{What separates the families is the
denominator.} Ordering the same $36$ cells by median logit displacement $d_\Psi$ alone is
\emph{more} concordant than the ratio ordering ($W=0.975$ against $0.921$), and on four of the
six raters (DenseNet-121, ResNet-18, AlexNet, VGG) the ratio ordering is the exact reversal of
the $d_\Psi$ ordering (rank Spearman $-1.00$; $-0.94$ on GoogLeNet, $-0.77$ on ResNet-152). The
families are separated by how far past the decision boundary an attack pushes --- minimal-%
perturbation attacks stop at it, large-step attacks drive well beyond it --- and not by any
per-unit-displacement property of the germ motion. Attack budgets are those of
Section~\ref{sec:setup}: the overrides listed there apply to ResNet-152, DenseNet-121,
GoogLeNet and ResNet-18, whereas AlexNet and VGG ran the \texttt{torchattacks}~3.5.1 defaults
throughout. Six raters do not settle what that asymmetry costs: the two default-budget raters
(AlexNet, VGG) produce rank rows \emph{identical} to DenseNet-121's, which ran the overrides,
while the most deviant rater is ResNet-152 (LOO $0.771$) --- the one rater with a DeepFool
budget of its own (\texttt{steps}$=100$, against $200$ on the other three override
architectures). Budget-insensitivity and a budget-driven deviation are both consistent with
$m=6$; we state the fact and do not adjudicate it.
Restricting the concordance to the four raters that ran the override budgets (ResNet-152, DenseNet-121, GoogLeNet, ResNet-18) gives $W=0.92$ ($p=3.8\times10^{-6}$, exact permutation test over the $(6!)^{3}$ null), the six-rater value to two decimals, so dropping the two default-budget raters leaves the concordance where it is.
The restriction of this concordance to the three networks of Section~\ref{sec:setup}
is $W=0.91$ ($p=4.9\times10^{-4}$, exact permutation test). The appendix panel
(Table~\ref{tab:s3_ordering_panel}) recomputes the three networks' ordering on the full-scale reduce, on
a population-matched pair set and a different image set, and reproduces the family-level grouping on all
three architectures ($W=0.97$).}
\label{tab:ordering}
\end{table}

%% file: sections/fig_coherence.tex
\begin{figure}[t]
\centering
\includegraphics[width=\linewidth]{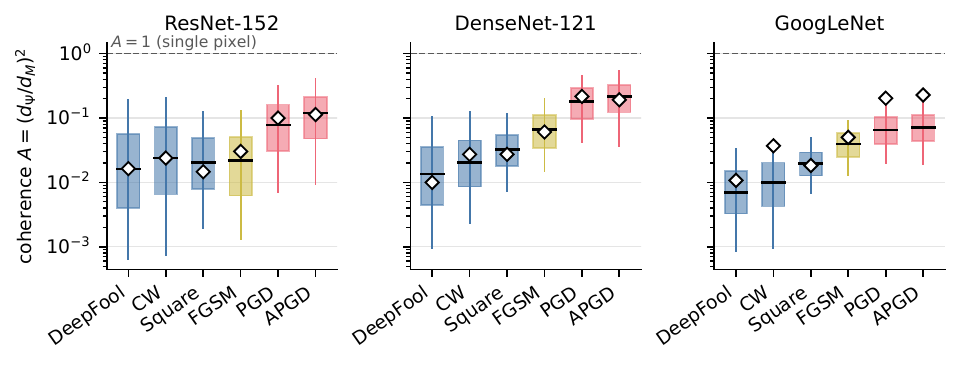}
\caption{Every cell sits below the single-pixel line $A=1$, and the iterative-PGD family is
the most coherent on all three architectures. Coherence $A=(d_\Psi/d_M)^2$ of adversarial
knowledge-matrix motion, per attack family and architecture, on the full-scale pair set
(boxes: median and interquartile range; whiskers: 5th--95th percentiles; $n=449$--$1000$
pairs per cell after the attack-success filter $d_\Psi\ge1$, $d_M>0$). Colors group the
attack families of the ordering: minimal-norm attacks (DeepFool, CW, Square), FGSM, and the
iterative-PGD family (PGD, APGD). The dashed line is the single-pixel reference $A=1$ of
Theorem~\ref{thm:within}. Hollow diamonds are the $200$-pair headline medians of Table~\ref{tab:coherence}, computed on
different images, with the library-default attack budgets and the division-guard convention;
the two sets agree on the family ordering and differ most on GoogLeNet's PGD family, whose
$200$-pair cells are the smallest ($120$--$123$ valid pairs) and whose full-scale generator
uses different PGD and APGD budgets (Section~\ref{sec:setup}).}
\label{fig:coherence-family}
\end{figure}

%% file: sections/study3_cross_arch.tex
\section{Study 3: alignment-free cross-architecture comparison}
\label{sec:crossarch}
\label{sec:A3}

Studies~1 and~2 establish knowledge-matrix invariance under
\emph{within-architecture} isomorphisms and the visible/invisible
decomposition of adversarial-pair displacement \emph{within} a single
architecture. Study~3 turns to the one standalone capability that the function
determination of Theorem~\ref{thm:maxinv} (at almost every $x$) demonstrably
buys: \emph{alignment-free} cross-architecture comparison. The knowledge
matrix is built on the input and label spaces alone (Equation~\ref{eq:km-def}),
and under (LCS) at regular inputs it is determined by the realized function
germ rather than by any particular parameterization (Theorem~\ref{thm:germ}).
The same construction therefore returns an object of \emph{the same shape} for
any feedforward network on a fixed input and label space, and a
knowledge-matrix distance between two architectures is well defined with no
correspondence to learn. The penultimate features carry no such structure: a
comparison between two architectures' features is a statement about a whole
sample population, not about one input. What we demonstrate is a capability,
not a claim that the knowledge-matrix number is a better quality ranking than
the penultimate ones. The three penultimate widths differ
($D_{\text{RN-152}} = 2048$, $D_{\text{DN-121}} = D_{\text{GN}} = 1024$) and
the panel is computed on them as they stand. Why its measures return
population-level scores in their own metric spaces, while the knowledge-matrix
distance is a per-sample quantity --- the uniform
shape, the exact row-sum accounting and the germ-stabilizer invariance of
Section~\ref{sec:germ}, which the gradient$\times$input data of
Theorem~\ref{thm:weff} share --- is set out in
Appendix~\ref{app:crossarch-detail}. The knowledge-matrix distance is one
Frobenius distance per input, reported RMS-per-coordinate throughout
(Section~\ref{sec:setup}), whose logit-visible component is exactly
$\|\Psi_A(x)-\Psi_B(x)\|_2/\sqrt{d{+}1}$ in Frobenius norm, with
$\Psi_A,\Psi_B$ the two network functions (Theorem~\ref{thm:pyth}); neither
its raw nor its RMS value is a logit distance. This study places the
knowledge-matrix distance beside the similarity measures on the same
architecture pairs, not to argue that one number is better than another, but
to show that it is a per-sample quantity where the others are
population-level scores in their own metric spaces.

\paragraph{Setup.}
We compare the three pretrained ImageNet networks of this paper pairwise ---
ResNet-152 $\leftrightarrow$ DenseNet-121, ResNet-152 $\leftrightarrow$
GoogLeNet, and DenseNet-121 $\leftrightarrow$ GoogLeNet --- on the same
$N = 25{,}000$ ImageNet-validation samples (the first $25{,}000$ by sorted
filename, the same sample set used in Study~1; Study~2 draws its adversarial
pairs from different, smaller sets --- see Section~\ref{sec:setup}).
For each architecture pair we report six penultimate similarity measures
--- debiased linear CKA, angular CKA, Bures similarity, soft-matching
distance, RSA-Spearman and distance correlation --- and one functional
baseline, square-root output JSD on the logits; two honest omissions
(entropic Gromov--Wasserstein, whose solver returned a degenerate plan on
every cross-architecture pair as it did in Study~1, with the settings and the
likely cause recorded in Section~\ref{sec:study1c}, and cross-architecture
Procrustes, omitted because our chunked pipeline's per-chunk PCA target
dimension collapses to the chunk sample count --- the shape distance itself
is defined for unequal widths by zero-padding the narrower representation
\citep{williams2021generalized} and needs no PCA, so this is a pipeline
limitation, not a property of the pairs --- and which is therefore reported
within-architecture in Study~1 only); and the per-sample knowledge-matrix
Frobenius distance $\|\M_{W_A}(x) - \M_{W_B}(x)\|_F$.
We do not apply CKA, Procrustes or the other measures to knowledge matrices:
the canonical metric on them is the Frobenius distance itself, whose
displacement decomposes exactly via Theorem~\ref{thm:pyth}, and applying
rotation-invariant similarity measures to a representation that is already
invariant, at regular inputs, under the germ stabilizer at $x$ (which contains
the global function-stabilizer) would weaken, not strengthen, the comparison.
The random-network control of~\citet{cui2022deconfounded} and the
shuffled-pair control of~\citet{murphy2024debiased} that accompany the panel
in Section~\ref{sec:study1c} were not run in this reduce
(Section~\ref{sec:limitations}, L2).

\paragraph{Procedure.}
\begin{enumerate}
\item For each architecture pair and each of the $N = 25{,}000$ samples,
compute the per-sample knowledge matrices $\M_{W_A}(x)$, $\M_{W_B}(x)$
(uniform $1000 \times 150{,}529$) and the penultimate features
$h_{W_A}(x)$, $h_{W_B}(x)$ (unequal dimension).
\item Compute the knowledge-matrix Frobenius distance per sample directly
(no alignment), and average to a per-pair RMS-per-coordinate distance.
\item Compute the six penultimate similarity measures and the functional
baseline directly on the unequal-dimension features --- no PCA, no alignment
--- recording Gromov--Wasserstein and cross-architecture Procrustes as honest
omissions (Table~\ref{tab:s2_table}).
\item Compare the orderings the different measures induce over the three
pairs.
\end{enumerate}

\paragraph{What we measure and the claim it licenses.}
The per-pair distances and the orderings they induce. The claim is one of
\emph{availability}: the knowledge-matrix distance is a per-sample quantity,
one Frobenius distance per input, whereas each
penultimate measure is a population-level score in its own metric space, and
the penultimate measures, belonging to different invariance classes, induce
different orderings. This is not
a claim that the knowledge-matrix number is a better measure of
architectural similarity.

\paragraph{Result.}
\input{tables/s2_table}

The knowledge-matrix Frobenius distance (RMS-per-coordinate,
Table~\ref{tab:s2_table}) is available directly for all three pairs and
ranges from $0.0107$ (ResNet-152 $\leftrightarrow$ GoogLeNet, the closest
pair) to $0.0180$ (DenseNet-121 $\leftrightarrow$ GoogLeNet, the farthest),
with ResNet-152 $\leftrightarrow$ DenseNet-121 at $0.0169$. Per sample the
distance is right-skewed, the mean exceeding the median on every pair: median
$0.0146$ with $5$th--$95$th percentiles $0.0083$--$0.0328$ on ResNet-152
$\leftrightarrow$ DenseNet-121, median $0.0096$ with $5$th--$95$th percentiles
$0.0054$--$0.0195$ on ResNet-152 $\leftrightarrow$ GoogLeNet, and median
$0.0162$ with $5$th--$95$th percentiles $0.0092$--$0.0327$ on DenseNet-121
$\leftrightarrow$ GoogLeNet (interquartile ranges in the caption of
Table~\ref{tab:s2_table}) --- the first per-sample numbers of this study,
available because the distance is one number per input, where the panel
measures, being population-level scores, have no per-sample counterpart. The
penultimate measures also separate the pairs ---
soft-matching distance from $63.6$ to $124.0$, RSA-Spearman from $0.33$ to
$0.52$, square-root output JSD from $0.34$ to $0.39$ --- but each lives in
its own metric space and the orderings they induce are themselves
measure-dependent: the knowledge-matrix Frobenius distance and soft-matching \emph{agree} that
ResNet-152 $\leftrightarrow$ GoogLeNet is the closest pair, whereas
RSA-Spearman ranks it the \emph{least} similar of the three. The
penultimate measures, which belong to different invariance classes, induce
different orderings on the three pairs, as the literature leads one to
expect \citep{ding2021grounding,klabunde2025resi}; the knowledge-matrix
column is one Frobenius distance and induces one ordering --- a statement
about the metric chosen, not about its quality.

We do not read the agreement between two of these columns as corroboration:
three architectures give three pairs (zero residual degrees of freedom, so no
interval is put on the knowledge-matrix column); soft-matching is
unnormalized while DenseNet-121's penultimate representation carries
$3.5$--$3.8\times$ the energy of the other two, so that dividing each pair's
soft-matching distance by $\sqrt{\|\tilde X\|_F^2+\|\tilde Y\|_F^2}$ keeps the
closest pair but swaps the top two; and the knowledge-matrix column has the
same exposure, which the reduce does not let us quantify
(Section~\ref{sec:limitations}, L6, and Appendix~\ref{app:limitations-more},
L10; the numbers are in Appendix~\ref{app:crossarch-detail}). What no common scale affects is that the
penultimate measures disagree among themselves, since RSA-Spearman is a rank
statistic.

The CKA family (debiased CKA, angular CKA, Bures, distance correlation;
Table~\ref{tab:s2_table}) is likewise computed directly on the
unequal-dimension features. Gromov-Wasserstein and cross-architecture
Procrustes are omitted as honest negatives (the entropic solver returned a
degenerate plan on all three pairs, as in Study~1; a per-chunk PCA target
dimension that collapses to the chunk sample count, a limitation of our
pipeline and not a property of the pairs). The Cui/Murphy controls were
not run in \emph{this} reduce and are pending; what the separate control reduce
of Section~\ref{sec:study1c} does establish already constrains one column here.
Bures similarity does not return to zero under either control ---
$0.264$--$0.294$ on shuffled pairs, and $0.383$ for a randomly-initialized
GoogLeNet against its trained counterpart (Table~\ref{tab:controls-full}) ---
so the cross-architecture Bures values of $0.478$--$0.595$ sit within
$1.6$--$2.3\times$ of a null and within $1.2$--$1.6\times$ of an untrained
network; the shuffled null is a fidelity between two positive semidefinite
kernels and depends on $n$ as well as on the spectrum, and it was computed at
$n=2{,}048$ against panel values at $N=25{,}000$, so the ratio is indicative
only. We read no architectural conclusion from that column.

\paragraph{Reading.}
The result demonstrates a \emph{capability}: a single cross-architecture
distance available with no alignment, where each penultimate measure lives
in its own metric space and where measures of different invariance classes
order the three pairs differently. It does \emph{not} show that the knowledge-matrix number is a
better measure of architectural similarity --- there is no ground-truth
ordering to be better against, and the supporting Cui/Murphy controls are
still pending. What is licensed is availability, not a quality ranking.

\paragraph{Cross-architecture comparison without alignment.}
The per-pair knowledge-matrix Frobenius distances --- $0.0169$ (RN--DN),
$0.0107$ (RN--GN) and $0.0180$ (DN--GN) --- are computed directly on the
uniform $1000\times150{,}529$ matrices with no PCA projection, dimension
matching or learned alignment (the dimension-matched block of the reduce is
empty for all three pairs, and no penultimate \emph{distance} $d_h$ is reported
across architectures at all), illustrating that the knowledge matrix supports
an alignment-free cross-architecture metric where the penultimate features do
not; the same displacement is subject to the exact accounting of
Theorem~\ref{thm:pyth}, which we read as descriptive geometry and not as a
quality ranking (Appendix~\ref{app:crossarch-detail}).

Whether independently trained networks converge to the same
knowledge matrix is a question about populations of training runs that this
paper does not ask; this study only establishes that the comparison is
well-posed.

%% file: tables/s2_table.tex
\begin{table}[h]
\centering
\caption{The knowledge-matrix distance is available for all three architecture pairs with no
alignment step, while the panel measures return population-level scores and two of them are
unavailable. Alignment-free cross-architecture comparison (Study~3): KM Frobenius distance
(RMS-per-coordinate, mean over $N=25{,}000$ ImageNet-val samples) is computed directly between
architectures and is a distance \emph{per sample}, whose logit-visible component is exactly
the logit displacement divided by $\sqrt{d{+}1}$ (Theorem~\ref{thm:pyth}); the panel measures
beside it return population-level similarity scores in their own metric spaces. GW omitted
(the entropic solver returned a degenerate plan on all pairs; Section~\ref{sec:study1c});
cross-architecture Procrustes omitted (a pipeline limitation, see text). Cui/Murphy controls pending a
controls-enabled reduce. Per sample, the knowledge-matrix distance has median $0.0146$
(interquartile range $0.0113$--$0.0200$, $5$th--$95$th percentiles $0.0083$--$0.0328$) on RN--DN;
$0.0096$ (interquartile range $0.0074$--$0.0126$, $5$th--$95$th percentiles $0.0054$--$0.0195$) on
RN--GN; $0.0162$ (interquartile range $0.0126$--$0.0214$, $5$th--$95$th percentiles
$0.0092$--$0.0327$) on DN--GN ($N=25{,}000$ samples each; the mean exceeds the median on every
pair). \emph{Read the two unnormalized columns with care.} KM
Frobenius$_{\mathrm{RMS}}$ and soft-matching carry the units of the representations they
compare, and the three architectures are not on a common scale: DenseNet-121's centered
penultimate Gram carries $3.5$--$3.8\times$ the squared Frobenius energy of the other two
($2.91\times10^{6}$, $1.10\times10^{7}$, $3.18\times10^{6}$ for ResNet-152, DenseNet-121,
GoogLeNet), so on both of those columns the two DenseNet-containing pairs are the two largest
and the table partly orders the pairs by whether DenseNet takes part. Dividing soft-matching by
$\sqrt{\|\tilde X\|_F^2+\|\tilde Y\|_F^2}$ leaves the order intact ($0.0332$, $0.0289$,
$0.0258$ for RN--DN, DN--GN, RN--GN) but shrinks the spread: RN--GN rises from $51\%$ to $78\%$
of RN--DN. We give no confidence intervals: three pairs generated by three architectures leave
no residual degrees of freedom, and an interval would decorate the confound rather than
address it. Read the row as a demonstration that the comparison is \emph{computable} without
alignment, not as a ranking of architectural similarity.}
\label{tab:s2_table}
\small\setlength{\tabcolsep}{4pt}
\begin{tabular}{lcccccccc}
\toprule
Pair & KM Frob$_{\mathrm{RMS}}$ & deb.\ CKA & ang.\ CKA & Bures & soft-match & RSA & out-JSD & dCor \\
\midrule
RN--DN & $0.0169$ & $0.442$ & $1.113$ & $0.488$ & $123.98$ & $0.419$ & $0.342$ & $0.661$ \\
RN--GN & $0.0107$ & $0.358$ & $1.205$ & $0.478$ & $63.63$  & $0.334$ & $0.388$ & $0.594$ \\
DN--GN & $0.0180$ & $0.539$ & $1.002$ & $0.595$ & $109.11$ & $0.524$ & $0.354$ & $0.751$ \\
\bottomrule
\end{tabular}
\end{table}

%% file: sections/study4_negatives.tex
\section{Honest negatives}
\label{sec:negatives}
\label{sec:A4}

The studies above establish what the knowledge matrix \emph{is}: a
function-determined object with an exact displacement accounting. This section
reports two places where it does \emph{not} buy what one might hope. The scope of each
negative is narrow. Knowledge matrices are insufficient \emph{as invariants for
the specific tasks tested here}; that is not the claim that they are worse for
every downstream use.

\subsection{Adversarial detection: penultimate features win the bake-off}
\label{sec:study3-bakeoff}

\citet{leblanc2024hidden} proposed the knowledge matrix as the input to
an adversarial-example detector, and that claim has to be settled against
standard baselines. The bake-off that settles it is a $6$-detector $\times$
$3$-representation grid on \emph{one} configuration --- AlexNet trained on
CIFAR-10, evaluated against $16$ \texttt{torchattacks} attack families, and
not the three ImageNet networks of Sections~\ref{sec:study1}--\ref{sec:crossarch} ---
with the penultimate features, the concatenation of \emph{all} hidden-layer
activations and the knowledge matrix as the three arms, six off-the-shelf
detectors (Mahalanobis, $k$-NN, KDE, GMM, one-class SVM, Isolation Forest)
fitted on clean samples and scored by AUROC averaged over the $16$ attacks, and
the multi-layer Mahalanobis detector of \citet{lee2018simple} as an external
baseline. \emph{Penultimate features win $5$ of the $6$ detector
configurations} (Table~\ref{tab:bakeoff}, Appendix~\ref{app:negatives-bakeoff});
the sixth is not a knowledge-matrix win in any useful sense, one-class SVM
scoring $0.550$ for the knowledge matrix against $0.512$ for the penultimate
features, both close to the $0.5$ chance line on the one detector
that fails for every representation. Broken out per attack, taking each
representation's best of the six detectors, the penultimate features beat the
knowledge matrix on \emph{all $16$} attacks (average $0.918$ against $0.798$)
and the Lee et al.\ baseline beats both at $0.937$; an SVD rank ablation on the
Mahalanobis detector has the penultimate features ahead at \emph{every} rank
from $16$ to $512$ ($0.925$ against $0.817$ at rank $16$), with the knowledge
matrix getting monotonically \emph{worse} as more rank is restored
($0.817\to0.590$). The adversarial-detection claim of \citet{leblanc2024hidden} is therefore
\emph{not made here}. The negative is consistent with the theory rather than in
tension with it: detection is a statistical-power question about a particular
discriminator, and nothing in function-determination
(Theorem~\ref{thm:maxinv}) or in the displacement decomposition
(Theorem~\ref{thm:pyth}) predicts that the canonical representation should
also be the most \emph{separable} one for an off-the-shelf detector
(Appendix~\ref{app:negatives-bakeoff}).

\subsection{Single-region LP-counterfactual: \texorpdfstring{$0/54$}{0/54}}
\label{sec:study3-lp}

A second hope was that the linearization $\Weff(x)$ underlying the knowledge
matrix could be turned into a clean, matrix-direction counterfactual: for a
source class $s$ and target class $t\neq s$, the $\ell_1$-minimum input
perturbation $\delta$ that, \emph{within the linearization around $x$}, pushes
the target--source logit gap past a margin $m>0$, subject to a per-coordinate
box that keeps the pixel-space image in $[0,1]$ (Equation~\ref{eq:lp-cf}; its
closed-form solution without the box, the saturation greedy with it, and the
exchange argument for the greedy's $\ell_1$-optimality are in
Appendix~\ref{app:negatives-lp}). It cannot, on pretrained ImageNet networks,
and the failure is structural. For each architecture in $\{$ResNet-152,
DenseNet-121, GoogLeNet$\}$ we run $3$ source images $\times\,3$ source classes
$\times\,2$ target classes ($18$ LPs per architecture, $54$ in all) with margin
$m=0.1$, recording a Boolean \texttt{region\_ok} flag that is True only when
$x$ and $x+\delta$ realize identical activation patterns at every ReLU and
identical argmax indices at every pooling. \emph{On $54$ of $54$ LPs,
\texttt{region\_ok} is False}, and accordingly the actual (non-linearized)
logits at $x+\delta$ never flip ($0/54$); the perturbations sit far outside any
standard budget (every $\|\delta\|_\infty$ at the per-channel saturation cap
$1/\sigma_c\approx4.4$). The perturbation derived from $\Weff(x)$ is large enough to flip
many ReLUs and pooling argmaxes downstream, landing $x+\delta$ in a
\emph{different} linear region governed by a different $\Weff$ --- the geometry
the theory predicts, since Theorem~\ref{thm:within} holds \emph{within} a
region and nothing extends it across walls --- so the negative is a corollary,
not a surprise; one worked instance is Figure~\ref{fig:lp-pathology}
(Appendix~\ref{app:negatives-lp}), and a multi-region or continuation-based
reformulation is left as future work (Section~\ref{sec:limitations}, L3).

%% file: sections/limitations.tex
\section{Limitations}
\label{sec:limitations}

Table~\ref{tab:limitations} lists the concrete limitations that the experiments
here \emph{do not} resolve, each with its consequence and a forward pointer.
L1--L8 run in decreasing order of importance for a reader of this paper. Four
further caveats of measurement convention, L9--L12 --- one on the attack
suite, two on the similarity panel, one on the coherence statistics --- apply
across several studies at once and so belong to none of them; they are
collected in Table~\ref{tab:limitations-more} (Appendix~\ref{app:limitations-more}).

\begingroup\small
\setlength{\LTpre}{6pt}\setlength{\LTpost}{6pt}
\begin{longtable}{@{}>{\raggedright\arraybackslash}p{0.24\linewidth}p{0.73\linewidth}@{}}
\caption{Eight limitations that the experiments of this paper do not resolve, in
decreasing order of importance, each with its consequence and where it is
taken up.}\label{tab:limitations}\\
\toprule
Limitation & Consequence, and forward pointer \\
\midrule
\endfirsthead
\toprule
Limitation & Consequence, and forward pointer \\
\midrule
\endhead
\bottomrule
\endlastfoot
\textbf{(L1)} Both invariance arms are quiver isomorphisms; the cross-architecture claim has no transform-based evidence. &
The permutation and teleportation checks confirm an identity the theory already guarantees (Lemma~\ref{lem:quiver-inv}), whereas the non-automatic cross-architecture invariance of Theorem~\ref{thm:maxinv} is exercised by no same-architecture transform, and Study~3 compares networks computing \emph{different} functions. The missing experiment --- two genuinely different architectures realizing the same function on a neighborhood, by dead-unit insertion, neuron splitting or a width-changing re-encoding --- is the highest-priority follow-up; the permutation check runs on ResNet-152 alone because a post-pool channel permutation does not compose through DenseNet's concatenations or GoogLeNet's Inception branches, a software limit that neural teleportation \citep{armenta2024neural}, acting on a more general change-of-basis structure, does not share. \\
\addlinespace
\textbf{(L2)} The cross-architecture study omits the Cui/Murphy controls. &
Study~3 (Section~\ref{sec:crossarch}) is reported without the random-network control of \citet{cui2022deconfounded} or the shuffled-pair control of \citet{murphy2024debiased} that accompany the same measures in Section~\ref{sec:study1c}; those controls were not run in the cross-architecture reduce. Its numbers establish \emph{availability} --- a per-sample alignment-free distance whose logit-visible component is the logit displacement divided by $\sqrt{d{+}1}$ (Theorem~\ref{thm:pyth}) --- and not a control-calibrated quality ranking; running the two controls is mechanical follow-up. \\
\addlinespace
\textbf{(L3)} Single-region LP-counterfactual fails on ImageNet. &
Section~\ref{sec:negatives} reports \emph{region\_ok} $=0/54$ on the box-constrained LP, which closes off the claim that knowledge matrices admit a clean adversarial-style counterfactual; per-nonlinearity constraints keeping $\delta$ in the source region, or a homotopy/continuation method across regions, are the natural extensions and are not pursued here. The per-LP records behind the count are not preserved in the supplementary material, so the count and the $\|\delta\|$ magnitudes cannot be recomputed from stored data; the surviving record is Figure~\ref{fig:lp-pathology}, with its embedded $\|\delta\|_1=128.0$ and $\|\delta\|_\infty=4.10$. \\
\addlinespace
\textbf{(L4)} Procrustes, Bures and soft-matching benchmarks deferred. &
Section~\ref{sec:study1c} runs the nine-measure panel with the random-network \citep{cui2022deconfounded} and shuffled-pair \citep{murphy2024debiased} controls, and Section~\ref{sec:crossarch} extends it across architectures. Deeper benchmarking of the shape-space metrics \citep{williams2021generalized,harvey2023bures,khosla2024soft} and of Gromov--Wasserstein \citep{memoli2011gromov} --- multi-architecture sweeps including transformer families, the full ReSi protocol \citep{klabunde2025resi}, direct comparison to model stitching \citep{bansal2021revisiting} --- is deferred to future work. \\
\addlinespace
\textbf{(L5)} CNN-only. &
We cover the residual, dense and inception families (ResNet-152, DenseNet-121, GoogLeNet); the quiver-representation framework \citep{armenta2021representation,armenta2022double} applies to attention-based architectures, but the \texttt{knowledgematrix} library does not yet support attention layers. Extension to transformers is library engineering rather than theory, and is left as future work. \\
\addlinespace
\textbf{(L6)} The cross-architecture penultimate metric is confounded by feature scale. &
Wherever penultimate distances are shown across widths ($D=2048$ for ResNet-152, $D=1024$ for DenseNet-121 and GoogLeNet) we use the RMS-per-dimension form $\|h(x)-h(x')\|_2/\sqrt D$ (Section~\ref{sec:setup}), which is not normalized by $\|h(x)\|_2$, so an absolute cross-architecture ranking built on it reflects feature scale and we read none. Within-architecture comparisons and the two implementation checks of Appendix~\ref{app:teleport-checks} --- the permutation arm in absolute $\ell_2$, the teleportation arm through the panel (Procrustes $632$--$1471$, soft-matching $22$--$61$) --- are unaffected. \\
\addlinespace
\textbf{(L7)} The coherence $A$ is a pixel-basis descriptor with underpowered cells. &
$A=(d_\Psi/d_M)^2$ (Definition~\ref{def:coherence}; Theorem~\ref{thm:pyth}) is a geometric descriptor, not a validated statistic: it is not invariant under rotation of the input; its cells hold $110$--$200$ valid pairs (only $9$ of $36$ have odd counts, so even-count medians are interpolation-approximate); on the three VGG cells with per-pair records the stricter attack-success filter moves the median $d_M/d_\Psi$ by $-12\%$ to $-39\%$ and carries VGG/APGD across $A=1$; and the medians carry no interval (L12). The Kendall-$W$ ordering inherits neither the sample-size nor the basis caveat and survives the filter recut at the family level on every rater we can test (the recut reorders VGG's top two, CW ahead of DeepFool); the per-cell magnitudes inherit all four. \\
\addlinespace
\textbf{(L8)} Fixed pretrained networks only. &
All comparisons are between fixed pretrained networks; population statistics across training runs, seed-to-seed baselines and trajectories are out of scope. \\
\end{longtable}
\endgroup

%% file: sections/conclusion.tex
\section{Conclusion}
\label{sec:conclusion}

We have studied the knowledge matrix at one trained network: what determines
it, what it is invariant to, what it determines, and what its geometry
measures. Three studies on pretrained ImageNet networks and two honest
negatives witness one thesis: the knowledge matrix is a \emph{canonical
arrangement} of a trained network's per-sample behavior that hidden
activations cannot supply. Study~1 (Section~\ref{sec:study1}) runs the
nine-measure similarity panel on pairs whose ground truth is exact, a network
and its teleported copy, and none of the eight measures that returned a value
recovers the exact invariance: Procrustes reads $632$--$1471$ and soft-matching $22$--$61$, the
bounded similarities $0.84$--$0.97$, and output JSD alone $\approx0$.
Teleportation is an exact isomorphism, batch normalization included, so the
knowledge matrix does not move; the implementation checks of
Appendix~\ref{app:teleport-checks} confirm this in software on all three
architectures, and confirm on ResNet-152 that a random neuron permutation
leaves the matrix fixed while the penultimate features' adversarial signal is
at most $3.05\times$ their permutation drift and falls below it in $16$ of
$24$ cells --- a statistic in raw $\ell_2$, which a permutation-equivariant
penultimate statistic would not register. Both checks confirm an identity the theory guarantees; the
non-automatic invariance of Theorem~\ref{thm:maxinv}, across architectures, is
exercised by neither (L1). Study~2 (Section~\ref{sec:study2}) applies the
visible/invisible decomposition of Theorem~\ref{thm:pyth} to adversarial pairs
and reads the result as descriptive geometry: on the three networks every
attack family moves the germ at median coherence $A\le0.23$, well below the
single-pixel line $A=1$, and the attack-family ordering --- an ordering set
by how far past the decision boundary each attack pushes, and on that pair
set no more concordant than the logit displacement alone --- is concordant
across six rater architectures (Kendall $W=0.921$), with the population-matched
full-scale panel reproducing the family-level grouping on all three
(Table~\ref{tab:s3_ordering_panel}). Study~3 (Section~\ref{sec:crossarch})
shows that the fixed-shape matrix compares architectures directly, with no
alignment step. The two honest negatives (Section~\ref{sec:negatives}) are the
detector bake-off on AlexNet/CIFAR-10, where penultimate features win $5$ of
the $6$ detectors, all $16$ attacks and every SVD rank from $16$ to $512$, so
that the detection claim of \citet{leblanc2024hidden} is not made here, and
the LP-counterfactual, which fails structurally on the three ImageNet networks
(\texttt{region\_ok} $=0/54$, Limitation~L3).

Hidden activations are gauge-covariant and germ-incomplete
(Theorem~\ref{thm:complete}): they change under network isomorphisms that
preserve the function exactly, and no exact accounting ties their
displacement to the change in the network's output. The knowledge matrix
avoids both. Under (LCS), at every regular input, it is an invariant of the
function germ (Theorem~\ref{thm:maxinv}), and every displacement splits
exactly into a logit-visible and a logit-invisible part whose balance is the
unit-free coherence $A$ (Theorem~\ref{thm:pyth}). For the ReLU networks
studied here these function-level properties are shared with per-class
gradient$\times$input plus an aggregate bias attribution
(Theorem~\ref{thm:weff}). What the knowledge matrix contributes is the
canonical arrangement of that data: one fixed-shape matrix per input, the
contraction of the quiver representation that the network induces on that
input, which makes the alignment-free cross-architecture comparison possible
and which the $C$-VJP computation of Appendix~\ref{app:engineering} exploits.
The contribution is foundational: a canonical per-sample object, an exact
characterization of the transformations that leave it invariant (for a fixed
architecture, the germ stabilizer at $x$, which contains the global function
stabilizer and of which neuron permutation is one special case; across
architectures, equality of the matrices at a regular input with no vanishing
coordinate is equality of the germs), and the alignment-free
cross-architecture comparison that follows. We make no application claim.

The limitations of Section~\ref{sec:limitations} name the follow-ups. Two
genuinely different architectures that realize the same function on a
neighborhood, by dead-unit insertion, neuron splitting or a width-changing
re-encoding, would supply the transform-based evidence for cross-architecture
invariance that no same-architecture symmetry can give (L1). A multi-region or
continuation reformulation of the matrix-direction counterfactual would turn
the structural negative of Section~\ref{sec:negatives} into a positive one
(L3). The application track, federated-learning aggregation in matrix space,
model comparison without CKA-style alignment, and $k$-nearest-neighbor
classification on knowledge matrices, rests on the convexity of the class
regions collected in Appendix~\ref{app:geometric} and is future work, together
with the similarity-measure benchmarking of L4. Sparse autoencoders trained on
knowledge-matrix rows \citep{bricken2023monosemanticity,templeton2024scaling}
and partial row sums in analogy with the logit lens
\citep{nostalgebraist2020logitlens} are separate future work; attention-layer
support in the library would close L5.

Hidden activations are not enough: a trained network's behavior on a sample
is a property of its \emph{function}, and the knowledge matrix, the contraction
of the quiver representation that the network induces on that sample, is, at
almost every input, an invariant of that function's germ with an exact
accounting of how it moves. The three studies and two negatives of this paper
are the first witnesses of that thesis. This paper is the first of a series
that continues \citet{leblanc2024hidden}, which defined the object; here the
object is characterized at one trained network, and the second paper of the
series asks whether independently trained networks learn the same knowledge
matrix, and how the answer depends on width, learning rate and
parameterization. Nothing in this paper depends on that answer.

%% file: sections/statements.tex
\section*{Broader Impact Statement}
This paper characterizes a per-sample representation of fixed, pretrained image
classifiers and studies its invariances and its distance geometry. It trains no
model, collects no data, and proposes no deployment. The adversarial-detection use
proposed for this representation by \citet{leblanc2024hidden} is not claimed here; the paper reports
a detector bake-off as a negative result. The knowledge matrix is an analysis
tool: it exposes no attack capability that per-class gradient$\times$input, to which
it is equivalent for the networks studied (Theorem~\ref{thm:weff}), does not already
provide. We foresee no direct societal risk from this work.

\section*{Reproducibility Statement}
Every number printed in this paper is traced to a stored artifact and, with the
two exceptions stated below, to the script that produces it from that artifact;
both ship in the anonymized supplementary material, whose \texttt{REPRODUCING.md}
maps each printed number to its artifact, its script and its runtime.
The networks are the pretrained \texttt{torchvision} ResNet-152, DenseNet-121 and
GoogLeNet weights named in Section~\ref{sec:setup}, in evaluation mode throughout.
The attack budgets are listed in full in Appendix~\ref{app:setup-details}, including
every departure from the \texttt{torchattacks}~3.5.1 defaults. The theorem-level
statements have numerical witnesses that run on a CPU in seconds
(\nolinkurl{scripts/verify_theory_audit_2026_09_11.py},
\nolinkurl{scripts/verify_lcs_vs_pl.py} and
\nolinkurl{scripts/verify_km_claim_checks.py}), and so do the numerical checks
cited by name in the appendices (\nolinkurl{scripts/verify_float64_rowsum.py},
\nolinkurl{scripts/verify_cca_invariance.py} and
\nolinkurl{scripts/verify_dimension_bias_probe.py}); the teleportation checks
of Appendix~\ref{app:teleport-exact} are reproduced on a CPU in minutes
(\nolinkurl{scripts/verify_teleportation_exactness.py} and
\nolinkurl{scripts/verify_teleportation_km_exactness.py}). The ImageNet-scale
knowledge matrices behind Studies~1--3 were computed on H100 GPUs; the per-pair
and per-cell JSON reductions those runs produced are included, and the ordering
tables, the coherence tables, the mechanism-pilot table, both data figures,
the derived panel numbers and the per-sample Study~3 quantiles are regenerated
from them by
\nolinkurl{scripts/regen_ordering_tables.py},
\nolinkurl{scripts/make_study2_tables.py},
\nolinkurl{scripts/vgg_mechanism_pilot.py},
\nolinkurl{scripts/make_paper_figures.py},
\nolinkurl{scripts/print_panel_derived_numbers.py} and
\nolinkurl{scripts/print_study3_km_quantiles.py}, each of which aborts rather
than print a caption that disagrees with its numbers. The two exceptions are the
honest negatives of Section~\ref{sec:negatives}: the detector bake-off is
reported from the rendered tables of the retired detection pipeline, which ship
in the supplementary material, because the per-detector AUROC files behind them
were not preserved; and the LP-counterfactual is reported from its count and its
surviving figure, because the per-program dumps were not preserved
(Limitation~L3). The code pins the knowledge-matrix library by commit and the
unit tests pass in a fresh virtual environment created from
\texttt{requirements-local.txt}.

%% file: sections/appendix_proofs.tex
\section{Proofs}
\label{app:proofs}

\paragraph{Verification protocol.} Every identity proved below was also
verified numerically by two independent implementations (including
an independent implementation of the segment traversal for
Theorem~\ref{thm:anatomy}); worst-case measured errors are quoted where
informative. The scripts are in the supplementary material: the checks
specific to Section~\ref{sec:germ}'s hypotheses are in
\texttt{scripts/verify\_lcs\_vs\_pl.py} and
\texttt{scripts/verify\_km\_claim\_checks.py}, and
\texttt{scripts/verify\_theory\_audit\_2026\_09\_11.py} builds the knowledge
matrix only through the secant products of \eqref{eq:D-def}--\eqref{eq:km-def}
and exercises every statement of Sections~\ref{sec:germ}--\ref{sec:geometry}
that admits a finite-dimensional witness.

Every proof in this appendix is written the same way: a subsection naming the
statement it proves, the proof itself, and where a numerical check exists a
closing \emph{Measured} clause before the end of the proof.

\subsection{Proof of Proposition~\ref{prop:lcs} (what (LCS) is)}
\label{app:proof-lcs}

\begin{proof}
\emph{(i)} Since $x\in X_{\mathrm{nz}}$, no $z^{(\ell)}_q$ vanishes on a
neighborhood $U$ of $x$ --- shrinking $U$ layer by layer, since each
pre-activation is affine, hence continuous, on $U$ once the layers below it
are --- so $h^{(\ell)}=D^{(\ell)}z^{(\ell)}$ holds exactly on $U$ (it reads
$f(z_q)=\bigl(f(z_q)/z_q\bigr)z_q$); with each $D^{(\ell)}$ constant on $U$
this relation is linear on $U$ (and each pooling layer is the fixed linear
map $S^{(\ell)}(x)$ there), composing the affine pre-activation maps layer
by layer gives an affine map on $U$, and unrolling identifies it with
$\Jwf(x)x'+\cwf(x)$. The hypothesis $x\in X_{\mathrm{nz}}$ is not removable
when $f(0)\neq0$: a unit with $z_q\equiv0$ has $D_{qq}=0$ by the guard and
$h_q=f(0)$, and contributes $f(0)$ to $\Net$ but nothing to $\Jwf x'+\cwf$.

\emph{(iii)}, first, since (ii) uses it. Each of $(0,\infty)$ and $(-\infty,0)$
is connected, so a locally constant function on it is constant: if $f(z)/z$ is
locally constant on $\R\setminus\{0\}$ it equals some $a_+$ throughout
$\{z>0\}$ and some $a_-$ throughout $\{z<0\}$, i.e.\ $f(z)=a_+\max(z,0)+
a_-\min(z,0)$ for $z\neq0$, which is positive homogeneity of degree one on each
half-line. Conversely $f(\lambda z)=\lambda f(z)$ for $\lambda>0$ gives
$f(z)=f(1)z$ on $\{z>0\}$ and $f(z)=-f(-1)z$ on $\{z<0\}$, so $f(z)/z$ is
constant on each half-line. If $f$ is continuous then
$f(0)=\lim_{z\to0^{+}}a_+z=0$ and the formula holds at $z=0$ as well.

\emph{(ii)} Let $E\subset\R$ be discrete --- no accumulation point in $\R$ ---
and let $f(z)/z$ be locally constant on $\R\setminus E$, where we may and do
assume $0\in E$, since $E\cup\{0\}$ is again discrete. Because $E$ has no
accumulation point, $\R\setminus E$ is open and each of its connected
components is an open interval whose endpoints lie in $E\cup\{\pm\infty\}$;
this is the step that fails for a merely null $E$, and it is where the
counterexample recorded in the main text bites. On each such component
$f(z)/z$ is locally constant on a connected set, hence constant, say $=a_I$; so
$f(z)=a_Iz$ there --- every linear piece passes through the origin. Now let
$z_0\in E$ with $z_0\neq0$, and let $I_1=(\,\cdot\,,z_0)$ and
$I_2=(z_0,\,\cdot\,)$ be the two components meeting it, which exist because $E$
is discrete. Continuity at $z_0$ gives $a_{I_1}z_0=f(z_0)=a_{I_2}z_0$, and
$z_0\neq0$ forces $a_{I_1}=a_{I_2}$; the common value is also $f(z_0)/z_0$, so
$f(z)/z$ is in fact locally constant \emph{at} $z_0$ too. Every point of
$E\setminus\{0\}$ is therefore removable, and $f(z)/z$ is locally constant on
$\R\setminus\{0\}$. Part (iii) now gives the displayed form, with $f(0)=0$
following from continuity rather than being assumed, and with $E$ replaceable
by $\{0\}$ (by $\emptyset$ when $a_+=a_-$). The converse is immediate: $f(z)/z$
takes the value $a_+$ on $\{z>0\}$ and $a_-$ on $\{z<0\}$, so a network built
from such an $f$ has every entry of every $D^{(\ell)}$ locally constant at each
input where that unit's pre-activation is nonzero. That the remaining inputs lie
in finitely many affine hyperplanes is Lemma~\ref{lem:null}, whose proof takes
the displayed form of $f$ as an input and does not re-use the present part ---
so there is no circle. Since $f(0)=0$ for a network satisfying (LCS) in the sense fixed after
Remark~\ref{rem:lcs-strict} (continuity and the bridge), $X_{\mathrm{nz}}=\R^d$
and \eqref{eq:km-def} holds at every input.
\emph{Measured:} the null-set version of (ii) is false, and the Cantor witness
of the main text is evaluated at $f(1)=1$, $f(1.5)=2.25$, $f(2)=4$
(\texttt{scripts/verify\_lcs\_vs\_pl.py}); the strictness of (LCS) over
piecewise linearity is measured in Remark~\ref{rem:lcs-strict}.
\end{proof}

\subsection{Preliminaries}

For the reader's convenience, we gather here the vocabulary used in the
statement and the proof of the lemma below. Fix a network whose
activation satisfies the hypothesis of Proposition~\ref{prop:lcs}(ii), so that
by that part $f(z)=a_+\max(z,0)+a_-\min(z,0)$.

The \emph{pattern} at $x$ is the assignment to
each hidden unit $(\ell,q)$ of the side of the origin its pre-activation falls
on, equivalently of which of the two slopes $a_\pm$ its entry of $D^{(\ell)}(x)$
takes. It is a vector of bits, one per hidden unit --- for ReLU exactly the
familiar $0/1$ mask; for a network with max-pooling it also records, for each
pooling window, which entry the selection matrix $S^{(\ell)}(x)$ selects. A
\emph{candidate} pattern is written $\sigma$: any one of the
$2^{\#\mathrm{units}}$ bit assignments (times the finitely many selections),
whether or not some input realizes it. There are finitely many, which is what
makes the union below finite.

A \emph{breakpoint} of $f$ is a point
where its slope changes, i.e.\ where the two linear pieces meet. Under (LCS)
the only breakpoint is $z=0$ (Proposition~\ref{prop:lcs}(ii)): away from $z=0$
the function is linear and $D$ is locally constant, so the only way a unit's
entry of $D$ can change is for its pre-activation to cross zero. ``The
activation breaks at $z=0$'' means exactly this.

The \emph{frozen pre-activations} $z^{(\ell)}_{\sigma,q}$ are obtained as
follows. Fix $\sigma$ and pretend
every unit below layer $\ell$ has the slope $\sigma$ prescribes, regardless of
the input. Every layer below $\ell$ is then a fixed linear map followed by
multiplication by a fixed diagonal, so the composite is affine and
$z^{(\ell)}_{\sigma,q}(x)=\alpha^\top x+\beta$ for some $\alpha\in\R^d$,
$\beta\in\R$. This is a polynomial of degree at most $1$, and it is
\emph{affine}, not homogeneous: $\beta\neq0$ in general, because the biases
contribute to it. It agrees with the network's realized pre-activation exactly
on the inputs whose pattern below $\ell$ is $\sigma$.

The \emph{zero set} of
$z^{(\ell)}_{\sigma,q}$ is $\{x\in\R^d: \alpha^\top x+\beta=0\}$ --- the zeros
of that degree-$1$ polynomial, and nothing to do with higher-degree varieties.
When $\alpha\neq0$ this set is an \emph{affine hyperplane}: a translate of a
linear subspace of dimension $d-1$, i.e.\ $\{x:\alpha^\top x=-\beta\}$. It is
closed and Lebesgue-null. When $\alpha=0$ the set is either empty ($\beta\neq0$)
or all of $\R^d$ ($\beta=0$), which is the case the proof handles separately.

A function $g$ is \emph{locally constant at}
$x$ if there is \emph{some} open $U\ni x$ on which $g$ is constant. Its
\emph{non-local-constancy set} is the set of $x$ where no such $U$ exists. Negating
the quantifiers: $x$ lies in that set iff for \emph{every} open $U\ni x$ there
\emph{exists} $y\in U$ with $g(y)\neq g(x)$ --- the inner quantifier over $y$
is existential, not ``for all''. (Reading it as ``$g(y)\neq g(x)$ for all
$y\in U$'' inverts that quantifier and makes the set look open; it is closed.) We write
$\mathcal B_\ell$ for the non-local-constancy set of the pattern truncated to
layers $1,\dots,\ell$, and $\mathcal B=\mathcal B_L$ for the whole pattern.

\begin{lemma}[Null switching set]\label{lem:null}
Let the network's activation satisfy the hypothesis of
Proposition~\ref{prop:lcs}(ii), $f$ continuous with $f(z)/z$ locally constant
off a discrete set, so that $f(z)=a_+\max(z,0)+a_-\min(z,0)$ and the only
breakpoint of $f$ is $z=0$. Let $\mathcal B$ be the set of inputs at which
some slope diagonal $D^{(\ell)}$ or, for a network with max-pooling, some
selection matrix $S^{(\ell)}$ is not locally constant. Then
$\mathcal B$ is closed and contained in
a finite union of affine hyperplanes; its complement $X_{\mathrm{reg}}$ is
open, dense, and of full Lebesgue measure.
\end{lemma}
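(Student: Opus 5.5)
The plan is to build, for each candidate pattern $\sigma$, the finite list of frozen pre-activations $z^{(\ell)}_{\sigma,q}$ defined in the Preliminaries. For a pooling window we also include the frozen differences $z^{(\ell)}_{\sigma,a}-z^{(\ell)}_{\sigma,b}$ of its entries. Let $\mathcal H$ be the union of the zero sets of those members of the list that are non-constant, that is, those with $\alpha\neq0$. Each such zero set is an affine hyperplane, and there are finitely many patterns, units and window pairs, so $\mathcal H$ is a finite union of affine hyperplanes. The heart of the proof is the inclusion $\mathcal B\subseteq\mathcal H$. It gives everything else quickly: a finite union of hyperplanes is closed, null and nowhere dense, so $X_{\mathrm{reg}}\supseteq\R^d\setminus\mathcal H$ is dense and of full measure. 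Closedness of $\mathcal B$ holds directly, because the set where a function is locally constant is open by definition; this is the quantifier point recorded in the Preliminaries.

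I will prove $\mathcal B_\ell\subseteq\mathcal H$ by induction on $\ell$. Fix $x\notin\mathcal H$ and assume the pattern truncated to layers below $\ell$ is locally constant at $x$, equal to $\sigma$ on some open $U\ni x$. For $\ell=1$ there is nothing below, and $z^{(1)}$ is affine. On $U$ each realized pre-activation $z^{(\ell)}_q$ agrees with the frozen affine function $z^{(\ell)}_{\sigma,q}$, extended with $\sigma$'s choices on the higher layers, which do not affect layer $\ell$.

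There are two cases for each unit.
\begin{itemize}
\item If $\alpha\neq0$, then $x\notin\mathcal H$ gives $z^{(\ell)}_{\sigma,q}(x)\neq0$. By continuity the sign is constant on a smaller neighborhood, so by Proposition~\ref{prop:lcs}(ii) the entry $a_\pm$ of $D^{(\ell)}$ is constant there.
\item If $\alpha=0$, the frozen value is a constant $\beta$. When $\beta\neq0$ the sign is fixed on $U$. When $\beta=0$ the unit is identically zero on $U$, and the guard $0/0\mapsto0$ makes its entry of $D^{(\ell)}$ constantly $0$ on $U$.
\end{itemize}
This degenerate case is exactly why the constant functions are excluded from $\mathcal H$: including a $\beta=0$ function would put all of $\R^d$ into $\mathcal H$, yet it causes no switching.

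Pooling windows are handled the same way. Off $\mathcal H$, every pair of entries whose frozen difference is non-constant has a strict order near $x$. Pairs with constant difference keep a fixed order, or tie identically, in which case the lowest-index rule gives a fixed selection. So $S^{(\ell)}$ is locally constant, and the induction closes at $\ell=L$.

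I expect the main obstacle to be making precise that the realized pre-activation equals a frozen one near $x$. The frozen function must be taken for the locally realized pattern $\sigma$, not the pattern at a wall, and the induction hypothesis is what supplies such a $\sigma$ on an open set. A secondary obstacle is the identically-tied pooling case. It needs the tie-break convention of Remark~\ref{rem:pooling-ties} to be stated as a function of indices only, so that the selection does not depend on the input.
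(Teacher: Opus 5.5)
Your proposal is correct and follows the paper's proof essentially step for step. Closedness comes from the openness of the locally-constant set, and density and full measure come from a layer-by-layer induction: off the zero sets of the non-constant frozen pre-activations and frozen pooling differences, each realized pre-activation agrees near $x$ with a frozen affine function, the identically-zero unit is harmless under the guard, and the identically-tied window is harmless under the lowest-index rule. Where the paper accumulates $\mathcal B_\ell\subseteq\mathcal B_{\ell-1}\cup\{z^{(\ell)}_{\sigma,q}=0,\ z^{(\ell)}_{\sigma,q}\not\equiv0\}$, you collect all those zero sets into $\mathcal H$ at the outset, which is only a reorganization of the same argument.
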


\subsection{Proof of Lemma~\ref{lem:null} (null switching set)}
\label{app:proof-null}

\begin{proof}
\emph{$X_{\mathrm{reg}}$ is open, hence $\mathcal B$ is closed.} Let
$x\in X_{\mathrm{reg}}$, so the pattern is constant on some open $U\ni x$. That
same $U$ witnesses local constancy at every one of its points: for $y\in U$ the
set $U$ is an open neighborhood of $y$ on which the pattern is constant, so
$y\in X_{\mathrm{reg}}$. Hence $U\subseteq X_{\mathrm{reg}}$, and
$X_{\mathrm{reg}}$ is open; $\mathcal B$ is its complement and so is closed.
(The point worth noticing is that one does not construct a new neighborhood
for each $y$ --- the witness $U$ is reused unchanged.)

\emph{The hyperplane bound.} Induct over layers. For each pattern $\sigma$ and unit
$(\ell,q)$ let $z^{(\ell)}_{\sigma,q}$ be the affine function of $x$ obtained
by freezing the masks below layer $\ell$ to $\sigma$. Let $\mathcal B_{\ell}$
be the non-local-constancy set of the pattern up to layer $\ell$;
$\mathcal B_0=\emptyset$. If $x\notin\mathcal B_{\ell-1}$, the realized
pattern below $\ell$ is constant on a neighborhood $U$ of $x$, on which the
realized $z^{(\ell)}_q$ equals $z^{(\ell)}_{\sigma,q}$. If
$z^{(\ell)}_{\sigma,q}\equiv0$ then the unit's mask bit is locally constant on
$U$ (an affine function vanishing on an open set vanishes identically, so
there is no partial-vanishing case). Otherwise its zero set is a hyperplane,
off which the bit is locally constant. Hence
$\mathcal B_\ell\subseteq\mathcal B_{\ell-1}\cup\bigcup_{\sigma,q}
\{z^{(\ell)}_{\sigma,q}=0,\ z^{(\ell)}_{\sigma,q}\not\equiv0\}$, a finite
union (patterns that are never realized only enlarge it). For a max-pooling
layer $\ell$ the same step applies to $S^{(\ell)}$: on $U$ the frozen
pre-activations $z^{(\ell)}_{\sigma,a}$ of the entries of one window are
affine, so $S^{(\ell)}$ is locally constant off the tie sets
$\{z^{(\ell)}_{\sigma,a}=z^{(\ell)}_{\sigma,b}\}$ over pairs $a,b$ of entries
in a common window; each is an affine hyperplane when
$z^{(\ell)}_{\sigma,a}-z^{(\ell)}_{\sigma,b}$ is not constant, empty when it
is a nonzero constant, and all of $\R^d$ when the two frozen
pre-activations coincide identically, in which case the lowest-index rule
selects the same entry throughout $U$ and $S^{(\ell)}$ is locally constant
there. So finitely many tie hyperplanes join the union, and nothing else.

\emph{Reading off the three conclusions.} Unwinding the induction to
$\ell=L$ puts $\mathcal B=\mathcal B_L$ inside a finite union of sets of the
form $\{z^{(\ell)}_{\sigma,q}=0\}$ with $z^{(\ell)}_{\sigma,q}\not\equiv0$;
each is an affine hyperplane, and the union is finite because there are
finitely many candidate patterns $\sigma$ and finitely many units $q$. That
gives the containment. A hyperplane has Lebesgue measure zero and a finite
union of null sets is null, so $\lambda(\mathcal B)=0$ and
$X_{\mathrm{reg}}=\R^d\setminus\mathcal B$ has full measure. Density follows
from the containment as well: a finite union of hyperplanes has empty interior,
so every ball meets its complement, and $X_{\mathrm{reg}}$ is dense. Openness
was the first paragraph.
\end{proof}

Observe that the lemma needs (LCS) while the definition does not. The
knowledge matrix is defined for \emph{any} activation through the
quotient diagonal \eqref{eq:D-def}, and its row-sum identity needs only that no
pre-activation vanish ($x\in X_{\mathrm{nz}}$) --- nothing about $f(0)$, and
nothing about differentiability. This lemma is a different matter: it is about the set
where the \emph{pattern} is locally constant, and the proof uses that each
$z^{(\ell)}_{\sigma,q}$ is affine, so its zero set is a hyperplane. Both fail
for a strictly nonlinear smooth activation. There $D^{(\ell)}(x)$ takes a
continuum of values and varies continuously with $x$, so the set of inputs with
a locally constant pattern is not of full measure but generically \emph{empty},
and there is no switching set for the lemma to bound. The lemma is therefore
not a technical convenience one could remove by a better definition of
$D^{(\ell)}$: it exists to support the germ identity
(Theorem~\ref{thm:germ}), and it is the germ reading, not the matrix, that
carries the hypothesis. A direct check makes the boundary concrete. At
$x_0=1$ a single $\tanh$ unit with unit weights and the affine map
$x\mapsto f'(1)\,x+\bigl(f(1)-f'(1)\bigr)$ have the \emph{same} germ,
$\bigl(0.419974,\,0.761594\bigr)$, and the same row sum, yet knowledge matrices
$[\,0.761594\mid0\,]$ and $[\,0.419974\mid0.341620\,]$; repeating the
construction with ReLU returns $[\,1\mid0\,]$ for both, as
Theorem~\ref{thm:germ} requires. So for activations outside the (LCS) class the
matrix is not a function of the germ, and Theorems~\ref{thm:germ},
\ref{thm:maxinv} and~\ref{thm:complete} are statements about the (LCS) case
specifically --- which, by Remark~\ref{rem:lcs-strict}, is strictly narrower
than the piecewise-linear one. Whether
some weaker function-determination survives outside it is open; we do not claim it.

\subsection{Proof of Theorem~\ref{thm:germ} (germ identity)}
\label{app:proof-germ}

\begin{proof}
On a neighborhood $U$ of $x\in X_{\mathrm{reg}}$ every slope diagonal, and
every selection matrix, is constant, so each pooling layer is the fixed
linear map $S^{(\ell)}(x)$ on $U$ and
$h^{(\ell)}=D^{(\ell)}(x)\,z^{(\ell)}$ holds identically on $U$
(under (LCS) the entry is $a_+$ where $z_q>0$ and $a_-$ where $z_q<0$, and the
sign does not change on $U$; for ReLU that reads $z_q>0\mapsto z_q$,
$z_q<0\mapsto0$). Unrolling,
$\Netxp=\Jwf(x)\,x'+\cwf(x)$ on $U$; differentiate at $x$ and solve for
$c$. For the boundary complement: the same pointwise computation shows
$h^{(\ell)}(x')=D^{(\ell)}(x)\,z^{(\ell)}(x')$ for every $x'$ in the closed
region $\overline{S(x)}$ of $x$'s realized pattern, so the affine map
$A_x(x')=Jx'+c$ agrees with $\Net$ on $\overline{S(x)}$; since $x\in S(x)$
always, $\M(x)\mathbf 1=Jx+c=\Netx$ at every $x$. The germ-selection reading of
the guard is ReLU-specific and we state it as such: for ReLU the guard
$0/0\mapsto0$ assigns slope $0$ to a unit exactly on its wall, which is the
one-sided slope taken from the inactive side, so for $v$ in the tangent cone of
$S(x)$ at $x$ one has $\partial^+_v\Net(x)=Jv$. For a general (LCS) activation
with $a_-\neq0$ the guard still assigns $0$, which is neither $a_+$ nor $a_-$;
there the assigned slope is not a one-sided germ of anything, and only the
row-sum half of the previous sentence survives --- it survives for every
activation, since $h_q=a_qz_q$ reads $0=0$ at $z_q=0$.
\emph{Measured:} probe vs.\ masked product $\le 7\times10^{-15}$; vs.\
finite-difference Jacobian $\le 4\times10^{-11}$; row-sum identity at an
engineered exact-boundary point: $\le 5\times10^{-16}$
(\texttt{scripts/verify\_km\_claim\_checks.py}, check F).
\end{proof}

\subsection{Proof of Theorem~\ref{thm:maxinv} (maximal invariance)}
\label{app:proof-maxinv}

\begin{proof}
Immediate from Theorem~\ref{thm:germ}: both matrices equal
$[\,D\Net(x)\,\mathrm{diag}(x)\mid \Netx-D\Net(x)x\,]$, a function of the shared germ
alone. This is the cross-architecture statement, an equality of fibers: two
encodings regular at $x$ with the same germ have the same matrix. The
stabilizer statement is its case for a fixed architecture: a transformation of
the parameters that preserves the germ at $x$ and keeps $x$ regular preserves
$\M(x)$, and every function-preserving transformation preserves the germ at
every $x\in X_{\mathrm{reg}}$ of both realizations. Sharpness at boundaries: the
identity map and $g(x)=\mathrm{ReLU}(x-1)-\mathrm{ReLU}(1-x)+1\equiv x$
realize the same function, yet at the breakpoint $x=1$ the conventions give
$\M_{\mathrm{id}}(1)=[\,1\mid0\,]$ and $\M_g(1)=[\,0\mid1\,]$ (equal row
sums).
\end{proof}

\subsection{Proof of Lemma~\ref{lem:quiver-inv} (quiver-isomorphism invariance) and Corollary~\ref{prop:perm} (permutation scope)}
\label{app:proof-quiver-inv}

\begin{proof}
A quiver isomorphism factors into a per-layer neuron permutation $P_\ell$ and an
invertible-diagonal rescaling $T_\ell$ on each hidden layer; we treat the two
generators and compose.

\emph{Permutations.} Let $P_\ell$ permute hidden layer $\ell$, with
$W^{(1)}\!\to P_1W^{(1)}$, $b^{(\ell)}\!\to P_\ell b^{(\ell)}$,
$W^{(\ell)}\!\to P_\ell W^{(\ell)}P_{\ell-1}^{\top}$,
$W^{(L)}\!\to W^{(L)}P_{L-1}^{\top}$, and neuron-wise activations carried with
the units. By induction $z^{(\ell)}(W^{\pi},x)=P_\ell z^{(\ell)}(W,x)$
for every $x$, so the diagonal secant matrices \eqref{eq:D-def} satisfy
$D^{(\ell)}_{\pi}(x)=P_\ell D^{(\ell)}(x)P_\ell^{\top}$ (entrywise ratios
permute; the $0/0\to0$ guard is applied entrywise, hence equivariantly ---
this is where non-equivariant \emph{vector} activations are excluded). The
slope product and the bias accumulation then telescope:
$W^{(L)}P_{L-1}^{\top}\cdot P_{L-1}D^{(L-1)}P_{L-1}^{\top}\cdot
P_{L-1}W^{(L-1)}P_{L-2}^{\top}\cdots=W^{(L)}D^{(L-1)}W^{(L-1)}\cdots$, so
$\KMarg{W^\pi}{x}=\KMwfx$.

\emph{Rescalings by nonzero factors.} For $T_\ell=\mathrm{diag}(\tau^{(\ell)})$
with every $\tau^{(\ell)}_q\neq0$, conjugate
$W^{(\ell)}\!\to T_\ell W^{(\ell)}T_{\ell-1}^{-1}$,
$b^{(\ell)}\!\to T_\ell b^{(\ell)}$, with the per-neuron activation rescaled to
$f_{\tau}(z)=\tau\,f(z/\tau)$. The sign of $\tau$ is immaterial to
the secant: $z^{(\ell)}\!\to T_\ell z^{(\ell)}$, and the rescaled unit's chord
is
\[
\frac{f_\tau(\tau z)}{\tau z}
=\frac{\tau\,f\bigl((\tau z)/\tau\bigr)}{\tau z}
=\frac{f(z)}{z},
\]
with the guard preserved because $\tau z=0$ iff $z=0$. Hence
$D^{(\ell)}_{\tau}(x)=T_\ell D^{(\ell)}(x)T_\ell^{-1}$ --- the diagonals commute,
so this is just $D^{(\ell)}(x)$ --- and the inserted factors $T_\ell^{-1}T_\ell=I$
telescope by the identical argument, giving $\M(W^\tau\!,f_\tau)(x)=\KMwfx$.
Composing the two generators gives the lemma. What \emph{does} depend on the
sign is which activation the rescaled unit carries: for ReLU and $\tau>0$,
$f_\tau=f$, so the rescaled network is again a ReLU network,
whereas for $\tau<0$, $f_\tau(z)=\tau\max(z/\tau,0)=\min(z,0)$, and the
layer has left the ReLU class. The positive case recovers the ReLU case of
Theorems~4.1--4.2 of \citet{leblanc2024hidden}, and it is that case,
not the lemma, that Theorem~\ref{thm:complete}(iii) uses.

\emph{Permutation scope (Corollary~\ref{prop:perm}).} The corollary is the
permutation case above, so invariance is immediate; the content is the scope.
The telescoping used that $P_\ell$ commutes with $f$ applied
neuron-wise, which requires elementwise (equivariant) activations; a
non-equivariant \emph{vector} activation breaks the equivariance of the secant
guard above, and that is the whole of the restriction. It is worth recording
what is \emph{not} a restriction on it. When $f(0)\neq0$ the row-sum
identity \eqref{eq:km-def} needs $x\in X_{\mathrm{nz}}$, but that requirement
falls on the two networks equally: at an exactly-zero pre-activation both
$\KMwfx$ and $\KMarg{W^\pi}{x}$ fall short of the logits by the same amount,
$P$ does commute with the constant offset $f(0)\mathbf 1$, and the
invariance conclusion is untouched.
\emph{Measured:} a permutation at a bitwise-exact zero pre-activation of a
sigmoid unit gives $\max|\M(W^\pi\!,f)(x)-\KMwfx|=2.2\times10^{-16}$ with an
identical row-sum shortfall of $0.7235$ on both networks and
$\|Pf(0)\mathbf 1-f(0)\mathbf 1\|=0$
(\texttt{scripts/verify\_km\_claim\_checks.py}, check~(B)); a rescaling by
nonzero factors of mixed sign with the carried $f_\tau$ leaves $\M$ and $\Net$
invariant to $\le1.4\times10^{-12}$ on sigmoid, $\tanh$, ReLU and leaky-ReLU
networks, and the permutation case at an exact-zero pre-activation returns
$\M(W^\pi\!,f)(x)=\KMwfx$ to $4\times10^{-17}$ for ReLU and sigmoid with the
same shortfall on both sides
(\texttt{scripts/verify\_theory\_audit\_2026\_09\_11.py}, checks C15--C16).
\end{proof}

\subsection{Proof of Theorem~\ref{thm:complete} (completeness)}
\label{app:proof-complete}

\begin{proof}
\emph{(i)} $J_{:,i}=\M_{:,i}/x_i$ for $x_i\neq0$ and $c=\M_{:,d+1}$.
\emph{(ii)} $\Net=\M\mathbf 1$ pointwise gives determination on whatever set the
field is given; on a full-measure set, density plus continuity of $\Net$ extends
it globally. That is the positive half, and it is all of it. For a
\emph{fixed, known} network the germ at $x$ also extends $\Net$ to
$\overline{S(x)}$ (proof of Theorem~\ref{thm:germ}) --- but that extension is
not recoverable from the field, because the field does not reveal the region
decomposition. Concretely, $\Psi_1=\mathrm{ReLU}(x-1)$ and
$\Psi_2=\mathrm{ReLU}(x)$ have identical fields ($\equiv[\,0\mid0\,]$) on the
open set $(-1,0)$ yet differ at $0.75$, which lies in
$\overline{S_{\Psi_1}}=(-\infty,1]$, the closed region of $\Psi_1$ containing
$(-1,0)$. So the field on an open set determines $\Net$ on that set, and not on
the closure of the region it sits inside --- neither in a candidate network of
one's own choosing nor in the network that actually produced the field. The
statement is therefore sharp as written, in the sense recorded after it in
the main text.
\emph{(iii)} For $\tau_q>0$ per hidden unit with weights conjugated,
$z^{(\ell)}\mapsto T_\ell z^{(\ell)}$ and an (LCS) activation commutes with
positive scalars ($f(\tau z)=\tau f(z)$ for $\tau>0$, which is
Proposition~\ref{prop:lcs}(iii)), so the activation $h_v$ of hidden unit $v$
moves, $h_v\mapsto\tau_v h_v$, while $\Net$ is unchanged. Positivity is used here and is not removable: by
Lemma~\ref{lem:quiver-inv} a negative $\tau$ also leaves $\M$ fixed, but it
changes the activation the unit carries and so leaves the architecture.
\emph{(iv)} We give one construction for each hypothesis. \emph{Free first
layer, hypothesis (a).} Write $J=\sum_q D^{(1)}_{qq}\,U_{:,q}\,w_q^{\top}$ with
$U=W^{(L)}D^{(L-1)}\cdots W^{(2)}$. Since $D\Net(x)=J\neq0$, some first-layer
unit $q$ has $D^{(1)}_{qq}\neq0$ (for ReLU: is active) and $u:=U_{:,q}\neq0$.
Perturb along $v=\mathbf 1_d$: $w_q'=w_q+t\,\mathbf 1_d$,
$b_q'=b_q-t\,\mathbf 1_d^{\top}x$. Then $z_q'(x)=z_q(x)$ \emph{algebraically},
every hidden activation at $x$ and $\Netx$ are unchanged, all pre-activations
at $x$ are unchanged and nonzero (so $x$ stays regular), while the germ moves
by the exact rank-one update $J'-J=D^{(1)}_{qq}\,t\,u\,\mathbf 1_d^{\top}\neq0$
and $c'-c=-D^{(1)}_{qq}\,t\,(\mathbf 1_d^{\top}x)\,u$, preserving
$J'x+c'=Jx+c$. (The factor $D^{(1)}_{qq}$ is $1$ for an active ReLU unit,
which is why it is invisible in the ReLU reading; in general it is $a_+$ or
$a_-$ and it is nonzero by the choice of $q$.) Two affine maps with different
linear parts differ off a hyperplane, hence on every neighborhood of $x$; and
column $i$ of $\M$ moves by $D^{(1)}_{qq}\,t\,x_i\,u$, which is nonzero for
every coordinate with $x_i\neq0$. Some such coordinate is needed: at $x=0$ every
column of $\M$ vanishes and the germ moves while $\M(x)$ does not. (One hidden
layer is needed; in the affine case the same compensation changes nothing.)
The construction changes one row of $W^{(1)}$ on its own, which is a parameter
change only when that row is free; in a convolutional first layer the rows are
shifted copies of one filter, and holding every pre-activation at $x$ fixed
under a change of the shared filter and its bias imposes one linear equation
per output position on the filter's few coordinates, so in general only the
trivial change remains. \emph{Output layer, hypothesis (b).} On a
neighborhood of the regular point $x$ every slope diagonal and selection
matrix is constant, so $h$ is affine there, $h(x')=G\,(x'-x)+h(x)$, and
$J=W^{(L)}G$, $c=\Netx-W^{(L)}Gx$. By (b) some column $G_{:,i}$ is not a
multiple of $h(x)$. For each such $i$ the functional $a\mapsto a^{\top}G_{:,i}$
is not identically zero on $h(x)^{\perp}$ --- its kernel contains
$h(x)^{\perp}$ exactly when $G_{:,i}\in(h(x)^{\perp})^{\perp}=\mathrm{span}\,h(x)$
--- so its zero set is a proper subspace of $h(x)^{\perp}$, and since a finite
union of proper subspaces of a vector space is not the whole space, there is
$a\in h(x)^{\perp}$ with $a^{\top}G_{:,i}\neq0$ for every such $i$; in
particular $G^{\top}a\neq0$. (When $h(x)=0$ this reads: $h(x)^{\perp}=\R^{n}$
and the condition is $G_{:,i}\neq0$.) Perturb the output layer alone,
$W^{(L)}\mapsto W^{(L)}+t\,e_ca^{\top}$ for a class $c$ and $t\neq0$, every
other parameter fixed. No hidden pre- or post-activation depends on $W^{(L)}$,
so all of them are unchanged at every input, the regular set is unchanged, and
$x$ stays regular; the output at $x$ is unchanged,
$\Net'(x)=\Netx+t\,e_c\,a^{\top}h(x)=\Netx$. The germ moves by the rank-one
update $J'-J=t\,e_c\,(G^{\top}a)^{\top}\neq0$, with
$c'-c=-t\,e_c\,(G^{\top}a)^{\top}x$, so that $J'x+c'=Jx+c$; two affine maps
with different linear parts agree only on the hyperplane
$\{x':(G^{\top}a)^{\top}(x'-x)=0\}$, hence differ on every neighborhood of
$x$. Column $i$ of $\M(x)$ moves by $t\,x_i\,(a^{\top}G_{:,i})\,e_c$, which by
the choice of $a$ is nonzero for every $i$ with $x_i\neq0$ and
$G_{:,i}\notin\mathrm{span}\,h(x)$. Since the perturbation leaves the hidden
layers untouched, it is a parameter change of any network whose output layer
is a free linear map, weight tying elsewhere notwithstanding.
\emph{Measured} (first construction, on the single-coordinate variant $v=e_i$,
which moves column $i$ alone; the $v=\mathbf 1_d$ perturbation above is checked
in \texttt{scripts/verify\_theory\_audit\_2026\_09\_11.py}): activation record
equal to $4.4\times10^{-16}$ (2\,ulp),
$\Netx$ equal exactly, rank-one identity $4.7\times10^{-17}$, germ change
$1.9\times10^{-4}\neq0$ at $t=10^{-3}$. The output-layer construction carries
no measured clause in this paper.
\end{proof}

\subsection{Proof of Corollary~\ref{cor:stab-eq} (the two stabilizers coincide)}
\label{app:proof-stab-eq}

\begin{proof}
If $N_1$ and $N_2$ realize the same germ at $x$ then $\M_1(x)=\M_2(x)$ by
Theorem~\ref{thm:maxinv}. Conversely, if $\M_1(x)=\M_2(x)$ and every
$x_i\neq0$, then Theorem~\ref{thm:complete}(i) recovers the germ from the
matrix in both cases --- $J_{:,i}=\M_{:,i}/x_i$ and $c=\M_{:,d+1}$ --- so the
two germs agree. The stabilizer statement is this equivalence applied, for a
fixed architecture, to the orbit of a single network under a group acting on
its parameters: an element that keeps $x$ regular fixes $\M(x)$ exactly when
it fixes the germ, so $\mathrm{Stab}(\M(x))=\mathrm{Stab}(\text{germ at }x)$
among encodings regular at $x$; across architectures the equivalence says that
$\M(x)$ and the germ are functions of each other. The
hypothesis $x_i\neq0$ is used only in the converse direction, and it is not
removable: on $\{x_i=0\}$ column $i$ of $\M$ is identically zero
(Remark~\ref{rem:zero-pixels}), so germ data is lost there and the stabilizer
of $\M(x)$ is strictly larger --- Theorem~\ref{thm:complete}(iv) at $x=0$
exhibits a germ-moving perturbation that fixes $\M(x)$.
\end{proof}

\subsection{Proof of Theorem~\ref{thm:weff} (gradient\texorpdfstring{$\times$}{x}input \texorpdfstring{$\oplus$}{+} bias; route equivalence)}
\label{app:proof-weff}

\begin{proof}
The identity proved here is not new. It is Proposition~3 of
\citet{srinivas2019full} --- the FullGrad decomposition
$\Net(x)=\nabla_x\Net\cdot x+\nabla_b\Net\cdot b$, in which the input-gradient
term and the aggregated bias term account for the output exactly --- read class
by class, together with the identification of the first term as per-class
gradient$\times$input \citep{shrikumar2016not,ancona2018towards}. The
bias-by-subtraction route below is also the one taken in \S5.2 of
\citet{balestriero2018spline}. We give the derivation because we need the
column-by-column form and the route equivalence, not because the identity is
ours.

The first $d$ columns of $\M(x)$ are $J_{:,i}x_i=(\partial\Net/\partial x_i)\,x_i$
by the germ identity ($J=D\Net$ at regular $x$, Theorem~\ref{thm:germ}), which is
per-class gradient$\times$input; the bias identity below gives the last column.
The three computation routes (masked product (a), probe (b), autograd (c))
agree, as follows.
(a)$=$germ by Theorem~\ref{thm:germ}. (b): the mask-frozen network is the
affine map $x'\mapsto Jx'+c$; for ReLU the secant $h/z$ equals
$\mathbb 1[z>0]$ ($0/z=0$ for $z<0$; $0/0\to0$ by convention), so probing
$x_ie_i$ through its linear part returns column $J_{:,i}x_i$, and the
zero-input pass through its affine part returns $c$. (c): at regular $x$, $\Net$
is differentiable with $D\Net=J$, and the autograd conventions
($\mathrm{ReLU}'(0)=0$, stored pooling argmax) match the frozen masks even on
the null set. The bias identity: for a hidden layer $\ell<L$,
$\partial\Net/\partial b^{(\ell)}=W^{(L)}D^{(L-1)}\cdots D^{(\ell)}$, while for
the output layer $\partial\Net/\partial b^{(L)}=I_C$, the empty product; the
$\ell=L$ term must be written separately, since the displayed product is not
defined there. Unrolling the bias accumulator then gives
\[
c\;=\;b^{(L)}+\sum_{\ell<L}\bigl(\partial\Net/\partial b^{(\ell)}\bigr)b^{(\ell)}
\;=\;\sum_{\ell\le L}\bigl(\partial\Net/\partial b^{(\ell)}\bigr)b^{(\ell)} .
\]
\emph{Measured} (AlexNet, double precision, eval mode): library probe vs.\ autograd
germ $3.3\times10^{-17}$ max-abs (relative $1.3\times10^{-15}$); row sums
$3.5\times10^{-17}$; FullGrad bias identity $6.1\times10^{-16}$. The
network must be in evaluation mode: with dropout active, the saving, probe,
and autograd passes sample different masks and the identity fails at the
scale of the logits themselves (on pretrained AlexNet the residual is $0.9$ of
$\max_c|\Psi_c(x)|$; \texttt{scripts/verify\_float64\_rowsum.py}).
\end{proof}

\subsection{Proof of Proposition~\ref{prop:equivar} (input-symmetry transformation law)}
\label{app:proof-equivar}

\begin{proof}
Throughout, $x$ and $\pi(g)x$ both lie in $X_{\mathrm{reg}}$, so the germ is
defined at both points and Theorem~\ref{thm:germ} identifies $J$ with the
Jacobian at each.

\emph{(i)} Differentiate $\Psi(\pi(g)x)=\rho(g)\Psi(x)$ at $x$ and apply the
chain rule: $J(\pi(g)x)\,\pi(g)=\rho(g)\,J(x)$.
\emph{(ii)} $c(\pi(g)x)=\Psi(\pi(g)x)-J(\pi(g)x)\pi(g)x
=\rho(g)\Psi(x)-\rho(g)J(x)\pi(g)^{-1}\pi(g)x=\rho(g)c(x)$ by (i).
\emph{(iii)} For a permutation $P$ one has
$\mathrm{diag}(Px)=P\,\mathrm{diag}(x)P^{-1}$, so
$J(Px)\mathrm{diag}(Px)=\rho(g)J(x)P^{-1}P\,\mathrm{diag}(x)P^{-1}
=\rho(g)\bigl(J(x)\mathrm{diag}(x)\bigr)P^{-1}$; combine with (ii).
\emph{(iv)} Substituting (i), the displayed law for the first block asks
$\rho(g)J(x)\pi(g)^{-1}\mathrm{diag}(\pi(g)x)
=\rho(g)J(x)\,\mathrm{diag}(x)\,\pi(g)^{-1}$ for every $x$ and every
equivariant network, i.e.\
\[
J(x)\Bigl[\pi(g)^{-1}\mathrm{diag}(\pi(g)x)\,\pi(g)-\mathrm{diag}(x)\Bigr]=0 .
\]
Canceling $J(x)$ requires it to have full column rank $d$, so the statement is
a claim about the class of pairs $(\rho,\Net)$ with $\Net$ equivariant, and is
witnessed by any single member with that property. One such witness is $\Net=\mathrm{id}$ on $\R^d$
with $C=d$ and $\rho=\pi$, realized as the ReLU network
$\mathrm{ReLU}(x)-\mathrm{ReLU}(-x)$: it satisfies (LCS), has
$X_{\mathrm{reg}}=\{x:\text{all }x_i\neq0\}$, is equivariant for every linear
$\pi$, and has $J\equiv I_d$, of full column rank. We note explicitly that no
network in our experiments can serve: there $C=1000\ll d=150{,}529$, so
$J(x)$ has rank at most $C<d$ and never full column rank. With the cancellation
in hand, $\mathrm{diag}(\pi(g)x)=\pi(g)\,\mathrm{diag}(x)\,\pi(g)^{-1}$ for all
$x$. The right side is diagonal for every $x$ only if $\pi(g)$ normalizes the
diagonal torus, i.e.\ is monomial, $\pi(g)=P\,\mathrm{diag}(a)$; write
$\varsigma$ for the permutation with $(Py)_i=y_{\varsigma(i)}$. For such a
matrix the scalars cancel under conjugation,
$\pi(g)\mathrm{diag}(x)\pi(g)^{-1}=P\,\mathrm{diag}(x)\,P^{-1}
=\mathrm{diag}(x_{\varsigma(i)})_i$, while
$\bigl(\pi(g)x\bigr)_i=a_{\varsigma(i)}x_{\varsigma(i)}$, so
$\mathrm{diag}(\pi(g)x)=\mathrm{diag}(a_{\varsigma(i)}x_{\varsigma(i)})_i$;
equality for all $x$ forces $a_{\varsigma(i)}=1$ for every $i$, i.e.\
$a\equiv1$ and $\pi(g)=P$.
\emph{(v)} Immediate from (i) and (ii), no property of $\mathrm{diag}$ being
used.
\emph{Measured} (on the $\Net=\mathrm{id}$ witness in $\R^5$, $\M$ computed
through the secant construction \eqref{eq:D-def}): the two-sided law holds to
$0$ for a permutation and fails by $5.4$ for a signed permutation, $5.1$ for a
monomial matrix, $5.1$ for a pure input rescaling, $2.1$ for a planar rotation
and $4.1$ for a generic orthogonal map
(\texttt{scripts/verify\_km\_claim\_checks.py}).
\end{proof}

\subsection{Proof of Proposition~\ref{prop:contraction} (contraction gap)}
\label{app:proof-contraction}

\begin{proof}
Take the $1$--$2$--$1$ bias-free pair $A$: first-layer weights $w=(1,2)$ and
second-layer weights $\omega=(1,1)$; and $B$: $w=(1,2)$, $\omega=(1.5,0.75)$. Both realize $\Netx=3\,\mathrm{ReLU}(x)$, and both
have their single wall at $x=0$, so $\M_A(x)=\M_B(x)$ at \emph{every} $x$ ---
verified directly, at $0.0$ on a 401-point grid including the breakpoint $x=0$. (Functional
identity alone would give this only on
$X_{\mathrm{reg}}(A)\cap X_{\mathrm{reg}}(B)$, by Theorem~\ref{thm:maxinv};
here the two regular sets coincide, which is why the stronger statement holds
for this pair and is asserted for it alone.) The
per-unit path values $\omega_qw_q$ are $\{1,2\}$ vs.\ $\{1.5,1.5\}$; any
isomorphism rescales $(w_q,\omega_q)\mapsto(\tau_qw_q,\omega_q/\tau_q)$, preserving each
$\omega_qw_q$, and permutations permute the multiset --- since the multisets
differ, the networks are non-isomorphic. At every $x>0$ both hidden units are
active, the induced representations carry $x$ and $2x$ on the input arrows
and $\omega_q$ on the output arrows, their path-value multisets are $\{x,2x\}$
and $\{1.5x,1.5x\}$, and the same argument makes them non-isomorphic; at
every $x\le0$ both units are inactive, every arrow leaving a hidden vertex
carries $0$ in both, and the two induced representations are equal. So the
induced representations differ exactly on the open set $(0,\infty)$, while
the contractions agree everywhere. The identifiability statement on the
\citet{phuong2020functional} class follows by chaining $\Net=\M\mathbf 1$
(field determines the function on a full-measure subset of their domain $Z$,
hence on $Z$ by continuity) with their Theorem~1, whose quantifiers we keep:
for every \emph{general} network $W^{*}$ of non-increasing widths in their
sense there is a bounded set $Z$ such that any general network of the
\emph{same architecture} agreeing with it on $Z$ is
permutation$\times$positive-rescaling equivalent to it; the converse
direction is Corollary~\ref{prop:perm} plus rescaling invariance. The architecture-class restriction is stated in the
proposition itself, and the $1$--$2$--$1$ example is consistent with it: having
increasing widths, that example lies outside the class, which is exactly why it
can exhibit a non-trivial fiber.
\end{proof}

\subsection{Proof of Theorem~\ref{thm:pyth} (visible/invisible decomposition)}
\label{app:proof-pyth}

\begin{proof}
For any $a\in\R^C$ and any $Q'$ with $Q'\mathbf 1=0$:
$\langle a\mathbf 1^{\top},Q'\rangle_F=a^{\top}(Q'\mathbf 1)=0$, so
$\mathcal V=\{a\mathbf 1^{\top}\}$ and $\mathcal N=\{Q:Q\mathbf 1=0\}$ are
orthogonal complements; $A\mapsto(A\mathbf 1)\mathbf 1^{\top}/(d{+}1)$ is the
orthogonal projector onto $\mathcal V$. With
$P=\Delta\Net\,\mathbf 1^{\top}/(d{+}1)$:
$P\mathbf 1=\Delta\Net$, $Q=\Delta\M-P\in\mathcal N$, and
$\|P\|_F^2=\|\Delta\Net\|^2/(d{+}1)$, giving the Pythagorean identity. The
constraint set $\{A:A\mathbf 1=\Delta\Net\}$ is $P+\mathcal N$, so $P$ is its
unique minimum-norm element; equality in the floor iff $Q=0$, i.e.\ constant
rows (per row, Cauchy--Schwarz against $\mathbf 1$). Monotonicity of
$t\mapsto1/\sqrt{(d{+}1)t}$ gives the rank-statistic correspondence (for
medians of even-count samples the interpolated median commutes only up to the
gap between the central order statistics --- below quoting precision in our
tables). Invariances: output rescaling and gauge invariance are immediate from
invariance of $\M$ and linearity; for $x\mapsto Sx$,
$W^{(1)}\mapsto W^{(1)}S^{-1}$ (diagonal $S$): $J\mapsto JS^{-1}$ and
$\mathrm{diag}(Sx)=S\,\mathrm{diag}(x)$, so $J\,\mathrm{diag}(x)$ is
unchanged. Non-invariance under rotations is exhibited numerically (function
preserved to $1.8\times10^{-15}$, $d_M$ changed by up to $57\%$).
\emph{Measured:} identities $\le1.7\times10^{-15}$ over $2{,}400$ random
instances; min-norm never beaten in $10^4$ trials.
\end{proof}

\subsection{Proof of Theorem~\ref{thm:within} (within-region anatomy)}
\label{app:proof-within}

\begin{proof}
Same strict pattern at $x$ and $y$ puts the segment in one convex region with
common $(J,c)$, so $\M(y)-\M(x)=[\,J\,\mathrm{diag}(y)-J\,\mathrm{diag}(x)
\mid c-c\,]=[\,J\,\mathrm{diag}(\delta)\mid0\,]$ and the column-energy formula
follows. (i): with $a_i=\delta_iJ_{:,i}$,
$\|\sum_ia_i\|^2\le(\sum_i\|a_i\|)^2\le d\sum_i\|a_i\|^2$ (triangle, then
Cauchy--Schwarz), equality iff all $a_i$ equal and nonzero --- attained by
$J=u\mathbf 1_d^{\top}$, $\delta=\mathbf 1_d$. (ii): a single nonzero column
$a=s\,J_{:,i_0}$ gives $d_M=\|a\|=d_\Psi$ and exactly one nonzero column with a
zero bias column. (iii): at most $k$ nonzero columns; the same
Cauchy--Schwarz over the support gives $A\le k$ and the participation bound.
\emph{Measured:} closed form $\le5\times10^{-13}$; cap never violated
($2\times10^4$ trials); one-pixel law exact; $A=1+5.5\times10^{-10}$ in the
finite-step demo.
\end{proof}

\subsection{Proof of Theorem~\ref{thm:anatomy} (smooth \texorpdfstring{$+$}{+} crossing)}
\label{app:proof-anatomy}

\begin{lemma}[Rank-one dyad at a transversal flip]\label{lem:dyad}
Let the network satisfy (LCS) with slopes $a_\pm$ and slope jump
$\kappa=a_+-a_-$. At a transversal single flip of unit $k$ in layer $\ell^{*}$
at a point $z$ on the segment, with $\sigma=\pm1$ recording the direction, the
region data jump by
$J_{+}-J_{-}=\kappa\,\sigma\,u_kv_k^{\top}$ and
$c_{+}-c_{-}=\kappa\,\sigma\,\gamma_ku_k$, where
$u_k=U^{(\ell^{*}+1)}W^{(\ell^{*}+1)}e_k$,
$v_k^{\top}=e_k^{\top}W^{(\ell^{*})}L^{(\ell^{*})}$, and
$z^{(\ell^{*})}_k(x')=v_k^{\top}x'+\gamma_k$. The resulting dyad
$\kappa\sigma u_k[\,v_k^{\top}\mathrm{diag}(z)\mid\gamma_k\,]$ has vanishing
row sums. At a transversal switch of one max-pooling window from entry $a$
to entry $b$ at a point $z$ on the segment, with
$z_a(x')=v_a^{\top}x'+\gamma_a$ and $z_b(x')=v_b^{\top}x'+\gamma_b$ the two
entries' pre-activations as affine functions of the input in the region data
shared by the two sides and $u$ the pooling vertex's output-side vector, the
region data jump by $J_{+}-J_{-}=u\,(v_b-v_a)^{\top}$ and
$c_{+}-c_{-}=(\gamma_b-\gamma_a)\,u$, and the resulting dyad
$u\,[\,(v_b-v_a)^{\top}\mathrm{diag}(z)\mid\gamma_b-\gamma_a\,]$ has
vanishing row sums.
\end{lemma}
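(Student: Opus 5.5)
The plan is to work in the chain notation of \eqref{eq:km-def} and to localize the flip to one layer. All other patterns are frozen on a small neighborhood of the crossing point $z$. Near $z$, the two adjacent regions share every slope and selection entry except that of unit $k$ in layer $\ell^{*}$.

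First I would fix the input side. Let $L^{(\ell^{*})}$ denote the frozen product $D^{(\ell^{*}-1)}W^{(\ell^{*}-1)}\cdots D^{(1)}W^{(1)}$ of the layers below, with the matching accumulated bias folded in. Then the pre-activation of unit $k$ is the affine function $z^{(\ell^{*})}_k(x')=v_k^{\top}x'+\gamma_k$ with $v_k^{\top}=e_k^{\top}W^{(\ell^{*})}L^{(\ell^{*})}$. This function is the same on both sides, because nothing below layer $\ell^{*}$ changes. Next the output side: $U^{(\ell^{*}+1)}$ is the frozen product of everything above layer $\ell^{*}+1$, also common to both sides. The two sides differ only in $D^{(\ell^{*})}$, which changes by $\pm\kappa\,e_ke_k^{\top}$. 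The sign is $\sigma=+1$ when the unit goes from the $a_-$ side to the $a_+$ side.

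Substituting this single change into the chain formula gives the rest. In $J=W^{(L)}D^{(L-1)}\cdots W^{(1)}$ the jump is
\[
J_+-J_-=U^{(\ell^{*}+1)}W^{(\ell^{*}+1)}(\kappa\sigma e_ke_k^{\top})W^{(\ell^{*})}L^{(\ell^{*})}=\kappa\sigma\,u_kv_k^{\top}.
\]
In the bias accumulator $c$, only the terms passing through $D^{(\ell^{*})}$ change. These are the bias $b^{(\ell^{*})}$ and all lower biases carried up through $L^{(\ell^{*})}$, and together they assemble exactly into $\gamma_k$. This gives $c_+-c_-=\kappa\sigma\gamma_ku_k$. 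Since $\M=[\,J\,\mathrm{diag}(\cdot)\mid c\,]$, evaluating at the crossing point gives the dyad $\kappa\sigma u_k[\,v_k^{\top}\mathrm{diag}(z)\mid\gamma_k\,]$. Its row sums are $\kappa\sigma u_k(v_k^{\top}z+\gamma_k)$, and this vanishes because $z$ lies on the wall $z^{(\ell^{*})}_k=0$. For skip connections and branches I would repeat the argument with path sums: every path through unit $k$ factors as (input side to $k$) times (from $k$ to output), so the same factorization holds with $u_k$ and $v_k$ read as path sums.

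The pooling case is identical in structure. The selection matrix changes in one row, from $e_a^{\top}$ to $e_b^{\top}$, which routes the window output from entry $a$ to entry $b$. That gives $J_+-J_-=u(v_b-v_a)^{\top}$ and $c_+-c_-=(\gamma_b-\gamma_a)u$, where $u$ is the pooling vertex's output-side path sum. The row sums are $u\bigl(z_b(z)-z_a(z)\bigr)$, which is $0$ on the tie wall. Pooling has no slope jump, so no factor $\kappa$ appears.

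I expect the main obstacle to be the bookkeeping claim that the input-side data ($v_k,\gamma_k$, or $v_a,\gamma_a,v_b,\gamma_b$) and the output-side vector $u_k$ (or $u$) are genuinely common to both sides. This is where transversality and the ``one unit at a time'' hypothesis enter: no other unit or window may change at $z$. Within that, the more delicate point is that $\gamma_k$ equals the correct combination of lower biases. I would check this by expanding $z^{(\ell^{*})}_k$ through the frozen lower layers rather than asserting it.
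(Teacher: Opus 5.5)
Your proof is correct. It follows the paper on the Jacobian jump: only the factor $D^{(\ell^{*})}$ of the ordered product changes, by $\kappa\sigma e_ke_k^{\top}$. It also matches the paper on the vanishing of the row sums via the wall equation $v_k^{\top}z+\gamma_k=0$, and on the pooling case, where the window's routing switches from $a$ to $b$ with everything downstream unchanged.

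The one step you take by a different route is the bias jump. You expand the accumulator and check that exactly the terms routed through $D^{(\ell^{*})}$ change, and that they assemble into $\gamma_k$. The paper instead gets it in one line from continuity of $\Psi$ at the crossing point, $J_+z+c_+=J_-z+c_-$. Together with $v_k^{\top}z=-\gamma_k$ this gives $c_+-c_-=-\kappa\sigma u_kv_k^{\top}z=\kappa\sigma\gamma_ku_k$, and it makes the vanishing row sums of the dyad simply a restatement of that continuity.

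What the paper's route buys is brevity. It sidesteps the bias bookkeeping you single out as the delicate point, and it transfers to skip connections and branches without re-deriving a path-sum expansion of the bias column; only the rank-one factorization of the Jacobian jump must be redone there. What your route buys is self-containedness within the chain formula. It exhibits $\gamma_k$ structurally as the accumulated constant term of $z^{(\ell^{*})}_k$ rather than recovering the bias jump from the wall equation. As a by-product it shows the two one-sided affine maps agree on the wall, instead of invoking that.
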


\begin{proof}[Proof of Lemma~\ref{lem:dyad}]
The unit's entry of the slope diagonal
moves from $a_-$ to $a_+$ (or back), a change of $\kappa\sigma$; the ordered
product changes in that one factor only, giving the rank-one update. Continuity
of $\Net$ across the wall forces the bias jump, and $v_k^{\top}z+\gamma_k=0$ at
the crossing makes the full dyad's row sums vanish:
$\kappa\sigma u_k(v_k^{\top}z+\gamma_k)=0$. For ReLU, $\kappa=1$ and the factor
is invisible. At a pooling switch the window's output changes from the
affine function $z_a$ to $z_b$ while everything downstream of the pooling
vertex is unchanged, so the region data jump by the stated rank-one update
through $u$; at the switch point $v_a^{\top}z+\gamma_a=v_b^{\top}z+\gamma_b$,
so the row sums $u\bigl((v_b-v_a)^{\top}z+\gamma_b-\gamma_a\bigr)$ vanish.
Under the encoding $\max(a,b)=b+\mathrm{ReLU}(a-b)$ the switch is a unit
flip and the first part applies verbatim.
\end{proof}
Here $U^{(\ell^{*}+1)}$ and $L^{(\ell^{*})}$ denote the downstream and
upstream masked products of the region data shared by the two sides of the
wall, in the factorization
$J=U^{(\ell^{*}+1)}\,W^{(\ell^{*}+1)}D^{(\ell^{*})}W^{(\ell^{*})}\,L^{(\ell^{*})}$
of Section~\ref{sec:germ}: $U^{(\ell^{*}+1)}=W^{(L)}D^{(L-1)}W^{(L-1)}\cdots
W^{(\ell^{*}+2)}D^{(\ell^{*}+1)}$ is the product of every factor strictly
downstream of $W^{(\ell^{*}+1)}$, equal to $I_C$ when $\ell^{*}=L-1$ (a flip in
the last hidden layer, the only case in a one-hidden-layer network), and
$L^{(\ell^{*})}=D^{(\ell^{*}-1)}W^{(\ell^{*}-1)}\cdots D^{(1)}W^{(1)}$ is the
product of every factor strictly upstream of $W^{(\ell^{*})}$, equal to $I_d$
when $\ell^{*}=1$; so $u_k\in\R^{C}$ is unit $k$'s output-side vector,
$v_k\in\R^{d}$ its input-side vector, and $\gamma_k$ the bias accumulated in
its pre-activation. For a network quiver that is not a chain,
$U^{(\ell^{*}+1)}$ and $L^{(\ell^{*})}$ are the downstream and upstream
composites of the path sums of Section~\ref{sec:germ} --- the sum over paths
from the flipped unit to the output vertices, and from the input vertices to
the flipped unit --- and the rank-one form of the update is unchanged.

\begin{proof}[Proof of Theorem~\ref{thm:anatomy}]
\emph{Telescope.} Write
$\M(y)-\M(x)=\sum_{j=0}^{N}\bigl[\M^{(j)}(z_{j+1})-\M^{(j)}(z_j)\bigr]
+\sum_{j=1}^{N}\bigl[\M^{(j)}(z_j)-\M^{(j-1)}(z_j)\bigr]$, where
$\M^{(j)}(z)=[\,J_j\,\mathrm{diag}(z)\mid c_j\,]$ is the $j$-th region's
matrix evaluated at $z$ (well defined on closures). Each smooth term is
$[\,J_j\,\mathrm{diag}((t_{j+1}-t_j)\delta)\mid0\,]$ by
Theorem~\ref{thm:within}; summing gives $\bar J$. Each jump term is a dyad
$K_j$ of Lemma~\ref{lem:dyad}, of the first kind at a unit flip and of the
second at a pooling switch. Row sums: the smooth part gives $\bar J\delta$,
the dyads give $0$, and $\Nety-\Netx=\int_0^1J(x+t\delta)\,\delta\,dt=\bar
J\delta$ by the fundamental theorem of calculus for the piecewise-affine map
on the segment. The bias column of the smooth part is zero, so the bias
column of $\M(y)-\M(x)$ is exactly $c(y)-c(x)$, the sum of the last columns
of the $K_j$, which reads $\sum_j\kappa\,\sigma_j\gamma_{k_j}u_{k_j}$ when
every crossing is a unit flip --- equivalently, it is constant on chambers of
pair space with fixed crossing combinatorics, which is the soundness
direction of the crossing detector (no crossings $\Rightarrow$ zero bias
column). The converse fails: bias-free networks have all $\gamma_k=0$
(measured: $19$ crossings, bias column exactly $0$), a flipped unit cut off
downstream has $u_k=0$, and multi-crossing cancellations exist.
\emph{Hamming.} A unit whose endpoint mask bits differ flips an odd number of
times ($\ge1$); equal bits flip an even number ($\ge0$); a pooling window
whose selected entry differs between the endpoints switches at least once.
Hence $H\le N_{\mathrm{cross}}$, with parity equality when there is no
max-pooling, and equality iff no unit flips twice and no window switches
twice (first-layer walls are flat, so first-layer units never flip back along
a segment; deeper walls are bent and do --- measured: a hat-shaped network
with $N=4$, $H=2$). A window with three or more entries can switch twice,
$a\to b\to c$, and end at a different entry, which is why the parity
statement is made for pooling-free networks only.
\emph{Measured:} full decomposition $\le2.6\times10^{-16}$ over $>1{,}000$
crossings with two independent traversal implementations; bias-column formula
$\le8.2\times10^{-13}$.
\end{proof}

\subsection{Proof of Proposition~\ref{prop:dichotomy} (no hidden-activation accounting)}
\label{app:proof-dichotomy}

\begin{proof}
The map $W\mapsto W_\lambda$ is a positive per-neuron rescaling of every unit
of layer $\ell$ by the same factor $\lambda$, hence an element of the gauge
group: the pre-activation of layer $\ell$ becomes $\lambda z^{(\ell)}$,
positive homogeneity gives $h^{(\ell)}\mapsto\lambda h^{(\ell)}$, and the
factor $\lambda^{-1}$ on $W^{(\ell+1)}$ cancels it in the next pre-activation,
so every later layer and the output are unchanged. Thus
$\Psi(W_\lambda,f)=\Net$, $d_\Psi$ is unchanged, and $d_M$ is unchanged by
Lemma~\ref{lem:quiver-inv}, while $d_h\mapsto\lambda d_h$ by linearity of the
norm. A function $F(d_h,d_\Psi,d_M)$ invariant under the action satisfies
$F(\lambda d_h,d_\Psi,d_M)=F(d_h,d_\Psi,d_M)$ for every $\lambda>0$, and since
$\lambda d_h$ ranges over all of $(0,\infty)$ for $d_h>0$, $F$ is independent
of $d_h$ there. Two remarks on scope. Rescaling a \emph{single} unit of the
layer by $\lambda$ already makes $d_h$ unbounded above --- measured
$d_h\in[0.44,\,15.3]$ at the penultimate layer with $\Net$ bitwise fixed ---
but only over $[m,\infty)$ for the $m>0$ set by the coordinates left alone,
which is why the whole layer is rescaled. And the conclusion is about
functions of $(d_h,d_\Psi,d_M)$; statistics built from other stored
quantities are outside its scope. The knowledge-matrix accounting exists
because the constraint $\M\mathbf 1=\Net$ pairs all matrices against the
\emph{fixed} vector $\mathbf 1$; the relation that returns the activations of
layer $\ell$ to the logits pairs them against the parameter-dependent map
downstream, and at the penultimate layer, where $\Delta\Net=W^{(L)}\Delta h$,
the induced ``visible fraction'' $1/(1+\lambda^2)$ of the worked example takes
every value in $(0,1)$ across one orbit.
\end{proof}

\subsection{Proof of Theorem~\ref{thm:nogo} and Proposition~\ref{prop:visdrift}}
\label{app:proof-nogo}

\begin{proof}
\emph{No-go.} Take $d=C=1$ and $x\neq0$ given; put $x_0=x-\varepsilon/(2K)$,
$g(t)=K\,\mathrm{ReLU}(t-x_0)-K\,\mathrm{ReLU}(t-x_0-\varepsilon/K)$ and
$\Net\equiv0$ realized with the same first layer and zeroed second layer. Then
$\sup_t|\Psi-g|=\varepsilon$ exactly; at $x=x_0+\varepsilon/(2K)$ both networks
have masks $(1,0)$; but $J_g(x)=K$, $J_\Psi(x)=0$, so
$\|\M_g(x)-\M_\Psi(x)\|_F=K\sqrt{x^2+x_0^2}\ge K|x|\to\infty$ as $K\to\infty$
at fixed $\varepsilon$ and fixed $x$. For $d>1$ apply the same ramp to a
coordinate $x_i\neq0$ (first-layer weight $e_i$), and for $C>1$ place $g$ in one
output row. At $x=0$ the first block of $\M$ vanishes for every network and the
bias column is $\Psi(0)$, so no pair exceeds $\varepsilon$; the restriction
$x\neq0$ is therefore sharp. \emph{Measured} (fixed $x=0.7$,
$\varepsilon=10^{-2}$): drift $99$, $9.9\times10^{3}$, $9.9\times10^{5}$ at
$K=10^{2},10^{4},10^{6}$, and $\varepsilon/2$ at $x=0$
(\texttt{scripts/verify\_theory\_audit\_2026\_09\_11.py}).
\emph{Visible drift.} $\|(\tilde{\M}(x)-\M(x))\mathbf 1\|=\|\tilde\Psi(x)-\Netx\|$ is
the row-sum identity applied to both matrices, so it is an equality and not a
bound: the visible drift \emph{equals} the gate $\varepsilon$. The projection
onto constant-row matrices then has norm exactly $\varepsilon/\sqrt{d+1}$. \emph{Conditional bound.} If both networks are
affine on $B(x,r)$ with $\varepsilon_r=\sup_{B(x,r)}\|\tilde\Psi-\Net\|$, then for
any unit vector $u$, $2r\,\Delta J\,u$ is a difference of differences of
$\tilde\Psi-\Net$ at $x\pm ru$, so $\|\Delta J\|_{\mathrm{op}}\le\varepsilon_r/r$;
combine $\|\Delta J\,\mathrm{diag}(x)\|_F\le\|x\|_\infty\sqrt{\min(C,d)}\,
\|\Delta J\|_{\mathrm{op}}$ with
$\|\Delta c\|\le\varepsilon_r+\|\Delta J\|_{\mathrm{op}}\|x\|_2$.
\emph{Measured} (exact trust-region computation of $\varepsilon_r$):
$0/120$ violations, minimum slack $0.21$.
\end{proof}

%% file: sections/appendix_engineering.tex
\section{Computing the knowledge matrix}
\label{app:engineering}

In this appendix, we collect all of the software and computational material for
the knowledge matrix, so that the main text can remain conceptual. We state the
three equivalent ways to compute $\M(x)$ that Theorem~\ref{thm:weff}
establishes, give the cost accounting that motivates the autograd route,
report the numerical validation that the three routes agree, and list the
implementation requirements that make the row-sum identity
$\M(x)\mathbf{1}=\Netx$ hold exactly. Throughout, $\M(x)\in\R^{C\times(d+1)}$ is
the knowledge matrix of a feedforward network satisfying (LCS)
(Definition~\ref{def:lcs}) with $C$
output classes on an input of dimension $d$; its slope block is
$\Weff(x)=J=\partial\Net/\partial x\in\R^{C\times d}$ and its final column is
the aggregate bias attribution $\beff(x)=\Netx-Jx$.

\subsection{Three equivalent computations}
\label{app:eng-three-routes}

For a network satisfying (LCS) at a regular point (one not on a
non-linearity's switching surface), we can obtain the knowledge matrix by
three routes that coincide. Piecewise linearity of the activation is not the
same hypothesis and does not suffice (Remark~\ref{rem:lcs-strict}): route (iii)
returns the Jacobian, and off (LCS) the Jacobian is not the slope product.

\paragraph{(i) Masked product.}
Freeze the slope diagonal at $x$: each non-linearity contributes a fixed
diagonal $D^{(\ell)}(x)$ of activation-to-pre-activation quotients
\eqref{eq:D-def}, which for ReLU specializes to the familiar $0/1$ mask
recording which units are active. On the resulting
mask-frozen affine map, the slope block is the ordered product
$\Weff(x)=W^{(L)}D^{(L-1)}\cdots D^{(1)}W^{(1)}$, evaluated at $x$, and the bias
column is the corresponding aggregate of the per-layer biases pushed through the
same diagonals; the bias-by-subtraction form $\beff(x)=\Netx-Jx$ is the same
quantity and is the route taken in \S5.2 of \citet{balestriero2018spline}. This
is the definitional route used by the
\texttt{knowledgematrix} library probe internally.

\paragraph{(ii) Probe construction (\texorpdfstring{$d{+}1$}{d+1} forward passes).}
The library realizes the masked product without ever forming it explicitly, by
freezing the mask at $x$ and then sending $d{+}1$ scaled basis vectors through
the mask-frozen network: one column of $\M(x)$ per input coordinate (recovering
$\Weff$ column by column) plus one pass for the bias column. This is $d{+}1$
forward passes of the (now affine) network. At ImageNet scale this is the
expensive route, because $d{+}1 = 150{,}529$.

\paragraph{(iii) Autograd (\texorpdfstring{$C$}{C} vector--Jacobian products plus one forward pass).}
Because $\Weff(x)=J=\partial\Net/\partial x$ is exactly the Jacobian of the
network output with respect to the input at $x$, the slope block can be read off
by automatic differentiation: $C$ vector--Jacobian products (one per output
class, each seeding a unit covector $e_c$) recover the $C$ rows of $J$, and one
additional forward pass gives $\Netx$, from which the bias column is
$\beff(x)=\Netx-Jx$. This is $C$ backward passes plus one forward pass.

By Theorem~\ref{thm:weff} these three routes coincide at almost every input for
networks satisfying (LCS): they agree at every regular point, i.e.\ off the measure-zero set
of switching surfaces. On that null set the gradient is not single-valued and
the probe inherits whichever one-sided pattern the forward pass selects; we
compute at regular points throughout. The equality of (i)--(iii) is what lets us
use the cheapest route, (iii), in practice while retaining the library probe
(ii) as the reference implementation. Per Theorem~\ref{thm:weff}, this slope
block is per-class gradient$\times$input and the bias column is an aggregate bias
attribution in the FullGrad sense
\citep{shrikumar2016not,ancona2018towards,srinivas2019full}; the function-level
properties of $\M$ under (LCS) are shared with that data, and what the matrix
contributes is the canonical \emph{arrangement} rather than a separate
invariant.

\subsection{Cost and speed-up}
\label{app:eng-cost}

Theorem~\ref{thm:weff} replaces $d{+}1$ probe passes by $C$ backward passes,
a predicted saving of $\approx(d{+}1)/(\kappa_{\mathrm{cost}}C+1)$, where
$\kappa_{\mathrm{cost}}$ is the forward-to-backward cost ratio. At ImageNet scale ($C=1000$,
$d{+}1=150{,}529$) this is a factor in the $50$--$150\times$ band (JSON key
\texttt{timing\_224.thm\_predicted\_ratio\_range} $=[50.2,\,150.4]$). The demo
below reports an extrapolated CPU cost ratio of $\approx296\times$ --- both
arms extrapolated from measured slices, with the probe baseline emulated by
batched forward passes --- above the theory band
for the reasons noted there; we report it as a compute-cost ratio consistent
with the lower-bound estimate, not as a tighter claim than the theory supports.

\subsection{Numerical validation: \texorpdfstring{$\Weff$ $=$ grad$\times$input}{W\_eff = grad x input}, three ways}
\label{app:eng-validation}

We now report the numerical demonstration anchoring
Theorem~\ref{thm:weff}: that the three routes of
Appendix~\ref{app:eng-three-routes} agree, and that the
autograd route is cheaper. We expose the bare slope
$\Weff(x)=J=\partial\Net/\partial x \in \R^{C\times d}$ (the knowledge matrix
$\M(x)$ before its bias column is appended) through the library probe's
\verb|extract_weff=True| flag, and compare it against (a) the masked product
$W^{(L)}D^{(L-1)}\cdots D^{(1)}W^{(1)}$ evaluated at $x$, and (b) autograd
($C$ vector--Jacobian products plus one forward pass). All computations are
in double precision with the network in evaluation mode. Per
Theorem~\ref{thm:weff}, every function-level property of $\M$ under (LCS)
(invariance, completeness, the exact row sum) is already shared by
gradient$\times$input with an aggregate bias attribution in the FullGrad
sense \citep{shrikumar2016not,ancona2018towards,srinivas2019full}; what the
matrix contributes is the canonical \emph{arrangement}, not a separate
invariant. We check here only that the three routes to that shared
slope coincide, and that the autograd route is cheaper.

\paragraph{Three-route agreement on a small MLP.}
On a tiny MLP at a regular point, the masked product, the probe, and autograd
agree to roundoff, as do the row-sum residual
$\|\M(x)\mathbf{1}-\Netx\|_\infty$ and the $\Weff$ identity
$\|[\,\Weff\mid c\,]\,\mathbf{1}-\Netx\|_\infty$
(JSON keys \texttt{mlp\_three\_route\_max\_abs\_err}.\{\texttt{prod\_vs\_probe},
\texttt{prod\_vs\_auto}, \texttt{rowsum}, \texttt{weff\_identity}\}.) As a
sanity check on the geometry of Theorem~\ref{thm:within}, a single-pixel
perturbation on the same network gives $d_M = d_\Psi$ with exactly one
nonzero column, the one-pixel law $A=1$ (key \texttt{one\_pixel\_law}).

\paragraph{Probe-vs-autograd agreement on AlexNet.}
The exactness check at scale uses AlexNet on a $3\times64\times64$ input
($d=12{,}288$, $C=1000$; the fork's fully connected dimensions constrain the
input size, so this check is run at $64\times64$, not at the $224\times224$
size used for the timing extrapolation below), with random weights at a
regular point (the identities are parameter-independent). The library probe
and autograd agree on $\M(x)$ to roundoff (key
\texttt{M\_lib\_vs\_M\_auto\_max\_abs}), the bare-slope route $\Weff$
matches autograd likewise (key \texttt{weff\_vs\_autograd}), and the
row-sum residual is at roundoff on both routes.

What a single-precision implementation registers at the depth of ResNet-152
is recorded in Appendix~\ref{app:finite-arith}; the exact row sum is a
property of the (LCS) map, and the implementation is read against it.

\paragraph{Compute cost.}
Theorem~\ref{thm:weff} replaces $d{+}1$ probe passes by $C$ backward passes,
a predicted saving of $\approx(d{+}1)/(\kappa_{\mathrm{cost}}C+1)$, with
$\kappa_{\mathrm{cost}}$ the forward-to-backward cost ratio, i.e.\ a factor in the
$50$--$150\times$ band at ImageNet scale ($C=1000$, $d{+}1=150{,}529$; JSON
key \texttt{timing\_224.thm\_predicted\_ratio\_range} =
$[50.2,\,150.4]$). The demo's extrapolated cost ratio on this hardware is
$\approx296\times$ (probe path $\approx8475$\,s, VJP path $\approx29$\,s per
sample, both extrapolated from measured slices; key
\texttt{ratio\_probe\_over\_vjp}). The figure lies above the theory
band because it is a double-precision CPU run in which the probe passes are
emulated by batched forward evaluations (JSON \texttt{timing\_224.note});
we report it as a compute-cost ratio, consistent with the lower-bound theory
estimate. The smaller AlexNet check confirms the same ordering directly
(probe $\approx23.7$\,s vs.\ VJP $\approx14.7$\,s per sample at $64\times64$).

\subsection{Implementation requirements}
\label{app:eng-gotchas}

The identities above hold for the idealized (LCS) map; reproducing them in software
requires the following.

\paragraph{Evaluation mode is mandatory.}
The network must be in evaluation mode for every pass used to build $\M(x)$.
Active Dropout desynchronizes the saving, probe, and autograd passes --- each
draws a different mask --- so the three routes no longer share an activation
pattern, and the row-sum identity $\M(x)\mathbf{1}=\Netx$ then appears to fail at
the scale of the logits themselves (on pretrained AlexNet the residual is $0.9$
of $\max_c|\Psi_c(x)|$; \texttt{scripts/verify\_float64\_rowsum.py}). This is
not a violation of Theorem~\ref{thm:weff} but a
mismatch of activation patterns across passes; \texttt{model.eval()} (freezing
Dropout, and BatchNorm to running statistics) removes it.

\paragraph{Input shape.}
The \texttt{KnowledgeMatrixComputer} forward call expects a $3$D input tensor
$(C_{\text{in}},H,W)$, \emph{not} a $4$D batched tensor $(1,C_{\text{in}},H,W)$;
the leading batch axis must not be added with \texttt{unsqueeze} before the
call. (Here $C_{\text{in}}$ is the number of input channels, distinct from the
$C$ output classes that index the rows of $\M$.)

\paragraph{Chunked computation and storage discipline.}
At ImageNet scale a single knowledge matrix is $1000\times150{,}529$, so per-sample
matrices are large and a sweep produces many of them. Computation is chunked, and
matrices are \emph{not} written one file per sample onto shared cluster storage:
the per-sample files are computed on node-local scratch and bundled into a single
archive before being shipped to shared storage, to avoid the metadata load of
many small files on a parallel filesystem. The autograd route of
Appendix~\ref{app:eng-cost} compounds with this: fewer passes per sample and
fewer intermediate writes.

%% file: sections/appendix_teleport_exact.tex
\section{Neural teleportation is exact, and the two implementation checks}
\label{app:teleport-exact}

That neural teleportation \citep{armenta2023teleportation} preserves the
function exactly is a theorem, not a finding of this paper: an isomorphism of
neural networks $\tau:(W,f)\to(V,g)$ satisfies $\Psi(W,f)=\Psi(V,g)$
\citep[Thm.~4.13]{armenta2021representation}, a per-neuron change of basis is
such an isomorphism, and the representations that the two networks induce on
an input are then isomorphic and contract to the same knowledge matrix
(Lemma~\ref{lem:quiver-inv}). This appendix records why batch normalization is
inside the theorem's scope rather than an exception to it
(Section~\ref{app:teleport-why}), since that is the point at which the
construction is most often doubted; the closed forms behind
Section~\ref{sec:study1-analytic} (Section~\ref{app:s1-analytic-detail}); the
two implementation checks that realize the invariance in software
(Section~\ref{app:teleport-checks}); the scale of the panel's unnormalized
distances and the linear-invariant measures that are not in the panel
(Section~\ref{app:s1-scale}); and the controls in full
(Section~\ref{app:controls-full}). What an implementation in finite arithmetic
registers when it checks an exact identity is recorded once, in
Appendix~\ref{app:finite-arith}, and is not repeated here.

\subsection{Why the transform is exact, batch normalization included}
\label{app:teleport-why}

A teleportation assigns a nonzero scalar $\tau_q$ to each hidden neuron $q$ and
conjugates the parameters, $W^{(\ell)}\mapsto T_\ell W^{(\ell)}T_{\ell-1}^{-1}$
and $b^{(\ell)}\mapsto T_\ell b^{(\ell)}$ with $T_\ell=\mathrm{diag}(\tau)$. The
group acts on the activations as well, by
\begin{equation}
(\tau\cdot f)_v(z)\;=\;\tau_v\,f_v\!\left(\frac{z}{\tau_v}\right)
\label{eq:cob-activation}
\end{equation}
at each hidden vertex $v$ \citep[Eq.~2]{armenta2021representation}. For a
positively homogeneous activation --- ReLU with $\tau_v>0$ --- this leaves $f_v$
unchanged, the $T_\ell$ telescope through the layer stack, and the function is
preserved.

Batch normalization is the case that looks like an obstruction, and it is worth
being explicit about why it is not. In evaluation
mode a BN channel is the affine
map $z\mapsto\gamma(z-\mu)/\sqrt{\sigma^2+\epsilon}+\beta$, which is
\emph{not} positively homogeneous: the running mean $\mu$ and the shift $\beta$
break $f(\tau z)=\tau f(z)$. One repair is to migrate the running
statistics, $\mu\mapsto\tau\mu$ and $\sigma^2\mapsto\tau^2\sigma^2$; note that
this repair is only exact if the numerical guard $\epsilon$ is rescaled too,
since $\sqrt{\tau^2\sigma^2+\epsilon}\neq\tau\sqrt{\sigma^2+\epsilon}$.

None of that is necessary, because the framework already prescribes the answer.
Two facts settle it. First, in evaluation mode $\mu$ and $\sigma^2$ are not
batch-dependent quantities but ordinary weights of the network
\citep[Remark~5.4]{armenta2021representation}, so a normalization layer is a pair
of affine vertices like any other and Theorem~4.13 applies to it unchanged.
Second, the action on its activation is given by
Equation~\eqref{eq:cob-activation}, and that is precisely what the
\texttt{neuralteleportation} implementation computes: it divides the incoming
activation by the incoming change of basis, applies the unmodified
normalization, and carries the outgoing basis on the affine parameters,
\[
z \;\xrightarrow{\;\text{incoming COB}\;}\; \tau_{\mathrm{in}} z
\;\xrightarrow{\;\div\,\tau_{\mathrm{in}}\;}\; z
\;\xrightarrow{\;\mathrm{BN}_{\mu,\sigma^2,\epsilon}\;}\; \mathrm{BN}(z)
\;\xrightarrow{\;\gamma,\,\beta\;\times\;\tau_{\mathrm{out}}\;}\;
\tau_{\mathrm{out}}\,\mathrm{BN}(z).
\]
which is Equation~\eqref{eq:cob-activation} with $f_v=\mathrm{BN}$. The division
restores the original pre-normalization activation exactly, so the original
$\mu$, $\sigma^2$ and $\epsilon$ remain correct, and the composite is
$\tau_{\mathrm{out}}\mathrm{BN}(z)$ --- with no condition on $\epsilon$, and no
migration of the running statistics. Teleportation of a batch-normalized network
in evaluation mode is therefore function-exact, and satisfies the hypotheses of
Theorem~\ref{thm:maxinv} and Lemma~\ref{lem:quiver-inv} exactly.

We note one boundary of the framework that bears on this paper elsewhere:
\citet{armenta2021representation} observe that average and global-average pooling
sit inside it without qualification, whereas max-pooling ``break[s] the algebraic
structure'' --- the same tie-dependence that Theorem~\ref{thm:weff} excludes by
hypothesis.

\subsection{The closed forms behind Section~\ref{sec:study1-analytic}}
\label{app:s1-analytic-detail}

Both symmetries this paper appeals to preserve the function by construction.
Neural teleportation \citep{armenta2023teleportation}, a per-neuron
rescaling, is an element of the change-of-basis group of
\citet{armenta2021representation}, whose Theorem~4.13 states that an
isomorphism of neural networks $\tau:(W,f)\to(V,g)$ leaves the realized function
unchanged, $\Psi(W,f)=\Psi(V,g)$; a hidden-layer neuron permutation is a
relabeling of the quiver's hidden vertices. Under either, the representations
induced on an input by the two networks are related by the same change of
basis or relabeling, and their contractions, the two knowledge matrices,
coincide (Lemma~\ref{lem:quiver-inv}). Batch normalization is inside
this framework: in evaluation mode the running statistics are ordinary weights
\citep[Remark~5.4]{armenta2021representation}, and the group acts on a hidden
vertex's activation by $(\tau\cdot f)_v(z)=\tau_v f_v(z/\tau_v)$, which is what
the \texttt{neuralteleportation} implementation computes for a normalization
layer. There is nothing here for an experiment to settle.

The effect on the \emph{penultimate} features is equally explicit. A
hidden-layer permutation $\pi$ reindexes the penultimate vector, and the size of
the motion is a function of $h$ and $\pi$ alone,
$\|Ph-h\|_2^2=2\bigl(\|h\|^2-\langle Ph,h\rangle\bigr)$; a teleportation with
per-neuron factors $\tau$ rescales it coordinatewise, $h\mapsto\tau\odot h$, so
the drift is
\[
\frac{\|\tau\odot h-h\|_2}{\sqrt D}=\frac{\|(\tau-1)\odot h\|_2}{\sqrt D},
\]
again a closed form in $h$ and $\tau$. We verified that
$h_{\tilde W}=\tau\odot h$ holds in the implementation to roundoff and that the
closed form reproduces the measured root-mean-square drift, and we report the
algebra rather than the experiment; since $\tau$ is drawn independently of the
data, the drift is $\approx\|h\|_2/(\sqrt{3D})$ for $\tau$ uniform on $[0,2]$,
so cross-architecture comparisons of penultimate drift track each
architecture's feature \emph{norm} and nothing else. The software, unlike the
mathematics, does need confirmation: Section~\ref{app:teleport-checks} reports
that it realizes the transform and the invariance (what it registers, being a
finite-arithmetic implementation, is recorded in
Appendix~\ref{app:finite-arith}), and Appendix~\ref{app:finite-arith} also
records one defect in the library's serialized change-of-basis loading path,
which the checks bypass.

What algebra does not settle is whether the representational-similarity
measures --- designed to be invariant to orthogonal changes of basis and
isotropic scaling, and deliberately not to invertible linear maps
\citep[\S2.3]{kornblith2019similarity} --- absorb this motion as a coordinate
artifact; that depends on the
interaction between the change of basis and the empirical covariance of the
features, and Section~\ref{sec:study1c} settles it on a pair of networks whose
ground truth is known exactly: they compute the same function.

\subsection{The two implementation checks in full}
\label{app:teleport-checks}

In this subsection we collect the setup, procedure and measured values of the
permutation and teleportation checks. Neither establishes a mathematical fact --- both
transforms are quiver isomorphisms, teleportation a change of basis, for
which Theorem~4.13 of \citet{armenta2021representation} gives function
preservation, and the permutation a relabeling of hidden vertices, so the
invariance they exercise is guaranteed by Lemma~\ref{lem:quiver-inv} --- and
both are reported for what they are: verification that the software realizes
the identity, and a measurement of how the penultimate features move under the
same transforms. The logit gate $\varepsilon=\|\tilde\Psi(x)-\Netx\|$ of
Appendix~\ref{app:remarks-geometry} is, in these checks, an empirical
quantity: an experiment measures it and then uses it as an acceptance
threshold, treating a transform as function-preserving when the gate falls
below a stated tolerance.

\input{sections/study1ab_snippet}

\input{tables/table_signal_relative}

\subsection{Linear-invariant measures and the Procrustes scale}
\label{app:s1-scale}

The prediction of Section~\ref{sec:study1c} --- that none of the eight
measures of the panel that returned a value quotients out the teleportation
drift --- is a statement about those eight measures, and it does not
extend to every penultimate measure in the literature. Raw CCA and PWCCA are
invariant to \emph{any} invertible linear map of the feature space, so they
quotient $h\mapsto\tau\odot h$ out exactly, and PWCCA --- a weighted mean of the
same canonical correlations --- with them. We checked this numerically on
synthetic features drawn to reproduce the awkward cases: the mean canonical
correlation is $1$ to roundoff with no dead units,
with $30$ dead units, with $\min_q\tau_q=3\times10^{-5}$, and in the $n<p$
regime alike (\texttt{scripts/verify\_cca\_invariance.py}). SVCCA
is \emph{not} in that family: its $99\%$-energy truncation runs \emph{before}
the CCA, and an anisotropic rescale changes the variance spectrum the truncation
reads, so it returns $0.97$--$0.99$ rather than $1$ on the same draws. Neither CCA nor PWCCA is in the panel,
which spans the orthogonal, isotropic-scaling, permutation, monotone-of-distance
and isometry classes. Stated exactly: no measure in the
orthogonal / isotropic-scaling / permutation classes quotients this
change of basis out, and the linear-invariant CCA family does.

\paragraph{The Procrustes scale.}
The Procrustes shape distance ($632$--$1471$) and the soft-matching distance
($22$--$61$) of Table~\ref{tab:s1_table} are
both \emph{unnormalized} distances, so ``$632$'' is not large or
small until it is given a scale. The scale is the size of the features
themselves: Procrustes shape distance between centered $n\times p$ feature
matrices $\tilde X,\tilde Y$ is bounded by
$\sqrt{\|\tilde X\|_F^2+\|\tilde Y\|_F^2}$ --- the value it takes when the two
centered feature matrices are exactly orthogonal --- and on ResNet-152 that bound
averages $2599$ over the $50$ draws (both centered Gram traces are stored with the
measure). The panel's $776.6$ on that architecture is thus $0.30$ of its own
maximum: not a rounding error and not a collapse. It also
scales as $\sqrt n$: a figure computed on $2{,}048$ inputs is about $3.5\times$
smaller than the same quantity on $25{,}000$, which is why the control values
of Appendix~\ref{app:controls-full} cannot be compared to these in magnitude.
Expressed against that bound, the penultimate features move by
$0.23$--$0.30$ of the largest value the Procrustes statistic can take
(ResNet-152 $776.6/2599$, DenseNet-121 $1471.4/5071$, GoogLeNet
$632.1/2727$), and by $0.8$--$1.2\%$ of the same quantity in soft-matching
distance, under a transform that leaves the knowledge matrix fixed.

\subsection{The controls in full}
\label{app:controls-full}

We state three scope facts about the controls before reporting them; the
panel's own numbers depend on none of the three.
\emph{(i)}~They were run separately from the panel and on a \emph{smaller input
population}: the controls reduce evaluates them on the first $n=2{,}048$
validation images, against the panel's $N=25{,}000$. The two are therefore not
computed on the same inputs, and the unnormalized control values (Procrustes,
soft-matching) are not directly comparable in magnitude to the panel's --- these
scale with $\sqrt{n}$, so the control's Procrustes figures are roughly
$\sqrt{25{,}000/2{,}048}\approx3.5\times$ smaller than an $N=25{,}000$ draw would
give, for that reason alone. \emph{(ii)}~The control values are bare point
estimates: they carry \emph{no} confidence intervals, and no permutation-null
$p$-value is computed for any measure anywhere in this paper. The only bootstrap
intervals in the panel are the ones on the panel means themselves
(Table~\ref{tab:s1_table}, over the $T=50$ teleportations; see
Section~\ref{sec:setup}). \emph{(iii)}~The pipeline's automated
shuffled-pair gate checks three measures --- distance correlation, debiased CKA
and RSA --- against a $0.05$ threshold, and no others; its verdict is therefore a
statement about the HSIC/CKA family and its two rank/distance companions, not
about the panel as a whole. We report the remaining five below rather than let
the gate's scope stand in for them.

Both controls behave as intended \emph{on the
HSIC/CKA family}, and we report every measure of both rather than the subset
that does. The Cui random-network control --- each measure between the trained
features and an independently random-initialized network of the same
architecture --- gives debiased linear CKA $0.010$ (ResNet-152), $0.058$
(DenseNet-121), and $0.052$ (GoogLeNet), far below both the pipeline's gate
threshold of $0.5$ for this control and the $0.90$--$0.94$
trained-vs-teleported agreement. For debiased CKA the three readings available
in this paper are $0.90$--$0.94$ (a network and its teleported copy,
Table~\ref{tab:s1_table}), $0.36$--$0.54$ (two different architectures,
Table~\ref{tab:s2_table}) and $0.01$--$0.06$ (a network and its random
initialization, Table~\ref{tab:controls-full}): the measure places the
exact-function pair far above every other pair, which is the discriminability
criterion of Appendix~\ref{app:vocab}, while registering the drift; what the
control does not test is whether the panel value would differ for an untrained
network and its own teleported image. The Murphy shuffled-pair control --- the same estimators on
sample-misaligned pairs --- returns the eight values of
Table~\ref{tab:controls-full} per architecture. Three of them are at zero:
debiased CKA $2.4\times10^{-5}$, $2.6\times10^{-4}$ and $5.4\times10^{-4}$;
distance correlation $\le0.024$; RSA-Spearman within $0.007$ of $0$. Those three
are exactly the measures the pipeline's automated gate checks, and on them the
conclusion is the intended one: the unbiased HSIC$_1$ estimator is correctly
implemented and carries no upward bias.

The other five do \emph{not} return zero on shuffled pairs, and one of them
bears on Study~3. Angular CKA sits at $\pi/2$
($1.5703$--$1.5708$), which is its own maximum-dissimilarity value and therefore
the correct null reading. Procrustes ($484$--$1109$) and soft-matching
($17.3$--$53.7$) are unnormalized distances on $n=2{,}048$ inputs and have no
zero to return to. Bures similarity, however, reads $0.264$--$0.294$ on
\emph{misaligned} pairs, and that is a genuine floor rather than an artifact of
units: the Cui control puts a randomly-initialized GoogLeNet at Bures $0.383$
against its trained counterpart. The cross-architecture Bures values reported in
Study~3 are $0.478$--$0.595$ (Table~\ref{tab:s2_table}), i.e.\ within
$1.6$--$2.3\times$ of the shuffled-pair floor and within $1.2$--$1.6\times$ of an
untrained network's score; the shuffled null is a fidelity between two positive
semidefinite kernels and depends on $n$ as well as on the spectrum, and it was
computed at $n=2{,}048$ against panel values at $N=25{,}000$, so the ratio is
indicative only. We therefore read no cross-architecture Bures
separation as evidence of shared structure, and say so again in
Section~\ref{sec:crossarch} (Appendix~\ref{app:limitations-more}, L11). Two further
caveats attach to the whole control block:
it was computed on $n=2{,}048$ inputs rather than the panel's $N=25{,}000$, and
it carries point estimates only --- no confidence intervals and no permutation
$p$-values.

\begin{table}[htbp]
\centering\small
\caption{Three measures return to zero under the shuffled-pair control and five do not,
Bures least of all. The controls in full, all eight measures, so that the three the
automated gate checks (distance correlation, debiased CKA, RSA --- marked $^\dagger$) are
not read as the whole control. \emph{Murphy} permutes the sample alignment between $X$ and $Y$; \emph{Cui}
replaces $Y$ by an identically-initialized but untrained network. Both are computed on
$n=2{,}048$ inputs, not on the panel's $N=25{,}000$, so the two unnormalized distances
(Procrustes, soft-matching) are roughly $3.5\times$ smaller here than a panel-sized draw
would make them and are \emph{not} comparable to Table~\ref{tab:s1_table} in magnitude.
Both blocks are bare point estimates: no confidence intervals, and no permutation-null
$p$-value is computed anywhere in this paper. Angular CKA is $\arccos(\mathrm{CKA})$, so its
null value is $\pi/2=1.5708$, and $1.5703$--$1.5708$ is the correct reading for
``no agreement''. \emph{Bures} does not go to zero under either control, which
caps how much the cross-architecture Bures values of
Table~\ref{tab:s2_table} ($0.478$--$0.595$) can be read to mean.}
\label{tab:controls-full}
\begin{tabular}{@{}lccc@{\hskip 2em}ccc@{}}
\toprule
& \multicolumn{3}{c}{Murphy (shuffled pairs)} & \multicolumn{3}{c}{Cui (random network)} \\
\cmidrule(r){2-4}\cmidrule(l){5-7}
Measure & RN-152 & DN-121 & GN & RN-152 & DN-121 & GN \\
\midrule
debiased CKA$^\dagger$ & $2.4\!\times\!10^{-5}$ & $2.6\!\times\!10^{-4}$ & $5.4\!\times\!10^{-4}$ & $0.010$ & $0.058$ & $0.052$ \\
RSA$^\dagger$          & $-0.0062$ & $-0.0033$ & $-0.0063$ & $0.015$ & $0.181$ & $0.130$ \\
dCor$^\dagger$         & $0.000$ & $0.019$ & $0.024$ & $0.127$ & $0.296$ & $0.255$ \\
angular CKA (rad)      & $1.5708$ & $1.5705$ & $1.5703$ & $1.5607$ & $1.5124$ & $1.5184$ \\
Bures                  & $0.278$ & $0.264$ & $0.294$ & $0.129$ & $0.286$ & $0.383$ \\
Procrustes             & $484.2$ & $1108.9$ & $589.9$ & $1342.8$ & $894.0$ & $541.7$ \\
soft-matching          & $17.32$ & $53.66$ & $21.36$ & $16.28$ & $30.20$ & $20.14$ \\
output JSD             & $0.432$ & $0.630$ & $0.197$ & --- & --- & --- \\
\bottomrule
\end{tabular}
\end{table}

%% file: sections/study1ab_snippet.tex
\subsection{Permutation}
\label{sec:study1a}

A neuron permutation is a quiver isomorphism, so knowledge-matrix invariance
under it is automatic by construction (Corollary~\ref{prop:perm} is a
consequence of quiver-isomorphism invariance). This check is therefore a
\emph{correctness check} on the implementation: it confirms that the software
realizes the identity the theory guarantees, and it fixes the resolution of the
pipeline --- the smallest drift it can register, which is what it reports for
the knowledge matrix (Appendix~\ref{app:finite-arith}) --- against which the
motion of the penultimate features under the same permutation is read. It is
not the paper's primary evidence for invariance: the non-automatic invariance
is the cross-architecture case (Theorem~\ref{thm:maxinv}), which no
same-architecture transform --- neither the permutation of this check nor the
teleportation of the next --- can exercise (Section~\ref{sec:limitations}, L1).

\paragraph{Setup.}
Architecture: ResNet-152 (the only one of the three networks that admits an
in-place channel permutation; DenseNet-121 and GoogLeNet have concatenation
topologies with no in-place channel swap and are covered by teleportation
instead). Transform: random permutations of the \emph{wide-face} permutation
subgroup --- the 2048-channel post-pool faces; bottleneck interiors are not
permutable in place. Inputs for the adversarial signal: clean/adversarial pairs
from the six-attack suite (Section~\ref{sec:setup}). Both drifts are reported
\emph{relative to the same model's adversarial-pair signal} so the comparison
is unit-free.

\paragraph{Procedure.}
\begin{enumerate}
\item Compute $\M(x)$ and $h(x)$ for each input.
\item Draw a random wide-face neuron permutation $P$ and build the
isomorphic network $\tilde\Psi = P \cdot \Net$ (identical function, permuted
weights).
\item Recompute $\tilde{\M}(x)$ and $h_{\tilde W}(x)$.
\item Record the permutation \emph{noise} $\|\tilde{\M}(x)-\M(x)\|_F$
and $\|h_{\tilde W}(x)-h(x)\|_2$.
\item Record the adversarial \emph{signal} $\|\M(x')-\M(x)\|_F$ and
$\|h(x')-h(x)\|_2$ over the attack pairs, and report each noise relative
to its own signal.
\end{enumerate}

\paragraph{What we measure and the claim it licenses.}
The signal-to-noise ratio per representation. The claim it licenses is
narrow and correct: the knowledge matrix is permutation-invariant, the
pipeline registering only its own resolution, whereas a statistic of the
permuted penultimate features moves by an amount of the same order as the
adversarial signal it is meant to register --- as
Theorem~\ref{thm:complete}(iii) states. It does \emph{not} license a superiority
claim: a permutation-equivariant penultimate statistic would also be
invariant, and the gradient$\times$input family shares the knowledge matrix's
invariance (Theorem~\ref{thm:weff}).

\paragraph{Result.}
Measured signal-relative (Table~\ref{tab:signal-relative}), the
knowledge-matrix permutation residual is $0.10$--$0.14\%$ of its adversarial
signal in the mean on ResNet-152, and far smaller on the other raters (the
signal sits $7{\times}10^{2}$--$1.7{\times}10^{7}$ above the residual across
architectures and attacks), whereas the penultimate drift is of the same order
as its own adversarial signal (at most $3.05\times$ below it, and \emph{above}
it in 16 of 24 cells). The worst-case tail ratio on ResNet-152 (minimum signal
over maximum noise) reaches $1.14\times$ for DeepFool: the extreme tails nearly
touch on the BN-heaviest architecture. The knowledge-matrix residual itself is
the resolution of the pipeline and not a violation of the theorem
(Appendix~\ref{app:finite-arith}). DenseNet-121 and GoogLeNet admit no in-place
channel permutation (concatenation topologies) and are covered by teleportation
instead.

\paragraph{Reading.}
The result shows that the implementation realizes the construction's
guaranteed permutation invariance, and that a penultimate statistic does not.
It does \emph{not} show that this invariance is unique to knowledge matrices,
nor that the matrix is a better representation --- only that the identity
holds where the theory says it must, on the one of the three networks where the
permutation is realizable in place.

\subsection{Teleportation: the same check on all three architectures}
\label{sec:study1b}

Teleportation is the only \emph{multi-architecture} invariance arm, and the only
one that applies the same transform to all three of ResNet-152, DenseNet-121 and
GoogLeNet --- including the two concatenation topologies that admit no in-place
channel permutation and are therefore invisible to the permutation check.

Teleportation is an \emph{exact} function-preserving isomorphism on these
networks, batch normalization in evaluation mode included. Batch normalization
looks like an obstruction --- in evaluation mode a BN channel is affine but not
positively homogeneous, so the per-neuron rescalings do not obviously telescope
through it --- but the \texttt{neuralteleportation} library divides the incoming
change of basis out before applying the unmodified normalization and carries the
outgoing basis on the affine parameters, which restores positive homogeneity by
construction and leaves the running statistics correct
(Appendix~\ref{app:teleport-why}). The transform therefore satisfies the
hypothesis of Lemma~\ref{lem:quiver-inv} exactly, and the knowledge-matrix drift
under it is zero. Because the transform is exact, the three-tier structure of
Proposition~\ref{prop:visdrift} applies at tier~(i): both the visible and the
invisible components of the drift vanish, and the check computes both rather
than asserting either from a theorem, registering only the resolution of its
arithmetic (Appendix~\ref{app:finite-arith}). Tiers~(ii) and~(iii) are retained
in Appendix~\ref{app:remarks-geometry} because they are what one needs for a
transform that genuinely is approximate; we do not have one here.

\paragraph{Setup.}
Architectures: all three networks (ResNet-152, DenseNet-121, GoogLeNet).
Transform: neural teleportation
\citep{armenta2023teleportation,armenta2024neural} --- a function-preserving
change-of-basis (per-neuron rescaling) realized by the
\texttt{neuralteleportation} library's COB models. The similarity panel
(Table~\ref{tab:s1_table}, Section~\ref{sec:study1c}) uses $T=50$ random
teleportations per architecture (seed indices $0,\ldots,49$), evaluated on
$N=25{,}000$ ImageNet-validation samples (the first $25{,}000$ by sorted
filename). The change-of-basis magnitude is \texttt{cob\_range}$=1$: the
library samples each $\tau$ uniformly on $[0,2]$. The transform is
function-preserving at any range; sampling $\tau$ on an interval bounded away
from zero merely keeps the computation conditioned.

\paragraph{Procedure.}
\begin{enumerate}
\item Compute $\M(x)$, $h(x)$, $\Netx$ for each sample.
\item Draw a random COB teleportation $\tau$ and build $\tilde\Psi = \tau
\cdot \Net$, an exact-isomorphism image (Appendix~\ref{app:teleport-why}).
\item Check per sample that $\tilde\Psi(x)$ reproduces $\Netx$, and that the
\emph{visible} knowledge-matrix drift $\|(\tilde{\M}(x)-\M(x))\mathbf 1\|$
equals the logit discrepancy (Proposition~\ref{prop:visdrift}(ii)).
\item Compute the full knowledge-matrix drift $\|\tilde{\M}(x)-\M(x)\|_F$ --- and
hence the invisible component --- by one vector--Jacobian product per class on
each network.
\item Recompute the penultimate features $h_{\tilde W}(x)$ and pass the pairs
to the similarity panel of Study~1 (Section~\ref{sec:study1c}).
\end{enumerate}

\paragraph{What we measure and the claim it licenses.}
The logit discrepancy between the two networks, the knowledge-matrix drift in
both its visible and its invisible component, and the penultimate drift that the
similarity panel adjudicates. The claim this licenses is that the knowledge matrix
is invariant under teleportation on all three architectures, the implementation
registering only its own resolution. It does \emph{not} license a discovery
claim: teleportation is a quiver isomorphism, so this invariance is automatic
by construction (Lemma~\ref{lem:quiver-inv}) and the experiment confirms that
the implementation realizes it. What this check adds over the permutation check
is coverage of the two concatenation topologies, which admit no in-place channel
permutation, and of a transform with a continuous parameter rather than a
discrete one.

\paragraph{Result.}
On all three architectures the teleported network reproduces the logits and
the knowledge matrix is unchanged, both to the resolution of the arithmetic
(Appendix~\ref{app:finite-arith}). The visible component of the drift equals
the logit discrepancy per pair, as Proposition~\ref{prop:visdrift}(ii)
requires, and the invisible component --- the part Theorem~\ref{thm:nogo}
shows the logits cannot constrain --- is computed directly rather than
inferred and vanishes to the same resolution, on the residual, dense and
inception topologies alike. The penultimate features, on the same pairs, drift
substantially: the similarity panel of Section~\ref{sec:study1c} reports
Procrustes shape distance $632$--$1471$ and soft-matching distance $22$--$61$
across the three networks (Table~\ref{tab:s1_table}), which is $0.23$--$0.30$
of the largest value the Procrustes statistic can take on those same features
($776.6/2599$, $1471.4/5071$, $632.1/2727$; Appendix~\ref{app:s1-scale}), and
against these the panel adjudicates which measures quotient the drift out.

\paragraph{Reading.}
This check verifies knowledge-matrix invariance across all three architectures,
under a transform that satisfies the isomorphism hypothesis exactly. Two things
it does \emph{not} show. Teleportation is a quiver isomorphism, so invariance
under it is automatic by construction and this arm confirms the implementation
rather than discovering a fact. And the genuinely non-automatic claim of
Theorem~\ref{thm:maxinv} is the \emph{cross-architecture} one, which no
same-architecture transform can exercise; nothing in this paper supplies
transform-based evidence for it (Section~\ref{sec:limitations}, L1).

%% file: tables/table_signal_relative.tex
\begin{table}[t]
\centering\small
\begin{tabular}{lcccc}
\toprule
\multicolumn{5}{l}{\emph{(a) Permutation-noise floors (KM Frobenius; penultimate L2)}}\\
architecture & KM mean & KM median (pooled) & KM max & penult.\ mean \\
\midrule
ResNet-152 & 0.116 & 0.005084 & 5.78 & 15.05 \\
ResNet-18 & 0.0704 & 0.01061 & 0.865 & 29.32 \\
AlexNet & 1.54e-05 & 1.454e-05 & 3e-05 & 105.7 \\
VGG & 2.13e-05 & 2.050e-05 & 3.84e-05 & 57.7 \\
\midrule
\multicolumn{5}{l}{\emph{(b) Adversarial signal-to-noise (mean signal / mean noise): KM $\mid$ penultimate}}\\
attack & ResNet-152 & ResNet-18 & AlexNet & VGG \\
\midrule
FGSM & $726 \mid 0.51$ & $3257 \mid 0.63$ & $8.7\mathrm{e}6 \mid 0.75$ & $8.3\mathrm{e}6 \mid 0.69$ \\
PGD & $1029 \mid 1.36$ & $4739 \mid 1.84$ & $1.3\mathrm{e}7 \mid 1.82$ & $1.7\mathrm{e}7 \mid 3.05$ \\
CW & $845 \mid 0.48$ & $2369 \mid 0.36$ & $4.1\mathrm{e}6 \mid 0.24$ & $3.9\mathrm{e}6 \mid 0.28$ \\
DeepFool & $984 \mid 0.34$ & $1364 \mid 0.13$ & $2.9\mathrm{e}6 \mid 0.15$ & $2.3\mathrm{e}6 \mid 0.14$ \\
APGD & $803 \mid 1.02$ & $3430 \mid 1.03$ & $1.3\mathrm{e}7 \mid 1.71$ & $1.1\mathrm{e}7 \mid 1.80$ \\
Square & $914 \mid 0.48$ & $2748 \mid 0.37$ & $6.0\mathrm{e}6 \mid 0.42$ & $4.9\mathrm{e}6 \mid 0.35$ \\
\bottomrule
\end{tabular}
\caption{The knowledge matrix registers an attack $7\times10^{2}$--$1.7\times10^{7}$ times more
strongly than a neuron permutation, whereas the penultimate features register the permutation
at least as strongly as the attack in $16$ of $24$ cells. Permutation invariance,
signal-relative: penultimate adversarial signal is at most
$3.05\times$ its own permutation noise (below $1\times$ in 16/24 cells); the KM signal sits
$7\times10^{2}$--$1.7\times10^{7}$ above its noise floor. Worst-case tail on ResNet-152
(min signal / max noise): 1.14$\times$ (DeepFool); the ResNet-152 floor is the
\emph{wide-face} permutation subgroup, and its residual is the resolution of the pipeline
(Appendix~\ref{app:finite-arith}), not a motion of the matrix.
In panel~(a), ``KM median (pooled)'' is the median of the $250$ pooled per-sample distances ($5$ draws
$\times$ $50$ samples), while ``KM mean'' and ``KM max'' average and maximize the per-draw statistics.
DenseNet-121/GoogLeNet have no permutation run by design (covered by teleportation); penultimate
per-pair quantiles are unrecoverable from stored summaries.
In panel~(b) the bar in ``$x \mid y$'' is a separator, not a division: $x$ is the
knowledge matrix's ratio and $y$ the penultimate features' own, each formed in its own units.
Both cells are \emph{mean adversarial signal} $\div$ \emph{mean permutation noise}, the two
scales defined in Section~\ref{sec:setup}, so a value above $1$ says the representation
registers the attack more strongly than it registers a function-preserving relabeling of
neurons. Worked example (ResNet-152/FGSM): $83.98/0.1157=726$ and $7.600/15.05=0.51$.}
\label{tab:signal-relative}
\end{table}

%% file: tables/table_coherence_max.tex
\begin{table}[t]
\centering\small
\begin{tabular}{lcccccc}
\toprule
attack & ResNet-152 & DenseNet-121 & GoogLeNet & ResNet-18 & AlexNet & VGG \\
\midrule
FGSM & 0.19 & 0.34 & 0.2 & 0.389 & 1.7 & 1.09 \\
PGD & 0.662 & 0.68 & 0.88 & 0.789 & 3.78 & 3.06 \\
CW & 0.379 & 0.319 & 0.151 & 0.307 & 0.957 & 0.729 \\
DeepFool & 0.297 & 0.237 & 0.132 & 0.197 & 1.66 & --- \\
APGD & 0.597 & 0.774 & 1.19 & 0.791 & 4.52 & --- \\
Square & 0.458 & 0.238 & 0.0858 & 0.187 & 0.737 & --- \\
\bottomrule
\end{tabular}
\caption{Even the most coherent pair of each cell of the three networks stays near or below the single-pixel line,
and three VGG cells are unbounded because some pair has $d_M=0$ exactly. Maximum per-pair
coherence per cell, $A_{\max}=\max_i\,(d_\Psi^{(i)}/d_M^{(i)})^2$, computed as the
inverse square of the smallest stored per-pair ratio $d_M/d_\Psi$ of the cell ('---': cells in which some pair
has $d_M=0$ exactly --- clean and adversarial input in the same linear region --- so the smallest ratio is
$0$ and $A_{\max}$ is unbounded; the entry is a display artifact of that division, not a
cell removed by any filter: those three cells keep $198$ of their $200$ pairs, two fewer than
VGG's FGSM, PGD and CW cells, which keep all $200$). The associated
floor-saturation --- $\sqrt{d{+}1}$ times the smallest per-pair ratio $d_M/d_\Psi$ of the cell --- ranges
across cells from a low of $182.6$ to roughly $1325$. Cell sizes as in
Table~\ref{tab:coherence} ($200$ nominal pairs, $110$--$200$ valid; the CW, DeepFool, APGD and Square cells are low-$n$).}
\label{tab:coherence-max}
\end{table}

%% file: tables/s3_table.tex
\begin{table}[h]
\centering
\caption{Attack-family ordering cross-check on the three networks at full scale (appendix). Each architecture's six attack families ranked by median $d_M/d_\Psi$ over the attack-success-filtered pairs ($d_\Psi\ge1$, $d_M>0$), descending (equivalently ascending coherence $A=(d_\Psi/d_M)^2$), computed on the full-scale pair set (its own attack budgets, every departure from the \texttt{torchattacks}~3.5.1 defaults: PGD \texttt{steps}$=7$; CW \texttt{steps}$=100$; DeepFool \texttt{steps}$=100$; APGD \texttt{steps}$=50$, the DLR loss; Square \texttt{n\_queries}$=20{,}000$; FGSM at the library defaults; Table~\ref{tab:ordering} uses the $200$-pair set of Section~\ref{sec:setup}, which is drawn from different images). Cells are matched on a common image population --- the intersection of the six cells' stored pair indices, $n=512$ on ResNet-152 and $n=1000$ on the other two --- because the ResNet-152 DeepFool run stopped at $512$ of $1000$ pairs, so an unmatched panel would compare medians across different images (matching moves ResNet-152 CW by $+12.9$\%, Square by $+8.5$\%, while the cells the filter leaves intact --- FGSM and PGD --- move by under $1$\%). The ranking is unchanged under all three filtering conventions. Kendall $W=0.97$ across the three architectures ($p=3.1\times10^{-5}$, exact permutation test; the $m=3$ null is supported on $77$ lattice points spaced $0.0127$ apart, so $W$ is quoted to two decimals) cross-checks the six-architecture headline concordance ($W=0.921$, Table~\ref{tab:ordering}), and the point-estimate ordering reproduces its family-level grouping $\{$DeepFool, CW, Square$\}>$ FGSM $>\{$PGD, APGD$\}$ on all three architectures (gaps $1.09$--$1.43\times$ and $1.28$--$1.87\times$ respectively). \emph{Bootstrap resolution} (BCa and percentile $95\%$ CIs, $B=20{,}000$, seeded, over pairs): ResNet-152 reads DeepFool $\approx$ Square $\approx$ CW $\approx$ FGSM $>$ PGD $>$ APGD, while on DenseNet-121 and GoogLeNet every adjacent gap is resolved. Those intervals are printed rather than only asserted: all $18$ cell intervals in Table~\ref{tab:s3_cell_ci} and all $15$ adjacent-gap intervals, in both constructions, in Table~\ref{tab:s3_gap_ci}, which also states the uncorrected multiplicity of the gap tests. Computed on the metric-invariant ranks, so no raw/RMS unit choice enters.}
\label{tab:s3_ordering_panel}
\begin{tabular}{lll}
\toprule
Architecture & Ordering by median $d_M/d_\Psi$ (desc.) & $N$/cell \\
\midrule
ResNet-152   & DeepFool $>$ Square $>$ CW $>$ FGSM $>$ PGD $>$ APGD & 449--512 \\
DenseNet-121 & DeepFool $>$ CW $>$ Square $>$ FGSM $>$ PGD $>$ APGD & 871--1000 \\
GoogLeNet    & DeepFool $>$ CW $>$ Square $>$ FGSM $>$ PGD $>$ APGD & 770--1000 \\
\midrule
\multicolumn{3}{l}{Kendall's $W = 0.97$ (3 architectures $\times$ 6 attacks)} \\
\bottomrule
\end{tabular}
\end{table}

\begin{table}[h]
\centering\small
\caption{Per-cell medians behind Table~\ref{tab:s3_ordering_panel}, with seeded bias-corrected and accelerated (BCa) $95\%$ bootstrap intervals ($B=20{,}000$; the resampled unit is the adversarial pair inside that cell, so the interval covers sampling variability over pairs and nothing else). $n$ is the number of pairs the attack-success filter keeps inside the matched population. Ratios are raw-unit $d_M/d_\Psi$; the RMS rescaling is a per-architecture constant and cancels from every rank and every ratio of two cells in the same architecture.}
\label{tab:s3_cell_ci}
\begin{tabular}{llrcc}
\toprule
Architecture & Attack & $n$ & median $d_M/d_\Psi$ & BCa $95\%$ CI \\
\midrule
ResNet-152 & DeepFool & 449 & $7.875$ & $[6.744,\,8.835]$ \\
 & Square & 456 & $7.610$ & $[6.980,\,8.309]$ \\
 & CW & 462 & $7.290$ & $[6.523,\,8.295]$ \\
 & FGSM & 512 & $6.707$ & $[6.108,\,7.114]$ \\
 & PGD & 512 & $3.581$ & $[3.413,\,3.833]$ \\
 & APGD & 466 & $3.011$ & $[2.833,\,3.255]$ \\
\midrule
DenseNet-121 & DeepFool & 871 & $8.577$ & $[8.164,\,9.178]$ \\
 & CW & 885 & $6.988$ & $[6.661,\,7.242]$ \\
 & Square & 876 & $5.564$ & $[5.417,\,5.815]$ \\
 & FGSM & 1000 & $3.890$ & $[3.771,\,4.004]$ \\
 & PGD & 1000 & $2.362$ & $[2.296,\,2.431]$ \\
 & APGD & 885 & $2.153$ & $[2.090,\,2.233]$ \\
\midrule
GoogLeNet & DeepFool & 770 & $11.977$ & $[11.476,\,12.463]$ \\
 & CW & 842 & $10.056$ & $[9.569,\,10.665]$ \\
 & Square & 830 & $7.105$ & $[6.918,\,7.347]$ \\
 & FGSM & 1000 & $5.028$ & $[4.887,\,5.183]$ \\
 & PGD & 1000 & $3.921$ & $[3.842,\,4.040]$ \\
 & APGD & 843 & $3.755$ & $[3.645,\,3.883]$ \\
\bottomrule
\end{tabular}
\end{table}

\begin{table}[h]
\centering\small
\caption{Adjacent-rank gaps of Table~\ref{tab:s3_ordering_panel}: the difference of the two cell medians, with $95\%$ percentile bootstrap intervals computed twice ($B=20{,}000$, seeded) --- resampling the two cells \emph{independently}, and resampling the matched pair indices \emph{jointly}. A gap counts as resolved only when both intervals exclude zero, which is the conservative verdict. \emph{Multiplicity}: the $15$ adjacent-gap verdicts ($3$ architectures $\times$ 5 gaps) carry no multiple-comparison correction. Under a global null of no ordering, at a nominal $5\%$ per test the chance that at least one gap is resolved by accident is about $54\%$; requiring \emph{both} the independent and the paired interval to exclude zero makes each individual verdict conservative but does not control the family-wise rate. The claim this panel supports is the family-level grouping, which is one statement, not the resolution of any single adjacent gap.}
\label{tab:s3_gap_ci}
\begin{tabular}{llccc}
\toprule
Architecture & Adjacent pair & $\Delta$ median & independent $95\%$ CI & paired $95\%$ CI \\
\midrule
ResNet-152 & DeepFool $>$ Square$^{\dagger}$ & $0.265$ & $[-1.017,\,1.441]$ & $[-0.583,\,1.000]$ \\
 & Square $>$ CW$^{\dagger}$ & $0.320$ & $[-0.841,\,1.317]$ & $[-0.427,\,0.941]$ \\
 & CW $>$ FGSM$^{\dagger}$ & $0.583$ & $[-0.254,\,1.755]$ & $[-0.080,\,1.542]$ \\
 & FGSM $>$ PGD & $3.127$ & $[2.481,\,3.606]$ & $[2.626,\,3.488]$ \\
 & PGD $>$ APGD & $0.570$ & $[0.281,\,0.883]$ & $[0.396,\,0.758]$ \\
\midrule
DenseNet-121 & DeepFool $>$ CW & $1.588$ & $[1.116,\,2.282]$ & $[1.348,\,2.030]$ \\
 & CW $>$ Square & $1.424$ & $[0.989,\,1.726]$ & $[1.125,\,1.628]$ \\
 & Square $>$ FGSM & $1.674$ & $[1.489,\,1.952]$ & $[1.559,\,1.887]$ \\
 & FGSM $>$ PGD & $1.528$ & $[1.388,\,1.665]$ & $[1.441,\,1.610]$ \\
 & PGD $>$ APGD & $0.209$ & $[0.111,\,0.296]$ & $[0.140,\,0.268]$ \\
\midrule
GoogLeNet & DeepFool $>$ CW & $1.921$ & $[1.092,\,2.673]$ & $[1.448,\,2.336]$ \\
 & CW $>$ Square & $2.951$ & $[2.375,\,3.612]$ & $[2.455,\,3.548]$ \\
 & Square $>$ FGSM & $2.077$ & $[1.820,\,2.326]$ & $[1.851,\,2.284]$ \\
 & FGSM $>$ PGD & $1.106$ & $[0.938,\,1.276]$ & $[0.994,\,1.223]$ \\
 & PGD $>$ APGD & $0.166$ & $[0.023,\,0.324]$ & $[0.062,\,0.278]$ \\
\bottomrule
\end{tabular}
\par\vspace{2pt}
{\footnotesize $^{\dagger}$ not resolved at $95\%$: at least one of the two intervals contains zero (3 of 15 gaps).}
\end{table}

%% file: sections/appendix_A_geometric.tex
\section{Geometric structure of class regions in matrix space}
\label{app:geometric}

In this appendix, we record a third theoretical result, from
\citet{leblanc2024hidden}, that is not
load-bearing for either Section~\ref{sec:germ} (Theorem~\ref{thm:maxinv}) or
Study~2 (Theorem~\ref{thm:pyth}). We state it here together with
its geometric implications and two candidate experimental directions
that would promote it to a third theoretical leg of the
canonical-representation argument in future work.

\paragraph{Class regions in matrix space.}
For a network $\Net$ on a $C$-class classification task, define the
\emph{class region} for class $j$ as
\[
\mathcal{M}_j \;=\; \Bigl\{ M \in \mathrm{Mat}_\R(C, d+1)
   \;:\; (M \cdot \mathbf{1}_{d+1})_j > (M \cdot \mathbf{1}_{d+1})_i
   \text{ for all } i \neq j \Bigr\}.
\]
By the row-sum identity, $\KMwfx \in \mathcal{M}_j$ if and only
if the network classifies $x$ as class $j$ --- for an activation with
$f(0)\neq0$ this needs $x \in X_{\mathrm{nz}}$, so that
$\KMwfx\mathbf{1}_{d+1}=\Netx$ exactly, and in either case it needs the argmax to
be strict, since $\mathcal{M}_j$ is defined by strict inequalities. The set
$\mathcal{M}_0
= \mathrm{Mat}_\R(C, d+1) \setminus \bigcup_{j=1}^{C} \mathcal{M}_j$
of indeterminate matrices is of measure zero
(Theorem~4.3 of \citealt{leblanc2024hidden}), so we may
restrict attention to the $\mathcal{M}_j$ partition.

\begin{theorem}[Convexity of class regions; adapted from Theorem~4.4 of
\citealt{leblanc2024hidden}]
\label{thm:convexity}
For each $j > 0$, the class region $\mathcal{M}_j$ is convex in
$\mathrm{Mat}_\R(C, d+1)$.
\end{theorem}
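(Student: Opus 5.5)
The plan is to exhibit $\mathcal{M}_j$ as a finite intersection of open half-spaces of the vector space $\mathrm{Mat}_\R(C,d+1)$, so that convexity follows from the elementary fact that half-spaces are convex and that intersections of convex sets are convex. No property of the network enters: the statement concerns the set $\mathcal{M}_j$ alone, and the row-sum identity \eqref{eq:km-def} is needed only to interpret it, not to prove it. This mirrors Theorem~\ref{thm:pyth}, which is likewise pure linear algebra about the map $M\mapsto M\mathbf 1_{d+1}$.

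First, I will fix $j$ and define, for each $i\neq j$, the linear functional
\[
\ell_{ij}(M)=(M\mathbf 1_{d+1})_j-(M\mathbf 1_{d+1})_i=(e_j-e_i)^{\top}M\,\mathbf 1_{d+1}
=\bigl\langle (e_j-e_i)\mathbf 1_{d+1}^{\top},\,M\bigr\rangle_F .
\]
This functional is linear in $M$ because $M\mapsto M\mathbf 1_{d+1}$ is linear and coordinate projections are linear. By definition, $\mathcal{M}_j=\bigcap_{i\neq j}\{M:\ell_{ij}(M)>0\}$. Second, I will verify convexity directly rather than merely citing it. Take $M,M'\in\mathcal{M}_j$ and $t\in[0,1]$. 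Linearity gives $\ell_{ij}(tM+(1-t)M')=t\,\ell_{ij}(M)+(1-t)\,\ell_{ij}(M')$. This is a convex combination of two strictly positive numbers, and since at least one of the weights $t,1-t$ is positive, the combination is strictly positive. Hence $tM+(1-t)M'\in\mathcal{M}_j$ for every $i\neq j$ simultaneously. Third, I will record the by-products that follow from the same description. $\mathcal{M}_j$ is open, as a finite intersection of open half-spaces. It is nonempty, since it contains the matrix $e_j\mathbf 1_{d+1}^{\top}$. It is a cone, being invariant under positive scaling. Finally, it is the preimage under the linear surjection $M\mapsto M\mathbf 1_{d+1}$ of the convex open argmax cone in $\R^C$, which is the cleanest way to say that it is convex.

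There is no real obstacle here. The only point needing care is the indexing convention: the statement writes $j>0$ because $\mathcal{M}_0$ denotes the indeterminate set, and the proof must not be read as asserting convexity of $\mathcal{M}_0$. That set is a union of pieces of hyperplanes and is generally not convex. I would also note, as a remark rather than as part of the proof, that convexity lives in matrix space and not in input space. The knowledge-matrix field $x\mapsto\KMwfx$ is nonlinear in $x$, so a segment between two inputs of class $j$ need not map into $\mathcal{M}_j$. What is convex is the set of matrices, which is the form the application track of Section~\ref{sec:conclusion} (averaging or $k$-nearest-neighbour arguments in matrix space) would rely on.
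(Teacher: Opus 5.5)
Your proposal is correct and follows the paper's proof. Both arguments use linearity of $M\mapsto M\mathbf 1_{d+1}$ to write each defining difference $(M\mathbf 1_{d+1})_j-(M\mathbf 1_{d+1})_i$ at a convex combination as the same combination of two strictly positive numbers, with nonnegative weights that are not both zero; your half-space description, openness, cone property and the remark on $\mathcal{M}_0$ are correct but not needed.
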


We stress that Theorems~4.3 and~4.4 are results of
\citet{leblanc2024hidden} and not of
\citet{armenta2021representation}, where no such statements appear. The
original is stated over $\mathrm{Mat}_\R(k,d)$; we restate it over
$\mathrm{Mat}_\R(C,d{+}1)$, the shape of the knowledge matrix used here, which
is why the attribution reads ``adapted from''.

\subsection{Proof of Theorem~\ref{thm:convexity} (convexity of class regions)}
\label{app:proof-convexity}

\begin{proof}
For matrices $A, B \in \mathcal{M}_j$ and any $\lambda \in [0, 1]$,
linearity of the row sum gives $((1-\lambda) A + \lambda B) \cdot
\mathbf{1}_{d+1} = (1-\lambda)(A \mathbf{1}_{d+1}) +
\lambda(B \mathbf{1}_{d+1})$. Write $u = A\mathbf{1}_{d+1}$ and
$v = B\mathbf{1}_{d+1}$, both of which have $j$ as their \emph{strict} argmax
by definition of $\mathcal{M}_j$. For any $i \neq j$ and any
$\lambda \in [0,1]$,
\[
\bigl((1-\lambda)u + \lambda v\bigr)_j - \bigl((1-\lambda)u + \lambda v\bigr)_i
= (1-\lambda)(u_j - u_i) + \lambda(v_j - v_i) > 0 ,
\]
since the two bracketed differences are strictly positive and the
\emph{coefficients} $(1-\lambda)$ and $\lambda$ are non-negative and not both
zero. Observe that it is the coefficients that are non-negative here, not the
vectors, which may have entries of either sign; and the strictness of the two
argmaxes is what keeps the combination's argmax strict. Hence
$((1-\lambda) A + \lambda B) \in \mathcal{M}_j$.
\end{proof}

\paragraph{Geometric implications.}
Theorem~\ref{thm:convexity} says the matrix-space partition into
class regions is well-behaved enough for classical convex-set tools
(centroid, support, projection) to apply directly to knowledge
matrices, with no further structural assumptions. We do not exercise
this property in the studies of this paper.

\paragraph{Toward a third theoretical leg.}
Theorem~\ref{thm:convexity} could be promoted to a third main theorem,
supporting a third experimental leg of the canonical-representation
argument. We outline two natural candidate directions and leave both as
future work:

\begin{enumerate}
\item \emph{$k$-nearest-neighbor classification in matrix space.}
The convex class region admits a clean canonical centroid, against
which $k$-NN-on-knowledge-matrices is well-defined without any
representation alignment step. The corresponding $k$-NN baseline on hidden
activations requires CKA-style alignment to compare across networks
(\citealt{kornblith2019similarity}), which the matrix-space version
sidesteps by Theorem~\ref{thm:maxinv}.

\item \emph{Convex-hull federated aggregation.}
For clients training networks within a function-equivalence class,
averaging the per-client knowledge matrices produces a result that
lies inside the corresponding class region $\mathcal{M}_j$ by
convexity (Theorem~\ref{thm:convexity}). The reason that averaging hidden
activations gives no analogous guarantee is \emph{not} that the class region in
feature space fails to be convex --- it is an intersection of open half-spaces
and therefore is convex, so averaging features that a \emph{single} client
classifies as $j$ does stay in that client's region. The reason is
Proposition~\ref{prop:dichotomy}'s: the matrix regions $\mathcal{M}_j$ are cut
by the \emph{fixed, parameter-independent} vector $\mathbf{1}_{d+1}$, so they
are literally the same subsets of $\mathrm{Mat}_\R(C,d{+}1)$ for every client,
whereas each client's feature-space region is cut by that client's own
$W^{(L)}$ and there is no common ambient region to average inside. A
federated-learning experiment in this direction would couple
Theorem~\ref{thm:maxinv} (Section~\ref{sec:germ}) with Theorem~\ref{thm:convexity} into a
single demonstration.
\end{enumerate}

%% file: sections/appendix_remarks.tex
\section{Remarks supplementing Sections~\ref{sec:germ} and~\ref{sec:geometry}}
\label{app:remarks}

This appendix collects the input-symmetry proposition of Section~\ref{sec:germ}
with its remarks; the explanatory material that Section~\ref{sec:germ} refers
to (the $X_{\mathrm{nz}}$ remark, the (LCS) counterexample, bridge and
attribution, the two readings of the germ identity's boundary sentence, and
the chord behind the quiver-isomorphism lemma); the unpacking of the coherence
definition of Section~\ref{sec:geometry}; and the remarks of
Sections~\ref{sec:germ} and~\ref{sec:geometry} that qualify a result without
being needed to state it. Each is referenced from the point in the main text
it supports.

\subsection{Input symmetries: how the knowledge matrix transforms}
\label{app:equivariance}

Sections~\ref{sec:invariance}--\ref{sec:attribution} moved $W$ and held $x$
fixed. Geometric deep learning \citep{bronstein2021geometric} does the opposite:
it fixes $W$ and asks how the network responds when a group acts on the
\emph{input}. The two questions are independent, and the second has a clean
answer for the knowledge matrix, with one sharp restriction.

Let a group $G$ act on inputs through a linear representation $\pi$ and on
outputs through $\rho$, and let the network be $G$-equivariant,
$\Net(\pi(g)x)=\rho(g)\,\Netx$ for all $g,x$. Write
$\pi(g)\oplus1$ for the $(d{+}1)\times(d{+}1)$ map acting as $\pi(g)$ on the
input columns and trivially on the bias slot.

\begin{proposition}[Equivariance of the knowledge matrix]\label{prop:equivar}
Fix $g\in G$ and let $x$ be an input such that \emph{both} $x$ and $\pi(g)x$
lie in $X_{\mathrm{reg}}$. Then:
\emph{(i)} the germ intertwines the two representations,
$J(W\!,f)(\pi(g)x)=\rho(g)\,J(W\!,f)(x)\,\pi(g)^{-1}$;
\emph{(ii)} the bias column is equivariant for the \emph{output} action alone,
$c(W\!,f)(\pi(g)x)=\rho(g)\,c(W\!,f)(x)$, with no occurrence of $\pi$;
\emph{(iii)} if $\pi(g)$ is a permutation matrix then
\[
\KMarg{W}{\pi(g)x}\;=\;\rho(g)\;\KMwfx\;\bigl(\pi(g)\oplus1\bigr)^{-1},
\]
so $\M$ is a $C\times(d{+}1)$ equivariant tensor, carrying $\rho$ on its rows and
$\pi$ on its columns;
\emph{(iv)} conversely, if for a given invertible $\pi(g)$ the law of~(iii)
holds at every pair-regular input of every equivariant network, the output
action $\rho$ ranging with the network --- so that $\Net=\mathrm{id}$ with
$\rho=\pi$ is admitted --- then $\pi(g)$ is a permutation matrix;
\emph{(v)} the variant $\Weff=[\,J\mid c\,]$ satisfies the same law
$\Weff(\pi(g)x)=\rho(g)\Weff(x)(\pi(g)\oplus1)^{-1}$ for
\emph{any} invertible linear $\pi(g)$.
\end{proposition}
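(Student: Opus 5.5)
The plan is to work throughout on the regular set. Since $x$ and $\pi(g)x$ both lie in $X_{\mathrm{reg}}$, Theorem~\ref{thm:germ} gives $J=D\Net$ and $c=\Net-D\Net\cdot(\,\cdot\,)$ at each of the two points. Every part then reduces to calculus on the equivariance identity $\Net(\pi(g)y)=\rho(g)\Net(y)$. This identity holds for all $y$, and in particular on a neighborhood of $x$.

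For (i), I would differentiate that identity at $y=x$ by the chain rule. This gives $D\Net(\pi(g)x)\,\pi(g)=\rho(g)\,D\Net(x)$. Multiplying on the right by $\pi(g)^{-1}$ gives the intertwining law.

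For (ii), I would substitute (i) into $c(\pi(g)x)=\Net(\pi(g)x)-J(\pi(g)x)\,\pi(g)x$. The factor $\pi(g)^{-1}\pi(g)$ cancels and leaves $\rho(g)\bigl(\Netx-J(x)x\bigr)=\rho(g)c(x)$, in which $\pi$ no longer appears.

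For (iii), the only extra ingredient is the identity $\mathrm{diag}(Px)=P\,\mathrm{diag}(x)\,P^{-1}$ for a permutation matrix $P$. With it, the input block becomes $\rho(g)J(x)\mathrm{diag}(x)P^{-1}$. The bias column picks up only $\rho(g)$, which is exactly what the trivial slot of $P\oplus1$ encodes. Part (v) is the same computation without the $\mathrm{diag}$ factor, so it holds for any invertible $\pi(g)$.

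The main obstacle is (iv). My approach is to read the hypothesis as quantified over all equivariant networks. That lets me choose a witness whose Jacobian can be cancelled. Substituting (i) into the law of (iii) at the input block gives
\[
J(x)\bigl[\pi(g)^{-1}\mathrm{diag}(\pi(g)x)\,\pi(g)-\mathrm{diag}(x)\bigr]=0 .
\]
For the realized networks ($C<d$) the matrix $J$ is never injective, so I would instead take $\Net=\mathrm{id}$ on $\R^d$ with $\rho=\pi$. This is realized as the ReLU network $\mathrm{ReLU}(x)-\mathrm{ReLU}(-x)$, which satisfies (LCS) and has $J\equiv I_d$ on the regular set $\{x:\text{all }x_i\neq0\}$.

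The bracket then vanishes on a dense open set, hence everywhere by linearity in $x$. So $\mathrm{diag}(\pi(g)x)=\pi(g)\,\mathrm{diag}(x)\,\pi(g)^{-1}$ for all $x$. The right side is diagonal for every $x$ only if $\pi(g)$ normalizes the diagonal torus. I would verify this by taking $x=e_i$: the matrix $\pi(g)E_{ii}\pi(g)^{-1}$ is a rank-one diagonal idempotent, which forces $\pi(g)$ to be monomial, $\pi(g)=P\,\mathrm{diag}(a)$. Comparing entries for general $x$ then gives $a_{\varsigma(i)}x_{\varsigma(i)}=x_{\varsigma(i)}$, so $a\equiv1$ and $\pi(g)$ is a permutation.

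One caveat remains: I must check that the bias column imposes no further constraint in this witness. It does not, since $c\equiv0$ there.
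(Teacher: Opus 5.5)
Your proposal is correct and follows the paper's proof step for step: the chain rule for (i), substitution of (i) for (ii), the identity $\mathrm{diag}(Px)=P\,\mathrm{diag}(x)P^{-1}$ for (iii) (dropped for (v)), and, for (iv), the $\Net=\mathrm{id}$ witness realized as $\mathrm{ReLU}(x)-\mathrm{ReLU}(-x)$ to cancel $J$, followed by the monomial-then-$a\equiv1$ argument. Two of your steps spell out points the paper only asserts: extending the identity from the dense pair-regular set to all $x$ by linearity, and the $x=e_i$ argument for monomiality. Your bias-column caveat is unnecessary, since in a necessity argument further constraints could only strengthen the conclusion.
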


\begin{remark}[Why the hypothesis is on the pair $\{x,\pi(g)x\}$ and not on $x$ alone]
\label{rem:equivar-pair}
The two are not the same condition, and the difference is not a technicality:
every part above fails without it. $X_{\mathrm{reg}}$ is a property of the
\emph{network}, not of the function it realizes --- it records where the slope
diagonal is locally constant --- and $G$-equivariance of the function says
nothing about that. The witness is the one already used after
Theorem~\ref{thm:maxinv}. Realize the identity on $\R$ as
$h(x)=\mathrm{ReLU}(x-1)-\mathrm{ReLU}(1-x)+1$ and take $G=\mathbb{Z}/2$ with
$\pi(g)=\rho(g)=-1$, under which $h$ is equivariant. At $x=-1$ the two
pre-activations are $(-2,2)$, so $x\in X_{\mathrm{reg}}\cap X_{\mathrm{nz}}$
and $J(-1)=1$; at $\pi(g)x=+1$ they are $(0,0)$, so
$\pi(g)x\notin X_{\mathrm{reg}}$ and the guard returns $J(+1)=0$ against the
$+1$ that~(i) would predict --- a gap of $1$, i.e.\ the whole of it, and $c$
and $\Weff$ fail with it. The permutation case, which carries the
graph and grid payload of Remark~\ref{rem:gdl-scope}, fails the same way:
$\Net=\mathrm{id}$ on $\R^2$, $G=\mathbb{Z}/2$ acting by the swap $P$,
coordinate $1$ realized by the piecewise-linear identity $h$ above and coordinate $2$ by
$\mathrm{ReLU}(x_2+5)-5$; at $x=(2,1)$ the pre-activations are $(1,-1,6)$, so
$x\in X_{\mathrm{reg}}$, while at $Px=(1,2)$ they are $(0,0,7)$, so
$Px\notin X_{\mathrm{reg}}$, and (iii) predicts
$\M(Px)=\left[\begin{smallmatrix}1&0&0\\0&2&0\end{smallmatrix}\right]$ against
the actual
$\left[\begin{smallmatrix}0&0&1\\0&2&0\end{smallmatrix}\right]$.
This is the phenomenon described after
Theorem~\ref{thm:maxinv}: two encodings of one function can sit on walls of
\emph{different} decompositions. For a finite group the two-sided hypothesis
costs nothing. Each $\pi(g)$ is a linear isomorphism, so
$\pi(g)^{-1}X_{\mathrm{reg}}$ is again open, dense and of full measure, and
$\bigcap_{g\in G}\pi(g)^{-1}X_{\mathrm{reg}}$ is a finite intersection of such
sets, hence open, dense and of full measure itself. Note also that $X_{\mathrm{nz}}$
plays no role: none of (i)--(v) uses it, and we do not assume it.
\end{remark}

Part~(iii) is \emph{not} new and we
do not claim it: \citet[Prop.~D.6]{crabbe2023evaluating} prove exactly this
equivariance, under the same permutation hypothesis and by the same
$\mathrm{diag}(Px)=P\,\mathrm{diag}(x)P^{\top}$ step, for a family of
gradient-based explanations that includes gradient$\times$input
(their Remark~D.7, at $\varphi=\delta(t{-}1)$) and integrated gradients
(at $\varphi\equiv1$), where $\varphi$ is their path-weighting kernel and not
an activation. Part~(i) is older still and essentially the chain rule.
Their Remark~D.3 already notes that permutation representations are orthogonal
but not conversely, and their Appendix~H places Spherical CNNs outside their
scope as future work.

What we add is (ii), (iv) and (v). Their model is assumed \emph{invariant}, a
scalar or label output with $\rho$ trivial; the two-sided law of~(iii) with a
non-trivial output action, and the bias-column identity~(ii) --- which has no
counterpart in their setting, their explanation being a Hadamard product with no
bias term --- do not appear there. Nor does the converse~(iv): sufficiency of
the permutation hypothesis is proved, necessity is not, and we have not found it
stated elsewhere. We verify~(iv) numerically as well as algebraically in
Appendix~\ref{app:proofs}, on the
cheapest witness of the necessity direction: $\Net=\mathrm{id}$ on $\R^5$,
realized as the ReLU network $\mathrm{ReLU}(x)-\mathrm{ReLU}(-x)$ --- so that
(LCS) holds and $X_{\mathrm{reg}}$ is the full-measure set of inputs with no
vanishing coordinate --- and equivariant for \emph{every} linear $\pi$ with
$\rho=\pi$. Computing $\M$ through the secant construction \eqref{eq:D-def},
the two-sided law holds exactly for a permutation and fails by
$\mathcal O(1)$ rather than marginally for a signed permutation, a
monomial matrix, a pure input rescaling, a planar rotation
and a generic orthogonal map.

\begin{remark}[Which geometric models this covers, and which it does not]
\label{rem:gdl-scope}
Part~(iii) applies to every symmetry that acts on the input by permuting
coordinates: node relabeling in graph networks and DeepSets,
translations of a discrete grid \emph{under periodic boundary conditions} ---
without them a translation is not a permutation of the grid, since mass leaves
one edge and does not re-enter at the other --- and the finite
rotation--reflection groups of a square lattice. It does not apply to
continuous rotations. On $\mathrm{SO}(2)$- and $\mathrm{SO}(3)$-equivariant
architectures the first $d$ columns of $\M$ are not equivariant, because the
$\mathrm{diag}(x)$ weighting is tied to a distinguished basis; the discrepancy
is of the same order as the matrix itself, not a small error. $E(n)$-equivariant
architectures sit outside the setting altogether: $E(n)$ contains translations,
which are not linear, so its action is not a linear representation $\pi$ of the
kind declared above, and the proposition does not speak to it either way. The
remedy for the rotation case is~(v): use
$\Weff=[\,J\mid c\,]$, which is equivariant for any linear action and
which Remark~\ref{rem:zero-pixels} already recommends for the unrelated reason
that it is insensitive to vanishing input coordinates.

This is also the reason for a caveat stated later on other grounds. The
coherence $A$ of Section~\ref{sec:geometry} is invariant under a permutation
action \emph{provided the output action $\rho(g)$ is orthogonal as well} ---
Proposition~\ref{prop:equivar} does not assume this, and $A$, being built from
two Frobenius norms, needs it on both sides --- but it is not invariant under a
rotation of the input, which is exactly the pixel-basis dependence
Theorem~\ref{thm:pyth}'s discussion flags. The dependence is inherited from
$\mathrm{diag}(x)$, and by~(iv) it is not removable within the definition.
\end{remark}

\subsection{Further remarks on Section~\ref{sec:germ}}
\label{app:remarks-germ}

\begin{remark}[The activation is unconstrained; the hypothesis is $x\in X_{\mathrm{nz}}$]
\label{rem:xnz}
We impose no condition on $f$, and in particular none on $f(0)$: for sigmoid,
with $f(0)=\tfrac12$, the row sums reproduce the logits to roundoff
(\texttt{scripts/verify\_km\_claim\_checks.py}, check D) at random inputs. The one hypothesis is $x\in X_{\mathrm{nz}}$, and it is not removable by
a better guard. Where a pre-activation is exactly zero the post-activation is
$f(0)$ while $D_{qq}z_q=0$ for \emph{any} finite $D_{qq}$, so the identity fails
there by exactly $f(0)$ whatever value the guard assigns --- $0/0\mapsto0$ and
$0/0\mapsto10^6$ leave the same shortfall. Activations with $f(0)=0$ are
precisely those for which the exceptional set disappears and
\eqref{eq:km-def} holds at every input, ReLU among them; that is the role of the
condition, and it is a bonus rather than a prerequisite. Observe that
$X_{\mathrm{nz}}$ is open. For a real-analytic $f$ (sigmoid, $\tanh$, GELU,
SiLU) every pre-activation is a real-analytic function of $x$, whose zero set
is Lebesgue-null unless the pre-activation vanishes identically, so
$X_{\mathrm{nz}}$ is dense and of full measure as soon as no unit's
pre-activation vanishes identically --- a zero-weight, zero-bias unit does, for
every $f$, and is what this excludes. For the ReLU family the complement of
$X_{\mathrm{nz}}$ is the union of the realized zero sets $\{z^{(\ell)}_q=0\}$,
which lies inside the finitely many hyperplanes of Lemma~\ref{lem:null}
whenever no frozen pre-activation of that lemma's proof vanishes identically;
there, however, the question is moot, since $f(0)=0$ makes \eqref{eq:km-def}
hold at every input. One caveat is worth
recording: when $f(0)\neq0$ the quotient behaves like $f(0)/z$ near the excluded
set, so $D$ is unbounded there ($D=5.25$ at $z=10^{-1}$ rising to
$5\times10^{5}$ at $z=10^{-6}$ for sigmoid). The identity stays exact; the
entries do not stay small.
\end{remark}

\paragraph{What (LCS) is: the Cantor counterexample, the bridge to networks, and attribution.}
Part (ii) of Proposition~\ref{prop:lcs} asks $E$ to be discrete and not merely Lebesgue-null, and the
strengthening is not cosmetic; with ``discrete'' weakened to
``Lebesgue-null'' the statement is false. Let $g$ be the Cantor function
rescaled to rise from $1$ to $2$ across $[1,2]$, with $g\equiv1$ below and
$g\equiv2$ above, and set $f(z)=z\,g(z)$. Then $f$ is continuous with $f(0)=0$,
and $f(z)/z=g(z)$ is locally constant off the Cantor set --- closed,
uncountable and Lebesgue-null --- yet $f(1)=1$, $f(1.5)=2.25$ and $f(2)=4$, so
no pair $(a_+,a_-)$ fits and $f$ is outside the family
(\texttt{scripts/verify\_lcs\_vs\_pl.py}). The proof of (ii) needs each
component of $\R\setminus E$ to be an \emph{interval}, which is what
discreteness of $E$ supplies and null-ness does not.

The bridge back to networks is cheap, and we state it rather than assume it.
Suppose a network with continuous activation $f$ satisfies (LCS) at every input
outside a finite union of affine hyperplanes, and has a first-layer unit with a
nonzero weight row, so that its pre-activation $z(x)=w^{\top}x+\beta$ is an open
map onto $\R$. Fix $z_0\in\R$ whose fiber $z^{-1}(z_0)$ --- itself a hyperplane
--- is not one of those finitely many walls; then $z^{-1}(z_0)$ is not contained
in their union, so some $x$ off the walls has $z(x)=z_0$, and (LCS) at $x$ makes
$f(\cdot)/\cdot$ constant on a neighborhood of $x$, hence on an open interval
around $z_0$. The $z_0$ excluded by this argument are finitely many, so $E$ is
finite and (ii) applies: $f$ lies in the leaky-ReLU family and its only
breakpoint is $z=0$. That is what makes the
walls of Lemma~\ref{lem:null} the zero sets $\{z^{(\ell)}_{\sigma,q}=0\}$ and
nothing else, and the argument does not run through Lemma~\ref{lem:null}, so
there is no circularity.

Parts (ii) and (iii) of Proposition~\ref{prop:lcs} both restate a standard fact, and we record them only
because we need the exact class. Local constancy of
$f(z)/z$ on $\{z>0\}$ and on $\{z<0\}$ is precisely positive homogeneity of
degree one there: $f(\lambda z)=\lambda f(z)$ for $\lambda>0$ gives $f(z)=f(1)z$
for $z>0$ and $f(z)=-f(-1)z$ for $z<0$, and conversely. That continuous
positively homogeneous functions on $\R$ are exactly
$a_+\max(z,0)+a_-\min(z,0)$ is elementary and long-standing; the general theory
on $\R^n$, where the classification is genuinely substantial, is surveyed by
\citet{gorokhovik2016positively}. In the learning-theory literature the
consequence for activations --- that ReLU and leaky ReLU are the positively
homogeneous ones, and that this is what makes networks built from them
homogeneous in their parameters --- is used routinely, for instance by
\citet{neyshabur2015path}, \citet{dinh2017sharp}, and most explicitly by
\citet{lyu2020gradient}, whose analysis assumes exactly this class. What we
have not found stated elsewhere is the reformulation as a condition on the
slope diagonal $D$, and the strictness recorded in
Remark~\ref{rem:lcs-strict}: that this class is \emph{strictly smaller} than
the piecewise-linear activations, so that a PL network can have a germ
everywhere and still fail the germ identity. That strictness is therefore a
remark and not a part of the proposition --- its support is two examples and a
measurement, not a classification theorem.

\begin{remark}[(LCS) is strictly stronger than piecewise linearity]
\label{rem:lcs-strict}
Piecewise linearity of $f$ is \emph{not} (LCS), and the gap is not vacuous.
Hard-tanh $z\mapsto\mathrm{clip}(z,-1,1)$ and the shifted unit
$z\mapsto\max(z-1,0)$ are piecewise linear, so the networks they build are
piecewise affine and have a germ at every regular input, yet their slope
diagonal ranges over a continuum and $\Jwf(x)\neq D\Net(x)$. The failure is
visible in a single unit: at $z=2$ hard-tanh has $f(z)/z=1/2$ against
$f'(z)=0$, and the shifted unit has $f(z)/z=1/2$ against $f'(z)=1$. It
compounds through depth. On a $4$--$6$--$6$--$3$ network over $200$ random
inputs we measure $\max|\Jwf-D\Net|=7.8$ for hard-tanh and $7.15$ for the
shifted unit, against roundoff for ReLU, $|\cdot|$ and leaky ReLU, while the
row-sum identity \eqref{eq:km-def} is untouched throughout, holding to
roundoff for all five (\texttt{scripts/verify\_lcs\_vs\_pl.py}). What separates the two
families is Proposition~\ref{prop:lcs}(ii): (LCS) asks every linear piece of $f$
to pass through the origin, and continuity then confines the single break to
$z=0$. This is why the hypothesis of Theorems~\ref{thm:germ},
\ref{thm:maxinv} and~\ref{thm:complete} is written as (LCS) and never as ``for
piecewise-linear networks''.

The distinction is easy to lose, and has been lost.
\citet{chu2018exact} treat the whole piecewise-linear family correctly, by
carrying a per-neuron \emph{slope and intercept} rather than a slope alone ---
the intercept is precisely the term that a pieces-through-the-origin activation
does not need. \citet[Eq.~2]{wang2019bias} instead define piecewise-linear
activations by a formula in which every piece passes through the origin, and
then list hard-tanh among the activations covered; hard-tanh does not satisfy
that formula. That is a published instance of exactly the confusion this
remark is here to prevent. We record it as a remark rather than a result: its
support is two examples and a measurement, and we did not find it stated
elsewhere.
\end{remark}

\paragraph{Two readings of the boundary sentence of Theorem~\ref{thm:germ}.}
The per-region affine operator that Theorem~\ref{thm:germ} exhibits is the object
of the spline view of deep networks \citep{balestriero2018spline}, whose \S5.2
also contains the bias-by-subtraction formula $\cwf(x)=\Netx-D\Net(x)x$ used
above. Two readings of the boundary sentence must be kept apart. The row-sum
half holds for every activation, since $h_q=a_qz_q$ reads $0=0$ when $z_q=0$.
The germ-selection half does not: for ReLU, slope $0$ \emph{is} the one-sided
germ taken from the inactive side --- which is the reading used in the
sharpness example after Theorem~\ref{thm:maxinv} --- but for a general (LCS)
activation with $a_-\neq0$ (leaky ReLU, $|\cdot|$, the identity) slope $0$ is
\emph{neither} of the two one-sided slopes $a_\pm$, and the guard selects a
value that is not a germ of anything. That clause is ReLU-specific.

Observe that the germ is locally constant inside a region while $\M(x)$ is not:
the weighting $\mathrm{diag}(x)$ moves with $x$, so the
field $x\mapsto\M(x)$ varies continuously there. This matters because
\citet[Prop.~1]{srinivas2019full} prove that no saliency map $S(x)\in\R^{d}$
can, in general, be both \emph{complete} --- ``there exists a function
$\varphi$ such that $\varphi(S(x),x)=f(x)$ for all $f,x$'' (their
Definition~2) --- and \emph{weakly dependent} on the input --- for a
piecewise-linear model $f(x)=w_i^{\top}x+b_i$ on regions $U_i$, ``the
saliency map $S(x)$ restricted to a set $U_i$ is independent of $x$, and
depends only on the parameters $w_i,b_i$'' (their Definition~1). Under these
definitions $\M$ is complete and not weakly dependent, while
$\Weff=[\,J\mid c\,]$ is both: $\varphi(\Weff,x)=Jx+c$ recovers the output
and $\Weff$ is constant on each region. There is no tension, and the reason
is dimensional rather than a failure of either property: Proposition~1
concerns saliency maps valued in $\R^{d}$ --- its proof counts a map from
$(w_i,b_i)\in\R^{d+1}$ to $S\in\R^{d}$ as many-to-one --- whereas $\M$ and
$\Weff$ are $C\times(d{+}1)$, as is FullGrad's own full-gradient
representation, and it is that, not any failure of completeness, that keeps
Theorem~\ref{thm:germ} outside its scope.

\begin{remark}[Where the columns vanish]\label{rem:zero-pixels}
Column $i$ of $\M(x)$ is $J_{:,i}\,x_i$: on $\{x_i=0\}$ it vanishes
identically, so the $i$-th germ column is not recoverable from $\M(x)$ there.
This set is Lebesgue-null but has \emph{positive probability} under raw image
data; the condition must be checked on the preprocessed input the network
sees. The variant $\Weff(x)=[\,J\mid c\,]$ recovers the germ with no
condition on $x$. Invariance (Theorem~\ref{thm:maxinv}) needs no such
hypothesis; germ \emph{recovery} (Theorem~\ref{thm:complete}(i)) does.
\end{remark}

\paragraph{The chord behind Lemma~\ref{lem:quiver-inv}.}
The lemma rests on one substitution, spelled out here. Write $z_q$ for a
hidden unit's pre-activation and $h_q=f(z_q)$ for its post-activation. In
place of the derivative $f'(z_q)$ we use the \emph{secant} (chord) slope
$a_q=f(z_q)/z_q$, with the guard $a_q=0$ when $z_q=0$, so that
$h_q=a_qz_q$ holds \emph{exactly} --- an identity, not a first-order
approximation. Collecting the $a_q$ of a layer into a diagonal matrix
gives exactly the $D^{(\ell)}(x)$ of \eqref{eq:D-def}: the
secant decomposition is not an extension of \eqref{eq:D-def} but its content,
and the masked-product reading is the ReLU specialization. The chord is the
same quantity that \citet[Prop.~1]{ancona2018towards} identify as
$\varepsilon$-LRP's modified gradient, in the $\varepsilon\to0$ limit and with
the bias included in the denominators; the knowledge matrix differs in
appending a bias column, and the shortfall between $\varepsilon$-LRP's per-class
attributions and the logit is exactly that column, $\cwf(x)$. For ReLU the two agree,
since $f(z)/z=\mathbb 1[z>0]$ off $z=0$ and the guard $0/0\mapsto0$
selects the same value as the mask convention at $z=0$.

Two consequences are used repeatedly. First, the row-sum identity
$\M\mathbf 1=\Net$ is exact at every input in $X_{\mathrm{nz}}$, and at
\emph{every} input when $f(0)=0$, including points on region boundaries, where
the derivative is ambiguous but the chord is not --- and the chord is what makes
Corollary~\ref{prop:perm} hold for every $x$ rather than merely almost
every $x$. Second, the same recipe applies verbatim to smooth activations
(GELU, sigmoid, $\tanh$), for which the derivative alone does not close:
$h_q=f'(z_q)z_q$ is false, and an exact gradient-based accounting there
needs an extra implicit-bias term $f(z_q)-f'(z_q)z_q$ per unit
(\citealp[\S4]{srinivas2019full}; verified exact to roundoff on a
three-layer sigmoid network, \texttt{scripts/verify\_km\_claim\_checks.py}),
which the chord absorbs into the slope it already carries. This is not an
extension of \eqref{eq:D-def} but that definition itself, applied to a non-PL
activation, as Section~\ref{sec:attribution} records. The one requirement is $x\in X_{\mathrm{nz}}$,
and nothing is asked of $f(0)$: the chord reproduces $h_q=a_qz_q$ exactly
wherever $z_q\neq0$. Only on the null set $\{z=0\}$ does $f(0)\neq0$
matter, and there no choice of $a_q$ repairs the identity, since $a_q\cdot0=0$
for every finite $a_q$; activations with $f(0)=0$ are exactly those for
which that exceptional set is empty.

The rescaling half of the lemma holds for every $\tau\neq0$, not only $\tau>0$,
because the secant is what is conjugated: $f_\tau(\tau z)/(\tau
z)=f(z)/z$ whatever the sign of $\tau$. What changes with the sign is the
\emph{activation the rescaled unit carries}. For ReLU and $\tau>0$ one has
$f_\tau=f$, so the rescaled network is again a ReLU network; for
$\tau<0$, $f_\tau(z)=\min(z,0)$, the reflected unit, and the rescaled
layer no longer computes ReLU. Theorem~\ref{thm:complete}(iii) is about the
gauge group \emph{within} a fixed architecture and therefore needs $\tau>0$
specifically; the lemma does not. The lemma's scope is per-vertex scalar
activations: a channel permutation and a positive channel rescaling commute
with spatial max-pooling and its lowest-index tie-break, whereas a negative
$\tau$ upstream of a max-pooling window carries the maximum to a minimum
\citep[Remark~4.14]{armenta2021representation}, so the rescaling half is
stated for positive $\tau$ at such units.

\begin{remark}[A knowledge-matrix penalty is input-gradient regularization]
\label{rem:doublebackprop}
Theorem~\ref{thm:weff} has a consequence for \emph{training} that we did not set
out to obtain and do not test here, but that seems worth recording because it
lands on well-trodden ground. Penalizing the size of the knowledge matrix is
penalizing an input gradient. Explicitly, dropping the bias column,
\[
\bigl\|\M(x)\bigr\|_F^2-\bigl\|c(x)\bigr\|_2^2
\;=\;\sum_{c=1}^{C}\sum_{i=1}^{d}
\Bigl(\frac{\partial \Psi_c(W\!,f)}{\partial x_i}\,x_i\Bigr)^{\!2}
\;=\;\bigl\|\nabla_x\Net\odot x\bigr\|_F^2 ,
\]
an input-gradient$\times$input penalty. That is the input-scaled, per-class,
logit-space member of a family that is already known to work: double
backpropagation \citep{drucker1992improving}, revived as input-gradient
regularization for adversarial robustness and interpretability by
\citet{ross2018improving} and scaled up by \citet{finlay2019scaleable}. Those
penalize $\|\nabla_x\mathcal L\|^2$ or $\|\nabla_x\sum_c\log p_c\|^2$; the
knowledge-matrix form differs in weighting each coordinate by $x_i$, in
resolving the penalty by class rather than summing over classes first, and in
acting on logits rather than log-probabilities.

Two of this paper's results say what such a penalty would and would not buy.
Theorem~\ref{thm:pyth} supplies the robustness direction as an inequality rather
than an intuition: $\|\Delta\Net\|_2\le\sqrt{d{+}1}\,\|\Delta\M\|_F$, so
controlling knowledge-matrix displacement controls logit displacement, which is
the quantity an attack must move. Theorem~\ref{thm:within} says what is being
controlled within a region, $d_M^2=\sum_i\delta_i^2\|J_{:,i}\|_2^2$: a
perturbation-weighted Jacobian energy, so the penalty is anisotropic in a way a
plain $\|\nabla_x\mathcal L\|^2$ is not. It also flags the obvious failure mode
--- the $x_i$ weighting makes the penalty blind wherever $x_i=0$
(Remark~\ref{rem:zero-pixels}), which for preprocessed images is a set of
positive probability, so $\Weff$ rather than $\M$ would be the
sensible object to penalize.

We state this as a connection, not a result. We run no training experiment in
this paper, and nothing here should be read as evidence that the penalty helps;
what Theorem~\ref{thm:weff} establishes is only that it is not a new idea in
disguise --- it is double backprop with an input-scaled, class-resolved
weighting, and it inherits whatever that literature has established.
\end{remark}

\begin{remark}[The ``no pooling ties'' hypothesis]\label{rem:pooling-ties}
Max-pooling is piecewise linear but
non-differentiable wherever two or more entries of a pooling window attain the
maximum simultaneously --- a \emph{tie}. At a tie the subgradient is
set-valued: the window's output may be routed to either winning entry, so the
masked-product Jacobian, and with it $\M$ and per-class
gradient$\times$input, depend on the tie-breaking rule rather than on the
function. The tie hyperplanes are exactly the extra walls that the proof of
Lemma~\ref{lem:null} adds for max-pooling, so this is not a new hypothesis: it
is what $x\in X_{\mathrm{reg}}$ already asks of the pooling layers, spelled
out because a reader may hear $X_{\mathrm{reg}}$ as a statement about ReLU
signs alone. Ties are a Lebesgue-null event. Our convention is the one the
implementation inherits from PyTorch: the forward pass stores the
\texttt{return\_indices} argmax, which on a tie selects the lowest flat index
in the window, and every route --- masked product, probe, autograd --- reuses
that same stored index, so the three still agree with one another on the null
set (Appendix~\ref{app:engineering}). What the hypothesis rules out is the
stronger claim, that the shared value is determined by the realized function
and not by the convention.
\end{remark}

\begin{remark}[Two unrelated meanings of ``gauge'']\label{rem:gauge-collision}
We use \emph{gauge} for the parameter-space redundancy of
Section~\ref{sec:invariance}: transformations of $W$ that leave $\Net$
fixed, following the reading of parametric redundancy as gauge symmetry of
\citet{hashimoto2024unification}. In geometric deep learning the same word names
something else entirely --- the choice of local frame on a manifold, with gauge
equivariance meaning independence of that choice
\citep{cohen2019gauge,bronstein2021geometric}. The two act on different objects,
parameters in one case and tangent frames in the other, and nothing in this
paper concerns the second. We flag the collision because both usages are
established and a reader arriving from either literature will otherwise assume
the wrong one.
\end{remark}

\begin{remark}[Why the correctness checks permute channels]\label{rem:channelperm}
Proposition~\ref{prop:equivar} also explains a constraint that
Appendix~\ref{app:teleport-checks} otherwise reports as an implementation limit.
Requiring an architecture to remain $G$-equivariant restricts which quiver
isomorphisms are available: a per-neuron rescaling that varies across the
positions tied together by a convolution destroys the weight sharing, so the
rescaled layer is no longer a convolution and has left the equivariant class,
whereas a rescaling constant on each channel stays inside it. The
architecture-preserving part of the gauge group of a convolutional network is
therefore per-channel rather than per-neuron --- which is why the permutation
check acts on the $2048$-channel post-pool face and why bottleneck interiors are
not permutable in place. The knowledge matrix is invariant under the full quiver
group in any case (Lemma~\ref{lem:quiver-inv}), so nothing about its invariance
depends on this; what the restriction fixes is which transforms an experiment
can realize. The same restriction bears on the perturbation of
Theorem~\ref{thm:complete}(iv): a change of one first-layer weight row is not
a parameter change of a convolutional layer, which is why that part has a
second proof, acting on the free output layer alone.
\end{remark}

\begin{remark}[The quiver lift is scaffolding for future work, not machinery of this paper]
\label{rem:quiver-lift-scaffolding}
Every result in this paper --- the germ identity (Theorem~\ref{thm:germ}), maximal
invariance (Theorem~\ref{thm:maxinv}), completeness (Theorem~\ref{thm:complete}),
and the entire distance geometry of Section~\ref{sec:geometry} --- follows from
the germ alone, i.e.\ from the chain rule on the active region. We use the
induced representation $\kmapx$ nowhere in this paper except through its
\emph{contraction}, which is the knowledge matrix
(Proposition~\ref{prop:contraction}). The induced representation strictly
refines the matrix --- it separates functionally
identical but non-isomorphic networks that the matrix identifies --- and it is
the object of future work where that finer information is needed. In this paper
it is scaffolding for the construction, not a load-bearing tool.
\end{remark}

\subsection{Further remarks on Section~\ref{sec:geometry}}
\label{app:remarks-geometry}

\paragraph{The last clause of Definition~\ref{def:coherence}, unpacked.}
The last clause of the definition is worth unpacking. Since
$A=(d_\Psi/d_M)^2=1/r^2$ with $r=d_M/d_\Psi>0$, and $t\mapsto1/t^2$ is
strictly decreasing on $(0,\infty)$, the map from one to the other is
order-reversing pair by pair. Any statistic that sees the sample only through
the ordering of its values --- order statistics, quantiles, medians, Spearman
and Kendall coefficients, the concordance $W$ --- therefore transfers between
$r$ and $A$ with its direction flipped and nothing else changed. In
particular the attack-family ordering we report is one and the same finding
whichever of the two we tabulate, which is exactly why that ordering does not
borrow anything from $A$ having a validated meaning. The single caveat is
arithmetic rather than conceptual: an interpolated median of an even-count
sample averages two order statistics, so it commutes with the transform only
up to the gap between them --- below our quoted precision; see
Appendix~\ref{app:proofs}.

\begin{remark}[Why the coherence is the squared ratio and not the ratio itself]
\label{rem:why-squared}
There are four reasons,
of which only the first is essential. \emph{(a)} Theorem~\ref{thm:pyth} is an
\emph{energy} statement: $\|\Delta\M\|_F^2$ splits \emph{additively} into
$\|\Delta\Net\|^2/(d{+}1)$ and $\|Q\|_F^2$ because the two components are
orthogonal, and additivity across orthogonal components is a property of
squared norms, not of norms. The dimensionless quantity the theorem actually
hands us is therefore a share of squared norm, namely $\rho_{\mathrm{vis}}$;
$d_\Psi/d_M$ is not a share of anything. \emph{(b)} $A$ \emph{is} that share,
renormalized: $A=(d{+}1)\rho_{\mathrm{vis}}$, the factor $(d{+}1)$ chosen so
that the single-pixel case reads $A=1$ instead of $1/(d{+}1)$. \emph{(c)} In
the squared scale the two theorem-given reference lines are the clean numbers
$A=1$ and $A\le d$; in the unsquared scale they would be $1$ and $\sqrt d$,
and the additive split would not be legible at all. \emph{(d)} Nothing is
lost: the two are strictly monotone functions of each other, so every
rank-based conclusion is identical under either, as the previous paragraph
records.
\end{remark}

\paragraph{The symbols in the crossing term \texorpdfstring{$K$}{K} of Theorem~\ref{thm:anatomy}.}
We unpack the symbols in $K$. A \emph{dyad} is an
outer product $uv^{\top}$ of two vectors: a matrix of rank one, every column a
multiple of $u$ and every row a multiple of $v^{\top}$. Each wall crossing
contributes exactly one such term because a single unit flipping changes the
region's Jacobian by a rank-one update --- the unit has one way in from the
input and one way out to the logits. Fix the $j$-th crossing, at which unit
$k_j$ of some hidden layer $\ell^{*}$ flips (Lemma~\ref{lem:dyad},
Appendix~\ref{app:proofs}). Then $z_j=x+t_j\delta$ is the point on the segment
at which that crossing occurs, so $\mathrm{diag}(z_j)$ is the input scaling
that turns germ data into knowledge-matrix data \emph{there};
$\sigma_j\in\{+1,-1\}$ records the direction of the flip ($-\to+$ gives $+1$,
$+\to-$ gives $-1$), so that the unit's slope jumps by $\kappa\sigma_j$ with
$\kappa=a_+-a_-$ --- for ReLU, $\kappa=1$ and the jump is $\sigma_j$ itself,
which is why the ReLU reading absorbs $\kappa$ silently;
$u_{k_j}\in\R^{C}$ is the output-side vector, the column
through which unit $k_j$ reaches the $C$ logits along the downstream masked
product; and $v_{k_j}\in\R^{d}$, $\gamma_{k_j}\in\R$ are the input-side data,
defined by writing the unit's pre-activation as an affine function of the
network input, $z^{(\ell^{*})}_{k_j}(x')=v_{k_j}^{\top}x'+\gamma_{k_j}$, so
that $v_{k_j}^{\top}$ is a row of the upstream masked product and
$\gamma_{k_j}$ the bias accumulated below layer $\ell^{*}$. The wall is the
hyperplane $\{v_{k_j}^{\top}x'+\gamma_{k_j}=0\}$, which is precisely why the
dyad's row sums vanish: $[\,v^{\top}\mathrm{diag}(z_j)\mid\gamma\,]\mathbf 1
=v^{\top}z_j+\gamma=0$ at the crossing point.

\begin{remark}[The smooth part of the decomposition is integrated gradients]
\label{rem:ig-smooth}
$\bar J=\int_0^1J(x+t\delta)\,dt$ is the mean value of the Jacobian along the
segment from $x$ to $y$, and the identity $\Delta\Net=\bar J\delta$ is the mean
value theorem for vector-valued maps --- equivalently the fundamental theorem
of calculus along the segment --- which is exact here because $\Net$ is Lipschitz
and piecewise affine on it. It is also, exactly, the object behind
\emph{integrated gradients} \citep{sundararajan2017axiomatic}: the
integrated-gradients attribution of $y$ against baseline $x$ is
$(y-x)\odot\int_0^1\nabla\Net\bigl(x+t(y-x)\bigr)\,dt$, whose value for class
$c$ and coordinate $i$ is the $(c,i)$ entry of $\bar J\,\mathrm{diag}(\delta)$
--- the smooth block above. Their completeness axiom
$\sum_i\mathrm{IG}_i=\Nety-\Netx$ is the row-sum statement
$\bar J\delta=\Delta\Net$, and the axiomatic basis they invoke is the Aumann--Shapley
average-gradient cost-sharing rule. This sharpens rather than weakens the
equivalence we claim in Theorem~\ref{thm:weff}: at a single input the
knowledge matrix is per-class gradient$\times$input plus a bias column, and
\emph{between} two inputs its smooth part is integrated gradients. What the
decomposition adds is the remainder integrated gradients has no name for ---
the crossing term $K$, invisible to the endpoint logits and therefore
invisible to any completeness axiom phrased in terms of them.
\end{remark}

\paragraph{The scope of Proposition~\ref{prop:dichotomy}, and the symmetry beneath it.}
We separate two scopes of Proposition~\ref{prop:dichotomy}, which is stated
for an arbitrary hidden layer $\ell$ and applied in the experiments to the
penultimate one. Rescaling a \emph{single} unit of the layer already makes
$d_h$ unbounded above, but only on $[m,\infty)$ for some $m>0$ fixed by the
untouched coordinates; it is the uniform rescaling of the whole layer that
sweeps all of $(0,\infty)$ and so leaves no invariant function of $d_h$
alone. And the conclusion is about functions of the three stored distances
$(d_h,d_\Psi,d_M)$, which is what a stored-distance pipeline records. It is
not a claim that nothing about a hidden representation is function-level:
a statistic built from other stored quantities is outside the scope of this
argument --- the normalized distance $d_h/\|h^{(\ell)}(x)\|_2$, for
instance, is invariant under this uniform rescaling.

The fact underneath Proposition~\ref{prop:dichotomy} is not ours. That a
positive per-neuron rescaling leaves a ReLU network's function exactly
unchanged while moving its hidden activations --- so that activation norms,
and any distance built from them, can be rescaled essentially at will --- is
the positive-homogeneity (rescaling) symmetry of ReLU networks. It is used as
an optimization-geometry tool by \citet{neyshabur2015path} in Path-SGD and,
most familiarly, by \citet{dinh2017sharp} to show that the sharpness of a
minimum is reparameterization-dependent; it is the same symmetry that appears
here as the gauge group, and as the rescaling half of
Lemma~\ref{lem:quiver-inv}. What this proposition contributes is only the
consequence for the accounting of Theorem~\ref{thm:pyth}: because no function
of the three stored distances that is invariant under the layer rescaling
depends on $d_h$, the visible/invisible split has no analog for hidden
activations --- there is
no fixed, parameter-independent vector playing the role of $\mathbf 1$ on the
activation side. The symmetry is prior work; the corollary drawn from it for the
decomposition is what we claim.

\paragraph{\texorpdfstring{$C^0$}{C0} closeness, and the three tiers of Proposition~\ref{prop:visdrift}.}
In Theorem~\ref{thm:nogo}, $C^0$ closeness means closeness in \emph{value},
uniformly: we call two
maps $C^0$-close when $\sup_{x'}\|\Netxp-g(x')\|\le\varepsilon$, the metric of the
space $C^0$ of continuous functions under the supremum norm. The superscript
counts derivatives, so $C^1$ closeness would additionally require the
derivatives to agree. The theorem says the first gives no control of the second,
and the reason is that the knowledge matrix records the germ --- first-order
data. The smooth illustration $g=\Net+\varepsilon\sin(\cdot/\varepsilon^{2})$
shows the mechanism: it stays uniformly within $\varepsilon$ of $\Net$
everywhere while its derivative differs by $1/\varepsilon$. The theorem's
witness is the two-unit ReLU ramp of its proof, a same-architecture network.
This is why the teleportation check of
Appendix~\ref{app:teleport-checks} computes the matrix drift directly rather
than inferring it from the logits: agreement of the logits to any tolerance
is, by itself, no evidence at all about the matrices.

For a transform that is only approximately function-preserving,
Proposition~\ref{prop:visdrift}
separates three tiers: zero drift under an exact isomorphism; a
\emph{visible} drift pinned as an identity to the measured logit gate
$\varepsilon=\|\tilde\Psi(x)-\Netx\|$; and an \emph{invisible} drift that the
gate does not bound at all (Theorem~\ref{thm:nogo}), with a conditional bound
only when both networks are affine on a common ball.

Before the next statement we fix two words. The \emph{logit gate}
of a transform $\tau$ at $x$ is the measured discrepancy between the two
networks' outputs at the same input,
$\varepsilon=\|\tilde\Psi(x)-\Netx\|$, where $\tilde\Psi=\Psi(\tilde W\!,f)$ is
the network after the transform and $\tilde{\M}$ its matrix. A quantity is
\emph{gate-bounded} when the theory bounds it
by $\varepsilon$, and \emph{gate-pinned} when the theory forces it to
\emph{equal} $\varepsilon$ as an algebraic identity; how an experiment uses
the gate as an acceptance threshold is recorded in
Appendix~\ref{app:teleport-checks}.

\begin{proposition}[Knowledge-matrix drift under a transform with logit gate $\varepsilon$]\label{prop:visdrift}
\emph{(i)} Under an exact isomorphism, the knowledge-matrix drift is zero.
\emph{(ii)} Under a transform with logit gate $\varepsilon$, the \emph{visible}
drift equals the gate:
$\|(\tilde{\M}(x)-\M(x))\mathbf 1\|=\|\tilde\Psi(x)-\Netx\|=\varepsilon$.
\emph{(iii)} The \emph{invisible} drift admits no bound in terms of
$\varepsilon$ alone (Theorem~\ref{thm:nogo}). If both networks are affine on
a common ball of radius $r$ around $x$, then
$\|\tilde{\M}(x)-\M(x)\|_F\le(\varepsilon_r/r)\sqrt{\min(C,d)}\,\|x\|_\infty
+\varepsilon_r(1+\|x\|_2/r)$.
\end{proposition}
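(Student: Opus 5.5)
The proof treats the three parts separately, since each rests on a different earlier result.

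\emph{Part (i).} An exact isomorphism is a quiver isomorphism in the sense of Lemma~\ref{lem:quiver-inv}: a relabeling, a change of basis by nonzero factors, or a composite of these, with the activations carried along. The lemma then gives $\tilde{\M}(x)=\M(x)$ at every input, so the drift is zero. Nothing more is needed.

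\emph{Part (ii).} Apply the row-sum identity \eqref{eq:km-def} to both networks. This holds at every $x$, because a network satisfying (LCS) has $f(0)=0$ and hence $X_{\mathrm{nz}}=\R^d$. Subtracting, $(\tilde{\M}(x)-\M(x))\mathbf 1=\tilde\Psi(x)-\Netx$, and taking norms gives the claim as an equality. By Theorem~\ref{thm:pyth}, the visible part $P$ of $\Delta\M$ then has Frobenius norm exactly $\varepsilon/\sqrt{d+1}$.

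\emph{Part (iii), no bound from $\varepsilon$ alone.} This is Theorem~\ref{thm:nogo}. That theorem produces same-architecture pairs with uniform gap at most $\varepsilon$ and matrix drift at least $B$. By the identity in (ii), the visible part is at most $\varepsilon/\sqrt{d+1}$, so the invisible part $Q$ must carry all but a bounded amount of that drift.

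\emph{Part (iii), conditional bound.} Suppose both networks are affine on $B(x,r)$. Write $\Delta J$ and $\Delta c$ for the differences of their germs, and $e=\tilde\Psi-\Net$, which is affine on the ball with $\|e\|\le\varepsilon_r$ there.
\begin{itemize}
\item For any unit vector $u$, $2r\,\Delta J\,u=e(x+ru)-e(x-ru)$. Hence $\|\Delta J\|_{\mathrm{op}}\le\varepsilon_r/r$.
\item For the first block, $\|\Delta J\,\mathrm{diag}(x)\|_F\le\sqrt{\mathrm{rank}}\,\|\Delta J\,\mathrm{diag}(x)\|_{\mathrm{op}}$. The rank is at most $\min(C,d)$, and $\|\mathrm{diag}(x)\|_{\mathrm{op}}=\|x\|_\infty$.
\item For the bias column, $\Delta c=e(x)-\Delta J\,x$, so $\|\Delta c\|\le\varepsilon_r+\|\Delta J\|_{\mathrm{op}}\|x\|_2$.
\end{itemize}
The triangle inequality on the two blocks, Frobenius norm being at most the sum of the block norms, gives the displayed estimate.

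The only step that needs care is the bias term. It requires identifying $\Delta c$ through the germ identity, Theorem~\ref{thm:germ}, which holds because $x$ is regular for both networks on the common ball. It also requires keeping $e(x)$ separate from $\Delta J\,x$ so that they are bounded individually. Everything else is immediate from results already stated.
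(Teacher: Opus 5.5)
Your proof is correct and follows the paper's own argument essentially step for step. It uses the row-sum identity on both networks for (ii), together with the projection norm $\varepsilon/\sqrt{d+1}$, and Theorem~\ref{thm:nogo} for the unbounded half of (iii). For the conditional estimate it uses the same three ingredients as the paper: the difference-of-differences bound $\|\Delta J\|_{\mathrm{op}}\le\varepsilon_r/r$, the rank bound on $\Delta J\,\mathrm{diag}(x)$, and $\|\Delta c\|\le\varepsilon_r+\|\Delta J\|_{\mathrm{op}}\|x\|_2$.

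One precision on the step you flag. Affineness of the two realized functions on the ball does not by itself make $x$ regular. The identity network and $g(x)=\mathrm{ReLU}(x-1)-\mathrm{ReLU}(1-x)+1$ at $x=1$ have $\varepsilon_r=0$ but drift $\sqrt2$. So the hypothesis must be read as each network's slope pattern being constant on the ball. That is exactly what your appeal to Theorem~\ref{thm:germ} requires, and what the paper's proof assumes without saying so.
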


Part (ii) is the
gate-pinned case: the row-sum identity gives
$\|(\tilde{\M}(x)-\M(x))\mathbf 1\|=\|\tilde\Psi(x)-\Netx\|=\varepsilon$ exactly, with
no inequality anywhere, so a table of visible drift is a table of the gate
itself and says nothing about invariance beyond what the gate already says.
Part (iii) is the genuinely unbounded case: by Theorem~\ref{thm:nogo} the gate
constrains the invisible drift not at all, which is why the conditional bound
there has to carry the extra hypothesis that both networks are affine on a
common ball. Whether a given transform actually needs this three-tier
treatment is settled empirically in Appendix~\ref{sec:study1b}.

Our teleportation study (Appendix~\ref{sec:study1b}) sits at tier~(i): the
transform is an exact isomorphism, batch normalization in evaluation mode
included (Appendix~\ref{app:teleport-why}), so both components of the drift
vanish, and the check computes both rather than asserting either from a
theorem: the visible one through identity~(ii), which is an equality, and the
invisible one directly (Appendix~\ref{sec:study1b}). Tiers~(ii) and~(iii)
are stated because they
are what one needs for a transform that genuinely is approximate; the no-go of
Theorem~\ref{thm:nogo} stands regardless, being a statement about what $C^0$
closeness cannot control rather than about any particular transform. The empirical
visible/invisible split is exhibited separately, on adversarial pairs where the
gate is exact, in Study~2 (Section~\ref{sec:study2}).

%% file: sections/appendix_limitations_more.tex
\section{Further limitations}
\label{app:limitations-more}

Table~\ref{tab:limitations-more} continues Table~\ref{tab:limitations} with
the four caveats of measurement convention --- one on the attack suite, two on
the similarity panel, one on the coherence statistics --- that apply across
several studies at once and so belong to none of them.

\begingroup\small
\setlength{\LTpre}{6pt}\setlength{\LTpost}{6pt}
\begin{longtable}{@{}>{\raggedright\arraybackslash}p{0.24\linewidth}p{0.73\linewidth}@{}}
\caption{Further limitations (L9--L12): caveats of measurement convention that
apply across several studies, with the consequence of each and where it is
taken up.}\label{tab:limitations-more}\\
\toprule
Limitation & Consequence, and forward pointer \\
\midrule
\endfirsthead
\toprule
Limitation & Consequence, and forward pointer \\
\midrule
\endhead
\bottomrule
\endlastfoot
\textbf{(L9)} The perturbation budget $\varepsilon$ is never varied. &
FGSM, PGD, APGD and Square all run at the \texttt{torchattacks} default $\varepsilon=8/255$ in $\ell_\infty$ on every rater and in both pair sets, CW and DeepFool are not $\varepsilon$-budgeted, and the only per-architecture overrides are to step counts, the APGD loss and the Square query budget (Appendix~\ref{app:setup-details}). Nothing here establishes that the coherence magnitudes of Section~\ref{sec:study2}, or the attack-family ordering Kendall's $W$ summarizes, survive a change of $\varepsilon$ --- a larger budget crosses more region walls, which is exactly what $A$ measures --- so an $\varepsilon$-sweep is the most informative robustness check the design omits. \\
\addlinespace
\textbf{(L10)} The cross-architecture panel is computed on raw unequal dimensions, and no measure in it is dimension-neutral. &
The panel of Section~\ref{sec:crossarch} is computed on the $2048$- and $1024$-dimensional features with no projection and no PCA; the measures are well defined (normalized Bures reads the $n\times n$ kernel, so zero-column padding leaves it unchanged, invariance ratio $1.000000$), but on a matched-signal probe the unequal-dimension pair scores higher than the equal-dimension one by $+0.02$ to $+0.04$ across debiased CKA, Bures, RSA and distance correlation at the widths used here (\texttt{scripts/verify\_dimension\_bias\_probe.py}; the magnitude is probe-dependent, the sign is not). The bias inflates the two pairs that finish behind (ResNet-152/DenseNet-121, ResNet-152/GoogLeNet), so the reported ordering is conservative with respect to it, but it is uncorrected and no cross-dimensional similarity should be compared to an equal-dimensional one at the third decimal place. \\
\addlinespace
\textbf{(L11)} The shuffled-pair control is reported only where its null is $\approx0$. &
The Murphy control \citep{murphy2024debiased} was computed for every measure of the within-architecture panel, but the three with a null at zero --- debiased CKA, RSA, distance correlation --- are the ones the text foregrounds and the only ones the pipeline's automated gate checks. Bures's shuffled null is $0.264$--$0.294$ while the cross-architecture Bures values of Section~\ref{sec:crossarch} are $0.478$--$0.595$ --- a real margin over the null but much smaller than the raw value suggests; the null is a fidelity between two positive semidefinite kernels and depends on $n$ as well as on the spectrum, and it was computed at $n=2{,}048$ against panel values at $N=25{,}000$, so the $1.6$--$2.3\times$ ratio is indicative only, and the same caution applies to the other bounded measures whose nulls we do not quote. \\
\addlinespace
\textbf{(L12)} The coherence medians carry no intervals, and the appendix gap tests are uncorrected. &
The per-cell medians of Table~\ref{tab:coherence} (the ``median $A\le0.23$'' headline) are point estimates, the reduce behind them storing aggregates only, and only the appendix ordering panel, whose per-pair ratios are stored, carries bootstrap intervals; that panel's $15$ adjacent-gap tests ($3$ architectures $\times$ $5$ gaps, Table~\ref{tab:s3_gap_ci}) carry no family-wise correction, so under a global null the family-wise error rate at $\alpha=0.05$ approaches $1-0.95^{15}\approx54\%$. We disclose both at the point of use and adjust neither, since requiring both the independent and the paired interval to exclude zero already makes each verdict conservative in an unquantified direction, and stacking a correction on that would give a number we could not interpret. \\
\end{longtable}
\endgroup

%% file: sections/appendix_setup_details.tex
\section{Setup details}
\label{app:setup-details}

\paragraph{Distance metrics.}
For knowledge matrices we use the Frobenius distance
$\| \Mfx - \Mfxp \|_F$; for cross-architecture comparisons we report it
RMS-per-coordinate (dividing the raw Frobenius norm by $\sqrt{C(d{+}1)}$),
the canonical fair-comparison metric that strips the dimensionality
inflation. For penultimate features we report two metrics, used for
different purposes. Within a single architecture we use the absolute
$\ell_2$ distance $\| h(x) - h(x') \|_2$. For cross-architecture
comparisons (where penultimate dimensionalities differ), we use the
RMS-per-dimension $\| h(x) - h(x') \|_2 / \sqrt{D}$, with the caveat
that this is \emph{not} normalized by $\| h(x) \|_2$, so cross-arch
rankings remain confounded by feature scale. We discuss the
implications in Section~\ref{sec:limitations} (L6). For logits we use
$\ell_2$.

\paragraph{Attack budgets.}
The hyperparameters that depart from the \texttt{torchattacks}~3.5.1
defaults are DeepFool \texttt{steps}$=200$ (DenseNet-121, GoogLeNet,
ResNet-18) and \texttt{steps}$=100$ (ResNet-152, fewer steps suffice on its
deeper loss curvature); APGD \texttt{steps}$=50$ with the DLR loss; and
Square \texttt{n\_queries}$=20{,}000$. FGSM, PGD, and CW use the library
defaults everywhere. These overrides are registered per experiment and
cover only ResNet-152, DenseNet-121, GoogLeNet and ResNet-18: \emph{AlexNet
and VGG ran the \texttt{torchattacks}~3.5.1 defaults throughout} --- DeepFool
\texttt{steps}$=50$, APGD \texttt{steps}$=10$ with the CE loss, Square
\texttt{n\_queries}$=5{,}000$ --- so those two ordering-only raters
face weaker attack budgets than the other four. The full-scale pair set behind
Table~\ref{tab:s3_ordering_panel} uses its own budgets, which depart from the
defaults on five of the six attacks: PGD \texttt{steps}$=7$, CW
\texttt{steps}$=100$, DeepFool \texttt{steps}$=100$, APGD \texttt{steps}$=50$
with the DLR loss, and Square \texttt{n\_queries}$=20{,}000$; only FGSM runs
at the library default. The budgets are
recorded in the experiment-registration code of the supplementary material
(\texttt{constants/constants.py} and the per-study workers); the per-attack
result JSONs store aggregates only and do not carry them.

\paragraph{Finite arithmetic.}
\label{app:finite-arith}
Every identity in this paper --- the row sum $\M\mathbf 1=\Net$, the
invariance of the matrix under a neuron permutation or a teleportation, the
agreement of the three construction routes of Appendix~\ref{app:engineering}
--- holds in exact arithmetic, and what an implementation registers is its own
resolution, which we record here once and never read as a property of the
objects. Knowledge matrices are computed in single precision throughout the
pipeline, and distances between them in double. At the depth of ResNet-152 the
single-precision row-sum residual $\max_c|(\M(x)\mathbf 1)_c-\Netx_c|$ is not
negligible in absolute logit units: over the $11{,}024$ evaluations of its
stored pairs its median is $0.007$, its $99$th percentile $0.061$ and its
maximum $0.295$, which is also the maximum over the $35{,}024$ evaluations of
the full-scale reduce (the pipeline's correctness gate, at tolerance $0.35$,
passes every sample); recomputing the same matrices in double precision brings
the residual to $\sim10^{-14}$ (\texttt{scripts/verify\_float64\_rowsum.py}).
The permutation residual of Appendix~\ref{sec:study1a} --- $0.116$ in the mean
and $5.78$ at its maximum in Frobenius norm on ResNet-152,
Table~\ref{tab:signal-relative} --- is of the same nature and falls to
$\sim10^{-14}$ in double precision. For teleportation the resolution is set by
the accelerator. The cluster runs used TF32 convolution arithmetic on H100
GPUs, whose $10$-bit mantissa leaves logit discrepancies of
$10^{-2}$--$10^{-1}$ between a network and its teleported image, whereas the
same change-of-basis draws recomputed on a CPU give $4$--$10\times10^{-6}$ in
single precision and $\sim10^{-14}$ in double, a single-to-double ratio of
$3.3$--$7.9\times10^{8}$ against the $5.4\times10^{8}$ that pure roundoff
predicts (a change of function would give a ratio near $1$); the knowledge
matrix itself, recomputed by vector--Jacobian products on the original and the
teleported network, drifts by $0.8$--$2.5\times10^{-15}$ relative in double
precision on all three architectures
(\texttt{scripts/verify\_teleportation\_exactness.py} and
\texttt{scripts/verify\_teleportation\_km\_exactness.py}, both CPU runs of a
few minutes, with the library's three internal single-precision casts patched
out). Two implementation requirements follow. Networks must be in evaluation
mode throughout: active dropout desynchronizes the construction passes and the
row-sum identity then appears to fail at the scale of the logits themselves (on
pretrained AlexNet the residual is $0.9$ of $\max_c|\Psi_c(x)|$). And the
library's serialized change-of-basis loading path has an unrepaired defect --- a
matrix loaded through it violates $\M\mathbf 1=\Net$ structurally --- which is
why the teleported matrices are computed by vector--Jacobian products on the
teleported model.

\paragraph{Sample sets and seeds.}
Per-study sample sets are fixed and reproducible, but they are not all
the same set. Studies~1 and~3 use the first $N$ ImageNet-validation images by
sorted filename ($N=25{,}000$), and so does the full-scale pair set
behind Table~\ref{tab:s3_ordering_panel}. The $200$-pair
set behind Tables~\ref{tab:coherence}--\ref{tab:ordering} does not: it is
\texttt{random.Random(42).sample} of $200$ indices from a $25{,}000$-image
half of the validation set (split with a fixed generator seed). The two
ordering panels are therefore computed on different images, and are read as
independent cross-checks rather than as two views of one sample. Each
teleportation is generated from an explicit integer seed set on both the
\texttt{torch} and \texttt{numpy} generators before the change of basis is
sampled; the similarity panel uses the seed indices $0,\ldots,T{-}1$ directly, the
standalone teleportation run whose logit discrepancies are quoted under
``Finite arithmetic'' above uses $42,\ldots,42{+}T{-}1$, and each run's output JSON records the seed of every
teleportation it performed.

\paragraph{Library versions.}
The knowledge matrix construction is implemented in the
\texttt{knowledgematrix} library; we use a multi-architecture fork of it,
pinned by commit in the supplementary material's requirements file, which
adds the \texttt{extract\_weff}, \texttt{densenet121}, \texttt{googlenet}
and \texttt{resnet152} changes.
The \texttt{neuralteleportation} library
(\citealt{armenta2024neural}) requires patches for PyTorch~2.x
compatibility; the patch set ships with the supplementary code. Code,
scripts and the JSON artifacts behind every table are in the anonymized
supplementary material. The finite-arithmetic residuals and the eval-mode
requirement are stated under ``Finite arithmetic'' above.

\paragraph{Hardware and reproducibility.}
All experiments run on a national academic HPC allocation of H100 GPUs; the
verification scripts named in this paper run on a CPU.
The sample sets and seeds are stated under ``Sample sets and seeds'' above.

\paragraph{Signal and noise: the two reference scales.}
Several studies of the main text report a drift ``relative to
the adversarial signal''. The \emph{adversarial signal} of a representation is
the distance that representation moves between a clean input $x$ and its
adversarial counterpart $x'$ from the six-attack suite above:
$\|\Mfx-\Mfxp\|_F$ for the knowledge matrix and $\|h(x)-h(x')\|_2$ for the
penultimate features, averaged over the pairs of one (architecture, attack)
cell. It is the reference scale --- the magnitude of change the representation
is \emph{supposed} to register. Reporting some other drift as a fraction of it
makes the comparison unit-free and judges each representation in its own units,
which a direct comparison of a Frobenius norm on a $1000\times150{,}529$ matrix
against an $\ell_2$ norm on a $2048$-vector cannot do.

The \emph{permutation noise floor} is the
residual $\|\tilde{\M}(x)-\M(x)\|_F$ that survives a neuron permutation.
A permutation is a quiver isomorphism, so in exact arithmetic this residual is
$0$; what the pipeline measures is its own resolution (``Finite arithmetic''
above), the smallest drift it can register, and any measured drift at or below
it is indistinguishable from zero. On ResNet-152 it is $0.116$ in the mean and
$5.78$ at its maximum in Frobenius norm (Table~\ref{tab:signal-relative}(a),
over $5$ permutation draws $\times\,50$ knowledge-matrix samples). The
penultimate floor in the same panel is the same quantity for $h$,
$\|h_{\tilde W}(x)-h(x)\|_2$, over $5$ draws $\times\,500$ samples, and it is
a genuine motion --- penultimate activations move under a relabeling of
neurons.

Panel~(b) of
Table~\ref{tab:signal-relative} reports these two scales as a ratio, one cell
per (attack, architecture), written ``$x \mid y$'': the bar packs two numbers
into one cell and is a separator, not a division or a conditioning. Here $x$ is
the knowledge matrix's ratio and $y$ the penultimate features' own, each formed
entirely in its own units, and both are \emph{mean adversarial signal} $\div$
\emph{mean permutation noise}. The numerator is the mean over that cell's
guarded clean/adversarial pairs of the clean-to-adversarial distance
($\|\Mfx-\Mfxp\|_F$ for $x$, $\|h(x)-h(x')\|_2$ for $y$); the denominator is
the matching permutation floor of panel~(a), the mean over the permutation
draws of the clean-to-permuted distance at the same architecture
($\|\tilde{\M}(x)-\M(x)\|_F$ for $x$, $\|h_{\tilde W}(x)-h(x)\|_2$ for $y$),
which is the same all the way down an architecture's column because a
permutation floor does not depend on the attack. A ratio above $1$ therefore
says the representation registers the attack more strongly than it registers a
function-preserving relabeling of neurons; below $1$ says the reverse --- the
representation moved further when nothing about the function changed than when
the input was attacked. Worked example, ResNet-152/FGSM: $83.98/0.1157 = 726$
for the knowledge matrix and $7.600/15.05 = 0.51$ for the penultimate features.
Signal and noise come from two separate runs on the same architecture (the
attack suite and the permutation experiment), so such a cell compares two
\emph{scales}, and is not a paired per-image measurement. Panel~(a) itself is
summarized over the same draws: ``KM mean'' and ``KM max'' average and maximize
the per-draw statistics, while ``KM median (pooled)'' is the median of the
pooled per-sample distances ($250$ values per architecture).

\paragraph{Bootstrap confidence intervals.}
A $95\%$ bootstrap confidence interval is obtained by resampling the observed
units \emph{with replacement} $B$ times, recomputing the statistic on each
resample, and reading percentiles off the resulting distribution of $B$ values.
Two such intervals appear in this paper and they agree in nothing but the
$95\%$, so we state both explicitly. \emph{(i)}~The teleportation panel
(Table~\ref{tab:s1_table}) uses the \emph{percentile} bootstrap of the
\emph{mean}, $B=10{,}000$ resamples, fixed seed; the resampled unit is the
$T=50$ teleportation draws of that architecture, with the $N=25{,}000$ images
held fixed. \emph{(ii)}~The appendix ordering panel
(Table~\ref{tab:s3_ordering_panel}) uses a seeded bias-corrected and
accelerated (BCa) bootstrap of the \emph{median}, $B=20{,}000$; the resampled
unit is the adversarial pairs inside one cell. Its intervals on an
adjacent-rank \emph{difference} are percentile intervals computed twice ---
resampling the two cells independently, and resampling the matched pair indices
jointly --- and a gap counts as resolved only when both exclude zero.

What such an interval means: it is the range of values of the statistic that
resampling the observed units is consistent with. It quantifies the sampling
variability of \emph{this} estimate over \emph{that} unit, and nothing else. It
is not a range for the underlying quantity across anything the resampled unit
does not vary over: an interval over $50$ teleportations says nothing about how
the number would move on a different image set, and an interval over
adversarial pairs says nothing about how it would move on a different
architecture or a different attack budget.

\paragraph{Samples, features, and the CKA estimator.}
In the representational-similarity literature $n$ is the number of samples
entering an estimator and $p$ the dimension of the representation being
compared; $n<p$ names the \emph{high-dimensional} regime, the regime in which
the biased (V-statistic) HSIC estimator's upward bias is worst. The similarity
panel of Section~\ref{sec:study1c} is not in that regime. Its HSIC is evaluated
on the full $N=25{,}000$-sample Gram matrix --- the chunking is an I/O device,
the row blocks being concatenated before the estimator is applied --- against
penultimate dimensions $D\in\{1024,2048\}$, so here $n = 25{,}000 \gg p$: the
\emph{low}-dimensional regime. The estimator choice is unaffected by that. The
unbiased HSIC$_1$ $U$-statistic removes the $O(1/n)$ upward bias of the biased
estimator at \emph{any} ratio of $n$ to $p$, and it is the estimator the
minibatch-CKA framework we follow specifies, so it is the right choice
regardless of regime --- the high-dimensional regime is simply where the
difference matters most, and we are not in it. We check directly that it leaves
no residual upward bias: the Murphy shuffled-pair control returns debiased CKA
$\le 5.4\times10^{-4}$ (Section~\ref{sec:study1c}).

\subsection{Study~3: the scale confound, and the comparison without alignment}
\label{app:crossarch-detail}

\paragraph{What the panel's measures return, and what the knowledge-matrix distance is.}
The penultimate-feature dimensions of the three architectures used in this paper
differ ($D_{\text{RN-152}} = 2048$, $D_{\text{DN-121}} = D_{\text{GN}} = 1024$),
and the similarity panel is well defined on them as they stand: we compute it on
the raw unequal dimensions, with no projection and no padding
(Appendix~\ref{app:limitations-more}, L10). What its measures return, however ---
linear and angular CKA~\citep{kornblith2019similarity}, orthogonal
Procrustes shape distance~\citep{williams2021generalized}, Normalized
Bures Similarity~\citep{harvey2023bures}, distance
correlation~\citep{szekely2007measuring}, soft
matching~\citep{khosla2024soft}, Gromov-Wasserstein~\citep{memoli2011gromov} and
representational similarity analysis~\citep{kriegeskorte2008representational}
alike --- is a \emph{population-level} similarity score living in its own metric
space. The knowledge-matrix distance, by contrast, is a \emph{per-sample}
distance, one Frobenius distance per input, reported RMS-per-coordinate,
whose logit-visible component is exactly the logit displacement divided by
$\sqrt{d{+}1}$ (Theorem~\ref{thm:pyth}). Knowledge matrices --- and, by Theorem~\ref{thm:weff}, the
per-class gradient$\times$input data they arrange --- are simultaneously
(i)~defined uniformly across architectures
($1000 \times (d{+}1) = 1000 \times 150{,}529$ for any feedforward
network on $224\times224$ ImageNet inputs and 1000-class outputs),
(ii)~equipped with the exact row-sum accounting that splits any
displacement into logit-visible and logit-invisible parts
(Theorem~\ref{thm:pyth}), and (iii)~invariant, under (LCS) at regular
inputs, under the germ stabilizer at $x$, which contains the global
function-stabilizer
(Theorem~\ref{thm:maxinv}); among the representations compared in
Study~3, the knowledge matrix is the one that carries all three, and what is
specific to it is the fixed $C\times(d{+}1)$ arrangement.

\paragraph{The scale confound.}
We do not read the agreement between two of the columns of Table~\ref{tab:s2_table} as corroboration.
Three architectures give three pairs, so every ordering statement in the
Result paragraph of Section~\ref{sec:crossarch} rests on three points with \emph{zero} residual degrees of
freedom; no interval could be put on it that was not decoration, and we put
none on the knowledge-matrix column. Worse, two of
the three columns that agree are \emph{unnormalized} and share one nuisance
scale. DenseNet-121's penultimate representation carries $3.5$--$3.8\times$ the
energy of the other two --- centered Gram traces $2.909\times10^{6}$
(ResNet-152), $1.103\times10^{7}$ (DenseNet-121), $3.181\times10^{6}$
(GoogLeNet), and unbiased-HSIC self-terms $44.7$, $1534.0$, $95.4$ --- and
soft-matching distance is not divided by any of that. Its raw ordering
($63.6 <109.1 <124.0$) accordingly places both DenseNet-containing pairs above
the one without. Dividing each pair's soft-matching distance by
$\sqrt{\|\tilde X\|_F^2+\|\tilde Y\|_F^2}$ gives $0.0258$ (RN--GN), $0.0289$
(DN--GN), $0.0332$ (RN--DN): the closest pair survives normalization, but the
top two swap, so the agreement with the knowledge-matrix ordering
($0.0107<0.0169<0.0180$) is partial rather than exact. The knowledge-matrix
column has the same exposure and we cannot currently quantify it: the reduce
stores per-sample cross-architecture distances but no per-architecture
$\|\M(x)\|_F$ to normalize them by. We therefore make no claim that these two
statistics corroborate each other, and read that paragraph as reporting that the
penultimate measures disagree among themselves --- which is unaffected by any
common scale, since RSA-Spearman is a rank statistic and is not exposed to it.

\paragraph{Cross-architecture comparison without alignment.}
The knowledge-matrix Frobenius distance in Table~\ref{tab:s2_table} is
computed directly between architectures with no PCA projection, dimension
matching, or learned alignment. No penultimate \emph{distance} $d_h$ is reported
across architectures at all: the penultimate columns of that table are six
penultimate similarity measures and one functional baseline (square-root
output JSD on the logits) computed on the raw $2048$- and $1024$-dimensional
features, and the two that are missing --- cross-architecture Procrustes,
whose per-chunk PCA target dimension collapses in our chunked pipeline (the
shape distance itself is defined for unequal widths by zero-padding the
narrower representation, \citealp{williams2021generalized}, and needs no
PCA), and Gromov--Wasserstein, whose entropic solver returned a degenerate
plan --- are reported as omissions rather than patched into comparability. The
dimension-matched (PCA) panel was never computed
(the dimension-matched block of the reduce is empty for all three pairs), so nothing in
Section~\ref{sec:crossarch} rests on a projection. The per-pair knowledge-matrix Frobenius distances are
$0.0169$ (RN--DN), $0.0107$ (RN--GN), and $0.0180$ (DN--GN), each computed
directly on the uniform $1000\times150{,}529$ matrices with no projection,
illustrating that the knowledge matrix supports an alignment-free
cross-architecture metric where the penultimate features do not. The same cross-architecture
displacement $\M_{W_A}(x)-\M_{W_B}(x)$ is moreover subject to the exact
accounting of Theorem~\ref{thm:pyth}: it splits into the part visible to
the two networks' logit gap and an orthogonal part that records how
differently the two architectures linearize the input, with the
coherence $A=(d_\Psi/d_M)^2$ (Definition~\ref{def:coherence}) computable per
sample from one additional stored scalar, the cross-architecture logit gap
$\|\Psi_A(x)-\Psi_B(x)\|_2$, which the shipped reduce does not carry beside
the per-sample knowledge-matrix distance; the per-sample cross-architecture
$A$ is recorded as follow-up. We read this decomposition as descriptive
geometry, not as a quality ranking of the architectures.

%% file: sections/appendix_mechanism_pilot.tex
\section{The mechanism pilot behind the attack-family ordering}
\label{app:mechanism-pilot}

Theorem~\ref{thm:anatomy} shows that each wall crossing contributes to the
knowledge-matrix displacement a rank-one dyad whose row sums vanish. This
appendix states the conjecture that reads the attack-family ordering of
Study~2 through those dyads, and records the one pilot that tests its
rank-correlation consequence, with the corrections and caveats that attach to
it.

\begin{conjecture}[Mechanism of the attack-family ordering]\label{conj:mechanism}
Iterative small-step attacks cross fewer walls and align with high-energy
Jacobian columns, raising $A$; one-shot sign-based and gradient-free attacks
do the opposite. In particular, at fixed perturbation size the per-pair
coherence $A$ is negatively rank-correlated with the endpoint mask-Hamming
distance $H$.
\end{conjecture}

The second sentence is the testable consequence. A pilot on per-pair VGG data
(attack-success filtered, size-controlled) finds the predicted sign on all
three attacks tested (Table~\ref{tab:mechanism-pilot}); the
exact-$\|\delta\|$-controlled replication on the full architecture set is
future work. The conjecture speaks of high-energy Jacobian columns, which we
make precise next. The \emph{energy} of column $i$ of the
Jacobian is $\|J_{:,i}\|_2^2$: the weight with which a unit change in pixel
$i$ reaches the logits. A perturbation \emph{aligns with high-energy columns}
when its mass $|\delta_i|$ sits on the coordinates where $\|J_{:,i}\|$ is
large. The link to $A$ runs through Theorem~\ref{thm:within}: writing
$a_i=\delta_iJ_{:,i}$, we have $A=\|\sum_ia_i\|^2/\sum_i\|a_i\|^2$, so $A$
records how far the weighted columns reinforce rather than cancel --- maximal
($A=d$) when they are all equal, and $A=1$ whenever a single one stands alone,
which is the one-pixel law. Concentrating $\delta$ on a few high-energy
columns therefore suppresses the many small, mutually canceling contributions
that pull $A$ below $1$, while by Theorem~\ref{thm:anatomy} each wall crossing
avoided removes a dyad that enters $d_M$ and contributes nothing to $d_\Psi$. The conjecture
is that both halves point the same way for iterative small-step attacks; it
remains a conjecture, and the pilot tests only its rank-correlation
consequence, not the mechanism itself.

\input{tables/table_mechanism_pilot}

\paragraph{The pilot, and two refinements of its reading.}
Section~\ref{sec:study2} reports the size-controlled partial rank correlations
of the endpoint mask-Hamming distance $H$ with the coherence $A$ on the three
VGG cells that retain per-pair records, $\rho_S(H,A\mid d_\Psi)=-0.343$ (APGD),
$-0.304$ (DeepFool) and $-0.341$ (Square) (Table~\ref{tab:mechanism-pilot}).
Two points sharpen the reading of this pilot.
\emph{(i)}~The size confound is present on two of the three attacks, not
three. For DeepFool and Square, $H$ and $d_\Psi$ are strongly
\emph{positively} correlated ($\rho_S=+0.720$ and $+0.683$) and the raw
$\rho_S(H,A)$ carries the wrong sign ($+0.441$, $+0.129$), so partialling
$d_\Psi$ out is what reveals the effect. For APGD it is the other way round:
$\rho_S(H,d_\Psi)=-0.304$, the raw $\rho_S(H,A)=-0.446$ \emph{already} has the
predicted sign, and partialling moves it \emph{towards} zero. The size-confound
story is a statement about DeepFool and Square.
\emph{(ii)}~A cleaner statistic exists, and we report it too. Because
$A=(d_\Psi/d_M)^2$ \emph{contains} $d_\Psi$, conditioning on $d_\Psi$ partials
out a variable that sits inside the response. The direct form has no such
circularity: at fixed logit displacement, does crossing more walls add
knowledge-matrix mass? It does ---
$\rho_S(H,d_M\mid d_\Psi)=+0.335$ (APGD), $+0.383$ (DeepFool), $+0.373$ (Square)
--- which is the sign the crossing heuristic behind
Conjecture~\ref{conj:mechanism} predicts: Theorem~\ref{thm:anatomy} shows that
the crossing dyads contribute nothing to the logit displacement, and the
conjecture's reading is that they add knowledge-matrix mass.
One caveat this does not remove: $d_\Psi$ is a proxy for $\|\delta\|$, and
the residual $\|\delta\|$ confound points in the \emph{same} direction as the
predicted effect, so the pilot cannot exclude it; that is precisely why the
mechanism stays a conjecture. The partial defense worth stating is that for
fixed-$\varepsilon$ $\ell_\infty$ APGD, $\|\delta\|_2$ is nearly constant across
pairs, so the confound is weakest on the attack where the effect is
strongest.

This is a \emph{VGG pilot, $3$ attacks, $n=134$--$141$, $d_\Psi$-proxy-controlled};
VGG is not one of the three networks of Section~\ref{sec:setup}, and the
exact-$\|\delta\|$-controlled replication on those three (with
$\|\delta\|$ from the stored pairs) is registered as follow-up work. The
mechanism therefore remains a conjecture (Conjecture~\ref{conj:mechanism}),
supported but not established.\footnote{The VGG same-region ($d_M=0$) diagnosis itself is the
rank correlation between the endpoint ReLU-Hamming distance and $d_M$,
reported over the $n=192$ pairs with nonzero $d_M$. It is sensitive to the
attack-success filter: imposing $d_\Psi\ge1$ collapses APGD's correlation to
$\approx0.07$ while DeepFool and Square stay $\approx0.6$, confirming that
the apparent APGD relationship was carried by the same-region
attack-failure pairs.}

%% file: tables/table_mechanism_pilot.tex
\begin{table}[t]
\centering\small
\begin{tabular}{lccccccc}
\toprule
 & & & & & \multicolumn{3}{c}{median $A$ by $H$-tertile} \\
\cmidrule(l){6-8}
attack & $n$ & $\rho_S(H,d_\Psi)$ & $\rho_S(H,A)$ & $\rho_S(H,A \mid d_\Psi)$ & low & mid & high \\
\midrule
APGD & 141 & -0.30 & -0.45 & \textbf{-0.34} & 1.46 & 1.03 & 0.835 \\
DeepFool & 134 & +0.72 & +0.44 & \textbf{-0.30} & 0.0222 & 0.0851 & 0.101 \\
Square & 141 & +0.68 & +0.13 & \textbf{-0.34} & 0.0943 & 0.116 & 0.118 \\
\bottomrule
\end{tabular}
\caption{On all three attacks the size-controlled partial rank correlation of $H$ with
$A$ is negative, the sign the crossing conjecture predicts. This is the first direct
test of the smooth+crossing mechanism (per-pair VGG debug data).
The size confound is real on two of the three attacks and absent on the third, so we
report the diagnostic column $\rho_S(H,d_\Psi)$ rather than assert it. On DeepFool and
Square, pairs that cross more walls are also pairs whose logits move further
($\rho_S(H,d_\Psi)=+0.72$ and $+0.68$), and the raw $\rho_S(H,A)$ inherits that sign
($+0.44$, $+0.13$) --- the opposite of what the mechanism predicts; controlling for
perturbation size via $d_\Psi$ turns both negative. On APGD there is no such confound:
$H$ and $d_\Psi$ are \emph{negatively} associated ($-0.30$), the raw correlation already
carries the predicted sign ($-0.45$), and the control moves it \emph{toward} zero
($-0.45\to-0.34$). What the three attacks share is the size-controlled column: the
partial rank correlation is negative in every case ($-0.30$ to $-0.34$) --- crossings
reduce coherence, the sign Conjecture~\ref{conj:mechanism} predicts (crossing dyads carry
knowledge-matrix mass with zero logit displacement, the latter by
Theorem~\ref{thm:anatomy}). $H$ lower-bounds the segment crossing count. Pilot
caveats: single architecture; $d_\Psi$ is a proxy for $\|\delta\|$ (the exact control
lives in \texttt{pairs.pth}); three attacks on one network is not a corrected multiple
test and no $p$-value is claimed here; the three-network, $\|\delta\|$-controlled replication
is registered as follow-up work, and this pilot fixes its analysis plan.}
\label{tab:mechanism-pilot}
\end{table}

%% file: sections/appendix_kendall.tex
\section{Kendall's coefficient of concordance \texorpdfstring{$W$}{W}, worked through on the Study~2 panel}
\label{app:kendall}

$W$ answers one question: \emph{do several judges, each ranking the same list of
items, put them in the same order?} It is $1$ if every judge produces the
identical ordering and $0$ if they agree no more than chance would predict.

A \emph{cell} here is one (architecture, attack) pair --- one square of a $6\times6$ grid --- holding that
architecture's $\approx200$ adversarial pairs for that attack, summarized by the
median of $d_M/d_\Psi$ over them. The judges (``raters'') are the $m=6$
architectures; the items are the $n=6$ attack families. So each architecture
reads off its own row of six cell medians and ranks the six attacks by them.

Rank~$1$ is given to the \emph{largest} median $d_M/d_\Psi$, rank~$6$ to the smallest --- that is the
descending convention, and it is why rank~$1$ means ``largest ratio
$d_M/d_\Psi$, least coherent''. If two cells of one row had equal medians they would share the
average of the ranks they would have occupied (a \emph{midrank}: two items tied
for 2nd and 3rd both get $2.5$). No row of this panel has a tie, so every rank
here is a whole number. Ranking each row gives the $m\times n$ matrix $R$:

\smallskip
\centerline{\begin{tabular}{@{}lcccccc@{}}\toprule
 & FGSM & PGD & CW & DeepFool & APGD & Square \\ \midrule
ResNet-152 & 4 & 5 & 3 & 2 & 6 & 1 \\
DenseNet-121 & 4 & 6 & 2 & 1 & 5 & 3 \\
GoogLeNet & 4 & 5 & 3 & 1 & 6 & 2 \\
ResNet-18 & 4 & 6 & 3 & 1 & 5 & 2 \\
AlexNet & 4 & 6 & 2 & 1 & 5 & 3 \\
VGG & 4 & 6 & 2 & 1 & 5 & 3 \\ \midrule
column sum $C_i$ & 24 & 34 & 15 & 7 & 32 & 14 \\ \bottomrule
\end{tabular}}
\smallskip

$C_i$ is the sum down column~$i$ --- the six architectures' ranks for that one attack. If the judges agree, some
attack collects rank~$1$ from all six ($C=6$) and another collects rank~$6$ from
all six ($C=36$): the column sums are pulled far apart. If they disagree at
random, every column sum sits near its mean $\bar C=m(n{+}1)/2=21$. So the
spread of the column sums \emph{is} the agreement. Measuring that spread by
$S=\sum_i(C_i-\bar C)^2$ gives, for the matrix above,
$S=3^2+13^2+(-6)^2+(-14)^2+11^2+(-7)^2=580$, against a maximum of $630$ attained
when all six judges agree perfectly. Hence
\[
W=\frac{S}{S_{\max}}=\frac{580}{630}=0.9206,
\qquad\text{equivalently}\qquad
W=\frac{12\,S}{m^{2}\bigl(n^{3}-n\bigr)-m\,T},
\]
the second form being the usual one, since $m^{2}(n^{3}-n)/12=630$ here. The
tie correction is $T=\sum_{j}\sum_{g}(t_{jg}^{3}-t_{jg})$, summed over each
judge $j$'s groups $g$ of tied items with $t_{jg}$ the size of the group; a
group of size~$1$ contributes $1^3-1=0$, so with no ties anywhere $T=0$ and the
denominator is just $6^2(6^3-6)=7560$.

The null hypothesis of the Monte-Carlo test is that the judges are not
agreeing at all --- that each architecture's ranking is an independent
uniform random permutation of the six attacks. To see how often chance alone
reaches $W\ge0.921$, we \emph{draw} $m=6$ such random permutations, stack them
into a rank matrix, compute its $W$, and repeat $B=10^{6}$ times. That is what
is being sampled: whole random rank matrices. None of the $10^{6}$ draws reached
$0.921$, hence $p<10^{-6}$. For the three-architecture figures the null is small
enough to enumerate instead of sample --- fix the first judge's ranking and run
the other two over all $(6!)^2=518{,}400$ pairs of permutations --- so those
$p$-values are exact rather than estimated. That enumeration attains only $77$
distinct values of $W$, the closest pair $0.0127$ apart, so a third decimal
place would report a precision the statistic cannot have; we quote those $W$ to
two decimals for that reason.

Only the ranks enter --- the ranks of the six cell medians within each architecture's row --- never the medians
themselves. Two consequences. First, $W$ is blind to how far apart the families
are: the ordering is $\{$DeepFool, CW, Square$\}>$ FGSM $\gg\{$PGD, APGD$\}$,
whose two gaps are $1.13$--$1.51\times$ and $1.57$--$2.09\times$, and $W$ would
be unchanged if both gaps were a hundred times larger or a hundred times
smaller. Those magnitudes are reported separately, in the text above. Second,
$W$ is unchanged by any strictly increasing transformation of $d_M/d_\Psi$, which
is what makes it immune to this paper's raw-versus-RMS unit choice
(Appendix~\ref{app:setup-details}): passing to root-mean-square-per-coordinate divides
$d_M/d_\Psi$ by $\sqrt{d{+}1}$, the same positive constant for all six attacks of a
given architecture, so no row's ordering moves and $W$ is identical either way.

A high $W$ is not a statement that the ordering is interesting. Concordance
measures agreement among raters, not the interest of what they agree on, and it is \emph{maximized} by quantities that
have nothing to do with the networks. Any attack-level constant --- the
$\varepsilon$ of the ball, the iteration budget, the year the attack was
published --- is assigned identically by every rater, so every rater's row is the
same row and $W=1$ exactly, while the statistic says nothing whatever about any
network. A high $W$ therefore establishes that the ordering is an
architecture-independent property of the \emph{attacks}; it is evidence
\emph{against} the ordering being an idiosyncrasy of one network, and evidence of
nothing else. ``More concordant'' is not ``better'', which is why we compare
$W$ values below only to say which quantity is architecture-independent, never
to rank the quantities by quality.

\paragraph{The ratio and its denominator.}
The statistic ranked here is $d_M/d_\Psi$, and the obvious objection is that its
ordering is a shadow of its denominator's. On the six-rater
$200$-pair panel that objection largely lands.
Ordering the same $36$ cells by median logit displacement $d_\Psi$ alone gives
$W=0.9746$, \emph{higher} than the ratio's $0.9206$;\footnote{The coincidence
of this value with the full-scale ratio concordance of the three networks,
$W(d_M/d_\Psi)=0.9746$ below is numerical: the two are different statistics
on different pair sets.} ordering them by median
penultimate displacement $d_h$ gives $0.9714$. On four of the six raters
(DenseNet-121, ResNet-18, AlexNet, VGG) the ratio ordering is the exact
rank-for-rank reversal of the $d_\Psi$ ordering, rank Spearman $-1.000$, with
$-0.943$ on GoogLeNet and $-0.771$ on ResNet-152. On this pair set the ratio
adds nothing beyond reversing $d_\Psi$ on four raters, and something only on
ResNet-152 and GoogLeNet.

Three facts stop that from being the whole story, and all three are computed on
data already reported here. \emph{First}, the ordering inverts at full scale. On
the three networks at full scale --- $512$--$1000$ pairs per cell against $200$, on a different
image set, with per-pair records --- the ratio is the \emph{most} concordant of
the three: $W(d_M/d_\Psi)=0.9746$ against $W(d_\Psi)=0.8222$ and
$W(d_M)=0.7968$, identical under both filtering conventions. A statistic that
was merely a function of its denominator could not be more concordant across
raters than that denominator itself is. \emph{Second}, the ratio ordering is the one that reproduces
across the two image sets: comparing each architecture's ranking on the $200$-pair set
against its ranking on the full-scale set, the ratio reproduces at rank Spearman
$+0.943$ on all three networks, whereas $d_\Psi$'s own ordering reproduces at
$+1.000$, $+0.943$ and only $+0.600$ (ResNet-152). \emph{Third}, the concordance
survives partialling the denominator out. Regressing $\log d_M$ on $\log d_\Psi$
within each architecture and ranking the six attacks by the residual --- the
part of the matrix displacement that the logit displacement does not explain ---
gives a \emph{partial concordance} of $W=0.8857$ over the four full-budget raters
(exact permutation $p=2.4\times10^{-5}$), $W=0.7587$ over all six (Monte-Carlo
$p=7\times10^{-6}$, $10^{6}$ seeded draws), and $W=0.9111$ on the three networks at full scale (exact permutation $p=4.9\times10^{-4}$). The residual ordering is led by \emph{Square} on four of
the six raters and on all four full-budget ones: at matched logit displacement,
the gradient-free random-search attack moves the knowledge matrix most. This is
the sense in which $d_M$ is not decorative --- what it contributes is a
consistently ranked residual, not a rescaling of $d_\Psi$.

\paragraph{Two qualifications on the resolved gaps of the appendix panel.}
Two qualifications attach to the sentence of Section~\ref{sec:study2} that reads the panel's bootstrap: within the matched panel the leading ResNet-152 cells are not separated by the data, while on DenseNet-121 and GoogLeNet every adjacent gap is resolved at $95\%$.
\emph{(i)}~The unresolved ResNet-152 set straddles the family boundary: it is
$\{$DeepFool, Square, CW, \emph{FGSM}$\}$, and FGSM is not a member of the
leading family --- the unresolved CW--FGSM gap (difference $0.5829$; independent
interval $[-0.254,1.755]$, paired interval $[-0.080,1.542]$) is exactly the
boundary that \emph{defines} the grouping. So on ResNet-152 the panel does not
resolve the family grouping itself at $95\%$; only the point estimate orders the
families there, and the resolved grouping is a DenseNet-121 and GoogLeNet
result.
\emph{(ii)}~The panel runs $15$ simultaneous resolution tests without
correction: three architectures $\times$ five adjacent gaps, each requiring two
$95\%$ intervals to exclude zero, with no Bonferroni, Holm or false-discovery
adjustment anywhere in this paper. Under a global null the family-wise error
rate of a $15$-test family at $\alpha=0.05$ approaches $1-0.95^{15}\approx54\%$,
so ``every adjacent gap is resolved'' on DenseNet-121 and GoogLeNet --- ten
simultaneous resolutions --- should be read as an uncorrected nominal-$95\%$
statement. We report it that way rather than adjusting it, because the
requirement that \emph{both} the independent and the paired interval exclude zero
already makes each individual verdict conservative in an unquantified direction,
and stacking an uncorrected conservatism against an uncorrected multiplicity
would give a number we could not interpret (Appendix~\ref{app:limitations-more},
L12).

\paragraph{What the within-region theorem does not explain.}
A density reading of the ordering of Section~\ref{sec:study2} --- that Theorem~\ref{thm:within}(iii)
ranks attacks by the sparsity of $\delta$ --- is tempting and wrong, and we
record why: it does not survive contact with either the theorem or
the data. Part~(ii) of that theorem gives a
$1$-sparse within-region perturbation coherence \emph{exactly} $1$, which is the
\emph{top} of the scale, whereas every cell of the three networks here sits at
$A\in[0.010,0.228]$; part~(iii) is a non-tight \emph{upper} bound $A\le k$, and
at $k=d=150{,}528$ (the matrix has $d{+}1=150{,}529$ columns) it is vacuous. So
the theorem places sparse perturbations
high and says nothing about dense ones, which is the opposite of that ordering.
The data refute the density reading directly as well: FGSM, PGD and APGD
all run at the identical default $\varepsilon=8/255$
(Appendix~\ref{app:limitations-more}, L9) with FGSM's $\delta$ the
densest of the three, yet on ResNet-152 $A=0.0303$ (FGSM), $0.0998$ (PGD),
$0.1130$ (APGD) --- a $3$--$4\times$ spread at matched density. And of the six
attacks only Square produces a genuinely localized $\delta$; DeepFool and
CW-$\ell_2$ produce dense, full-support perturbations. The boundary-distance
reading in the Result paragraph of Section~\ref{sec:study2} is what the data support, and
Theorem~\ref{thm:within} is used in this study only as the pair of reference
lines $A=1$ and $A\le d$ against which the magnitudes are read.

\subsection{Population and filtering details}
\label{app:population-filtering}

\paragraph{The two filters.}
The stricter attack-success filter
$d_\Psi \ge 1$, $d_M > 0$ is a different, far more aggressive cut than the
division guard $d_\Psi > 10^{-6}\max_i d_{\Psi,i}$ of Section~\ref{sec:study2},
and is used
only in the two analyzes that hold per-pair records: the mechanism pilot and
the appendix panel. On VGG the guard drops $2$ of $200$ pairs on each of
DeepFool, APGD and Square and none at all on FGSM, PGD and CW, whereas
$64/200$ DeepFool and $57/200$ APGD pairs sit in $0 < d_\Psi < 1$ and would be
cut by the stricter filter.

\paragraph{The headline is a maximum of cell medians.}
``Median $A\le0.23$'' is a maximum of medians, over two nested
populations, and never a median of pooled data. The inner population is one
(architecture, attack) \emph{cell}: within a cell we take the median, over that
cell's $110$--$200$ guarded adversarial pairs, of the per-pair coherence
$A=(d_\Psi/d_M)^2$. The outer population is the $18$ cells of the three networks ($3$ architectures
$\times$ $6$ attacks); $0.23$ is the largest of those $18$ cell medians,
attained at GoogLeNet/APGD ($0.228$, Table~\ref{tab:coherence}). Pairs are
never pooled across attacks, and cells are never pooled across architectures,
so the statement is the stronger ``no cell of the three networks has a median above $0.23$''
rather than a statement about one pooled median. Each cell entry is computed as
$1/\mathrm{median}(d_M/d_\Psi)^2$ from the stored median ratio; that equals the
median of $A$ exactly when the cell's valid-pair count is odd, hence the
interpolation caveat in the caption of Table~\ref{tab:coherence}.

\paragraph{Sensitivity to the filtering convention.}
These medians are point estimates without bootstrap CIs: the reduce behind
Table~\ref{tab:coherence} kept only per-cell aggregates, so putting CIs on
them means re-running the pairs, not resampling a stored file (the appendix
panel, whose per-pair ratios \emph{are} stored, does carry them;
Appendix~\ref{app:limitations-more}, L12). They are
also sensitive to the filtering convention. On the three cells that retain
per-pair records --- VGG/APGD, VGG/DeepFool, VGG/Square --- replacing the
division guard by the stricter attack-success filter ($d_\Psi\ge1$, $d_M>0$)
moves the median $d_M/d_\Psi$ by $-12\%$, $-39\%$ and $-12\%$ respectively,
enough to swap VGG's top two (CW ahead of DeepFool) and to carry VGG/APGD
across the $A=1$ line ($0.84 \to 1.09$); the family-level grouping is
unchanged. We report the tables under one convention and state the
sensitivity rather than re-cutting them (Section~\ref{sec:limitations}, L7).

\paragraph{The matched population of the appendix panel.}
We do not read the panel as agreeing \emph{better} than the rows of the three networks of
Table~\ref{tab:ordering}: three
raters give $W$ almost no resolving power against a difference this size, the
two are computed on different image sets, and the per-cell sampling
uncertainty below is wide enough to move either. That agreement depends on
matching the cells to a common image population: the ResNet-152 DeepFool run
stopped at $512$ of $1000$ pairs, and comparing the six ResNet-152 cells at
face value --- across different image populations --- moves CW by $+12.9\%$
and Square by $+8.5\%$ against untruncated cells that move by under $1\%$,
which is enough to invert CW and FGSM spuriously. Within the matched panel
the leading ResNet-152 cells are not separated by the data: a seeded
bootstrap over pairs reads DeepFool $\approx$ Square $\approx$ CW $\approx$
FGSM $>$ PGD $>$ APGD there, while on DenseNet-121 and GoogLeNet every
adjacent gap is resolved at $95\%$; the two qualifications on that reading are
the paragraph ``Two qualifications on the resolved gaps of the appendix
panel'' above.

%% file: sections/appendix_negatives_detail.tex
\section{The honest negatives in detail}
\label{app:negatives-detail}

This appendix carries the setup, the full table and figure, and the reading of
the two honest negatives that Section~\ref{sec:negatives} summarizes.

\subsection{The detector bake-off}
\label{app:negatives-bakeoff}

\paragraph{Setup and procedure.}
The bake-off is a $6$-detector $\times$ $3$-representation grid on
\emph{one} configuration: \emph{AlexNet trained on CIFAR-10}, evaluated
against $16$ \texttt{torchattacks} attack families. It is \emph{not} run on the
three ImageNet networks of Sections~\ref{sec:study1}--\ref{sec:crossarch}, and we state that
scope wherever the negative is cited. The numbers below are read from the rendered tables of the retired detection pipeline (\texttt{tables\_alexnet\_cifar10/\{representation\_comparison, per\_attack\_auroc\_alexnet\_cifar10, svd\_ablation\_alexnet\_cifar10, lee2018\_comparison\}.tex}, shipped in the supplementary material); the per-(detector, representation, attack) AUROC files that produced them were not preserved, so the bake-off is reported from those tables, not regenerated.
\begin{enumerate}
\item Fix three representations of each sample: the penultimate features, the
concatenation of \emph{all} hidden-layer activations, and the knowledge matrix.
(The third arm is the all-layer concatenation, not the logits.)
\item Fix six off-the-shelf detector configurations --- Mahalanobis, $k$-NN,
KDE, GMM, one-class SVM, and Isolation Forest.
\item For each (representation, detector) pair, fit the detector on clean
samples, score clean against adversarial, and average AUROC over the $16$
attacks.
\item Count, per detector, which representation wins. Report a properly
calibrated external baseline alongside: the multi-layer Mahalanobis detector
with per-attack logistic regression of \citet{lee2018simple}.
\end{enumerate}

\paragraph{Result.}
\begin{table}[htbp]
\centering\small
\caption{Penultimate features win five of the six detectors on AlexNet/CIFAR-10, and
the external baseline beats all three representations. The detector bake-off in full,
with its true scope: mean AUROC over $16$
\texttt{torchattacks} attack families, \textbf{AlexNet / CIFAR-10} --- one architecture and one
dataset, not the three ImageNet networks the rest of this paper uses. The three representations are the
penultimate features, the concatenation of all hidden-layer activations, and the knowledge
matrix; \emph{the third arm is the all-layer concatenation, not the logits}. Bold marks the best
representation per detector. Penultimate features win $5$ of $6$; the exception, one-class SVM,
puts the knowledge matrix at $0.550$ against $0.512$, both essentially at the $0.5$ chance line
on the one detector that works for nothing, so we do not describe it as a knowledge-matrix win.
The Lee et al.\ row is an external baseline, not one of the three arms: a multi-layer Mahalanobis
detector with per-attack logistic regression, which beats all three.}
\label{tab:bakeoff}
\begin{tabular}{@{}lccc@{}}
\toprule
Detector & Penultimate & All-layer & Knowledge matrix \\
\midrule
Mahalanobis      & \textbf{0.917} & 0.208 & 0.638 \\
$k$-NN           & \textbf{0.904} & 0.165 & 0.798 \\
KDE              & \textbf{0.909} & 0.289 & 0.788 \\
GMM              & \textbf{0.847} & 0.176 & 0.374 \\
one-class SVM    & 0.512          & 0.356 & \textbf{0.550} \\
Isolation Forest & \textbf{0.853} & 0.175 & 0.518 \\
\midrule
\citet{lee2018simple} baseline & \multicolumn{3}{c}{$0.937$} \\
\bottomrule
\end{tabular}
\end{table}

\emph{Penultimate features win $5$ of the $6$ detector configurations}
(Table~\ref{tab:bakeoff}), and the sixth is not a knowledge-matrix win in any
useful sense: one-class SVM scores $0.550$ for the knowledge matrix against
$0.512$ for the penultimate features, both close to the $0.5$ chance
line, on the one detector that fails for every representation. Two further cuts
of the same run point the same way. Broken out per attack, taking each
representation's best of the six detectors, the penultimate features beat the
knowledge matrix on \emph{all $16$} attacks (average $0.918$ against $0.798$),
and the external Lee et al.\ baseline beats both at $0.937$. And an SVD rank
ablation on the Mahalanobis detector --- which equalizes the dimensionality that
a charitable reading might have blamed --- has the penultimate features ahead at
\emph{every} rank from $16$ to $512$; at rank $16$ they score $0.925$ against the
knowledge matrix's $0.817$, and the knowledge matrix gets monotonically
\emph{worse} as more rank is restored ($0.817\to0.590$). The knowledge matrix
does not provide a detection advantage on this task under any of these cuts.
The adversarial-detection claim of \citet{leblanc2024hidden} is therefore \emph{not made here},
and the bake-off is reported as an honest negative.

\paragraph{Reading.}
This negative is consistent with the theory rather than in tension with
it. Detection asks which representation best separates two finite
empirical samples under a chosen classifier; it is a statistical-power
question about a particular discriminator, and nothing in
function-determination (Theorem~\ref{thm:maxinv}) or in the displacement
decomposition (Theorem~\ref{thm:pyth}) predicts that the canonical
representation should also be the most \emph{separable} one for an
off-the-shelf detector. What the theory does buy --- invariance under
the germ stabilizer at $x$ (which contains the global function-stabilizer),
exact row-sum accounting, alignment-free
cross-architecture comparison --- is orthogonal to detection performance.
The knowledge matrix is the right object for the canonical-representation
questions of Studies~1--3 and the wrong object for this detector bake-off.

\subsection{The single-region LP-counterfactual}
\label{app:negatives-lp}

For a source class $s$ and target class $t\neq s$, the
\emph{LP-counterfactual direction} at $x$ is the $\ell_1$-minimum input
perturbation $\delta\in\R^d$ that, \emph{within the linearization around
$x$}, pushes the target--source logit gap past a margin $m>0$:
\begin{equation}
\min_{\delta} \|\delta\|_1
\quad\text{s.t.}\quad
\big(\Weff(x)_{t,\cdot}-\Weff(x)_{s,\cdot}\big)\cdot\delta
\;\ge\;
m-\bigl(\Netx_t-\Netx_s\bigr),
\label{eq:lp-cf}
\end{equation}
subject to a per-coordinate box $\delta_i\in[-x^{\text{px}}_i/\sigma_c,
(1-x^{\text{px}}_i)/\sigma_c]$ that keeps the pixel-space image in
$[0,1]$ ($x^{\text{px}}_i=x_i\sigma_c+\mu_c$ recovers the pixel
intensity from the ImageNet-normalized input). Without the box the LP
\eqref{eq:lp-cf} is solved in closed form by the single most cost-effective coordinate
($k=\arg\max_i|c_i|$ for $c=\Weff(x)_{t,\cdot}-\Weff(x)_{s,\cdot}$);
with the box it becomes a saturation greedy that fills coordinates in
order of $|c_i|$ until the margin is met. Both are $O(n\log n)$, and the greedy is $\ell_1$-optimal for the
one-constraint box LP because the cost per unit of swing is $1/|c_i|$
\emph{independently} of the per-coordinate bound, so filling coordinates in
descending $|c_i|$ is an exchange argument: any feasible $\delta$ can be
rewritten with weakly smaller $\ell_1$ norm by moving swing onto a
cheaper-per-unit coordinate that is not yet saturated.

\paragraph{Setup and result.}
For each architecture in $\{$ResNet-152, DenseNet-121, GoogLeNet$\}$ we
run $3$ source images $\times\,3$ source classes $\times\,2$ target
classes ($18$ LPs per architecture, $54$ in all) with margin $m=0.1$,
and for each we record a Boolean \texttt{region\_ok} flag --- True only
when $x$ and $x+\delta$ realize identical activation patterns at every
ReLU and identical argmax indices at every pooling, i.e.\ when the
linearization the LP solved against still governs the network at
$x+\delta$. \emph{On $54$ of $54$ LPs, \texttt{region\_ok} is False},
and accordingly the actual (non-linearized) logits at $x+\delta$ never
flip:
\begin{itemize}
\item \texttt{region\_ok}: $0/54$;
\item target logit $>$ source logit at $x+\delta$: $0/54$.
\end{itemize}
We always report \texttt{region\_ok} next to any $\delta$-norm or
new-logit number: when it is False the linear-program guarantee is void,
because the perturbation derived from $\Weff(x)$ is large enough to flip
many ReLUs and pooling argmaxes downstream, landing $x+\delta$ in a
\emph{different} linear region governed by a different $\Weff$. This is
the geometry the theory predicts (Theorem~\ref{thm:within} holds
\emph{within} a region; nothing extends it across walls), so the
negative is a corollary, not a surprise. Figure~\ref{fig:lp-pathology}
shows one worked example.

\begin{figure}[h]
\centering
\includegraphics[width=\linewidth]{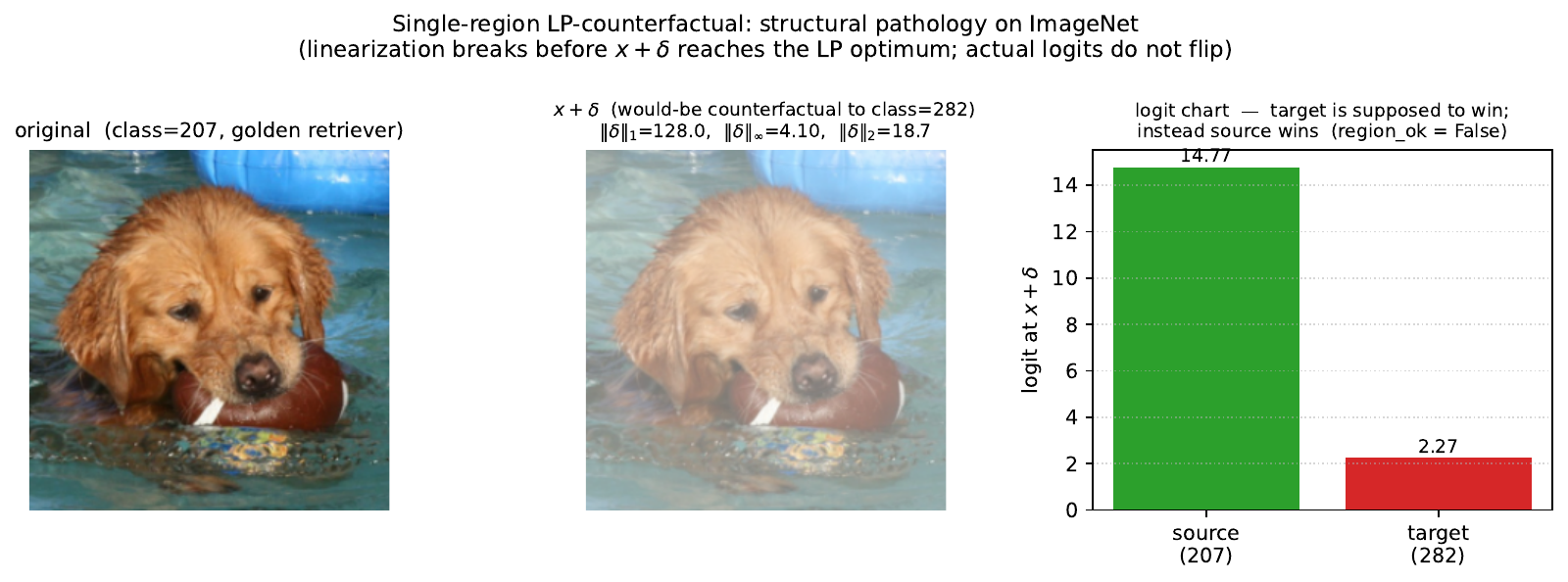}
\caption{The LP perturbation leaves the linear region it was solved in, and
the source class still wins at $x+\delta$. One worked instance of the
single-region LP-counterfactual structural failure, source class $207$ (golden
retriever) $\to$ target class $282$ (tiger cat). Left: original image $x$. Center: the would-be
counterfactual $x+\delta$ in ImageNet-normalized space, at
$\|\delta\|_1=128.0$, $\|\delta\|_\infty=4.10$, $\|\delta\|_2=18.7$. Right:
at the actual (un-clamped) $x+\delta$ the source class still wins by a large
margin, $14.77$ against $2.27$; \emph{region\_ok = False}, i.e.\ the region the
LP solved against does not contain $x+\delta$. Every one of the $54$
source/target pairs across the three architectures fails identically
(\emph{region\_ok} $=0/54$, Limitation~L3); this panel's $\|\delta\|_\infty$
sits at the per-channel saturation cap $1/\sigma_c\approx4.4$. \emph{Provenance:}
the figure is the surviving record of this LP ---
the per-LP $\delta$ and \emph{region\_ok} dumps behind the $0/54$ count are not
preserved in the supplementary material, and the run's architecture is not recorded in the
figure, so we do not attribute the panel to a specific network.}
\label{fig:lp-pathology}
\end{figure}

\noindent
The LP-counterfactual is a \emph{theoretical} matrix-direction object,
not an adversarial perturbation: its magnitudes are far outside any
standard budget. The negative
is that the matrix direction does not transfer to the model's actual
logits once it leaves the source region; a multi-region or
continuation-based reformulation is left as future work
(Section~\ref{sec:limitations}, L3).

%% file: sections/vocab/appendix_vocab_I.tex
\providecommand{\vocabdir}{../vocab}
\input{\vocabdir/vocab-macros}
\providecommand{\vocablabel}{app:vocab}

\begingroup
\renewcommand{\KM}{\mathrm{M}}
\renewcommand{\vx}{x}
\renewcommand{\vone}{\mathbf{1}}

\section{Vocabulary for algebraists}
\label{\vocablabel}

\noindent This appendix collects, for a reader who knows linear algebra, representation theory and quotients but is meeting the vocabulary of machine learning and statistics for the first time, the terms this paper uses. Each entry has four fixed fields: a \emph{definition}; an \emph{algebraic reading}; how the quantity is \emph{computed here}, with the routine or study that computes it; and how it is \emph{validated here}. Each entry is stated in this paper's notation. Notation: $\KM(\vx)\in\R^{C\times(d+1)}$ is the knowledge matrix of the input $\vx$, $\KM^{\circ}$ its class-centered form, $\vone_k$ the all-ones vector of $\R^k$, and $\langle\cdot,\cdot\rangle_F$, $\|\cdot\|_F$ the Frobenius inner product and norm.

\subsection*{Objects of training}
\input{\vocabdir/entries/network-as-map}
\input{\vocabdir/entries/parameters-vs-architecture}
\input{\vocabdir/entries/activation-function}
\input{\vocabdir/entries/logits-softmax-cross-entropy}

\subsection*{Knowledge-matrix objects}
\input{\vocabdir/entries/km}
\input{\vocabdir/entries/row-sum-invariant}
\input{\vocabdir/entries/class-centring}
\input{\vocabdir/entries/frobenius-inner-product}

\subsection*{Baselines}
\input{\vocabdir/entries/penultimate-features}
\input{\vocabdir/entries/cka}
\input{\vocabdir/entries/svcca}
\input{\vocabdir/entries/invariances-of-measures}
\input{\vocabdir/entries/discriminability}

\subsection*{Interventions and evaluation}
\input{\vocabdir/entries/adversarial-pair-attack-family}
\input{\vocabdir/entries/counterfactual}
\input{\vocabdir/entries/held-out-validation}
\endgroup

%% file: sections/vocab/vocab-macros.tex
\providecommand{\vocabusbreak}{\vocaborigus\allowbreak}
\providecommand{\vocabentry}[5]{\par\medskip\noindent\begingroup\sloppy\let\vocaborigus\_\let\_\vocabusbreak\textbf{#1.}\ \emph{Definition.} #2\ \emph{Algebraic reading.} #3\ \emph{Computed here.} #4\ \emph{Validated here.} #5\par\endgroup}
\providecommand{\R}{\mathbb{R}}
\providecommand{\vx}{\boldsymbol{x}}
\providecommand{\vone}{\boldsymbol{1}}
\providecommand{\KM}{\mathcal{M}}
\providecommand{\vocabsee}[1]{(see \emph{#1})}

%% file: sections/vocab/entries/network-as-map.tex
\vocabentry{Network as a map $\Psi(W\!,f)$}
{A feedforward neural network is a function $\Psi(W\!,f):\R^d\to\R^C$ obtained by alternating affine maps $z\mapsto W^{(\ell)}z+b^{(\ell)}$ with a fixed scalar nonlinearity $f$ applied coordinatewise: $\Psi(W\!,f)(\vx)=W^{(L)}f\bigl(W^{(L-1)}\cdots f(W^{(1)}\vx+b^{(1)})\cdots\bigr)+b^{(L)}$. Here $W$ collects every weight and bias and $f$ is the activation, which is why the map is written $\Psi(W\!,f)$ and never $f$ itself: on ImageNet $d=150{,}528$ and $C=1000$. A convolutional network is the special case in which the $W^{(\ell)}$ are banded with shared entries.}
{For $f=\mathrm{ReLU}$, $z\mapsto\max(z,0)$, the map $\Psi(W\!,f)$ is continuous and piecewise affine: the zero sets of the pre-activations --- affine hyperplanes for the first layer, piecewise-affine hypersurfaces for the deeper layers (Lemma~\ref{lem:null}) --- cut $\R^d$ into finitely many convex polyhedral regions (activation regions), on each of which $\Psi(W\!,f)$ is one affine map $\vx\mapsto J\vx+c$. The germ of $\Psi(W\!,f)$ at a generic $\vx$ is that affine map, and this paper shows the knowledge matrix is a function of it (Theorem~\ref{thm:germ}) and is determined by nothing less (Corollary~\ref{cor:stab-eq}). Two parameter collections are indistinguishable for everything in this paper when they define the same map.}
{Pretrained \texttt{torchvision} ImageNet networks --- ResNet-152, DenseNet-121 and GoogLeNet, with ResNet-18, AlexNet and VGG as ordering-only raters (Section~\ref{sec:setup}) --- used in \texttt{eval} mode and never trained.}
{The map is checked against its own knowledge matrix on every run through the row-sum identity $\KM(\vx)\vone_{d+1}=\Psi(W\!,f)(\vx)$ \vocabsee{The row-sum invariant $\KM\vone=\Psi(W\!,f)$}.}

%% file: sections/vocab/entries/parameters-vs-architecture.tex
\vocabentry{Parameters versus architecture (the quiver)}
{The architecture is the shape of a network: the number of layers, the width $n_\ell$ of each, which entries of each $W^{(\ell)}$ are free, tied together, or fixed at zero, and the nonlinearity. The parameters $W$ are the numerical values of the free entries --- weights and biases together, which is the collection this paper writes $W$ throughout. Training changes $W$ and never the architecture.}
{Following \citet{armenta2021representation}, an architecture is a quiver $Q$ --- a directed graph with one vertex per neuron and one arrow per weight --- with an activation attached to each hidden vertex, and a parameter collection is a representation of $Q$: a real number on every arrow, the biases being the weights on the arrows leaving the bias vertices; the representation is thin --- dimension vector $(1,\dots,1)$ --- and the width of a layer is its number of vertices. The quiver isomorphisms fixing the input and output vertices --- permutations of the hidden vertices of a layer and, for ReLU, the positive rescalings $\lambda_v$ at a hidden vertex $v$ (incoming arrows times $\lambda_v$, outgoing arrows times $\lambda_v^{-1}$) --- act on representations without changing $\Psi(W\!,f)$; This paper calls the largest group that fixes $\Psi(W\!,f)$ the global function-stabilizer; it is contained in the germ stabilizer at $\vx$, and, under (LCS) at regular $\vx$, Theorem~\ref{thm:maxinv} shows the knowledge matrix is invariant under the whole germ stabilizer at $\vx$, not only under the quiver isomorphisms. Two architectures have non-isomorphic quivers, so no group relates their parameters; the knowledge matrix compares them anyway because its shape depends only on $(C,d)$.}
{This paper holds the architecture fixed and moves the input: it compares three pretrained ImageNet networks whose quivers are not isomorphic (Section~\ref{sec:crossarch}).}
{The implementation checks of Appendix~\ref{app:teleport-checks} verify the quiver-isomorphism case, Lemma~\ref{lem:quiver-inv}, in software, under a neuron permutation and under teleportation; the cross-architecture content of Theorem~\ref{thm:maxinv} has no transform-based test (L1).}

%% file: sections/vocab/entries/activation-function.tex
\vocabentry{Activation function}
{The fixed scalar nonlinearity $f:\R\to\R$ applied coordinatewise between affine layers. Every network in this paper uses ReLU, $f(z)=\max(z,0)$ --- the three networks of Section~\ref{sec:setup} and the three ordering-only raters alike --- so the piecewise-affine theory of Section~\ref{sec:germ} applies throughout; the smooth activations GELU, $f(z)=z\,\Phi(z)$ with $\Phi$ the standard normal distribution function, and $\tanh$ appear here only as the contrasting cases. Without $f$ the composition of layers would collapse to one affine map.}
{ReLU is positively homogeneous, $f(\lambda z)=\lambda f(z)$ for $\lambda>0$, which is exactly what makes the rescaling gauge a symmetry and $\Psi(W\!,f)$ piecewise affine; GELU and $\tanh$ are not homogeneous, so for them the rescaling is not a symmetry of the fixed-activation network --- it is a symmetry only when the activation is carried with the unit as $f_\tau(z)=\tau f(z/\tau)$ (Lemma~\ref{lem:quiver-inv}) --- and $\Psi(W\!,f)$ is smooth rather than piecewise affine. The knowledge matrix carries, at each hidden neuron, the chord $f(z)/z$ --- the slope of the line from the origin through $(z,f(z))$ --- which for ReLU is the $0/1$ activation indicator and for a smooth $f$ differs from the tangent $f'(z)$. No condition whatever is imposed on $f$: the row-sum identity holds for \emph{every} activation at every input of $X_{\mathrm{nz}}$, the set where no hidden pre-activation vanishes, because $f(z)=(f(z)/z)\cdot z$ is an identity for $z\neq0$. What $f(0)=0$ buys is only that $X_{\mathrm{nz}}$ becomes the whole space; where a pre-activation is exactly zero the identity fails by exactly $f(0)$, and no choice of guard repairs that, since a finite diagonal entry times $0$ is $0$. Sigmoid, with $f(0)=\tfrac12$, reproduces the logits to roundoff at random inputs. The honest caveat is that for $f(0)\neq0$ the chord behaves like $f(0)/z$ near the excluded set, so the diagonal is unbounded there ($5.25$ at $z=10^{-1}$, rising to $5\times10^{5}$ at $z=10^{-6}$ for sigmoid): the identity stays exact, the entries do not stay small.}
{ReLU throughout, with the matrix built by the \texttt{knowledgematrix} library at the multi-architecture fork pinned in Section~\ref{sec:setup}.}
{Proposition~\ref{prop:lcs} classifies the activations for which the chord is locally constant --- the two-parameter leaky-ReLU family, and no other continuous $f$ --- and the germ identity of Theorem~\ref{thm:germ} is stated under that condition (LCS), not under piecewise linearity of $f$, which is strictly weaker.}

%% file: sections/vocab/entries/logits-softmax-cross-entropy.tex
\vocabentry{Logits, softmax and cross-entropy}
{The output $\Psi(W\!,f)(\vx)\in\R^C$ is the vector of logits. The softmax $p_c=e^{\Psi_c}/\sum_k e^{\Psi_k}$ turns it into a probability vector on the $C$ classes, and the predicted class is $\arg\max_c \Psi_c$. The cross-entropy loss of a labeled example $(\vx,y)$ is $-\log p_y(\vx)$; the training loss is its average over the training examples.}
{Softmax is constant on the cosets of the line $\R\vone_C$: $p(\Psi+\kappa\vone_C)=p(\Psi)$, so the loss sees only the class of $\Psi$ in $\R^C/\R\vone_C$, and the argmax sees only a fan of cones. Because $\KM\vone_{d+1}=\Psi$ is linear in $\KM$, the shift $\Psi\mapsto\Psi+\kappa\vone_C$ lifts to $\KM\mapsto\KM+\vone_C v^\top$ with $v^\top\vone_{d+1}=\kappa$, an element of $\vone_C\otimes\R^{d+1}$: this is the loss-gauge that class-centering quotients out. Softmax is also equivariant under permutations of the classes, which the class-shuffle null exploits.}
{This paper trains nothing: it reads logits off pretrained networks in \texttt{eval} mode, and the cross-entropy loss enters only through the attacks, which differentiate it to build their perturbations.}
{The row-sum identity ties the two together and is checked at every call \vocabsee{The row-sum invariant $\KM\vone=\Psi(W\!,f)$}.}

%% file: sections/vocab/entries/km.tex
\vocabentry{Knowledge matrix $\KM(\vx)$}
{For a network $\Psi(W\!,f)$ and an input $\vx\in\R^d$, the knowledge matrix is the $C\times(d+1)$ real matrix obtained by contracting the quiver representation $\phi(W\!,f)(\vx)$ that the network $(W\!,f)$ induces on the input $\vx$, in which an arrow leaving a hidden neuron carries its weight times the chord $f(z)/z$ of that neuron, an arrow leaving an input vertex its weight times the input coordinate, and an arrow leaving a bias vertex its weight: column $i\le d$ collects the contribution of the input coordinate $x_i$ to each of the $C$ outputs, column $d+1$ that of the biases. For a ReLU network it equals $[\,J(\vx)\,\mathrm{diag}(\vx)\mid c(\vx)\,]$, where $\Psi(W\!,f)=J\vx+c$ on the activation region of $\vx$. The induced representation is that of \citet{armenta2021representation}, its contraction to one matrix that of \citet{armenta2022double}, and the name is from \citet{leblanc2024hidden}.}
{$\KM(\vx)$ is a linear-algebraic shadow of the representation seen through one input: an element of $\R^C\otimes\R^{d+1}$ whose shape is fixed by the data space $(C,d)$, not by the quiver, so networks of different width and architecture produce elements of one space. When the slope diagonal is locally constant (Definition~\ref{def:lcs}; by Proposition~\ref{prop:lcs} exactly the leaky-ReLU family, ReLU among them) this paper proves it is a function of the germ of $\Psi(W\!,f)$ at $\vx$ (Theorem~\ref{thm:germ}) --- hence invariant, at regular $\vx$, under the germ stabilizer at $\vx$, which contains the global function-stabilizer (Theorem~\ref{thm:maxinv}) --- and that the penultimate activations are not (Theorem~\ref{thm:complete}). Outside that class the matrix is still defined, and still sums to the logits, but the germ reading is unavailable: for a smooth $f$ the chord is not the derivative, so $J$ is not the Jacobian. On ImageNet, $\KM$ is $1000\times150{,}529$.}
{The effective-weight route of the \texttt{knowledgematrix} library at the multi-architecture fork pinned in Section~\ref{sec:setup}, on pretrained ImageNet networks --- equivalently, and $50$--$150\times$ cheaper at that scale, by $C$ vector--Jacobian products, the three routes being cross-checked against each other in Appendix~\ref{app:engineering}.}
{The row-sum identity at every call \vocabsee{The row-sum invariant $\KM\vone=\Psi(W\!,f)$} (what a finite-arithmetic implementation registers is recorded in Appendix~\ref{app:finite-arith}), and the implementation checks of Appendix~\ref{app:teleport-checks}, which exercise the invariance in software under a neuron permutation and under teleportation.}

%% file: sections/vocab/entries/row-sum-invariant.tex
\vocabentry{The row-sum invariant $\KM\vone=\Psi(W\!,f)$}
{For every input in $X_{\mathrm{nz}}$ --- the set at which no hidden pre-activation vanishes --- $\KM(\vx)\,\vone_{d+1}=\Psi(W\!,f)(\vx)$: summing each row of the knowledge matrix returns the corresponding logit, exactly. This holds for \emph{any} activation $f$, with no condition on $f$ and in particular none on $f(0)$ (Section~\ref{sec:germ}): sigmoid, with $f(0)=\tfrac12$, reproduces the logits to roundoff. Activations with $f(0)=0$ --- ReLU, used throughout this paper, among them --- are precisely those for which $X_{\mathrm{nz}}$ is everything and the hypothesis is vacuous, which makes $f(0)=0$ a bonus rather than a prerequisite; at an exactly vanishing pre-activation the identity fails by exactly $f(0)$, and no guard repairs it, since $D_{qq}\cdot0=0$ for every finite $D_{qq}$. It is the one identity every stored knowledge matrix must satisfy, and it is checked, never assumed.}
{The row sum is the linear map $\mathrm{Id}_C\otimes\vone_{d+1}^{\top}:\R^C\otimes\R^{d+1}\to\R^C$. Its kernel $\R^C\otimes\vone_{d+1}^{\perp}$ is the part of $\KM$ invisible to the logits, its orthogonal complement $\R^C\otimes\vone_{d+1}$ the constant-row matrices, and the two are orthogonal for the Frobenius inner product; the visible/invisible decomposition $\|\Delta\KM\|_F^2=\|\Delta\Psi\|_2^2/(d{+}1)+\|Q\|_F^2$ (Theorem~\ref{thm:pyth}) is Pythagoras for this splitting. Class-centering acts on the other tensor factor, as $P\otimes\mathrm{Id}_{d+1}$, so it commutes with the row sum and $\KM^{\circ}\vone_{d+1}=P\Psi$: the identity survives centering in the form softmax sees. Because $\vone$ is a fixed vector independent of the parameters, no stored activation has an analogous accounting.}
{As a gate at the boundary of the pipeline rather than an assumption: the correctness gate of the full-scale reduce passes every sample, and what a finite-arithmetic implementation registers is recorded once in Appendix~\ref{app:finite-arith}. The gate requires \texttt{eval} mode: active dropout desynchronizes the construction passes and the identity then appears to fail at the scale of the logits themselves.}
{The displacement decomposition --- the visible part equals the logit displacement --- is checked per pair on the teleportation check, and on adversarial pairs it is the row-sum identity applied twice; Appendix~\ref{app:engineering} records the agreement of the three construction routes.}

%% file: sections/vocab/entries/class-centring.tex
\vocabentry{Class-centering}
{For $\KM\in\R^{C\times(d+1)}$ with class rows $\KM[c,:]$, subtract the mean row $\bar m=\tfrac1C\sum_c \KM[c,:]$ from every row: $\KM^{\circ}=\KM-\vone_C\bar m^\top=(I_C-\tfrac1C\vone_C\vone_C^\top)\KM$.}
{Write $\R^{C\times(d+1)}=\R^C\otimes\R^{d+1}$. The operator is $P\otimes\mathrm{Id}$ with $P$ the orthogonal projector onto $\vone_C^\perp$; its kernel is $\mathrm{span}(\vone_C)\otimes\R^{d+1}$, the matrices whose $C$ rows coincide, and its image is $\vone_C^\perp\otimes\R^{d+1}$, the matrices whose entries in each column sum to zero over the class index. Softmax is invariant under $\Psi\mapsto\Psi+\kappa\vone_C$, and that shift moves the KM by an element of the kernel; class-centering is the quotient by this loss-gauge, the one symmetry the invariance theorems of this paper leave standing --- it changes the function everywhere, so it lies in neither the global function-stabilizer nor the germ stabilizer at $\vx$ of Theorem~\ref{thm:maxinv}. It also gives $\KM^{\circ}\,\vone_{d+1}=(I_C-\tfrac1C\vone\vone^\top)\Psi$: the row-sum identity survives in the form softmax sees.}
{Not used in this paper, which compares raw knowledge matrices and reports Frobenius distances between them; the entry is here because the gauge it quotients is the one the invariance theorems do not remove.}
{Not validated here, since it is not used; the algebra above is immediate.}

%% file: sections/vocab/entries/frobenius-inner-product.tex
\vocabentry{Frobenius inner product}
{For $A,B\in\R^{C\times(d+1)}$, $\langle A,B\rangle_F=\mathrm{tr}(A^\top B)=\sum_{c,i}A_{ci}B_{ci}$, with norm $\|A\|_F=\langle A,A\rangle_F^{1/2}$ and distance $\|A-B\|_F$. It is the inner product behind every knowledge-matrix comparison in this paper.}
{The standard inner product of $\R^{C(d+1)}$ transported along the vectorization $\mathrm{vec}:\R^{C\times(d+1)}\to\R^{C(d+1)}$, equal to the tensor product of the standard inner products of $\R^C$ and $\R^{d+1}$; hence an orthogonal projector on either factor ($P\otimes\mathrm{Id}$ for class-centering, $\mathrm{Id}\otimes\vone\vone^\top/(d{+}1)$ for the visible part) is orthogonal for it and Pythagoras applies. Both the Frobenius and the operator norm are invariant under rotations of either factor; what is tied to the input basis is the matrix itself, through $\mathrm{diag}(\vx)$ --- a rotation of $\R^d$ changes $\KM$, whose columns are indexed by the input coordinates --- which is why the coherence $A$ (Definition~\ref{def:coherence}) is a pixel-basis descriptor rather than a basis-free one.}
{$d_M=\|\KM(\vx')-\KM(\vx)\|_F$ per adversarial pair, reported raw within an architecture and RMS-per-coordinate --- divided by $\sqrt{C(d{+}1)}$ --- for cross-architecture comparison, so that dimensionality alone does not inflate the distance (Section~\ref{sec:setup}).}
{The visible/invisible decomposition (Theorem~\ref{thm:pyth}) is Pythagoras for this inner product; its equality --- the visible part equals the logit displacement --- is checked per pair on the teleportation check, and on adversarial pairs it is the row-sum identity applied twice.}

%% file: sections/vocab/entries/penultimate-features.tex
\vocabentry{Penultimate features}
{The vector $h(\vx)\in\R^D$ of hidden activations just before the last affine layer, so that $\Psi(W\!,f)(\vx)=W^{(L)}h(\vx)+b^{(L)}$; its dimension $D$ is the width of the last hidden layer ($2048$ for ResNet-152, $1024$ for DenseNet-121 and GoogLeNet). It is the representation most similarity methods compare, and the hidden layer this paper's experiments take as representative when comparing the knowledge matrix against hidden activations; the theorems about hidden activations hold for any hidden layer.}
{$h$ lives in a space whose dimension depends on the architecture and whose coordinates carry the hidden-neuron gauge: under the gauge group (permutations and, for ReLU, positive rescalings of the hidden neurons) $h$ is covariant, not invariant --- it moves while $\Psi(W\!,f)$ does not --- and under an architecture change it has no transformation law at all; Theorem~\ref{thm:complete}(iv) sharpens covariance to incompleteness: two networks with the same germ can differ in $h$. Comparing two networks' $h$ therefore needs either a gauge-invariant statistic (CKA, SVCCA) or an alignment; and nothing like the row-sum accounting exists for it: what turns $h$ back into the logits is the last weight matrix $W^{(L)}$ itself, which depends on the parameters and whose shape depends on the width, whereas the knowledge matrix's accounting vector is the constant $\vone_{d+1}$, the same for every network.}
{Feature hooks on the pretrained \texttt{torchvision} networks, read in \texttt{eval} mode; the drift of $h$ is reported in absolute $\ell_2$ within an architecture and RMS-per-dimension across architectures, the second carrying the feature-scale caveat recorded in Section~\ref{sec:limitations} (L6).}
{The implementation checks of Appendix~\ref{app:teleport-checks} measure the permutation and teleportation drift of $h$ against that of $\KM$ on the same pairs, and Section~\ref{sec:study1c} adjudicates it with the nine-measure panel.}

%% file: sections/vocab/entries/cka.tex
\vocabentry{Linear CKA}
{Centered kernel alignment \citep{kornblith2019similarity} between two feature matrices $A\in\R^{N\times m_a}$ and $B\in\R^{N\times m_b}$ on the same $N$ inputs: with the columns centered, $\mathrm{CKA}(A,B)=\|B^\top A\|_F^2/(\|A^\top A\|_F\,\|B^\top B\|_F)\in[0,1]$. The debiased form replaces the plug-in estimator by the unbiased Hilbert--Schmidt independence criterion estimator of \citet{song2012feature}; this paper uses the debiased form.}
{With the centered Gram matrices $K_A=AA^\top$ and $K_B=BB^\top$ ($N\times N$), $\mathrm{CKA}=\langle K_A,K_B\rangle_F/(\|K_A\|_F\|K_B\|_F)$: the cosine of two Gram matrices, hence defined for any $m_a\neq m_b$ and invariant under orthogonal transformations and isotropic scaling of either feature space, but not under general invertible maps. The plug-in estimator is biased upward toward $1$ when $m/N$ is large \citep{murphy2024debiased}, and a permuted-probe null measures that bias. The panel of this paper is not in that regime, at $N=25{,}000$ samples against widths $D\in\{1024,2048\}$ (Section~\ref{sec:setup}); it uses the unbiased estimator anyway, because the choice is right at any ratio, and checks the residual bias directly rather than arguing it away.}
{The debiased linear CKA column of the nine-measure panel (Section~\ref{sec:study1c}) and of its cross-architecture extension (Section~\ref{sec:crossarch}), computed with the unbiased HSIC estimator inside the minibatch-CKA framework.}
{Study~1, whose reading is that every one of the eight panel measures that returned a value registers some teleportation drift and none quotients out the full change of basis (raw CCA and PWCCA, which are not in the panel, would); the shuffled-pair control returns debiased CKA $\le5.4\times10^{-4}$, so the estimator carries no upward bias, and the three readings available --- $0.90$--$0.94$ for a network and its teleported copy, $0.36$--$0.54$ for two different architectures, $0.01$--$0.06$ for a network and its random initialization --- place the exact-function pair far above every other pair; whether an untrained network and its own teleported image would score as high is not tested.}

%% file: sections/vocab/entries/svcca.tex
\vocabentry{SVCCA}
{Singular-vector canonical correlation analysis \citep{raghu2017svcca}: reduce each centered feature matrix to the top singular directions carrying $99\%$ of its variance, then compute the canonical correlations between the two reduced feature sets and report their mean, in $[0,1]$.}
{Canonical correlations are the singular values of $\Sigma_A^{-1/2}\Sigma_{AB}\Sigma_B^{-1/2}$, i.e.\ the cosines of the principal angles between the column spaces of the whitened features; \emph{they} are invariant under any invertible linear map of either feature space, which is why the SVD truncation must come first (without it, two generic subspaces of $\R^N$ of dimensions $m_a,m_b$ with $m_a+m_b>N$ intersect and correlations of $1$ appear for free). The truncation buys well-posedness at the price of the invariance: keeping the top $99\%$ of the variance is not equivariant under an invertible map, since an anisotropic rescale changes the variance spectrum and hence which directions are retained. SVCCA is therefore \emph{not} invariant under the per-channel rescaling $h\mapsto\tau\odot h$ that a teleportation induces on penultimate features, even though raw CCA and PWCCA are exactly invariant under it --- a distinction worth keeping straight, because the claim of Section~\ref{sec:study1c} is about the eight measures of its panel that returned a value and not about every measure that can be built from canonical correlations. The measure is in any case ill-posed unless $N$ is comfortably larger than the retained dimensions.}
{Not a member of the panel of Section~\ref{sec:study1c}; it enters this paper only through the invariance boundary above.}
{The invariance boundary above is checked directly: under a teleportation-style per-channel rescale $h\mapsto\tau\odot h$ the raw canonical correlations stay at $1$ to roundoff --- with dead units present, with $\tau_{\min}\approx3\times10^{-5}$, and even when $N<p$ --- while SVCCA moves to $0.97$--$0.99$ on the same draws (Section~\ref{sec:study1c}).}

%% file: sections/vocab/entries/invariances-of-measures.tex
\vocabentry{What each measure is invariant to}
{For a group $G$ acting on representations, a measure is $G$-invariant when it takes the same value on $g\cdot A$ as on $A$. Knowledge-matrix measures --- the Frobenius distance $\|\cdot\|_F$ and the normalized similarities built from it --- are invariant under hidden-neuron permutation (any activation) and under ReLU positive rescaling, because $\KM$ itself is (Lemma~\ref{lem:quiver-inv}, Corollary~\ref{prop:perm}); under softmax translation once the matrices are class-centered; and, for those that divide by the norms, under a positive rescaling of the logits (temperature). They are not invariant under rotations of the input coordinates or general reparameterizations of the data. Linear CKA: orthogonal maps and isotropic scaling of the features. Raw CCA, and PWCCA with it: every invertible linear map of either feature space, the per-channel rescaling $h\mapsto\tau\odot h$ that a teleportation induces included, which they absorb exactly. SVCCA: not that class, despite being assembled from canonical correlations, because its variance-energy truncation runs first \vocabsee{SVCCA}. The penultimate Frobenius distance: none of the hidden gauge.}
{Each measure factors through a quotient: the knowledge matrix already factors, under (LCS) at regular inputs, through $\{W\}/\mathrm{Stab}(\Psi)$ at the level of the object, and the normalized similarities through the further quotient by $\R_{>0}$; CKA factors through $\R^{N\times m}/(O(m)\times\R_{>0})$, i.e.\ through the Gram matrix; the canonical correlations factor through $\R^{N\times m}/GL(m)$, i.e.\ through the column space --- but SVCCA composes them with a truncation that is not $GL(m)$-equivariant, so SVCCA factors through no group quotient at all. The larger the group, the less a measure can distinguish, so an invariance is a design choice with a cost: SVCCA all but quotients out the anisotropic scaling that CKA registers, and neither sees the row-sum structure the knowledge matrix carries. That structure is not the knowledge matrix's alone --- by Theorem~\ref{thm:weff}, under (LCS), the same content is carried by per-class gradient$\times$input together with the exact bias attribution, and this paper claims no uniqueness. The consequence it does draw: the alignment freedom of the activation-based measures is exactly the ill-posedness the fixed-shape comparison avoids --- comparing two networks through their activations means comparing feature spaces of different dimension, so one must pick a quotient or search for an alignment, and the answer depends on the choice, whereas the knowledge matrices of any two networks on this data already lie in one $\R^{C\times(d+1)}$, with nothing to align.}
{Not a computed quantity but the reading key for the nine-measure panel of Section~\ref{sec:study1c}, which runs the representational-similarity literature's canonical measures on pairs whose ground truth is known exactly --- a network and its teleported copy compute the same function --- and asks which measures recover the invariance and which do not.}
{Section~\ref{sec:study1c}: every one of the eight measures of that panel that returned a value registers some teleportation drift and none reaches exact invariance (Gromov--Wasserstein did not converge and is not counted), while the knowledge matrix does not move (Lemma~\ref{lem:quiver-inv}; the implementation check is Appendix~\ref{app:teleport-checks}).}

%% file: sections/vocab/entries/discriminability.tex
\vocabentry{Discriminability}
{The ability of a similarity or distance measure to \emph{separate} two populations of pairs --- pairs that a known relation holds between, against pairs it does not --- rather than the level it reports: a measure that returns $0.9$ on both populations is uninformative however high the number. It is quantified by the gap between the populations relative to their spread, or by a margin over a permutation null.}
{A one-dimensional signal-detection quantity: with means $\mu_1,\mu_0$ and common standard deviation $\sigma$, $d'=(\mu_1-\mu_0)/\sigma$; a measure with a large invariance group may collapse the two populations (small $d'$) even when both levels are high --- which is why the invariance table \vocabsee{What each measure is invariant to} and this one have to be read together. It is also the reading of this paper's detector bake-off, where penultimate features separate clean from adversarial samples better than knowledge matrices in five of the six detector configurations (Section~\ref{sec:study3-bakeoff}): separability under a chosen classifier is a question of statistical power, to which the invariance theorems do not speak.}
{The six-detector, three-representation bake-off of the honest-negatives section, and the two nulls that accompany the nine-measure panel --- the random-network control and the shuffled-pair control (Section~\ref{sec:study1c}). Discriminability across independently trained networks, a statistic over a population of trainings rather than over inputs at one fixed network, is not attempted here.}
{The nulls are the validation: the random-network control returns debiased CKA $0.010$--$0.058$ and the shuffled-pair control $\le5.4\times10^{-4}$, far below the $0.90$--$0.94$ agreement the panel reports on a network and its teleported copy and below the $0.36$--$0.54$ of two different architectures, so the measure separates the exact-function pair from every other pair (Section~\ref{sec:study1c}); what the random-network control does not test is whether an untrained network and its own teleported image would score as the trained pair does.}

%% file: sections/vocab/entries/adversarial-pair-attack-family.tex
\vocabentry{Adversarial pair and attack family}
{An adversarial pair is an input $\vx$ together with a perturbed copy $\vx'=\vx+\delta$, with $\delta$ small in a chosen norm ($\ell_\infty$ or $\ell_2$, within a budget) and constructed so that the network's prediction changes; an attack is the algorithm that constructs $\delta$, and attacks group into families by mechanism: one-shot gradient sign (FGSM), iterative $\ell_\infty$ (PGD, APGD), margin-optimizing $\ell_2$ (CW), minimal-norm boundary (DeepFool), gradient-free score-based (Square).}
{For a network satisfying (LCS) --- ReLU here --- the pair either stays inside one activation region, where $\KM(\vx')-\KM(\vx)=[\,J\,\mathrm{diag}(\delta)\mid0\,]$ exactly (the bias column cancels, Theorem~\ref{thm:within}), or crosses region walls, where the matrix jumps. This paper reads each pair through the visible/invisible decomposition (Theorem~\ref{thm:pyth}) and the coherence $A=(d_\Psi/d_M)^2$ (Definition~\ref{def:coherence}), with $d_\Psi=\|\Psi(W\!,f)(\vx')-\Psi(W\!,f)(\vx)\|_2$ and $d_M=\|\KM(\vx')-\KM(\vx)\|_F$, bounded by $A\le d$ within a region and equal to $1$ for a one-pixel change; attack families are then ordered by their median coherence and the order is compared across architectures (Kendall's $W$).}
{\texttt{torchattacks} with the six attacks above on ResNet-152, DenseNet-121 and GoogLeNet, plus ResNet-18, AlexNet and VGG as ordering-only raters, six raters in all for the concordance (Section~\ref{sec:setup}). Two filtering conventions are used and must not be conflated: the headline coherence and ordering tables apply a per-cell relative division guard $d_\Psi > 10^{-6}\max_i d_{\Psi,i}$, which discards only pairs whose logits did not move; the appendix full-scale ordering panel and the mechanism pilot apply the stricter attack-success filter $d_\Psi\ge1$, $d_M>0$.}
{Study~2 (coherence and attack-family ordering) and the six-detector, three-representation bake-off of the honest-negatives section, on AlexNet/CIFAR-10; the within-region identity is checked numerically.}

%% file: sections/vocab/entries/counterfactual.tex
\vocabentry{Counterfactual}
{An intervention that changes one factor of a system, holds the rest fixed and observes the output, so as to attribute the change to that factor; its relatives in the interpretability literature are ablation (set a component to zero or a baseline) and activation patching (copy a component's value from a second forward pass). The matrix-direction counterfactual of this paper asks the local linear model $W_{\mathrm{eff}}(\vx)$ behind the knowledge matrix for the smallest input change that flips the class.}
{Within one activation region the network is affine, so the smallest $\ell_1$ perturbation $\delta$ that pushes the target-minus-source logit gap past a margin $m$ is a linear program: $\min\|\delta\|_1$ subject to $(W_{\mathrm{eff}}(\vx)_{t,\cdot}-W_{\mathrm{eff}}(\vx)_{s,\cdot})\,\delta\ge m-(\Psi_t-\Psi_s)$ and a per-coordinate box keeping the image inside $[0,1]$; without the box the single most cost-effective coordinate solves it, with the box a saturation greedy in order of $|c_i|$ does ($O(n\log n)$, $\ell_1$-optimal). The guarantee is conditional on $\vx+\delta$ staying in the region of $\vx$: the within-region anatomy of Theorem~\ref{thm:within} says nothing across walls.}
{Three architectures, three source images, three source classes, two target classes ($18$ linear programs per architecture, $54$ in all), margin $m=0.1$, each with a Boolean \texttt{region\_ok} flag (ReLU patterns and pooling argmaxes identical at $\vx$ and $\vx+\delta$) recorded beside every $\delta$-norm.}
{The negative is the validation: \texttt{region\_ok} is False on $54$ of $54$ and the actual logits never flip ($0/54$; every $\|\delta\|_\infty$ at the per-channel saturation cap of about $4.4$) --- the matrix direction leaves the region it was computed in, exactly as the within-region geometry predicts; reported as Limitation~L3 (Section~\ref{sec:limitations}), with a multi-region reformulation left open.}

%% file: sections/vocab/entries/held-out-validation.tex
\vocabentry{Held-out set and validation}
{Data the training iteration never sees. Validation is the use of held-out data to choose between models or hyperparameters; test accuracy is held-out accuracy reported after every choice has been made. This paper selects nothing --- it trains no network and tunes no hyperparameter --- and evaluates on the ImageNet validation images throughout: the first $N$ by sorted filename for Studies~1 and~3 and for the full-scale pair set, while the $200$-pair set of Study~2 is instead a seeded draw from a split half, so the two ordering panels sit on different images and are read as independent cross-checks (Section~\ref{sec:setup}).}
{Training minimizes an empirical average over the training split, so a quantity computed on that split is a biased estimate of its population value (the optimizer has adapted to those points), while on an independent split it is unbiased. Nothing of this bites here, where no quantity is fitted.}
{The image indices behind every study are fixed and recorded, and the two sampling conventions above are not interchangeable --- a table computed on one set is not a second view of the other.}
{This paper quotes the sample set alongside every panel, and the two attack-family ordering panels --- computed on different images under different attack budgets --- reproduce the same family-level grouping, which is exactly why they are read as independent cross-checks rather than as two views of one sample.}